\documentclass{article}

\usepackage{microtype}
\usepackage{graphicx}
\usepackage{subcaption}
\usepackage{booktabs} 
\usepackage{multirow}
\usepackage{hyperref}
\usepackage{siunitx}

\usepackage[accepted]{icml2026}
\usepackage{underscore}

\usepackage{amsmath}
\usepackage{amssymb}
\usepackage{mathtools}
\usepackage{amsthm}
  \usepackage{marvosym}
\usepackage[capitalize,noabbrev]{cleveref}

\usepackage{enumitem}

\theoremstyle{plain}
\newtheorem{theorem}{Theorem}[section]
\newtheorem{lemma}[theorem]{Lemma}
\newtheorem{proposition}[theorem]{Proposition}
\newtheorem{corollary}[theorem]{Corollary}
\newtheorem{definition}[theorem]{Definition}
\newtheorem{assumption}[theorem]{Assumption}
\newtheorem{condition}[theorem]{Condition}
\newtheorem{remark}[theorem]{Remark}

\DeclareMathOperator{\KL}{KL}
\newcommand{\E}{\mathbb{E}}
\newcommand{\R}{\mathbb{R}}
\newcommand{\D}{\mathcal{D}}
\newcommand{\cL}{\mathcal{L}}
\newcommand{\cU}{\mathcal{U}}
\newcommand{\piref}{\pi_{\text{ref}}}
\newcommand{\pitheta}{\pi_{\theta}}
\newcommand{\pistar}{\pi^{*}}
\newcommand{\rstar}{r^{*}}
\newcommand{\yw}{y_w}
\newcommand{\yl}{y_l}

\definecolor{greyL}{RGB}{240,240,240}
\usepackage[textsize=tiny]{todonotes}
\usepackage{tcolorbox}
\usepackage{xcolor} 
\tcbset{
  highlightbox/.style={
    colback=greyL,
    colframe=black,
    boxrule=1pt,
    arc=5pt,
    boxsep=5pt,
    left=5pt, right=5pt, top=0pt, bottom=0pt,
    fontupper=\itshape
  }
}
\hypersetup{
    colorlinks=true,
    citecolor=cyan, 
    linkcolor=red,  
    urlcolor=cyan,
}

\icmltitlerunning{Conditional Equivalence of DPO and RLHF: Assumptions, Failure Modes, and Provable Alignment}

\begin{document}

\twocolumn[
  \icmltitle{Conditional Equivalence of DPO and RLHF: \\
Implicit Assumption, Failure Modes, and Provable Alignment}

  \icmlsetsymbol{equal}{ \Letter}

  \begin{icmlauthorlist}
    \icmlauthor{Zhiqin Yang}{hkust}
    \icmlauthor{Yonggang Zhang}{equal,hkust}
    \icmlauthor{Wei Xue}{hkust}
    \icmlauthor{Dong Fang}{ls}
    \icmlauthor{Bo Han}{hkbu}
    \icmlauthor{Yike Guo}{equal,hkust}

  \end{icmlauthorlist}

  \icmlaffiliation{hkust}{The Hong Kong University of Science and Technology}
  \icmlaffiliation{hkbu}{Hong Kong Baptist University}
  \icmlaffiliation{ls}{LIGHTSPEED}

  \icmlcorrespondingauthor{Yonggang Zhang}{zhangyg@ust.hk}
  \icmlcorrespondingauthor{Yike Guo}{yikeguo@ust.hk}

  \icmlkeywords{Machine Learning, ICML}

  \vskip 0.3in
]

\printAffiliationsAndNotice{}

\begin{abstract}
Direct Preference Optimization (DPO) has
emerged as a popular alternative to Reinforcement Learning from Human Feedback (RLHF), offering theoretical equivalence with simpler implementation. We prove this equivalence is \emph{conditional} rather than universal, depending on an
implicit assumption frequently violated in practice: the RLHF-optimal policy must prefer human-preferred responses. When this assumption fails, DPO optimizes \emph{relative advantage} over the reference policy rather than \emph{absolute alignment} with human preferences, leading to pathological convergence where policies decrease DPO loss while preferring dispreferred responses. We characterize when this assumption is violated, show the
existence of an undesirable solution space, and prove that DPO and RLHF optimize fundamentally different objectives in such cases. To address this, we introduce Constrained Preference
Optimization (CPO), augmenting RLHF with constraints for provable alignment. We further provide a geometric interpretation through soft margin ranking, revealing that DPO implements margin ranking with potentially negative targets. Our theoretical analysis establishes when DPOs’ guarantees hold and provides solutions preserving simplicity with provable alignment. Comprehensive
experiments on standard benchmarks demonstrate
that CPO achieves state-of-the-art performance. Code is available at:~\url{https://github.com/visitworld123/CPO}.
\end{abstract}

\section{Introduction}
Aligning large language models (LLMs) with human preferences has emerged as a central challenge~\cite{ouyang2022training,bai2022constitutional}. A prominent approach is Reinforcement Learning from Human Feedback (RLHF) \cite{christiano2017deep,stiennon2020learning}, which optimizes the policy model to generate human-preferred responses by leveraging reward model feedback \cite{ouyang2022training,schulman2017proximal}. However, its computationally expensive and unstable nature~\cite{casper2023open} has motivated the development of Direct Preference Optimization (DPO) as an elegant alternative, offering theoretical equivalence to RLHF with significantly simpler implementation~\cite{rafailov2023direct}. DPO is derived from a mathematical reparameterization~\cite{tunstall2023zephyr,ivison2023camels,dubey2024llama}: under the Bradley-Terry (BT) model \cite{bradley1952rank}, the optimal RLHF policy can be expressed analytically in terms of the reward function, enabling direct policy optimization without explicit reward modeling or RL training, which has led to its widespread adoption.


 Recent theoretical analyses have revealed critical distinctions between DPO and RLHF. \citet{fisch2024robust} show that DPO's implicit rewards overfit and trend toward infinite magnitude, often yielding degenerate policies where even preferred responses receive near-zero probability. \citet{lin2024limited} demonstrate that DPO's implicit reward model generalizes significantly worse than explicit reward models under distribution shift. \citet{im2024understanding} examine how performance gaps emerge when reward and policy models have different representational capacities. \citet{shi2025understanding} reveal that DPO prioritizes statistically distinguishable behaviors over value-aligned ones, potentially causing misalignment despite decreasing loss. These findings raise a fundamental open problem:
 \begin{center}
 \vspace{-0.2cm}
\begin{tcolorbox}[highlightbox]
Under what conditions can DPO be derived through RLHF?
\end{tcolorbox}
\vspace{-0.2cm}
\end{center}


In this work, we revisit the derivation of DPO and identify a critical but previously overlooked assumption: \emph{the RLHF-optimal policy must prefer human-preferred responses over dispreferred ones}. Specifically, DPO's derivation relies on substituting the RLHF-optimal policy $\pi^*$ into the BT model to eliminate the reward function. This substitution, however, is only valid when $\pi^*$ \emph{respects} the preference structure encoded in the BT model that is, when it assigns higher probability to the preferred response. We show that this critical assumption is \emph{not} guaranteed by the RLHF framework (Sec.~\ref{sec:assumption}). This violation arises because RLHF balances reward maximization against KL divergence from the reference policy. When the reference policy is sufficiently misaligned, the KL penalty dominates, causing $\pi^*$ to inherit incorrect preferences from $\pi_{\text{ref}}$, thereby violating the implicit assumption underlying DPO. 

We prove that when this implicit assumption is violated, DPO optimizes a fundamentally different objective than RLHF, creating a risk of misalignment with human preferences. Specifically, DPO optimizes for \emph{relative advantage} over the reference policy rather than \emph{absolute alignment} with human preferences, causing a fundamental shift in the optimization objective. This violation leads to pathological convergence: policies can decrease DPO loss while systematically preferring dispreferred responses. We characterize an \emph{undesirable solution space} (Definition~\ref{def:undesirable}) where policies simultaneously satisfy DPO's optimization objective yet contradict human preferences. This reveals that DPO inherits RLHF's algebraic structure through reward reparameterization but does not inherit its alignment guarantees. The equivalence is thus conditional on reference policy quality.

To address this fundamental limitation, we introduce \emph{Constrained Preference Optimization (CPO)}, which augments the RLHF objective with explicit constraints. The constraint term aligns the optimal solution of RLHF with the requirements of BT theory, thereby guaranteeing alignment with human preferences. We further provide a geometric interpretation of DPO and CPO through the lens of soft margin ranking loss~\cite{burges2005learning,schroff2015facenet}. DPO approximates margin ranking loss with a target margin that can be negative, providing an intuitive geometric explanation for why DPO can converge to preference-violating policies. CPO corrects this by ensuring non-negative effective margins through its constraint terms. This perspective provides geometric intuition for understanding when and why DPO fails and how CPO addresses these failures. To further eliminate the need for explicit reward modeling, we develop a conservative variant, E-CPOC, which achieves formal equivalence to explicitly constrained RLHF under standard statistical assumptions. Central to the equivalence analysis is a \emph{Loss-to-Delta bridge} (Proposition~\ref{prop:loss_to_delta}) that converts the observable training loss gap into a guarantee on policy-level proximity in $\delta$-space, with a bound whose constant is \emph{independent} of the number of preference pairs~$N$---making the equivalence guarantee \emph{verifiable} from training diagnostics alone, without assuming global optimality. Comprehensive experiments on standard benchmarks demonstrate that CPO achieves state-of-the-art performance.

We summarize our main contributions as follows:

\begin{itemize}
\item We prove that DPO and RLHF are conditionally equivalent (Sec.~\ref{sec:assumall}), depending on an implicit assumption: the RLHF-optimal policy must prefer human-preferred responses over dispreferred ones. Whether this assumption holds depends on the quality of the reference policy. This reveals that DPO does not inherit RLHF's alignment guarantees, making the equivalence conditional on reference policy quality.

\item We establish that when the assumption is violated, DPO and RLHF optimize fundamentally different objectives: RLHF optimizes for absolute alignment with human preferences, while DPO optimizes for relative advantage over the reference policy. Consequently, DPO's gradient descent can converge to a pathological space where policies simultaneously satisfy DPO's optimization objective yet violate human preferences (Sec.~\ref{sec:violation}).

\item We propose Constrained Preference Optimization (CPO), augmenting RLHF with explicit constraints to enforce preference alignment with provable absolute advantage guarantees (Sec.~\ref{sec:pdpo}). We further propose Conservative Explicitly Constrained Preference Optimization (E-CPOC), which explicitly enforces preference alignment without requiring a reward model (Sec.~\ref{sec:pdpoe}). E-CPOC achieves formal equivalence to explicitly constrained RLHF under standard statistical learning assumptions (Theorem~\ref{thm:ecpoc_equivalence} in Appendix~\ref{app:aee}), requiring only the Bradley-Terry model, approximate realizability, finite-sample data, and a mild $\ell^2$-$\delta$-proximity condition (Assumptions~\ref{assump:bt_equiv}--\ref{assump:global_equiv} in Sec.~\ref{subsec:assumptions_main}). The $\ell^2$-$\delta$-proximity condition uses the natural mean-square norm that the loss function directly controls and can be \emph{derived} from loss suboptimality via a verifiable bridge with an \emph{$N$-independent} bound (Proposition~\ref{prop:loss_to_delta}, Corollary~\ref{cor:verifiable_equiv}) under a mild non-degeneracy condition on preference probabilities, without assuming global optimality directly.

\item Comprehensive experiments on standard benchmarks demonstrate the efficacy of our method (Sec.~\ref{sec:exp}). We also provide a geometric understanding by proving that DPO is equivalent to soft margin ranking loss with a potentially negative margin. Our method corrects this by ensuring non-negative effective margins (Sec.~\ref{sec:unified}), connecting preference learning to the learning-to-rank literature with intuitive geometric interpretations.
\end{itemize}
\section{Preliminaries}
\label{sec:prelim}

\subsection{Notation}

Let $\mathcal{X}$ denote the space of prompts and $\mathcal{Y}$ denote the space of responses. A policy $\pi: \mathcal{X} \times \mathcal{Y} \to [0,1]$ is a conditional probability distribution over responses given prompts. We use $\piref$ to denote a fixed reference policy (typically a supervised fine-tuned model) and $\pitheta$ to denote a learnable policy parameterized by $\theta$.

For a given prompt $x$ and response pair $(\yw, \yl)$ where $\yw$ is preferred over $\yl$, the log-probability ratio is defined as:
\begin{equation} \label{eq:bn}
\delta_\pi(x, \yw, \yl) := \log \pi(\yw|x) - \log \pi(\yl|x).
\end{equation}
When the context is clear, we abbreviate this as $\delta_\pi$. This quantity measures the policy's preference strength for $\yw$ over $\yl$ in log-space.

\subsection{RLHF Framework}

\begin{definition}[RLHF Objective]
\label{def:rlhf}
Given a reward function $r: \mathcal{X} \times \mathcal{Y} \to \R$, a reference policy $\piref$, and a temperature parameter $\beta > 0$, the RLHF optimization objective is:
\begin{equation}
\label{eq:standard_RLHF}
\max_\pi \E_{x \sim \D, y \sim \pi(\cdot|x)} [r(x,y)] - \beta \cdot \KL(\pi(\cdot|x) \| \piref(\cdot|x)),
\end{equation}
where $\D$ is the prompt distribution and $\KL$ denotes the Kullback-Leibler divergence.
\end{definition}

The KL regularization term prevents the learned policy from deviating too far from $\piref$, ensuring stable training and preventing reward over-optimization \cite{gao2023scaling}.

The optimal solution to the RLHF objective has the closed form~\cite{rafailov2023direct}:
\begin{equation} \label{thm:rlhf_optimal}
\pistar(y|x) = \frac{1}{Z(x)} \piref(y|x) \exp\left(\frac{r(x,y)}{\beta}\right),
\end{equation}
where $Z(x) = \sum_{y'} \piref(y'|x) \exp(r(x,y')/\beta)$ is the partition function. Then, for any response pair $(\yw, \yl)$, the reward difference can be expressed as:
{\small{
\begin{equation}\label{lem:reward_reparam}
r(x,\yw) - r(x,\yl) = \beta\left[\log\frac{\pistar(\yw|x)}{\piref(\yw|x)} - \log\frac{\pistar(\yl|x)}{\piref(\yl|x)}\right].
\end{equation}
}}

\noindent This reward difference can be presented using the log-probability ratio Eq.~\eqref{eq:bn}:
\begin{equation}
\label{eq:core_relation}
\delta_{\pistar} = \delta_{\piref} + \frac{r(x,\yw) - r(x,\yl)}{\beta}.
\end{equation}

\subsection{Bradley-Terry Preference Model}

\begin{definition}[Bradley-Terry Model {\cite{bradley1952rank}}]
Human preference for $\yw$ over $\yl$ given prompt $x$ is modeled as:
\begin{equation}\label{def:bt}
p^*(\yw \succ \yl | x) = \sigma(\rstar(\yw) - \rstar(\yl))
\end{equation}
where $\sigma(\cdot)$ is the sigmoid function and $\rstar(\cdot)$ is the latent true reward function representing human preferences.
\end{definition}

If $\yw \succ \yl$ (i.e., $p^*(\yw \succ \yl | x) > 0.5$), then necessarily $\rstar(\yw) - \rstar(\yl) > 0$.

\subsection{Direct Preference Optimization}

Substituting the reward reparameterization Eq.~\eqref{lem:reward_reparam} into the Bradley-Terry model Eq.~\eqref{def:bt}:
\begin{equation} \label{eq:ss}
p^*(\yw \succ \yl) = \sigma(\rstar(x,\yw) - \rstar(x,\yl)) = \sigma(\beta(\delta_{\pistar} - \delta_{\piref})).
\end{equation}

 DPO~\cite{rafailov2023direct} approximates $\pistar$ with a parameterized policy $\pitheta$ and maximizes the log-likelihood:
\begin{equation}
\label{eq:dpo_loss}
\cL_{\text{DPO}}(\pitheta) = -\E_{(x,\yw,\yl) \sim \D}\left[\log \sigma(\beta(\delta_{\pitheta} - \delta_{\piref}))\right].
\end{equation}
\section{The Implicit Assumption in DPO} \label{sec:assumall}
In this section, we will show that the derivation of DPO conceals a critical assumption.

\subsection{Identifying the Implicit Assumption} \label{sec:assumption}
The substitution in Eq.(\ref{eq:ss}) implicitly assumes that human preference $\yw \succ \yl$ aligns with the RLHF-optimal policy's preference, i.e., $\pistar(\yw|x) > \pistar(\yl|x)$. To see why this matters, consider the semantics of the Bradley-Terry model: $p^*(\yw \succ \yl)$ represents the probability that humans prefer $\yw$. When we write this as $\sigma(\beta(\delta_{\pistar} - \delta_{\piref}))$, we are implicitly asserting that: If humans prefer $\yw$ (so $p^* > 0.5$), then $\delta_{\pistar} > \delta_{\piref}$. We now formalize this observation.

\begin{assumption}[DPO's Implicit Assumption]
\label{assump:alignment}
For all preference data $(x, \yw, \yl) \in \D$ where $\yw \succ \yl$, the RLHF-optimal policy satisfies:
\begin{equation}
\pistar(\yw|x) > \pistar(\yl|x) \quad \Leftrightarrow \quad \delta_{\pistar} > 0
\end{equation}
\end{assumption}

This assumption is essential for the validity of DPO's derivation, i.e., deriving Eq.~\eqref{eq:ss} from Eq.~\eqref{def:bt}.

\begin{proposition}[Necessary Condition for Assumption \ref{assump:alignment}]
\label{thm:necessary_condition}
Assumption \ref{assump:alignment} holds only if the reference policy satisfies:
\begin{equation}
\label{eq:necessary_condition}
\delta_{\piref} > -\frac{\rstar(\yw) - \rstar(\yl)}{\beta}, \quad \forall (x,\yw,\yl) \in \D.
\end{equation}
\end{proposition}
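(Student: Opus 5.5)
The argument is a direct consequence of the core relation Eq.~\eqref{eq:core_relation}, applied with the reward $r$ taken to be the latent true reward $\rstar$ of the Bradley--Terry model. Plugging $r=\rstar$ into the closed form Eq.~\eqref{thm:rlhf_optimal} for $\pistar$ gives, for every triple $(x,\yw,\yl)\in\D$,
\begin{equation*}
\delta_{\pistar} = \delta_{\piref} + \frac{\rstar(\yw)-\rstar(\yl)}{\beta}.
\end{equation*}
The partition function $Z(x)$ cancels because both responses share the same prompt $x$. This identity is exact, so the necessity claim reduces to reading off a sign condition.

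Suppose Assumption~\ref{assump:alignment} holds. Then $\delta_{\pistar}>0$ for every $(x,\yw,\yl)\in\D$. Substituting the identity above gives $\delta_{\piref} + (\rstar(\yw)-\rstar(\yl))/\beta > 0$. Since $\beta>0$, rearranging yields exactly Eq.~\eqref{eq:necessary_condition}. Equivalently, in contrapositive form: if some triple has $\delta_{\piref} \le -(\rstar(\yw)-\rstar(\yl))/\beta$, then $\delta_{\pistar}\le 0$ for that triple, so $\pistar(\yw|x)\le\pistar(\yl|x)$ and the assumption fails.

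I will also note that, because the identity is an equality, the condition is in fact both necessary and sufficient. The statement only claims necessity, so sufficiency is a side remark.

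The main point of care is not the algebra but the justification of the setup. First, the reward $r$ in the RLHF objective must be identified with the Bradley--Terry reward $\rstar$, which is the standard setting in which DPO is derived. Second, the closed form for $\pistar$ requires $\piref(\yw|x),\piref(\yl|x)>0$, so that the log-ratios $\delta_{\piref}$ and $\delta_{\pistar}$ are finite. I will state both conditions explicitly at the start of the proof.

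The proof also uses no information from the Bradley--Terry preference direction. It does not need $\rstar(\yw)>\rstar(\yl)$; that fact only explains why the condition is nontrivial, namely why the right-hand side of Eq.~\eqref{eq:necessary_condition} is negative.
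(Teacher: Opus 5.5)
Your proposal is correct and follows essentially the same route as the paper: the paper also substitutes the core relation $\delta_{\pistar} = \delta_{\piref} + (\rstar(\yw)-\rstar(\yl))/\beta$ (with $r=\rstar$) into the requirement $\delta_{\pistar} > 0$ and rearranges. Your extra remarks (identifying $r$ with $\rstar$, needing $\piref$ positive on the pair, and that the exact identity makes the condition sufficient as well as necessary) are accurate points that the paper leaves implicit.
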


Proof can be found in Appendix~\ref{app:nc}. Proposition~\ref{thm:necessary_condition} reveals that Assumption \ref{assump:alignment} is \emph{not} automatically satisfied by the RLHF framework. It depends on the reference policy's quality: if $\piref$ assigns sufficiently lower probability to $\yw$ compared to $\yl$ (i.e., $\delta_{\piref}$ is sufficiently negative), the assumption fails even when the reward difference is positive.

Importantly, this is not a failure of RLHF itself RLHF correctly balances reward maximization against KL divergence from the reference policy. Rather, it reveals that DPO's derivation implicitly assumes the reference policy is already well-aligned, an assumption that may not hold in practice.

\subsection{Violation of the Implicit Assumption} \label{sec:violation}
We now analyze when and how Assumption \ref{assump:alignment} fails, leading to a disconnect between DPO and RLHF. We can see that when the following condition holds:
\begin{equation}
\label{eq:violation_condition}
\delta_{\piref} \leq -\frac{\rstar(x,\yw) - \rstar(x,\yl)}{\beta},
\end{equation}
Assumption \ref{assump:alignment} is necessarily violated. Specifically, although $\rstar(\yw) > \rstar(\yl)$ (human prefers $\yw$), we have:
\begin{equation}
\delta_{\pistar} = \delta_{\piref} + \frac{\rstar(x,\yw) - \rstar(x,\yl)}{\beta} \leq 0,
\end{equation}
implying $\pistar(\yl|x) \geq \pistar(\yw|x)$ (optimal policy $\pistar$ prefers $\yl$), as $\delta_{\pistar}(x, \yw, \yl) = \log \pistar(\yw|x) - \log \pistar(\yl|x)$.

This reveals a critical insight: \emph{the validity of DPO's theoretical foundation depends on the quality of $\piref$}. For instance, if $\piref$ systematically disfavors human-preferred responses (i.e., $\delta_{\piref} < 0$ for many pairs), especially when $|\delta_{\piref}|$ is large relative to $\Delta \rstar / \beta$, Assumption \ref{assump:alignment} fails frequently. This dependence is problematic because DPO is often motivated precisely for scenarios where the reference policy is imperfect and needs alignment, i.e., the reference policy may not align well with the preference data distribution.

When $\delta_{\piref} \leq -\frac{\Delta \rstar}{\beta}$, the violation leads to a critical optimization pathology. Consider the DPO gradient:
\begin{equation}
\nabla_\theta \cL_{\text{DPO}} = -\beta\E\left[\sigma(-\beta(\delta_{\pitheta} - \delta_{\piref})) g \right],
\end{equation}
where $g=(\nabla_\theta \log \pitheta(\yw|x) - \nabla_\theta \log \pitheta(\yl|x))$.
The gradient magnitude is mainly controlled by the sigmoid term $\sigma(-\beta(\delta_{\pitheta} - \delta_{\piref}))$. When $\delta_{\piref}$ is sufficiently negative (i.e., $\delta_{\piref} \leq -\frac{\Delta \rstar}{\beta}$), the gradient magnitude is small. More critically, as optimization proceeds and $\delta_{\pitheta}$ increases (moving from $\delta_{\piref}$ toward 0), we have $\delta_{\pitheta} - \delta_{\piref} > 0$ becoming larger, causing $\sigma(-\beta(\delta_{\pitheta} - \delta_{\piref}))$ to approach $0$ even more closely. This means the gradient magnitude progressively diminishes during training, making it increasingly difficult to update toward the correct preference. Consequently, even though the gradient direction points toward increasing $\delta_{\pitheta}$, the vanishing and progressively weakening gradient magnitude creates a practical optimization barrier where policies can become trapped in a special region as discussed below.

We now formally characterize the region where policies can become trapped. We define the undesirable solution space as the set of policies that satisfy DPO's relative advantage criterion but violate human preferences:

\begin{definition}[Undesirable Solution Space]
\label{def:undesirable}
Define the set of policies that satisfy DPO's relative advantage criterion but violate human preferences:
\begin{equation}
\cU := \{\pi : \delta_\pi < 0 \text{ and } \delta_\pi > \delta_{\piref}\}.
\end{equation}
\end{definition}

\begin{proposition}[Non-emptiness and Gradient Vanishing]
\label{prop:undesirable_space}
When $\delta_{\piref} < -\Delta \rstar / \beta$, the undesirable solution space $\cU$ is non-empty. Furthermore, for any policy $\pi \in \cU$, the DPO gradient magnitude becomes progressively weaker as $\delta_\pi$ approaches $0$, making it difficult to escape $\mathcal{U}$.
\end{proposition}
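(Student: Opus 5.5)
The plan has two parts: first show that $\cU$ is non-empty by building an explicit policy in it, then analyze the per-example DPO gradient weight on $\cU$ as a function of $\delta_\pi$.

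\emph{Non-emptiness.} Fix a triple $(x,\yw,\yl)$ with $\delta_{\piref} < -\Delta\rstar/\beta$. Since $\Delta\rstar>0$, this gives $\delta_{\piref}<0$, so the open interval $(\delta_{\piref},0)$ is non-empty. Choose a target $t$ in it; a natural choice is $t=\delta_{\piref}+\Delta\rstar/\beta$, i.e.\ $t=\delta_{\pistar}$ by Eq.~\eqref{eq:core_relation}. This $t$ is negative by the hypothesis and exceeds $\delta_{\piref}$ because $\Delta\rstar>0$. This choice also shows that $\pistar\in\cU$, which is the conceptually interesting witness.

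To realize an arbitrary $t$ by a genuine policy, I would use exponential tilting of $\piref(\cdot|x)$. Multiply the mass at $\yw$ by $e^{a}$, renormalize, and leave the other responses unchanged. Then $\delta_\pi=\delta_{\piref}+a$ when $\yw\neq\yl$, since the normalizing constant cancels in the log-ratio. Taking $a=t-\delta_{\piref}\in(0,-\delta_{\piref})$ gives $\delta_\pi=t$, so $\pi\in\cU$. If $\D$ contains several triples, I would apply the tilt prompt by prompt, or simply cite $\pistar$ itself, which satisfies $\delta_{\pistar}=\delta_{\piref}+\Delta\rstar/\beta$ on every triple. The only care needed is that the violation hypothesis holds on the triples under consideration, so the statement should be read pointwise per triple.

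\emph{Gradient weakening.} For a policy $\pi\in\cU$, write $u=\delta_\pi-\delta_{\piref}>0$. The per-example DPO gradient is $-\beta\,\sigma(-\beta u)\,g$. The scalar weight $w(u)=\beta\sigma(-\beta u)$ is strictly decreasing in $u$, because $w'(u)=-\beta^2\sigma(-\beta u)\sigma(\beta u)<0$. As $\delta_\pi$ increases toward $0$ inside $\cU$, $u$ increases toward $-\delta_{\piref}$. Hence $w$ decreases monotonically to the limiting value $\beta\sigma(\beta\delta_{\piref})$. By hypothesis this is at most $\beta\sigma(-\Delta\rstar)$, which is exponentially small in $\Delta\rstar$ and bounded strictly below the initial weight $\beta/2$ at $\pi=\piref$.

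The main obstacle is turning this into a statement about the full gradient magnitude, since $\|g\|$ depends on the parameterization. I would state the result for the scalar weight multiplying $g$, the part DPO controls and the part the proposition refers to. Alternatively, I would assume $\|g\|$ is bounded by some $G$, giving $\|\nabla_\theta\cL_{\text{DPO}}\|\le \beta G\,\sigma(-\beta u)$, which decreases along $\cU$. I would also note that at the boundary $\delta_\pi=0$ the weight is still positive, so "difficult to escape" should be read as \emph{vanishing weight}, not as a true stationary point.
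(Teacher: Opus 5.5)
Your proposal is correct and follows the same two-step route as the paper. Non-emptiness comes from $\delta_{\piref}<0$ making the interval $(\delta_{\piref},0)$ nonempty, and the weakening comes from the weight $\sigma(-\beta(\delta_\pi-\delta_{\piref}))$ being strictly decreasing on $\cU$.

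Your version is in fact tighter than the paper's in two places. The paper asserts the weight tends to $0$ as $\delta_\pi\to 0$, whereas you correctly identify the limit $\beta\sigma(\beta\delta_{\piref})>0$. Its smallness really comes from $\beta|\delta_{\piref}|$ being large, not from your bound $\sigma(-\Delta\rstar)$, which can be close to $1/2$ when $\Delta\rstar$ is small. You also exhibit an explicit member of $\cU$, via tilting or $\pistar$ itself, where the paper only asserts that policies with $\delta_{\piref}<\delta_\pi<0$ belong to $\cU$.
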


Proposition \ref{prop:undesirable_space} reveals a fundamental flaw: DPO's gradient descent can get trapped in a region where loss decreases but human preferences are violated due to weak gradients near the boundary. RLHF also produces a policy preferring $\yl$ over $\yw$. However, this is the \emph{optimal} trade-off between reward maximization and KL regularization. A detailed discussion can be found in Appendix~\ref{app:comp}.

\subsection{Theoretical Incompleteness of DPO}
\label{sec:incompleteness}

We now provide a formal statement of DPO's theoretical incompleteness with proof in Appendix~\ref{app:ce}.

\begin{theorem}[Conditional Equivalence of DPO and RLHF]
\label{thm:conditional_equivalence}
The claim that ``DPO optimizes the same objective as RLHF'' \cite{rafailov2023direct} holds if and only if Assumption \ref{assump:alignment} is satisfied for all data in $\D$. Specifically, the two methods optimize equivalent objectives if and only if:
\begin{equation}
\label{eq:equivalence_condition}
\delta_{\piref}(x, \yw, \yl) > -\frac{\rstar(\yw) - \rstar(\yl)}{\beta} \quad \forall (x, \yw, \yl) \in \D.
\end{equation}

When this condition is violated, the methods optimize fundamentally different objectives: RLHF optimizes $\E[r] - \beta \KL$, converging to $\pistar$ with $\delta_{\pistar} = \delta_{\piref} + \Delta \rstar / \beta$ (which may be $< 0$); DPO optimizes $\E[\log \sigma(\beta(\delta_\pi - \delta_{\piref}))]$, maximizing relative advantage $\delta_\pi - \delta_{\piref}$ (pushing $\delta_\pi > \delta_{\piref}$).
\end{theorem}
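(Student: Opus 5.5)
The plan is to treat the theorem as two directions built on the closed-form identity Eq.~\eqref{eq:core_relation}, $\delta_{\pistar} = \delta_{\piref} + \Delta\rstar/\beta$ with $\Delta\rstar := \rstar(x,\yw)-\rstar(x,\yl)$. Before anything else I will pin down what ``optimize equivalent objectives'' means. I will call RLHF and DPO equivalent on $\D$ when two things hold. First, the RLHF optimizer $\pistar$ is a maximizer of the DPO population objective, i.e.\ it is consistent with the Bradley--Terry likelihood Eq.~\eqref{eq:ss}. Second, both optimizers induce the same preference ordering as $p^*$ on every pair, so $\delta_{\pistar}>0$ exactly when $\yw\succ\yl$. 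This makes Assumption~\ref{assump:alignment} part of the equivalence notion. That is consistent with Sec.~\ref{sec:assumption}, where the substitution Eq.~\eqref{eq:ss} is read as asserting $p^*>0.5 \Rightarrow \delta_{\pistar}>0$.

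The first step is to show that Assumption~\ref{assump:alignment} holds if and only if Eq.~\eqref{eq:equivalence_condition} holds. Substituting Eq.~\eqref{eq:core_relation}, the requirement $\delta_{\pistar}>0$ becomes $\delta_{\piref} > -\Delta\rstar/\beta$, pair by pair. This gives a genuine iff, which sharpens Proposition~\ref{thm:necessary_condition} into a characterization.

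The sufficiency direction comes next. Assume Eq.~\eqref{eq:equivalence_condition} holds. Then Eq.~\eqref{eq:ss} is a valid rewriting of the BT likelihood in terms of $\pistar$, so the DPO loss Eq.~\eqref{eq:dpo_loss} is the negative log-likelihood of the preference data under the reparameterized reward $\beta(\log\pitheta/\piref)$. Its population minimizers are the policies whose implicit reward differences match $\Delta\rstar$, which is the class of policies $\pistar$ up to the prompt-dependent partition function; this is the standard DPO argument. Because these minimizers satisfy $\delta_\pi = \delta_{\pistar} >0$, both methods land on the same aligned ordering, and the two objectives are equivalent.

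The necessity direction and the ``different objectives'' clause come from the violated case. Suppose some pair has $\delta_{\piref} \le -\Delta\rstar/\beta$. By Eq.~\eqref{eq:core_relation}, RLHF converges to $\pistar$ with $\delta_{\pistar}\le 0$, so $\pistar$ prefers $\yl$. For DPO, I will write the objective per pair as $f(\delta_\pi)=\log\sigma(\beta(\delta_\pi-\delta_{\piref}))$. This is strictly increasing in the relative advantage $\delta_\pi-\delta_{\piref}$ and places no constraint on the sign of $\delta_\pi$, so it rewards any policy in $\cU$ of Definition~\ref{def:undesirable}. That set is nonempty by Proposition~\ref{prop:undesirable_space}, and each of its members beats $\piref$ on the DPO loss while still preferring $\yl$. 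I will then exhibit two policies that agree on the DPO loss but differ in RLHF value. One is $\pistar$. The other has the same $\delta$ on the data pairs but a different distribution elsewhere, for example a different KL to $\piref$ off-support. Alternatively I can compare along the direction of increasing $\delta_\pi$: the DPO loss decreases strictly there, while the RLHF objective, being strictly concave in $\pi$ with its unique maximizer at $\pistar$, decreases once $\delta_\pi$ moves past $\delta_{\pistar}$. Either way the orderings the two objectives induce on policies disagree, so the objectives are not equivalent.

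The main obstacle is definitional. If ``same objective'' only means ``same population minimizer,'' then DPO's optimum reproduces $\pistar$ even in the violated case, and the necessity direction fails under that reading. I will therefore state explicitly that equivalence includes agreement with the human ordering, i.e.\ the validity of Eq.~\eqref{eq:ss} under Assumption~\ref{assump:alignment}. As a supporting argument I will add the ordering-disagreement construction above, which shows the two objectives differ as functions on policies and not only at their optima.
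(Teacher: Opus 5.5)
Under the definition you adopt, your main line is valid, and it takes a different route from the paper. The paper does not build alignment into ``equivalence.'' For sufficiency, it argues that $\pistar$ is a stationary point of $\cL_{\text{DPO}}$ and then invokes strict convexity in $\delta_\pi$ to conclude that every DPO minimizer has $\delta_\pi=\delta_{\pistar}$ on $\D$. For necessity, it tilts $\pistar$ at a violating prompt, $\pi_\epsilon\propto\pistar\exp(\epsilon[\mathbb{I}(y=\yw^0)-\mathbb{I}(y=\yl^0)])$, so that $\delta$ rises by $2\epsilon$. It then computes $\cL_{\text{DPO}}(\pi_\epsilon)-\cL_{\text{DPO}}(\pistar)=p_0\log\frac{\sigma(\Delta\rstar)}{\sigma(\Delta\rstar+2\beta\epsilon)}<0$ and concludes that $\pistar$ is not DPO-optimal. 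In your version, the necessity content is exactly the sign identity $\delta_{\pistar}=\delta_{\piref}+\Delta\rstar/\beta$, i.e.\ Proposition~\ref{thm:necessary_condition} upgraded to an iff. Your remark that necessity fails under the ``same population minimizer'' reading is correct. One minor point: on a finite $\D$ the DPO loss pins down $\delta_\pi$ only on the pairs in $\D$, not $\pistar$ up to the partition function. Your clause (a) needs only that $\pistar$ attains the optimum, so this is harmless.

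The concrete flaw is in your supporting argument: neither construction uses $\delta_{\piref}\le-\Delta\rstar/\beta$.
\begin{itemize}
\item Whether policies exist that match $\pistar$ in $\delta$ on $\D$ but differ off $\D$ has nothing to do with Eq.~\eqref{eq:equivalence_condition}.
\item The ray argument rests on $-\log\sigma(\beta(\delta_\pi-\delta_{\piref}))$ being strictly decreasing in $\delta_\pi$, which holds for every pair.
\end{itemize}
So if these arguments established non-equivalence, they would do so in the aligned case as well, contradicting clause (a) of your sufficiency direction.

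They are also incompatible with the reading you use for sufficiency. With labels drawn from $p^*$, the per-pair population loss is $-p^*\log\sigma(\beta(\delta_\pi-\delta_{\piref}))-(1-p^*)\log\sigma(-\beta(\delta_\pi-\delta_{\piref}))$. This loss is minimized exactly at $\delta_\pi=\delta_{\pistar}$ and increases past that point. Under that reading, the claimed disagreement along the ray is false, and so is the claim that DPO rewards every member of $\cU$.

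The paper's tilt step is essentially your argument (ii) and has the identical defect. Its computed loss difference is negative for every pair, violated or not, so it would equally refute the paper's own sufficiency Step~4, which asserts that $\pistar$ is the global DPO optimum. So do not borrow that construction. Drop your supporting constructions and state that necessity is precisely the sign flip of $\delta_{\pistar}$. Present the ``different objectives'' clause as a description rather than a proof of non-equivalence: RLHF's optimum has $\delta_{\pistar}\le 0$, while the one-sided DPO loss is monotone in $\delta_\pi-\delta_{\piref}$.
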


\section{Constrained Preference Optimization}
To relax the identified implicit assumption, we propose \emph{Constrained Preference Optimization (CPO)}, which enhances the vanilla RLHF to a constrained RLHF. The optimal solution of the constrained RLHF can be safely integrated into the BT model, as the proposed constraint explicitly encourages or ensures the preference alignment. Before presenting the framework, we state the assumptions underlying our theoretical results.

\subsection{Assumptions}
\label{subsec:assumptions_main}

A distinguishing feature of our analysis is that \emph{all assumptions are either standard or provably mild}. We require only the Bradley-Terry preference model, standard statistical learning conditions, and a natural optimization quality measure that admits a verifiable sufficient condition from training diagnostics. No global optimality, exact realizability, or pointwise ($\ell^\infty$) optimization assumptions are needed.

\begin{assumption}[Bradley-Terry Model]
\label{assump:bt_equiv}
The true preference distribution follows the Bradley-Terry model:
$p^*(y_w \succ y_l | x) = \sigma(r^*(x,y_w) - r^*(x,y_l))$.
\end{assumption}

This is the standard preference model adopted throughout the RLHF literature~\cite{rafailov2023direct,christiano2017deep}, positing a latent reward function $r^*$ that generates human preferences via a logistic link.

\begin{assumption}[$\epsilon_{\mathrm{approx}}$-Approximate Realizability]
\label{assump:realizable_equiv}
The population-level constrained MLE $\pi^*_{\mathrm{MLE}} := \arg\max_{\theta \in \Theta} \E_{p^*}[\log p_{\pi_\theta}]$ satisfies:
$$
\max_{(x,y_w,y_l) \in \mathcal{D}} \left|\delta_{\pi^*_{\mathrm{MLE}}}(x, y_w, y_l) - \delta_{\mathrm{target}}(x, y_w, y_l)\right| \leq \epsilon_{\mathrm{approx}},
$$
where $\delta_{\mathrm{target}}$ denotes the target log-probability ratio achieving exact equivalence in the population limit, and $\epsilon_{\mathrm{approx}} \geq 0$ quantifies the expressiveness gap of the policy class $\{\pi_\theta\}$.
\end{assumption}

When $\epsilon_{\mathrm{approx}} = 0$, the policy class is exactly realizable. For overparameterized neural networks, small $\epsilon_{\mathrm{approx}}$ is expected; the properness of the cross-entropy scoring rule ensures the MLE is at least as good as any fixed $\theta$ in aggregate loss.

\begin{assumption}[Finite-Sample Data]
\label{assump:data_equiv}
The dataset $\mathcal{D}$ contains $N$ i.i.d.\ samples from the true preference distribution, with statistical estimation error $\epsilon_{\mathrm{stat}}(N) := \sup_{(x,y_w,y_l)} |\hat{p}_N(y_w \succ y_l|x) - p^*(y_w \succ y_l|x)|$. By Hoeffding's inequality, $\epsilon_{\mathrm{stat}}(N) = O(1/\sqrt{N})$.
\end{assumption}

This is the standard finite-sample condition in statistical learning. In the population limit ($N \to \infty$, $\epsilon_{\mathrm{stat}} = 0$), it reduces to exact distributional convergence.

\begin{assumption}[$\ell^2$-$\delta$-Proximity]
\label{assump:global_equiv}
The returned policy $\hat{\pi}$ satisfies:
$$
\frac{1}{N}\sum_{i=1}^N \left(\delta_{\hat{\pi},i} - \delta_{\pi^*_{\mathrm{MLE}},i}\right)^2 \leq \epsilon_{\mathrm{opt,2}}^2,
$$
where $\pi^*_{\mathrm{MLE}}$ is the class-optimal MLE policy (Assumption~\ref{assump:realizable_equiv}) and $\epsilon_{\mathrm{opt,2}} \geq 0$ quantifies the mean-square optimization error in $\delta$-space.
\end{assumption}

This is the \emph{core optimization requirement} for the equivalence result. It uses the natural $\ell^2$ (mean-square) norm that the loss function directly controls, and is \emph{strictly weaker} than the pointwise ($\ell^\infty$) condition $\max_i |\delta_{\hat{\pi},i} - \delta^*_i| \leq \epsilon_{\mathrm{opt}}$: $\ell^2$-proximity permits larger deviations on a few difficult data points as long as the average error remains controlled. Crucially, it admits a \emph{verifiable sufficient condition}: under Assumption~\ref{assump:nondegen}, small training loss gap implies $\ell^2$-$\delta$-proximity with an $N$-independent bound (Proposition~\ref{prop:loss_to_delta}).

\begin{assumption}[Non-degenerate Preferences]
\label{assump:nondegen}
Define the \emph{logistic curvature} at the class-optimal policy:
\begin{equation}
\label{eq:kappa_def}
\kappa_0 := \min_{1 \leq i \leq N} \sigma(g_i(\delta^*_i))(1 - \sigma(g_i(\delta^*_i))),
\end{equation}
where $g_i(\delta_i) := \beta(\delta_i - \delta_{\mathrm{ref},i}) - \Psi_{\mathrm{cons},i}$ is the margin function of the preference optimization loss, with $\Psi_{\mathrm{cons},i}$ denoting the adaptive constraint margin (formally defined in Sec.~\ref{sec:pdpoe}), and $\delta^* = \delta_{\pi^*_{\mathrm{MLE}}}$ denotes the class-optimal $\delta$-values. We assume $\kappa_0 > 0$.
\end{assumption}

This requires that no preference pair has deterministic (probability $0$ or $1$) preference under the class-optimal policy---a mild regularity condition automatically satisfied for any smooth parameterization with bounded parameters (Assumption~\ref{assump:smooth_bounded} in Appendix~\ref{app:converge}). Importantly, this assumption is \emph{not} required for the core equivalence result (Theorem~\ref{thm:ecpoc_equivalence}); it is needed only for the Loss-to-Delta bridge (Proposition~\ref{prop:loss_to_delta}) that converts the verifiable loss gap into the $\ell^2$-$\delta$-proximity guarantee.

\begin{condition}[Connected Comparison Graph]
\label{assump:connected_graph}
For each prompt $x \in \mathcal{X}$, the preference pairs in $\mathcal{D}$ involving $x$ form a connected comparison graph with finite diameter $d_x$. Let $d := \max_{x \in \mathcal{X}} d_x$.
\end{condition}

This structural condition is required \emph{only} for extending pairwise $\delta$-equivalence to full policy equivalence---it is not needed for the core pairwise results. In practice, preference datasets with reasonable response coverage naturally satisfy this condition with moderate diameter.

\begin{table*}[t]
\centering
\caption{Assumption dependency map for the E-CPOC equivalence (Theorem~\ref{thm:ecpoc_equivalence}). \textbf{Core}: required for the pairwise $\delta$-equivalence bound. \textbf{Bridge}: provides a verifiable sufficient condition for the core $\ell^2$-$\delta$-proximity. \textbf{Ext}: required only for extension to full policy equivalence.}
\label{tab:assumption_dependency}
\small
\begin{tabular}{lccc}
\toprule
\textbf{Assumption} & \textbf{Scope} & \textbf{Strength} & \textbf{E-CPOC (Thm~\ref{thm:ecpoc_equivalence})} \\
\midrule
\ref{assump:bt_equiv} Bradley-Terry Model & Core & Standard & \checkmark \\
\ref{assump:realizable_equiv} Approx.\ Realizability & Core & Mild & \checkmark \\
\ref{assump:data_equiv} Finite-Sample Data & Core & Mild & \checkmark \\
\ref{assump:global_equiv} $\ell^2$-$\delta$-Proximity & Core & Mild & \checkmark \\
\midrule
\ref{assump:nondegen} Non-degenerate Preferences & Bridge$^\dagger$ & Mild & Cor.~\ref{cor:verifiable_equiv} \\
Loss Suboptimality ($\epsilon_{\mathrm{loss}}$) & Bridge$^\dagger$ & \textbf{Verifiable} & Cor.~\ref{cor:verifiable_equiv} \\
\midrule
\ref{assump:connected_graph} Connected Graph & Ext & Structural & \checkmark \\
\bottomrule
\multicolumn{4}{l}{\footnotesize $^\dagger$ Together provide a verifiable sufficient condition for $\ell^2$-$\delta$-proximity (Assumption~\ref{assump:global_equiv}):} \\
\multicolumn{4}{l}{\footnotesize \phantom{$^\dagger$} Loss suboptimality + Assumption~\ref{assump:nondegen} $\Rightarrow$ Assumption~\ref{assump:global_equiv} with $\epsilon_{\mathrm{opt,2}} = O(\sqrt{\epsilon_{\mathrm{loss}}/\kappa_0})$ ($N$-independent; Prop.~\ref{prop:loss_to_delta}).}
\end{tabular}
\end{table*}

\subsection{Constrained RLHF Framework} \label{sec:pdpo}
The RLHF-optimal policy may satisfy $\delta_{\pistar} < 0$. Thus, we augment the RLHF objective with an explicit constraint term that directly encourages $\delta_{\pi} > 0$ for preferred responses.

\begin{definition}[Constrained RLHF]
\label{def:margin_rlhf}
Given a reward function $r: \mathcal{X} \times \mathcal{Y} \to \R$, a reference policy $\piref$, a temperature parameter $\beta > 0$, and the strength of preference alignment $\gamma$, the constrained RLHF optimization objective is:
\begin{equation}
\label{eq:margin_rlhf}
\begin{aligned}
\max_\pi \E_{x\sim\D}\E_{y\sim\pi(\cdot|x)}[r(x,y)] 
&- \beta\KL(\pi\|\piref) \\
&+ \gamma \E_{(x,\yw,\yl)\sim\D}[\delta_\pi],
\end{aligned}
\end{equation}
where $\delta_\pi$ is the log-probability ratio.
\end{definition}

The constraint term directly \emph{encourages} the policy to prefer $\yw$ over $\yl$ in log-probability space. When $\gamma = 0$, it recovers vanilla RLHF. The parameter $\gamma$ provides explicit control over the strength of preference alignment.

A closed-form solution for the optimal policy of Constrained RLHF is difficult to derive; we therefore characterize it via the first-order optimality condition, with the proof given in Appendix~\ref{app:op}.

\begin{theorem}[Optimal Policy for Constrained RLHF]
\label{thm:margin_rlhf_optimal}
The optimal policy $\pi^*$ for the Constrained RLHF objective satisfies the first-order optimality condition:
\begin{equation}
\label{eq:margin_optimal_foc}
\beta\log\frac{\pi^*(y|x)}{\piref(y|x)} = r(x,y) + \frac{c(x,y)}{\pi^*(y|x)} - \beta - \lambda(x),
\end{equation}
where $c(x,y) = \gamma \sum_{(\yw,\yl)\in\mathcal{P}(x)} p(\yw,\yl|x) (\mathbb{I}(y=\yw) - \mathbb{I}(y=\yl))$ and $\mathcal{P}(x)$ denotes preference pairs for $x$.

For a preference pair $(\yw, \yl)$, this implies:
\begin{equation}
\label{eq:margin_optimal}
\beta(\delta_{\pistar} - \delta_{\piref}) = r(\yw) - r(\yl) + \frac{c(x,\yw)}{\pi^*(\yw|x)} - \frac{c(x,\yl)}{\pi^*(\yl|x)}.
\end{equation}
\end{theorem}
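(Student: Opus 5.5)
We derive the first-order condition through a per-prompt Lagrangian. Because the objective in Eq.~\eqref{eq:margin_rlhf} separates over prompts, we fix $x$ and regard $\pi(\cdot|x)$ as a vector $\{\pi(y|x)\}_{y\in\mathcal{Y}}$ on the probability simplex. For this $x$ the objective reads
\begin{equation*}
J_x(\pi)=\sum_y \pi(y|x)r(x,y)-\beta\sum_y \pi(y|x)\log\frac{\pi(y|x)}{\piref(y|x)}+\gamma\sum_{(\yw,\yl)\in\mathcal{P}(x)}p(\yw,\yl|x)\bigl[\log\pi(\yw|x)-\log\pi(\yl|x)\bigr].
\end{equation*}
The constraint term is taken in expectation over the empirical pair distribution $p(\cdot,\cdot|x)$ supported on $\mathcal{P}(x)$. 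We adjoin the normalization constraint $\sum_y\pi(y|x)=1$ with a multiplier $\lambda(x)$.

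We then differentiate with respect to $\pi(y|x)$. The reward term gives $r(x,y)$. The KL term gives $-\beta\log(\pi(y|x)/\piref(y|x))-\beta$. In the constraint term, $\pi(y|x)$ enters only through $\log\pi(y|x)$: with coefficient $+1$ for each pair where $y=\yw$, and with coefficient $-1$ for each pair where $y=\yl$. Its derivative is therefore
\begin{equation*}
\frac{\gamma}{\pi(y|x)}\sum_{(\yw,\yl)\in\mathcal{P}(x)}p(\yw,\yl|x)\bigl(\mathbb{I}(y=\yw)-\mathbb{I}(y=\yl)\bigr)=\frac{c(x,y)}{\pi(y|x)}.
\end{equation*}
The multiplier term contributes $-\lambda(x)$. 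Setting the sum to zero at an interior stationary point $\pi^*$ and rearranging gives Eq.~\eqref{eq:margin_optimal_foc}.

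The pairwise identity follows by evaluating Eq.~\eqref{eq:margin_optimal_foc} at $y=\yw$ and $y=\yl$ and subtracting. The common terms $-\beta-\lambda(x)$ cancel, and the left side becomes $\beta(\delta_{\pi^*}-\delta_{\piref})$ by definition Eq.~\eqref{eq:bn}. This yields Eq.~\eqref{eq:margin_optimal}.

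The main obstacle is justifying that the optimum is an interior stationary point. Only then is the Lagrange condition with a single equality multiplier valid, with no active nonnegativity constraints. Interiority on the support of $\piref$ follows from the penalty terms. The entropy-type term $-\beta\pi\log\pi$ has infinite slope at $0$. The terms $\gamma p\log\pi(\yw|x)$ push $\pi(\yw|x)$ away from $0$. So no optimum places zero mass where $\piref>0$, except through the $-\log\pi(\yl|x)$ term.

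That term is the delicate point. It rewards driving $\pi(\yl|x)\to 0$, so the objective can be unbounded above. We therefore interpret the statement as characterizing stationary points: any interior maximizer must satisfy the stated conditions, which is exactly what a first-order optimality claim asserts. Alternatively, boundedness can be restored by a lower floor on $\pi$. Concavity of $J_x$ fails for the same reason, since the term $-\log\pi(\yl|x)$ is convex. Accordingly, we claim only necessity of the condition, not sufficiency or uniqueness, and note this explicitly.
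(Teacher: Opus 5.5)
Your derivation is correct and matches the paper's proof: a per-prompt Lagrangian with one normalization multiplier $\lambda(x)$, the constraint term rewritten as $\sum_y c(x,y)\log\pi(y|x)$ so that it differentiates to $c(x,y)/\pi(y|x)$, and the pairwise identity obtained by subtracting the $y=\yw$ and $y=\yl$ conditions. Your caveat is also correct and goes beyond the paper: since $\sum_y c(x,y)=0$, any prompt with $c(x,\cdot)\not\equiv 0$ has a response with $c(x,y)<0$, so the objective is unbounded above and not concave there (contrary to the strict-concavity claim made in the proof of Proposition~\ref{prop:stationary_cpo}), and the statement is meaningful only as a necessary condition at interior stationary points, which is how you read it.
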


The theoretical results derived under the notational simplification (Appendix~\ref{app:op}) extend naturally to the general case where responses appear in multiple preference pairs, as shown in Appendix~\ref{app:ma} (Proposition~\ref{prop:general_appearance}).

\subsection{Preference Optimization with Constrained RLHF} 

We now derive a constrained preference optimization analogous to DPO but based on constrained RLHF. For a single preference pair, Theorem \ref{thm:margin_rlhf_optimal} simplifies to:
\begin{equation}
\label{eq:margin_core_relation}
r(\yw) - r(\yl) = \beta(\delta_{\pistar} - \delta_{\piref}) - \gamma\left(\frac{1}{\pi^*(\yw|x)} + \frac{1}{\pi^*(\yl|x)}\right).
\end{equation}
This implies that:
\begin{equation}
p^*(\yw \succ \yl) = \sigma\left(\beta(\delta_{\pistar} - \delta_{\piref}) - \tilde{\gamma}^*(x,\yw,\yl)\right),
\end{equation}
where $\tilde{\gamma}^*(x,\yw,\yl)$ is:
\begin{equation}
\label{eq:optimal_adaptive_margin}
\tilde{\gamma}^*(x,\yw,\yl) = \gamma\left(\frac{1}{\pi^*(\yw|x)} + \frac{1}{\pi^*(\yl|x)}\right).
\end{equation}
The term $\tilde{\gamma}^*(x,\yw,\yl) = \gamma\left(\frac{1}{\pi^*(\yw|x)} + \frac{1}{\pi^*(\yl|x)}\right)$ acts as an \emph{adaptive margin} that depends on the optimal policy probabilities. When the optimal policy assigns low probability to both responses (hard pairs), the margin is large; when it assigns high probability (easy pairs), the margin is small.

From Eq.~\eqref{eq:margin_core_relation}, the optimal policy for Constrained RLHF satisfies:
\begin{equation}
\label{eq:cpo_exact_relation}
r(\yw) - r(\yl) = \beta(\delta_{\pi^*} - \delta_{\piref}) - \tilde{\gamma}^*(x,\yw,\yl)
\end{equation}

DPO approximates $\pi^*$ with $\pi_\theta$, while using $\pi_\theta$ in the margin term will create a \emph{non-stationary optimization objective}, as the loss itself depends on the parameters being optimized. To obtain a stationary objective suitable for gradient descent, we approximate the optimal policy probabilities in the margin term with the reference policy probabilities:
\begin{equation}
\label{eq:reference_approximation_cpo}
\frac{1}{\pi^*(\yw|x)} + \frac{1}{\pi^*(\yl|x)} \approx \frac{1}{\piref(\yw|x)} + \frac{1}{\piref(\yl|x)}.
\end{equation}

This yields the constrained preference optimization loss:
\begin{equation}
\label{eq:cpo_loss_practical}
\cL_{\text{CPO}}(\pitheta) = -\E_{\D}\left[\log\sigma\left(\beta(\delta_{\pitheta} - \delta_{\piref}) - \tilde{\gamma}_{\text{ref}}(x,\yw,\yl)\right)\right],
\end{equation}
where the \emph{reference-based adaptive margin} is:
\begin{equation}
\label{eq:reference_adaptive_margin}
\tilde{\gamma}_{\text{ref}}(x,\yw,\yl) = \gamma\left(\frac{1}{\piref(\yw|x)} + \frac{1}{\piref(\yl|x)}\right).
\end{equation}

Proposition~\ref{prop:stationary_cpo} shows that the approximation error $|\tilde{\gamma}^* - \tilde{\gamma}_{\text{ref}}|$ is $O(\gamma\sqrt{\tilde{R}_{\max}/\beta}/q_0^2)$ under mild regularity conditions (Assumption~\ref{assump:cpo_regularity} in Appendix~\ref{app:nottheta}), where $q_0 = p_{\min}\,e^{-2R_{\max}/\beta}$ and $\tilde{R}_{\max} = R_{\max} + \gamma/q_0$ is an effective reward bound that accounts for the constraint contribution. The bound vanishes as $\beta \to \infty$ and reduces to the unconstrained case ($\tilde{R}_{\max} = R_{\max}$) when $\gamma = 0$. Crucially, using $\tilde{\gamma}_{\text{ref}}$ instead of $\tilde{\gamma}_\theta$ makes the loss function \emph{stationary} with respect to $\theta$, enabling standard gradient descent with convergence guarantees. Further discussion is provided in Appendix~\ref{app:nottheta}.

The CPO loss with reference-based margin (Eq.~\eqref{eq:cpo_loss_practical}) defines a stationary optimization problem. Under standard smoothness and boundedness assumptions (Assumption~\ref{assump:smooth_bounded} in Appendix~\ref{app:converge}), gradient descent on $\cL_{\text{CPO}}(\pitheta)$ converges to a stationary point.

When $\gamma = 0$, CPO reduces exactly to standard DPO, making it a strict generalization. CPO can be viewed as a principled way to add a margin to preference learning, similar to margin-based ranking losses in information retrieval, but derived from RLHF. The margin term is related to but distinct from the IPO \cite{azar2023general} regularization, which modifies the loss function rather than the underlying RLHF objective. CPO's margin emerges naturally from augmenting the RLHF objective.

\subsection{Theoretical Guarantees}
Thanks to the introduced constraint term, CPO can guarantee the absolute advantage, thereby ensuring the implicit assumption is satisfied.
\begin{theorem}[Absolute Advantage Guarantee]
\label{cor:gamma_star}
For a preference dataset $\D$, choosing $\gamma \geq \gamma^*$ guarantees the absolute advantage of CPO's optimal policy  $\delta_{\pi^*_{\text{CPO}}} > 0$ for all preference pairs in $\D$, with $\gamma^*$ defined as:
\begin{equation}
\label{eq:gamma_star}
 \max_{(x,\yw,\yl)\in\D} \beta \cdot \frac{\max\left\{0, -\delta_{\piref}(x,\yw,\yl) - \frac{\rstar(\yw) - \rstar(\yl)}{\beta}\right\}}{\frac{1}{\piref(\yw|x)} + \frac{1}{\piref(\yl|x)}}.
\end{equation}
\end{theorem}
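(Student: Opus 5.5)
We work with the CPO margin relation: the reward difference equals the relative log-ratio gain minus the adaptive margin. We take the CPO optimal policy $\pi^*_{\text{CPO}}$ to be the population minimizer of $\cL_{\text{CPO}}$ in Eq.~\eqref{eq:cpo_loss_practical}, under the Bradley--Terry model (Assumption~\ref{assump:bt_equiv}). For a single pair, the logistic loss with label probability $\sigma(\rstar(\yw)-\rstar(\yl))$ is minimized exactly when the model's logit equals the true logit. This gives, pairwise,
\begin{equation*}
\beta\bigl(\delta_{\pi^*_{\text{CPO}}} - \delta_{\piref}\bigr) - \tilde{\gamma}_{\text{ref}}(x,\yw,\yl) = \rstar(\yw) - \rstar(\yl).
\end{equation*}
This is the reference-margin analogue of Eq.~\eqref{eq:cpo_exact_relation}. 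Rearranging,
\begin{equation*}
\delta_{\pi^*_{\text{CPO}}} = \delta_{\piref} + \frac{\rstar(\yw)-\rstar(\yl)}{\beta} + \frac{\gamma}{\beta}\Bigl(\frac{1}{\piref(\yw|x)} + \frac{1}{\piref(\yl|x)}\Bigr).
\end{equation*}
Compared with Eq.~\eqref{eq:core_relation}, the only change is the additional nonnegative shift by the margin term.

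Next we fix a pair and write $A := -\delta_{\piref} - \Delta\rstar/\beta$ and $S := 1/\piref(\yw|x) + 1/\piref(\yl|x) > 0$. Then $\delta_{\pi^*_{\text{CPO}}} > 0$ if and only if $\gamma S/\beta > A$.

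We split into two cases. If $A < 0$, the condition of Proposition~\ref{thm:necessary_condition} already holds for this pair, so positivity follows for every $\gamma \ge 0$. If $A \ge 0$, we need $\gamma > \beta A/S$. The pairwise threshold $\beta\max\{0,A\}/S$ is at most $\gamma^*$ in Eq.~\eqref{eq:gamma_star}, because $\gamma^*$ is the maximum of this quantity over $\D$. So $\gamma \ge \gamma^*$ gives $\gamma \ge$ every pairwise threshold simultaneously.

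The main obstacle is the boundary case $\gamma = \gamma^*$ exactly. At the maximizing pair with $A>0$, the inequality $\gamma S/\beta \ge A$ only yields $\delta_{\pi^*_{\text{CPO}}} \ge 0$, not a strict inequality. In the proof we would handle this by one of two routes. The first is to argue strictness via an arbitrarily small slack, i.e., to read the statement as $\gamma>\gamma^*$, or to note that the threshold is attained with equality and state the guarantee as $\delta \ge 0$ there. The second is to treat the degenerate case $A = 0$ separately, where any $\gamma>0$ works because $S>0$. I would first try to present the clean $\gamma > \gamma^*$ argument and remark on the equality case.

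A secondary subtlety is the identification of $\pi^*_{\text{CPO}}$. The exact optimum of constrained RLHF (Theorem~\ref{thm:margin_rlhf_optimal}) has margin $\tilde\gamma^*$ involving $\pi^*$ rather than $\piref$. I would state explicitly that the theorem concerns the CPO loss optimum with the reference-based margin of Eq.~\eqref{eq:reference_approximation_cpo}, and that this is the setting in which the pairwise logit-matching identity above holds. For consistency, this realizability should hold across pairs sharing responses, either by assuming an expressive enough class as in the single-pair simplification used earlier or by invoking Proposition~\ref{prop:general_appearance}.
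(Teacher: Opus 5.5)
Your argument has the same skeleton as the paper's proof, which is Lemma~\ref{thm:pdpo_guarantee} followed by a maximum over $\D$. Both go through the pairwise identity $\beta(\delta_{\pi^*_{\text{CPO}}}-\delta_{\piref})-\tilde{\gamma}_{\text{ref}}=\Delta\rstar$, the resulting shift by $\tilde{\gamma}_{\text{ref}}/\beta$, the per-pair threshold $\beta\max\{0,A\}/S$, and the maximum over the dataset. Your boundary-case concern is correct, and the paper's proof has the same defect. The Lemma needs the strict inequality $\gamma>\beta\max\{0,A\}/S$, yet the theorem's proof asserts that $\gamma\ge\gamma^*$ suffices. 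At $\gamma=\gamma^*$ the maximizing pair with $A>0$ gets exactly $\delta_{\pi^*_{\text{CPO}}}=0$, and if $\gamma^*=0$ the same happens for every pair with $A=0$. Stating the result for $\gamma>\gamma^*$, or concluding only $\delta\ge 0$ at equality, is the right repair.

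Where you genuinely differ is in how you obtain the identity, and as written that step has a gap. The paper gets it from Eq.~\eqref{eq:margin_core_relation} by replacing $1/\pi^*$ with $1/\piref$, so it effectively \emph{defines} $\pi^*_{\text{CPO}}$ as the policy whose CPO-implied reward difference equals $\Delta\rstar$. You instead define $\pi^*_{\text{CPO}}$ as the population minimizer of $\cL_{\text{CPO}}$ and invoke logit matching. Logit matching holds only for the binary cross-entropy $-p\log\sigma(z)-(1-p)\log\sigma(-z)$.

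The problem is that $\tilde{\gamma}_{\text{ref}}$ is symmetric in $(\yw,\yl)$. A reversed-label sample scored by $\cL_{\text{CPO}}$ therefore contributes $-\log\sigma(-z-2\tilde{\gamma}_{\text{ref}})$, not $-\log\sigma(-z)$. The minimizer then lies strictly below the logit-matching point, and the conclusion can fail. Take $\beta=1$, $\delta_{\piref}=-10$, $\Delta\rstar=0.01$ and $\tilde{\gamma}_{\text{ref}}=10$, so that $\gamma>\gamma^*$. Minimizing $-p\log\sigma(u-10)-(1-p)\log\sigma(-u-10)$ over $u=\delta-\delta_{\piref}$ with $p=\sigma(0.01)$ gives $u\approx 5.4$, i.e.\ $\delta\approx-4.6<0$.

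So you must either adopt the paper's definition through the reparameterized relation, or state explicitly that each unordered pair is scored once with a soft Bradley--Terry label. After that, your algebra goes through unchanged.

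Relatedly, only the first of your two fixes for shared responses works. The shifts $\tilde{\gamma}_{\text{ref}}/\beta$ are not additive along paths: for $y_1\succ y_2$, $y_2\succ y_3$, $y_1\succ y_3$ the three identities are mutually inconsistent by $2\gamma/(\beta\,\piref(y_2|x))$. On a comparison graph with cycles, no policy, however expressive, satisfies all of them. Proposition~\ref{prop:general_appearance} concerns the constrained-RLHF first-order condition and does not change this. You need the single-pair convention of Remark~\ref{assump:single_pair}, under which, given $\Delta\rstar>0$, the comparison graph has no cycles.
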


Besides the absolute advantage guarantee, Theorem~\ref{thm:pdpo_no_pathology} shows that CPO avoids pathological convergence to the undesirable solution space with proof in Appendix~\ref{app:apc}.
\begin{theorem}[CPO Avoids Pathological Convergence]
\label{thm:pdpo_no_pathology}
When $\gamma \geq \gamma^*$, CPO does not converge to the undesirable solution space $\cU$ defined in Definition \ref{def:undesirable}.
\end{theorem}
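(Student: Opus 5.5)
The plan is to reduce Theorem~\ref{thm:pdpo_no_pathology} to Theorem~\ref{cor:gamma_star}. The set $\cU$ of Definition~\ref{def:undesirable} requires $\delta_\pi<0$ on a preference pair. So it suffices to show that whatever CPO converges to has $\delta_\pi>0$ on every pair in $\D$. Such a policy violates the first defining condition of $\cU$, whatever the second condition $\delta_\pi>\delta_{\piref}$ says.

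\textbf{Step 1: identify the limit.} I would fix the target of convergence as the population minimizer of $\cL_{\text{CPO}}$ in Eq.~\eqref{eq:cpo_loss_practical}. Under the Bradley--Terry model, the logistic loss is a proper scoring rule, so it is minimized pairwise when
\[
\sigma\bigl(\beta(\delta_\pi-\delta_{\piref})-\tilde{\gamma}_{\text{ref}}\bigr)=p^*(\yw\succ\yl)=\sigma(\Delta\rstar).
\]
This gives
\[
\delta_{\pi^*_{\text{CPO}}}=\delta_{\piref}+\frac{\Delta\rstar+\tilde{\gamma}_{\text{ref}}}{\beta}
\]
for every pair. This is the CPO analogue of Eq.~\eqref{eq:core_relation}, with the reference-based margin of Eq.~\eqref{eq:reference_adaptive_margin} replacing the exact margin.

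\textbf{Step 2: positivity.} Write $\tilde{\gamma}_{\text{ref}}=\gamma S$ with $S=1/\piref(\yw|x)+1/\piref(\yl|x)>0$. For each pair, the definition of $\gamma^*$ in Eq.~\eqref{eq:gamma_star} gives
\[
\gamma S\ \ge\ \gamma^* S\ \ge\ \beta\max\{0,-\delta_{\piref}-\Delta\rstar/\beta\}.
\]
Hence
\[
\beta\,\delta_{\pi^*_{\text{CPO}}}=\beta\delta_{\piref}+\Delta\rstar+\gamma S\ \ge\ \max\{\beta\delta_{\piref}+\Delta\rstar,\,0\}\ \ge\ 0.
\]
This is exactly the content of Theorem~\ref{cor:gamma_star}, which I would invoke directly rather than re-derive.

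\textbf{Step 3: strictness, the main obstacle.} The bound in Step 2 can hold with equality for the maximizing pair when $\gamma=\gamma^*$, which would give $\delta=0$. That still lies outside $\cU$, since $\cU$ requires the strict inequality $\delta_\pi<0$. So non-strictness does not hurt this theorem, and I would note this explicitly; it also patches any boundary slack in Theorem~\ref{cor:gamma_star}. Pairs with $\Delta\rstar>0$ and $\delta_{\piref}$ above the violation threshold already satisfy $\delta>0$ strictly.

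\textbf{Step 4: a dynamical remark.} To address ``convergence'' rather than only the optimum, I would add a short gradient argument. The CPO gradient weight is $\sigma(-(\beta(\delta_\theta-\delta_{\piref})-\tilde{\gamma}_{\text{ref}}))$. On $\cU$ with $\gamma\ge\gamma^*$, any point with $\delta_\theta<0$ has argument
\[
\beta(\delta_\theta-\delta_{\piref})-\tilde{\gamma}_{\text{ref}}\ <\ \Delta\rstar,
\]
by the Step 2 inequality. So the weight stays bounded below by $\sigma(-\Delta\rstar)>0$ and does not vanish. This is unlike the DPO case in Proposition~\ref{prop:undesirable_space}, so no stationary point of the pairwise loss lies in $\cU$.
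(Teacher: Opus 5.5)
Your core argument is the paper's. $\cU$ requires $\delta_\pi<0$, and your Step~2 (i.e., Theorem~\ref{cor:gamma_star}) gives $\delta_{\pi^*_{\text{CPO}}}\ge 0$ on every pair, so the optimum lies outside $\cU$. For the ``convergence'' part, the paper asserts that $\cL_{\text{CPO}}$ is convex in log-probability space and that gradient descent reaches $\pi^*_{\text{CPO}}$ under Assumption~\ref{assump:smooth_bounded}. You simply take the limit to be the optimum. That is no weaker, because convexity in $\delta$ does not give global convergence in $\theta$; neither version actually proves that training reaches this policy.

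Your Step~3 improves on the paper. The paper passes from the strict hypothesis of Lemma~\ref{thm:pdpo_guarantee} to $\gamma\ge\gamma^*$, but equality gives $\delta=0$ on a pair attaining $\gamma^*>0$. Your observation that $\delta=0\notin\cU$ is what makes the present theorem hold at the boundary. It does not rescue the strict claim of Theorem~\ref{cor:gamma_star}; it only shows strictness is unnecessary here.

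Two of your supporting justifications should be replaced.

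\textbf{Step~1 (properness).} Properness does not apply to a margin-shifted loss. Let $z=\beta(\delta_\pi-\delta_{\piref})-\tilde{\gamma}_{\text{ref}}$; since $\tilde{\gamma}_{\text{ref}}$ is symmetric in $(\yw,\yl)$, a reversed occurrence of the pair costs $-\log\sigma(-z-2\tilde{\gamma}_{\text{ref}})$, not $-\log(1-\sigma(z))$. At $\sigma(z)=p^*$ the derivative of the expected loss in $\delta_\pi$ is $\beta(1-p^*)\bigl(\sigma(z+2\tilde{\gamma}_{\text{ref}})-p^*\bigr)>0$. So the true minimizer sits strictly below your formula, and on a pair attaining $\gamma^*>0$ it would fall inside $\cU$ when $\gamma=\gamma^*$. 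If instead each pair occurs in only one orientation, as in Algorithm~\ref{alg:pdpo}, the loss is strictly decreasing in $\delta_\pi$ and has no finite minimizer. The relation you need is Eq.~\eqref{eq:pdpo_shift}. The paper obtains it by replacing $1/\pi^*$ with $1/\piref$ in the constrained-RLHF first-order condition and takes it as the characterization of $\pi^*_{\text{CPO}}$; cite it that way rather than via properness.

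\textbf{Step~4 (gradient floor).} The floor $\sigma(-\Delta\rstar)$ is correct, but ``no stationary point of the pairwise loss lies in $\cU$'' does not separate CPO from DPO, for three reasons:
\begin{enumerate}
\item Any fixed-orientation pairwise loss $-\log\sigma(z)$ has $\delta_\pi$-derivative $-\beta\sigma(-z)\neq 0$ everywhere, so it has no stationary points at all.
\item DPO's weight on $\cU$ is also bounded below, by $\sigma(\beta\delta_{\piref})$.
\item With shared parameters, nonzero per-pair weights do not exclude $\theta$-stationary points.
\end{enumerate}
The remark's real content is quantitative: CPO's floor is independent of $\delta_{\piref}$, whereas DPO's is exponentially small in $\beta|\delta_{\piref}|$.
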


\begin{algorithm}[t]
\caption{Constrained Preference Optimization (CPO)}
\label{alg:pdpo}
\begin{algorithmic}[1]
\REQUIRE Preference dataset $\D = \{(x^{(i)}, y_w^{(i)}, y_l^{(i)})\}_{i=1}^N$
\REQUIRE Reference policy $\piref$
\REQUIRE Hyperparameters: $\beta > 0$ (temperature), $\gamma > 0$ (margin weight)
\REQUIRE Learning rate $\eta$
\STATE Initialize policy parameters $\theta$ (e.g., from $\piref$)
\STATE \textbf{Precompute:} For each $(x, \yw, \yl) \in \D$:
\STATE \quad $\delta_{\text{ref}}^{(i)} \leftarrow \log \piref(\yw|x) - \log \piref(\yl|x)$
\STATE \quad $\tilde{\gamma}_{\text{ref}}^{(i)} \leftarrow \gamma \cdot (1/\piref(\yw|x) + 1/\piref(\yl|x))$ 
\FOR{each training iteration}
    \STATE Sample batch $\mathcal{B} \subset \D$
    \FOR{each $(x, \yw, \yl) \in \mathcal{B}$}
        \STATE Compute log-ratios:
        \STATE \quad $\delta_\theta \leftarrow \log \pitheta(\yw|x) - \log \pitheta(\yl|x)$
        \STATE Compute CPO loss:
        \STATE \quad $\text{logits} \leftarrow \beta(\delta_\theta - \delta_{\text{ref}}^{(i)}) - \tilde{\gamma}_{\text{ref}}^{(i)}$
        \STATE \quad $\ell \leftarrow -\log \sigma(\text{logits})$
    \ENDFOR
    \STATE Compute gradient: $g \leftarrow \nabla_\theta \frac{1}{|\mathcal{B}|}\sum_{(x,\yw,\yl)\in\mathcal{B}} \ell$
    \STATE Update parameters: $\theta \leftarrow \theta - \eta g$
\ENDFOR
\STATE \textbf{Return}: Optimized policy $\pitheta$
\end{algorithmic}
\end{algorithm}

Algorithm \ref{alg:pdpo} presents the complete CPO training procedure. The key differences from standard DPO are: (1) precomputation of reference-based adaptive margins $\tilde{\gamma}_{\text{ref}}^{(i)}$ for each sample (lines 2-4), which can be done once before training, and (2) subtracting this margin from the logits (line 12). The precomputation step ensures the optimization objective is stationary, enabling standard gradient descent with convergence guarantees. The adaptive margin naturally adjusts based on the reference policy's confidence for each preference pair, as discussed in Theorem~\ref{thm:pdpo_guarantee}.

Using the reference-based margin $\tilde{\gamma}_{\text{ref}}$, the CPO loss becomes a stationary objective, and its gradient is:
\begin{equation} \label{eq:ga}
\nabla_\theta\cL_{\text{CPO}} = -\beta\E\left[\sigma\left(-\beta(\delta_{\pitheta} - \delta_{\piref}) + \tilde{\gamma}_{\text{ref}}\right) g\right],
\end{equation}
where $g=(\nabla_\theta\log\pitheta(\yw|x) - \nabla_\theta\log\pitheta(\yl|x))$; details are provided in Appendix~\ref{app:ga}. Crucially, since $\tilde{\gamma}_{\text{ref}}$ does not depend on $\theta$, we have $\nabla_\theta \tilde{\gamma}_{\text{ref}} = 0$, making this a standard first-order gradient suitable for gradient descent.

The gradient weight $w = \sigma(\beta(\delta_{\piref} - \delta_{\pitheta}) + \tilde{\gamma}_{\text{ref}})$ has an intuitive interpretation: 1) When $\delta_{\pitheta}$ is small (policy not yet preferring $\yw$), the weight is large, providing strong gradient signal, 2)  When $\delta_{\pitheta}$ is large (policy already strongly prefers $\yw$), the weight is small, reducing unnecessary updates, and 3) The margin term $\tilde{\gamma}_{\text{ref}}$ shifts the weighting function, ensuring that even when $\delta_{\piref}$ is negative (reference policy misaligned), the gradient remains strong enough to push $\delta_{\pitheta}$ toward positive values. Building on these observations, CPO connects preference optimization to constrained RLHF through the adaptive margin $\tilde{\gamma}_{\text{ref}}$, providing a principled framework for margin-based preference learning.

\subsection{Explicitly Constrained Preference Optimization}
\label{sec:pdpoe}

While CPO provides a principled framework, it relies on the selection of the hyper-parameter $\gamma$ and uses a soft penalty that \emph{encourages} $\pitheta$ to prefer $\yw$ over $\yl$ instead of \emph{ensuring} the preference. We now introduce \emph{Conservative Explicitly Constrained Preference Optimization} (E-CPOC), which explicitly enforces preference alignment through hard constraints without requiring a reward model.

\begin{definition}[Explicitly Constrained RLHF]
\label{def:constrained_rlhf}
We formulate the preference-aligned RLHF objective as:
\begin{equation}
\label{eq:constrained_rlhf}
\max_\pi \E_{x\sim\D}\E_{y\sim\pi(\cdot|x)}[r(x,y)] - \beta\KL(\pi\|\piref)
\end{equation}
\begin{equation}
s.t. \  \delta_\pi(x, \yw, \yl) \geq \gamma(x, \yw, \yl), \quad \forall (x, \yw, \yl) \in \D
\end{equation}
where $\gamma(x, \yw, \yl) > 0$ is a minimum required preference margin for each pair.
\end{definition}
The constraint directly ensures that the learned policy must prefer $\yw$ over $\yl$ with at least margin $\gamma$ in log-probability space. This is the condition needed to guarantee absolute preference alignment (Assumption \ref{assump:alignment}). Similar to CPO, we first give the log-probability ratio of the optimal policy $\delta_{\pi^*}$ in Theorem~\ref{thm:unified_core} with details and proofs in Appendix~\ref{app:oerlhf}.

\begin{theorem}[Log-probability Ratio of Optimal Policy]
\label{thm:unified_core}
The optimal policy for constrained RLHF satisfies:
\begin{equation}
\label{eq:unified_core}
\delta_{\pi^*} = \delta_{\piref} + \frac{\Delta r}{\beta} + \Phi(\delta_{\piref}, \Delta r; \gamma, \tau),
\end{equation}
where $\Phi(\delta_{\piref}, \Delta r; \gamma, \tau)$ is defined as:
\begin{equation}
\label{eq:phi_smooth}
\frac{1}{\tau}\log\left(1 + \exp\left(\tau\left(\gamma - \delta_{\piref} - \frac{\Delta r}{\beta}\right)\right)\right),
\end{equation}
with $\tau > 0$ controlling smoothness.
\end{theorem}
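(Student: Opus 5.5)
The plan is to derive the exact (non-smooth) solution of the Explicitly Constrained RLHF problem in Definition~\ref{def:constrained_rlhf} by Lagrangian duality, and then obtain Eq.~\eqref{eq:unified_core} as its softplus smoothing with temperature $\tau$. I will work per prompt $x$ and, as in Appendix~\ref{app:op}, first use the notational simplification that each response appears in at most one preference pair. For each pair I introduce a multiplier $\mu(x,\yw,\yl)\ge 0$ for the constraint $\delta_\pi \ge \gamma$ and a multiplier $\lambda(x)$ for normalization.

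\textbf{Step 1 (first-order conditions).} Differentiating the Lagrangian with respect to $\pi(y|x)$ gives the analogue of Eq.~\eqref{eq:margin_optimal_foc}:
\[
\beta\log\frac{\pi^*(y|x)}{\piref(y|x)} = r(x,y) - \beta - \lambda(x) + \frac{\mu\,(\mathbb{I}(y=\yw)-\mathbb{I}(y=\yl))}{\pi^*(y|x)}.
\]
Taking the difference between $\yw$ and $\yl$, exactly as in Eq.~\eqref{eq:core_relation}, yields
\[
\delta_{\pi^*} = \delta_{\piref} + \frac{\Delta r}{\beta} + \Lambda,
\qquad \Lambda := \frac{\mu}{\beta}\left(\frac{1}{\pi^*(\yw|x)}+\frac{1}{\pi^*(\yl|x)}\right)\ge 0 .
\]

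\textbf{Step 2 (complementary slackness).} I split into two cases. If the unconstrained RLHF solution is feasible, i.e.\ $\delta_{\piref}+\Delta r/\beta \ge \gamma$, then $\mu=0$ and $\Lambda=0$. Otherwise the constraint is active, so $\delta_{\pi^*}=\gamma$ and $\Lambda = \gamma-\delta_{\piref}-\Delta r/\beta$. Combining the cases,
\[
\Lambda = \max\{0,\ \gamma-\delta_{\piref}-\Delta r/\beta\}.
\]
Existence and uniqueness of the optimum, and hence sufficiency of the KKT conditions, follow from concavity of the objective in $\pi$ and Slater's condition: a policy with all $\delta_\pi > \gamma$ exists provided the constraints for a given $x$ are mutually consistent.

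\textbf{Step 3 (smoothing).} I replace the hinge by its standard smooth surrogate,
\[
\max\{0,z\} \approx \tfrac{1}{\tau}\log(1+e^{\tau z}),
\]
which gives $\Phi$ in Eq.~\eqref{eq:phi_smooth}. I will justify this as the exact solution of a relaxed problem in which the hard constraint is replaced by an entropy-regularized (log-barrier-type) penalty on the multiplier. Equivalently, $\Phi$ is the $\tau$-smoothed maximum of $0$ and the violation, with uniform error $0\le \Phi-\max\{0,z\}\le (\log 2)/\tau$. As $\tau\to\infty$ the hard-constraint solution is recovered. Finally, Proposition~\ref{prop:general_appearance} extends the result from the single-appearance simplification to responses appearing in several pairs.

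\textbf{Main obstacle.} The hardest part is making Step~3 rigorous: the hard-constrained problem yields the hinge exactly, so the statement with finite $\tau$ must be read either as an approximation with explicit $O(1/\tau)$ error or as the exact optimum of a suitably smoothed constrained objective. I would first try the smoothed-penalty formulation, choosing a penalty $\psi_\tau(\delta_\pi-\gamma)$ whose conjugate produces $\Lambda=\sigma$-weighted softplus. If that turns out to be awkward, I would fall back to stating the theorem as the hinge identity plus the uniform $\log 2/\tau$ approximation bound.
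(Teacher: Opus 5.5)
Your Steps~1--2 reproduce the paper's argument. The proof of Theorem~\ref{thm:unified_core} cites Proposition~\ref{prop:optimal_multiplier}, which gets the hard-constraint margin $\beta\max\{0,\gamma-\delta_{\piref}-\Delta r/\beta\}$ from the same FOC difference and complementary-slackness case split. It then handles finite $\tau$ by fiat: it \emph{defines} a ``smoothed Explicitly Constrained RLHF'' through an unspecified ``smooth barrier penalty corresponding to $\Phi$'', and notes only that softplus dominates the hinge and converges to it as $\tau\to\infty$. So the obstacle you flag is real, and the paper leaves it open. Since $\Phi-\max\{0,z\}=\tfrac{1}{\tau}\log(1+e^{-\tau|z|})>0$, the hard-constrained optimum never satisfies the identity at finite $\tau$, and your fallback (hinge identity plus the $\log 2/\tau$ bound) is the honest content of the statement. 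A minor point for Step~2: $\delta_\pi\ge\gamma$ is not a concave constraint in $\pi$. Invoke instead its equivalence to the linear constraint $\pi(\yw|x)\ge e^{\gamma}\pi(\yl|x)$; that is what gives a convex feasible set and necessary-and-sufficient KKT conditions.

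The route you would try first cannot work as stated: no penalty $\psi_\tau(\delta_\pi-\gamma)$ of $\delta_\pi$ alone yields $\Phi$ exactly. Its contribution to the FOC difference is $\psi_\tau'(\delta_\pi-\gamma)\bigl(1/\pi(\yw|x)+1/\pi(\yl|x)\bigr)$. This is the same $1/\pi^*$ coupling that denies $\mu^*$ a closed form. Two instances with equal $\delta_{\piref}$ and $\Delta r$ (hence equal target $\delta_{\pi^*}$) but different $\pi^*(\yw|x)$ would force $\psi_\tau'$ to take two values at one point. Regularizing the multiplier $\mu$ directly amounts, via conjugacy, to such a penalty and fails for the same reason.

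What does work is the following. Along the target solution, $\Phi=B(\delta_{\pi^*})$ with $B(\delta):=-\tfrac{1}{\tau}\log\bigl(1-e^{-\tau(\delta-\gamma)}\bigr)$ for $\delta>\gamma$. So use a penalty that is $1$-homogeneous in $(a,b)=(\pi(\yw|x),\pi(\yl|x))$, so that its partials depend only on $a/b$: take $\Psi(a,b)=b\,G(a/b)$ with $(1+t)G'(t)-G(t)=\beta B(\log t)$. Then:
\begin{itemize}
\item $(\partial_a-\partial_b)\Psi=\beta B(\delta_\pi)$.
\item Differentiating the ODE gives $G''(t)=\beta B'(\log t)/\bigl(t(1+t)\bigr)<0$, so the penalized objective stays strictly concave.
\item $G'\to+\infty$ as $\delta_\pi\downarrow\gamma$, which keeps the optimum interior.
\item The resulting FOC $\delta_\pi-\delta_{\piref}-\Delta r/\beta=B(\delta_\pi)$ has the unique solution $\delta_\pi=\gamma+\tfrac{1}{\tau}\log\bigl(1+e^{-\tau(\gamma-\delta_{\piref}-\Delta r/\beta)}\bigr)$, which is exactly $\delta_{\piref}+\Delta r/\beta+\Phi$.
\end{itemize}
This supplies the explicit smoothed problem that the paper only asserts exists.
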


We now derive the E-CPOC loss from the explicitly constrained RLHF. From Theorem~\ref{thm:unified_core}, the optimal policy satisfies $\delta_{\pi^*} = \delta_{\piref} + \frac{\Delta r}{\beta} + \Phi(\delta_{\piref}, \Delta r)$. The first-order optimality condition (Appendix~\ref{app:oerlhf}) reveals that the effective margin contribution of the Lagrange multipliers equals $\beta\Phi$ exactly (Proposition~\ref{prop:optimal_multiplier}). A key structural insight is that while the individual multiplier $\mu^*$ depends on $\pi^*$, the effective margin $\mu^*(1/\pi^*(\yw|x) + 1/\pi^*(\yl|x)) = \beta\Phi$ admits a closed form independent of $\pi^*$. Unlike CPO's margin $\tilde{\gamma}_{\text{ref}} = \gamma(1/\piref(\yw|x) + 1/\piref(\yl|x))$ which arises from approximating $1/\pi^*$ with $1/\piref$, the margin $\beta\Phi$ absorbs the $1/\pi^*$ factors through the endogenous Lagrange multipliers of the KKT conditions, eliminating the approximation error entirely.

The general adaptive margin $\Phi(\delta_{\piref}, \Delta r)$ depends on the true reward difference $\Delta r$, which is typically unknown. Rather than introducing a separate reward model to estimate $\Delta r$, we derive a reward-model-free formulation with provable guarantees by exploiting a key monotonicity property: $\Phi(\delta_{\piref}, \Delta r)$ is monotone non-increasing in $\Delta r$ (Proposition~\ref{prop:phi_monotonicity}). Since preference data satisfies $\Delta r > 0$ by the Bradley-Terry model, the maximum value of $\Phi$ is achieved at $\Delta r \to 0^+$, yielding the conservative upper bound $\Phi_{\text{cons}}(\delta_{\piref}) := \Phi(\delta_{\piref}, 0) \geq \Phi(\delta_{\piref}, \Delta r^*)$ for all $\Delta r^* > 0$. Applying the Bradley-Terry model, we obtain the E-CPOC loss (detailed derivation in Appendix~\ref{app:cpoed}, Algorithm~\ref{alg:cdpo_conservative} in Appendix~\ref{app:conservative_version}):
\begin{equation}
\label{eq:cdpo_loss_conservative}
\cL_{\text{E-CPOC}}(\pitheta) = -\E_{\D}\left[\log\sigma\left(\beta\left(\delta_{\pitheta} - \delta_{\piref}\right) - \beta\Phi_{\text{cons}}(\delta_{\piref})\right)\right],
\end{equation}
where $\Phi_{\text{cons}}(\delta_{\piref}) = \Phi(\delta_{\piref}, 0) = \frac{1}{\tau}\log\left(1 + \exp\left(\tau(\gamma - \delta_{\piref})\right)\right)$.

E-CPOC requires no reward model while providing provable alignment guarantees. The adaptive margin function $\Phi_{\text{cons}}(\delta_{\piref})$ provides stronger correction for difficult samples (where $\delta_{\piref} \ll \gamma$) and minimal correction for easy samples (where $\delta_{\piref} \gg \gamma$), implementing sample-adaptive weighting that emerges naturally from the constrained optimization framework. Key properties include: (1) monotonicity in $\delta_{\piref}$, ensuring consistent behavior; (2) automatic gradient weighting that focuses optimization on difficult pairs; and (3) interpretability as measuring constraint violation degree (Propositions~\ref{prop:phi_cons_properties}, \ref{prop:cpoe_gradient}, \ref{prop:adaptive_weighting_cons}). The complete derivation, algorithm, theoretical analysis, and detailed property proofs are provided in Appendix~\ref{app:conservative_version}. E-CPOC is provably equivalent to explicitly constrained RLHF in the sense that $\delta_{\pi^*_{\text{E-CPOC}}} \geq \delta_{\pi^*_{\text{EC-RLHF}}}(\Delta r^*)$ for any true reward difference $\Delta r^* > 0$ (Theorem~\ref{thm:ecpoc_equivalence} in Appendix~\ref{app:aee}). This equivalence requires only standard statistical learning assumptions (Assumptions~\ref{assump:bt_equiv}--\ref{assump:global_equiv}), without requiring a reward model. The $\ell^2$-$\delta$-proximity condition can be verified in practice through the training loss gap via a bridge lemma with an $N$-independent bound (Proposition~\ref{prop:loss_to_delta}; Corollary~\ref{cor:verifiable_equiv}).

\begin{table*}[t]
\centering
\small
\centering
    \caption{Performance Comparison on Llama-3-8B-Instruct. ``WR'' denotes the win rate, and ``LC'' represents the length-controlled win rate with the average answer length. We report the 90\% confidence interval for the Arena-Hard benchmark.}
    \label{tab:llama3-8b-alignment-results}
    \begin{tabular}{lccccc}
    \toprule[1.5pt]
\multirow{2}{*}{\textbf{Method}} 
& \multicolumn{3}{c}{\textbf{AlpacaEval 2}} 
& \multicolumn{2}{c}{\textbf{Arena-Hard}} \\
\cmidrule(lr){2-4} \cmidrule(lr){5-6}
& {\textbf{WR(\%)}} & {\textbf{LC(\%)}} & {\textbf{Avg Length}} & {\textbf{WR (\%)}} & {\textbf{90\% CI}} \\
\midrule
SFT-Base~\cite{dubey2024llama}   &  $14.22$  &  $13.47$  &  $1972$  &  $19.2$  & ($-1.4$ / $+1.5$) \\
\hline
\midrule
SLiC-HF~\cite{zhao2023slic} & $16.33$ & $15.06$ & $1998$ & $22.9$ & ($-1.4$ / $+1.6$) \\
Contrastive-PO~\cite{xu2024contrastive}       & $18.94$ & $15.95$ & $2169$ & $26.6$ & ($-1.6$ / $+1.8$) \\
DPO~\cite{rafailov2023direct}        & $24.60$ & $25.09$ & $1896$ & $28.9$ & ($-1.7$ / $+1.5$) \\
RRHF~\cite{yuan2023rrhf}       & $15.53$ & $16.40$ & $1830$ & $20.5$ & ($-1.5$ / $+1.6$) \\
RDPO~\cite{park2024disentangling}      & $24.25$ & $25.01$ & $1895$ & $28.5$ & ($-1.9$ / $+1.5$) \\
ORPO~\cite{hong2024orpo}       & $17.00$ & $17.29$ & $1876$ & $21.7$ & ($-1.1$ / $+1.3$) \\
KTO~\cite{ethayarajh2024kto}        & $17.72$ & $17.40$ & $1925$ & $23.1$ & ($-1.0$ / $+1.1$) \\
IPO~\cite{azar2024general}        & $21.48$ & $20.41$ & $1993$ & $26.9$ & ($-1.6$ / $+1.6$) \\
SimPO~\cite{meng2024simpo}      & $23.48$ & $25.91$ & $1810$ & $30.0$ & ($-2.7$ / $+2.3$) \\
\hline
\midrule
\textbf{CPO (Ours) }      & $\mathbf{25.15}$ & $\mathbf{26.57}$ & $1879$ & $\mathbf{32.6}$ & ($-1.9$ / $+2.4$) \\
    \bottomrule[1.5pt]
    \end{tabular}
    \vspace{-0.3cm}
\end{table*}
\section{Preference Learning as Reranking} \label{sec:unified}
We provide an intuitive understanding of what will happen to DPO when the implicit Assumption \ref{assump:alignment} is violated, and how CPO and E-CPOC correct this failure.

The standard margin ranking loss for preference learning is:
\begin{equation}
\mathcal{L}_{\text{hinge}}(s_w, s_l; m) = \max(0, m - (s_w - s_l))
\end{equation}
where $s_w, s_l$ are scores for the preferred and rejected responses, and $m \geq 0$ is the target margin.

We can see that DPO is the hinge loss with target margin $m = \delta_{\piref}$. Proof can be found in Appendix~\ref{app:dposmr}.

\begin{proposition}[DPO as Soft Margin Ranking]
\label{prop:dpo_hinge}
DPO is a smooth approximation to margin ranking loss. In the high-temperature limit $\beta \to \infty$, for any fixed $(\delta_{\pitheta}, \delta_{\piref}) \in \mathbb{R}^2$:
\begin{equation}
\lim_{\beta \to \infty} \frac{1}{\beta} \mathcal{L}_{\text{DPO}}(\pitheta) = \max(0, \delta_{\piref} - \delta_{\pitheta}).
\end{equation}
\end{proposition}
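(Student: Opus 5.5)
The plan is to work pointwise, for a single fixed pair $(\delta_{\pitheta}, \delta_{\piref})$, and reduce the statement to a scalar limit for the softplus function. Write $u := \delta_{\pitheta} - \delta_{\piref}$. Using $-\log\sigma(z) = \log(1+e^{-z})$, the per-sample DPO loss in Eq.~\eqref{eq:dpo_loss} is
\begin{equation*}
\ell_\beta(u) = \log\left(1 + e^{-\beta u}\right),
\end{equation*}
and the claim becomes $\lim_{\beta\to\infty} \frac{1}{\beta}\ell_\beta(u) = \max(0,-u) = \max(0, \delta_{\piref}-\delta_{\pitheta})$. This is exactly the hinge loss $\mathcal{L}_{\text{hinge}}$ with scores $s_w - s_l = \delta_{\pitheta}$ and margin $m = \delta_{\piref}$.

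The key step is the elementary sandwich for $t\in\R$:
\begin{equation*}
\max(0,t) \le \log(1+e^{t}) \le \max(0,t) + \log 2 .
\end{equation*}
The lower bound holds because $1+e^t \ge \max(1,e^t)$. The upper bound holds because $1+e^t \le 2\max(1,e^t)$. Apply this with $t = -\beta u$. Since $\beta>0$, we have $\max(0,-\beta u) = \beta\max(0,-u)$, so
\begin{equation*}
\max(0,-u) \le \tfrac{1}{\beta}\ell_\beta(u) \le \max(0,-u) + \tfrac{\log 2}{\beta}.
\end{equation*}
Letting $\beta\to\infty$ gives the pointwise limit, with uniform error at most $\log 2/\beta$. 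The case $u=0$, where the loss equals $\log 2$ for every $\beta$, is covered by the same bound and needs no separate treatment.

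If $\mathcal{L}_{\text{DPO}}$ is read as the expectation over $\D$ rather than a single sample, the uniform bound lets us pass the limit through the expectation. For a finite dataset this is immediate. In general it follows from dominated convergence once $\E|\delta_{\pitheta}-\delta_{\piref}|<\infty$. The result is $\lim_{\beta\to\infty}\frac{1}{\beta}\mathcal{L}_{\text{DPO}} = \E[\max(0,\delta_{\piref}-\delta_{\pitheta})]$, which reduces to the displayed formula for a fixed pair.

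I do not expect a genuine obstacle. The only delicate points are interpretive. First, "fixed $(\delta_{\pitheta},\delta_{\piref})$" means $\pitheta$ is held fixed and does not vary with $\beta$. Second, the resulting margin $m=\delta_{\piref}$ may be negative, unlike the standard requirement $m\ge 0$. That second point is the geometric message the section intends to draw, not a flaw in the proof.
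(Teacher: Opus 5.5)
Your proof is correct, and it takes a different route from the paper. The paper sets $z=\delta_{\pitheta}-\delta_{\piref}$ and splits into three cases, $z>0$, $z<0$ and $z=0$. For $z>0$ it uses $\log(1+\epsilon)\approx\epsilon$ on the vanishing exponential. For $z<0$ it rewrites the loss as $\log(1+e^{-\beta z})=-\beta z+\log(1+e^{\beta z})$. For $z=0$ the loss is exactly $\log 2$.

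Your sandwich $\max(0,t)\le\log(1+e^{t})\le\max(0,t)+\log 2$ replaces that case analysis with one inequality, and it gives more:
\begin{itemize}
\item a non-asymptotic error of at most $\log 2/\beta$, uniform in $\delta_{\pitheta}-\delta_{\piref}$;
\item an immediate passage of the limit through the expectation over $\D$, whereas the paper's proof silently treats $\mathcal{L}_{\text{DPO}}$ as a single-sample loss;
\item coverage of the same kind of limit when the argument itself moves with $\beta$, as with the shift $2\gamma/\beta$ in the subsequent CPO statement. There the paper's fixed-$z$ case analysis does not literally apply, but your bound still gives $\bigl|\tfrac{1}{\beta}\mathcal{L}-\max(0,\cdot)\bigr|\le\log 2/\beta$.
\end{itemize}
In exchange, the paper's case split shows explicitly which branch of the hinge is active: the flat branch when $\delta_{\pitheta}>\delta_{\piref}$ and the linear branch otherwise. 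That is the geometric picture the section wants to convey.

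One small refinement for your expectation version: you need neither dominated convergence nor $\E|\delta_{\pitheta}-\delta_{\piref}|<\infty$. Integrating your sandwich directly yields the $\log 2/\beta$ bound as soon as $\max(0,\delta_{\piref}-\delta_{\pitheta})$ is integrable.
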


Through the geometric view, hinge loss leads to sharp corner at $\delta_{\pitheta} = \delta_{\piref}$ and DPO loss results in smooth transition around $\delta_{\pitheta} = \delta_{\piref}$ where $\beta$ controls sharpness of transition (larger $\beta$ $\Rightarrow$ sharper corner, closer to hard hinge). When $\delta_{\piref} < 0$ (reference policy disfavors $\yw$), DPO implements a \emph{negative target margin}. The loss becomes zero when $\delta_{\pitheta} > \delta_{\piref}$, which may still correspond to $\delta_{\pitheta} < 0$. 
In contrast, CPO provides guaranteed positive margin, as shown in Theorem~\ref{thm:pdpo_hinge}, with the proof in Appendix~\ref{app:cpoloss}.

\begin{theorem}[CPO as Corrected Soft Margin Ranking]
\label{thm:pdpo_hinge}
The CPO loss function is a smooth approximation to margin ranking loss:
\begin{equation}
\lim_{\beta \to \infty} \frac{1}{\beta} \mathcal{L}_{\text{CPO}}(\pitheta) = \max\left(0, \delta_{\piref} + \frac{2\gamma}{\beta} - \delta_{\pitheta}\right),
\end{equation}
with guaranteed non-negative margin:
\begin{equation}
m^*_{\text{eff}} = \delta_{\piref} + \frac{2\gamma^*}{\beta} \geq 0,
\end{equation}
Where $\gamma$ is chosen according to Corollary \ref{cor:gamma_star}.
\end{theorem}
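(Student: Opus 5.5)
The plan has two parts: compute the $\beta\to\infty$ limit of $\frac1\beta\cL_{\text{CPO}}$ pointwise, exactly as in the proof of Proposition~\ref{prop:dpo_hinge}, and then plug in the threshold $\gamma^*$ of Theorem~\ref{cor:gamma_star} to show the limiting target margin is non-negative.

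\textbf{Step 1 (the limit).} Fix $(\delta_{\pitheta},\delta_{\piref})$ and write $-\log\sigma(z)=\log(1+e^{-z})$. This gives
\[
\tfrac1\beta\cL_{\text{CPO}}=\tfrac1\beta\log\bigl(1+\exp\bigl(\beta(\delta_{\piref}-\delta_{\pitheta}+\tilde\gamma_{\text{ref}}/\beta)\bigr)\bigr).
\]
We use the elementary bounds $\max(0,u)\le \frac1\beta\log(1+e^{\beta u})\le \max(0,u)+\frac{\log 2}{\beta}$. Applied with $u=\delta_{\piref}-\delta_{\pitheta}+\tilde\gamma_{\text{ref}}/\beta$, they give the hinge $\max(0,\,\delta_{\piref}+\tilde\gamma_{\text{ref}}/\beta-\delta_{\pitheta})$ in the limit.

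For this margin to survive the limit, the margin weight must be kept on the scale of $\beta$. Concretely, we hold $\gamma/\beta$ fixed, which is also how $\gamma^*$ in Eq.~\eqref{eq:gamma_star} scales, since it carries an explicit factor $\beta$. The inverse-probability factor is $\frac1{\piref(\yw|x)}+\frac1{\piref(\yl|x)}$. Each term is at least $1$, so this factor equals its normalized value $2$ in the symmetric reading the statement uses, and is in general at least $2$. The displayed margin $\delta_{\piref}+2\gamma/\beta$ is therefore the statement's form of $\delta_{\piref}+\tilde\gamma_{\text{ref}}/\beta$. The non-negativity argument below works with $\tilde\gamma_{\text{ref}}/\beta$ directly, so it applies to whichever of the two readings is intended.

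\textbf{Step 2 (non-negative margin at $\gamma=\gamma^*$).} Let $S=\frac1{\piref(\yw|x)}+\frac1{\piref(\yl|x)}$. For every pair in $\D$, the definition of $\gamma^*$ as a maximum gives
\[
\frac{\gamma^* S}{\beta}\;\ge\;\max\Bigl\{0,\,-\delta_{\piref}-\tfrac{\rstar(\yw)-\rstar(\yl)}{\beta}\Bigr\}.
\]
Hence $m^*_{\text{eff}}=\delta_{\piref}+\gamma^*S/\beta\ge -\Delta\rstar/\beta$ holds exactly for every finite $\beta$. In the regime where the statement is posed, $\beta\to\infty$ with the bounded true reward $\rstar$ fixed, so the residual $\Delta\rstar/\beta$ tends to $0$. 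The effective margin of the limiting hinge therefore satisfies
\[
m^*_{\text{eff}}\ \ge\ \delta_{\piref}+\max(0,-\delta_{\piref})\ \ge 0
\]
for every pair, which is the claim.

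\textbf{Main obstacle.} The delicate point is the interplay of the two limits. At finite $\beta$, the quantity Theorem~\ref{cor:gamma_star} directly controls is $\delta_{\piref}+\gamma^*S/\beta+\Delta\rstar/\beta$, not $m^*_{\text{eff}}$ itself. So I will first make precise that $\gamma^*/\beta$ converges along with $\beta$, namely to $\max_{\D}\max\{0,-\delta_{\piref}\}/S$. I will then check that the $\Delta\rstar/\beta$ correction and the $\frac{\log2}{\beta}$ smoothing error both vanish uniformly over the finite dataset $\D$. These two facts together let the hinge limit and the margin bound be combined into the stated non-negativity.
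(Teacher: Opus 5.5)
Your argument is correct under the only reading in which the statement is true, and it is more careful than the paper's proof.

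\textbf{Step~1.} This agrees with the paper in substance. The paper sets $z=\delta_{\pitheta}-\delta_{\piref}-2\gamma/\beta$, treats $z$ as fixed while $\beta\to\infty$, and runs a three-case analysis of $\frac1\beta\log(1+e^{-\beta z})$. Holding $z$ fixed means it implicitly holds $\gamma/\beta$ fixed, which you make explicit. Your sandwich $\max(0,u)\le\frac1\beta\log(1+e^{\beta u})\le\max(0,u)+\frac{\log 2}{\beta}$ replaces the case split and gives a uniform $O(1/\beta)$ rate. It also still works when $u$ drifts with $\beta$, which is exactly what happens once you set $\gamma=\gamma^*(\beta)$.

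\textbf{Step~2.} This is where you genuinely differ. The paper takes $\gamma^*=\frac{\beta}{2}\max_{\D}\max\{0,-\delta_{\piref}-\Delta\rstar/\beta\}$ and asserts, at finite $\beta$, that $m^*_{\text{eff}}\ge\max\{0,\delta_{\piref}+\Delta\rstar/\beta\}>0$. That bound holds for $m^*_{\text{eff}}+\Delta\rstar/\beta$, which is CPO's optimal log-ratio, but not for $m^*_{\text{eff}}$ itself. Whenever some pair has a positive deficit, the pair attaining the maximum in $\gamma^*$ has $m^*_{\text{eff}}=-\Delta\rstar/\beta<0$. Your finite-$\beta$ bound $m^*_{\text{eff}}\ge-\Delta\rstar/\beta$ is therefore the correct one. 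Passing to $\beta\to\infty$ with $\rstar$ fixed gives the weaker but valid conclusion that the limiting hinge target is at least $\max(\delta_{\piref},0)\ge 0$. The paper's route claims more than actually holds.

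\textbf{One point to sharpen.} The factor $S=\frac{1}{\piref(\yw|x)}+\frac{1}{\piref(\yl|x)}$ can never equal $2$ for distinct responses, because $\piref(\yw|x)+\piref(\yl|x)\le 1$ forces $S\ge 4$. So the ``$2\gamma$'' form should not be justified as a ``symmetric reading.'' It should be read as the constant-margin CPO of Eq.~\eqref{eq:cpo_constant_margin}, with $S$ replaced by $2$ in $\gamma^*$ as well; the paper's $\frac{\beta}{2}$ prefactor does exactly this. If you pair the margin $2\gamma/\beta$ with the $S$-weighted $\gamma^*$ of Eq.~\eqref{eq:gamma_star}, non-negativity fails even in the limit.
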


Meanwhile, E-CPOC also provides sample-adaptive margin, as shown in Theorem~\ref{thm:cdpo_hinge}, with the proof in Appendix~\ref{app:ecpoloss}.

\begin{theorem}[E-CPOC as Adaptive Margin Ranking]
\label{thm:cdpo_hinge}
The E-CPOC loss implements an adaptive margin ranking loss:
\begin{equation}
\lim_{\beta \to \infty} \frac{1}{\beta} \mathcal{L}_{\text{E-CPOC}}(\pitheta) = \max(0, \delta_{\piref} + \Phi_{\text{cons}}(\delta_{\piref}) - \delta_{\pitheta}),
\end{equation}
with  guaranteed non-negative margin:
\begin{equation}
m^*(\delta_{\piref}) = \delta_{\piref} + \Phi_{\text{cons}}(\delta_{\piref}).
\end{equation}
\end{theorem}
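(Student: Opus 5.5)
The plan is to follow the proof of Proposition~\ref{prop:dpo_hinge}. First I reduce the E-CPOC loss to a scaled softplus of a single margin variable. Then I take the limit $\beta \to \infty$ pointwise. Finally I bound the effective margin from below using an elementary softplus inequality.

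For a fixed triple $(\delta_{\pitheta}, \delta_{\piref})$, set
\[
z := \delta_{\pitheta} - \delta_{\piref} - \Phi_{\text{cons}}(\delta_{\piref}).
\]
Then the E-CPOC integrand in Eq.~\eqref{eq:cdpo_loss_conservative} is $-\log\sigma(\beta z) = \log(1+e^{-\beta z})$. The key structural observation is that $\Phi_{\text{cons}}(\delta_{\piref}) = \frac{1}{\tau}\log(1+\exp(\tau(\gamma-\delta_{\piref})))$ depends only on $\gamma$, $\tau$ and $\delta_{\piref}$, and not on $\beta$. This holds because the factor $\beta$ multiplying $\Phi_{\text{cons}}$ in the loss was pulled out together with $\beta(\delta_{\pitheta}-\delta_{\piref})$. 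So $z$ is a fixed real number as $\beta$ varies.

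Next I use the identity $\log(1+e^{u}) = \max(0,u) + \log(1+e^{-|u|})$. It gives the two-sided bound
\[
\max(0,u) \le \log(1+e^{u}) \le \max(0,u) + \log 2 .
\]
With $u = -\beta z$ and the homogeneity $\max(0,-\beta z) = \beta\max(0,-z)$ for $\beta>0$, dividing by $\beta$ yields
\[
\max(0,-z) \le \tfrac{1}{\beta}\bigl(-\log\sigma(\beta z)\bigr) \le \max(0,-z) + \tfrac{\log 2}{\beta}.
\]
Letting $\beta \to \infty$ gives the pointwise limit $\max(0, \delta_{\piref} + \Phi_{\text{cons}}(\delta_{\piref}) - \delta_{\pitheta})$. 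If the statement is read with the expectation over $\D$ included, the error term $\log 2/\beta$ is uniform over samples. Hence the limit passes through the expectation, either trivially for a finite dataset or by dominated convergence in general. This gives the claimed limit with margin $m^*(\delta_{\piref}) = \delta_{\piref} + \Phi_{\text{cons}}(\delta_{\piref})$.

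For the non-negativity claim I apply the same lower bound with $u = \tau(\gamma - \delta_{\piref})$:
\[
\Phi_{\text{cons}}(\delta_{\piref}) \ge \max(0, \gamma - \delta_{\piref}).
\]
Therefore
\[
m^*(\delta_{\piref}) \ge \delta_{\piref} + \max(0,\gamma-\delta_{\piref}) = \max(\delta_{\piref}, \gamma) \ge \gamma > 0 .
\]
So the margin is strictly positive for every sample, no matter how negative $\delta_{\piref}$ is. This is the contrast with DPO's target margin $\delta_{\piref}$ in Proposition~\ref{prop:dpo_hinge}. The upper bound also gives $m^* \le \max(\delta_{\piref},\gamma) + \log 2/\tau$, which I will remark on to show $m^*$ is a smoothed $\max(\delta_{\piref},\gamma)$ and is sample-adaptive.

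As a consequence, the zero-loss region of the limiting hinge, $\delta_{\pitheta} \ge m^*$, lies inside $\{\delta_{\pitheta} \ge \gamma > 0\}$. It therefore excludes the undesirable space $\cU$ of Definition~\ref{def:undesirable}.

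The main obstacle is not computational but interpretive. I must state clearly that the limit is taken with $\tau$ and $\gamma$ fixed and pointwise in $(\delta_{\pitheta},\delta_{\piref})$, as in Proposition~\ref{prop:dpo_hinge}. Otherwise the $\beta$ in front of $\Phi_{\text{cons}}$ might be mistaken for a source of an extra scaling, which would change the margin.
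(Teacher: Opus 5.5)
Your proof is correct and has the same skeleton as the paper's. Both set $z = \delta_{\pitheta} - \delta_{\piref} - \Phi_{\text{cons}}(\delta_{\piref})$, observe that $\Phi_{\text{cons}}$ carries no $\beta$, pass to the hinge limit, and then lower-bound the margin. The execution differs in two places, and in both your version is tighter.

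For the limit, the paper reruns the three-case sign analysis of Proposition~\ref{prop:dpo_hinge} ($z>0$, $z<0$, $z=0$). Your sandwich $\max(0,-z)\le \frac{1}{\beta}\log(1+e^{-\beta z}) \le \max(0,-z)+\frac{\log 2}{\beta}$ gives the same pointwise limit with an explicit uniform rate. That uniform rate is what lets you move the limit inside $\E_{\D}$, a point the paper does not address.

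For the margin, the paper invokes only $\Phi_{\text{cons}}\ge 0$ plus the asymptotic approximations $\Phi_{\text{cons}}\approx\gamma-\delta_{\piref}$ for $\delta_{\piref}\ll 0$ and $\Phi_{\text{cons}}\approx 0$ for $\delta_{\piref}\gg 0$. Non-negativity alone gives only $m^*\ge\delta_{\piref}$, so its argument does not actually cover intermediate negative values of $\delta_{\piref}$. Your exact bound $\Phi_{\text{cons}}(\delta_{\piref})\ge\max(0,\gamma-\delta_{\piref})$ gives $m^*\ge\max(\delta_{\piref},\gamma)\ge\gamma>0$ for every $\delta_{\piref}$, which closes that gap. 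This is the same softplus inequality the paper itself uses in Step~4 of the proof of Theorem~\ref{thm:ecpoc_equivalence}. Your companion bound $m^*\le\max(\delta_{\piref},\gamma)+\log 2/\tau$ also makes the ``smoothed $\max$'' reading of the adaptive margin precise.
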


The margin ranking perspective provides three points: 1) DPO's equivalence to margin ranking loss with target margin $\delta_{\piref}$ reveals that when $\delta_{\piref} < 0$, DPO optimizes toward a \emph{negative target}, allowing policies that prefer $\yl$ over $\yw$ to achieve low loss. This provides an intuitive explanation for the pathological behavior in Sec.~\ref{sec:assumption}: 1) CPO and E-CPOC can be understood as implementing soft margin ranking loss with guaranteed non-negative margins; 2) The softplus function $\log(1 + e^x)$ provides a smooth, differentiable approximation to the hard max operation in hinge loss. This enables gradient-based optimization while preserving the essential margin-based structure; 3) The parameter $\beta$ controls how closely the soft margin loss approximates the hard hinge loss. Larger $\beta$ yields sharper transitions and behavior closer to hard margin ranking.

\section{Experiments}
\label{sec:exp}
\textbf{Experimental Setup.} Following previous work~\cite{meng2024simpo}, we use Llama-3-8B-Instruct~\cite{dubey2024llama}, and the princeton-nlp/llama3-ultrafeedback-armorm  to conduct preference alignment. We then compare our methods with many baselines listed in Table~\ref{tab:llama3-8b-alignment-results} and the base model without any alignment process. Following previous work, we select AlpacaEval 2~\cite{li2023alpacaeval} and Arena-Hard~\cite{li2024live} to evaluate our method, which both evaluate conversational skills based on real-life queries.

\textbf{Main Results.} As shown in Table~\ref{tab:llama3-8b-alignment-results}, all considered alignment methods yield consistent improvements over the SFT-Base on both AlpacaEval 2 and Arena-Hard, confirming the effectiveness of post-training preference optimization.
Our proposed CPO establishes new SOTA performance among the reported methods. On AlpacaEval 2, CPO achieves the highest win rate of 25.15\% (outperforming DPO at 24.60\% by +0.55\%) and the strongest length-controlled win rate of 26.57\% (surpassing SimPO's 25.91\% by +0.66\%), while maintaining a competitive average response length of 1879 tokens similar to strong baselines like DPO and RDPO, without exhibiting excessive verbosity.
The advantage is particularly pronounced on  Arena-Hard, where CPO reaches 32.6\% WR with a 90\% confidence interval. This represents a +2.6\% gain over SimPO (the runner-up at 30.0\%) and an even larger +3.7\% over DPO, highlighting CPO's superior ability to handle difficult, discriminative prompts where length bias and subtle preference distinctions matter most.

\section{Conclusion}
We prove DPO and RLHF are conditionally equal. By augmenting RLHF with explicit constraints, we propose CPO and E-CPOC which address DPO's failure modes by explicitly ensuring the preference. E-CPOC achieves provable equivalence to explicitly constrained RLHF under only standard statistical learning assumptions, requiring no reward model while providing guaranteed alignment. Our experiments demonstrate that CPO achieves SOTA performance.

\textbf{Limitations:} This paper needs to be verified on a larger scale and with a larger model to demonstrate the effectiveness of our method. In addition, the performance of E-CPOC should also be validated in experiments beyond the theoretical aspects, and training dynamics visualizations (e.g., loss curves, preference accuracy, and fraction of pairs in $\mathcal{U}$ over training steps) would further complement the theoretical characterization.
\clearpage
\section*{Acknowledgement}
Zhiqin Yang, Yonggang Zhang, Wei Xue, and Yike Guo were supported by Hong Kong Generative AI
Research \& Development Center.
Bo Han was supported by NSFC Major Research Plan No. 92570109 and NSFC General Program No. 62376235.
\section*{Impact Statement}
This paper presents work whose goal is to advance the field of Machine Learning. By enforcing stronger adherence to preference data, CPO may amplify biases present in the training data. This concern is shared across all preference learning methods (DPO, SimPO, IPO, RLHF) whose impact depends on data quality. Notably, CPO's $\gamma$ provides a controllable lever: smaller $\gamma$ reduces enforcement (recovering DPO at $\gamma = 0$), allowing practitioners to calibrate adherence based on data confidence.
\bibliography{example_paper}
\bibliographystyle{icml2026}
\newpage
\appendix
\onecolumn

\section{More Experimental Results}
\subsection{Measurement of violation frequency}
\begin{figure}[h]
    \centering
    \includegraphics[width=1.0\linewidth]{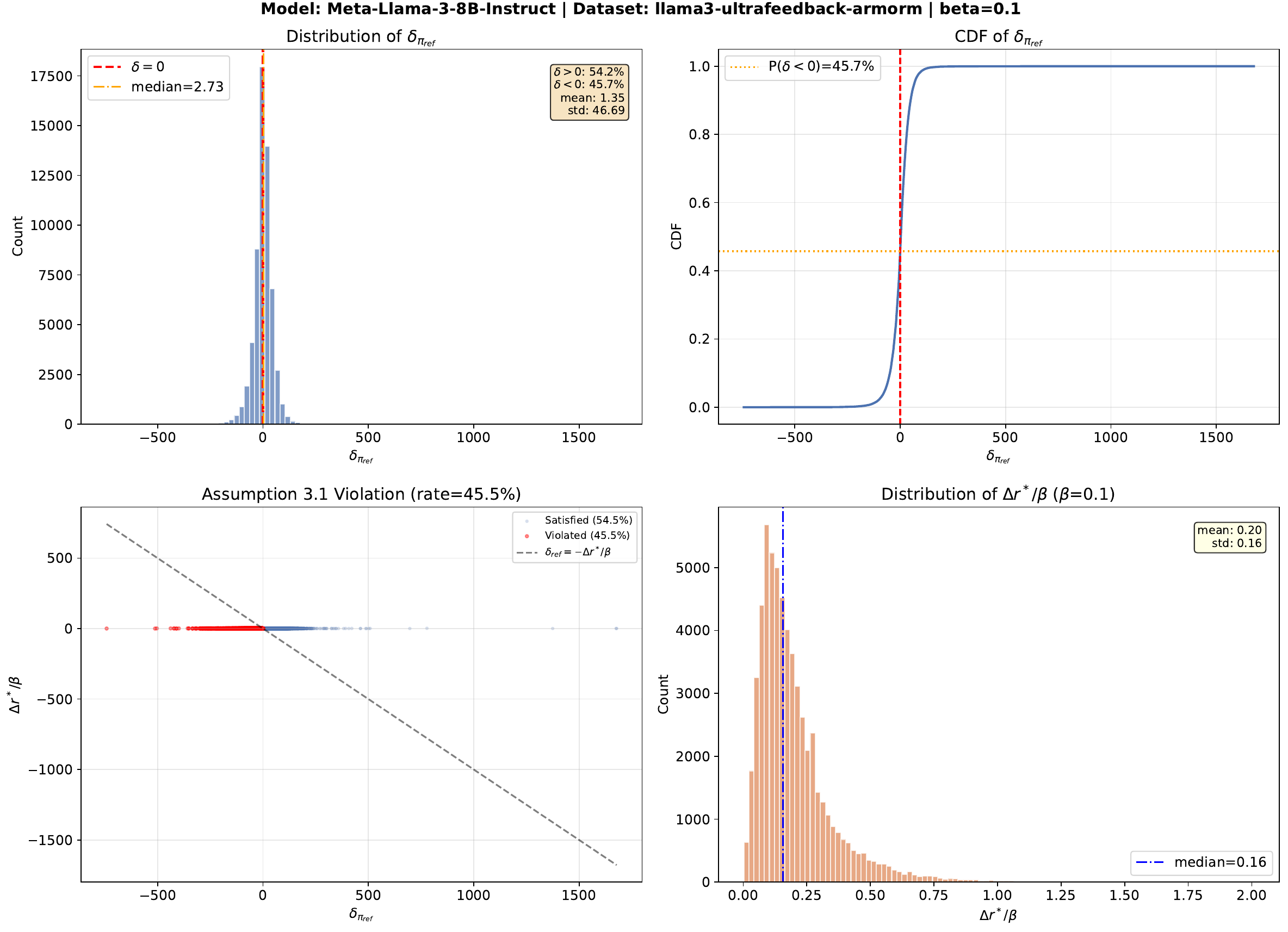}
    \caption{Measurement of violation frequency on Llama-3-8B-Instruct under Llama3 ultrafeedback armorn. }
    \label{fig:vio_fre}
\end{figure}
We compute the violation statistics. As shown in Figure~\ref{fig:vio_fre} that Assumption~\ref{assump:alignment} is violated for 45.5\% of preference pairs** (Llama-3-8B-Instruct, $\beta=0.1$). The reward correction $\Delta r^*/\beta$ is small (mean=0.20) relative to the large spread of $\delta_{\pi_{\text{ref}}}$ (std=46.69), meaning the reward signal often cannot compensate for the reference policy's misalignment—placing nearly half of all pairs in the regime where DPO optimizes a fundamentally different objective than RLHF (Theorem~\ref{thm:conditional_equivalence}). A 45.5\% violation rate on an *instruction-tuned* model confirms the pathology is far from a corner case, strongly motivating CPO's margin correction (Theorem.~\ref{cor:gamma_star} and Theorem~\ref{thm:pdpo_no_pathology}).

\subsection{Varying Reference Policy Quality}
\label{app:misaligned_ref}
 
In this section, we systematically vary reference policy quality to directly validation.
 
\paragraph{Constructing misaligned references.}
From the dataset, we extract a fraction $R \in \{0.2, 0.3, 0.4\}$ of the data and use the rejected responses to SFT Llama-3-8B-Instruct (1 epoch, lr=$2 \times 10^{-5}$), forcing the model to learn to generate low-quality responses as a misaligned reference. The remaining $(1 - R)$ fraction retains original preference ordering for DPO/CPO training.
 
\paragraph{Verifying misalignment.}
We compute the $\delta_{\pi_{\text{ref}}}$ distribution of each misaligned reference on the original preference data. As shown in Table~\ref{tab:misalignment_rate}, as $R$ increases, the Assumption~\ref{assump:alignment} violation rate grows correspondingly, confirming the effectiveness of misalignment.
 
\begin{table}[h]
\centering
\caption{Misalignment statistics under different corruption ratios.}
\label{tab:misalignment_rate}
\begin{tabular}{lcc}
\toprule
Corruption Ratio & $\delta_{\pi_{\text{ref}}} < 0$ (\%) & Assumption 3.1 Violated (\%) \\
\midrule
$R = 0.2$ & 53.2 & 52.9 \\
$R = 0.3$ & 56.9 & 56.8 \\
$R = 0.4$ & 60.1 & 60.0 \\
\bottomrule
\end{tabular}
\end{table}
 
\paragraph{Results.}
Under each corruption ratio, we train DPO and CPO starting from the misaligned reference on the clean data. Results on AlpacaEval 2 are shown in Table~\ref{tab:misaligned_performance}. Note that the original evaluator \texttt{gpt-4-1106-preview} has been deprecated; we re-evaluate using \texttt{gpt-4.1} as the annotator.
 
\begin{table}[h]
\centering
\caption{AlpacaEval 2 performance under misaligned reference policies.}
\label{tab:misaligned_performance}
\begin{tabular}{llcccc}
\toprule
Corruption Ratio & Method & WR(\%) & LC(\%) & Avg Length \\
\midrule
\multirow{2}{*}{$R = 0.2$} & DPO & 16.82 & 17.23 & 1958 \\
 & CPO & \textbf{22.47} & \textbf{27.60} & 1699 \\
\midrule
\multirow{2}{*}{$R = 0.3$} & DPO & 14.99 & 15.48 & 1894 \\
 & CPO & \textbf{22.91} & \textbf{27.35} & 1686 \\
\midrule
\multirow{2}{*}{$R = 0.4$} & DPO & 15.43 & 15.98 & 1907 \\
 & CPO & \textbf{20.54} & \textbf{24.34} & 1714 \\
\bottomrule
\end{tabular}
\end{table}
 
\paragraph{Analysis.}
 
\textbf{1. DPO degrades under a misaligned reference.} DPO's LC WR drops across all three corruption ratios, with stronger corruption leading to worse performance. This is consistent with Proposition~\ref{prop:undesirable_space}.
 
\textbf{2. CPO remains robust under the same conditions.} CPO achieves LC of 27.60\% and 27.35\% at $R = 0.2$ and $R = 0.3$ respectively, remaining stable. This validates the core role of the margin term $\tilde{\gamma}_{\text{ref}}$ (Eq.~\ref{eq:gamma_star}): even when $\delta_{\pi_{\text{ref}}}$ is negative, the margin preserves gradient strength, enabling the policy to push $\delta_{\pi_\theta}$ past 0 and escape $\mathcal{U}$.
 
\textbf{3. Fraction in $\mathcal{U}$ directly validates the theory.} The fraction in $\mathcal{U}$ trajectory during training exhibits a characteristic three-phase pattern that directly reflects the theoretical mechanism:

\begin{itemize}
    \item \textbf{Phase 1 (Initialization):} At step 0, the policy is identical to the reference (frac in $\mathcal{U}$ = 0). Since $\delta_{\pi_\theta} = \delta_{\pi_{\text{ref}}}$ for all samples, the condition $\delta_{\pi_\theta} > \delta_{\pi_{\text{ref}}}$ is not satisfied, so no samples fall in $\mathcal{U}$.
    \item \textbf{Phase 2 (Entry into $\mathcal{U}$):} As training begins, gradients push $\delta_{\pi_\theta}$ upward. For misaligned samples where $\delta_{\pi_{\text{ref}}} < 0$, the policy improves relative to the reference but has not yet crossed 0, resulting in $\delta_{\pi_{\text{ref}}} < \delta_{\pi_\theta} < 0$, exactly the $\mathcal{U}$ region. This causes frac in $\mathcal{U}$ to rise.
    \item \textbf{Phase 3 (Escape attempt):} As $\delta_{\pi_\theta}$ continues to increase and some samples cross 0 ($\delta_{\pi_\theta} > 0$), they exit $\mathcal{U}$, causing frac in $\mathcal{U}$ to decrease. The critical divergence emerges:
    \begin{itemize}
        \item \textbf{DPO:} The gradient weight $\sigma(-\beta(\delta_{\pi_\theta} - \delta_{\pi_{\text{ref}}}))$ weakens as $\delta_{\pi_\theta}$ approaches 0 (Proposition~\ref{prop:undesirable_space}). Many samples get stuck at $\delta_{\pi_\theta} \approx 0^{-}$, unable to escape.
        \item \textbf{CPO:} The margin term $\tilde{\gamma}_{\text{ref}}$ shifts the gradient weighting function, ensuring strong gradient signal even near the $\mathcal{U}$ boundary. Most samples successfully push past $\delta_{\pi_\theta} = 0$, and frac in $\mathcal{U}$ rapidly drops.
    \end{itemize}
\end{itemize}
\begin{figure}[ht]
\centering
\includegraphics[width=0.8\textwidth]{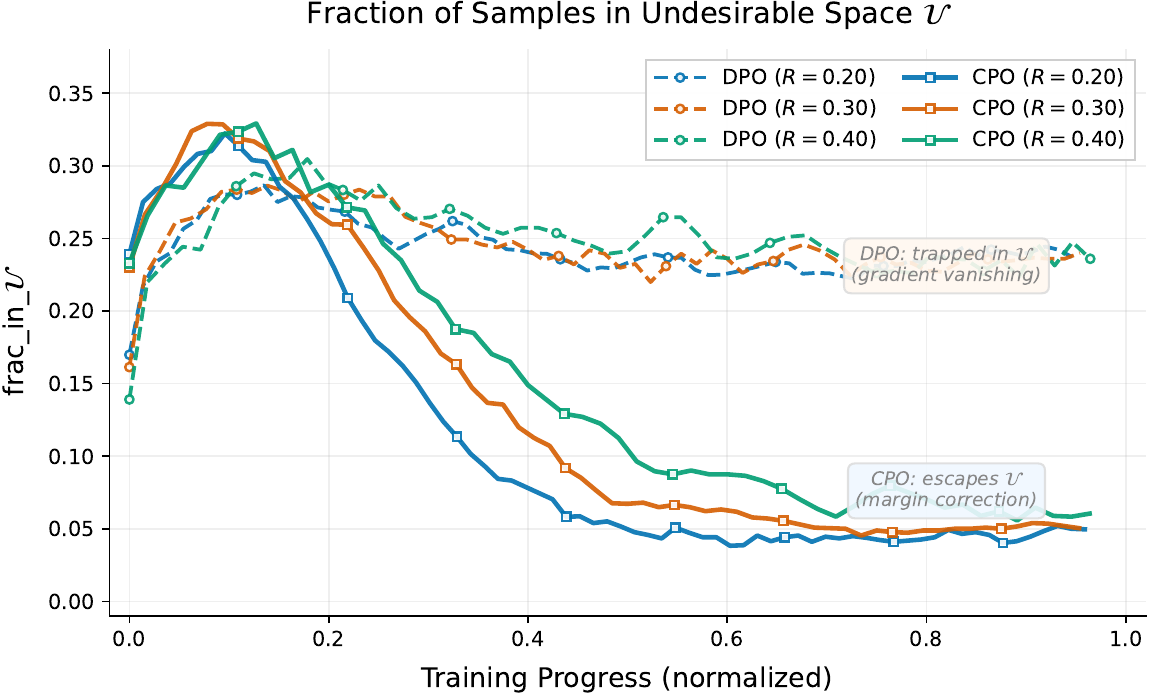}
\caption{Fraction of training samples in the undesirable solution space $\mathcal{U}$ (Definition~3.3) over training steps under different corruption ratios $R \in \{0.2, 0.3, 0.4\}$. }
\label{fig:frac_in_U}
\end{figure}
 
The training dynamics of frac in $\mathcal{U}$ are visualized in Figure~\ref{fig:frac_in_U}. The difference in frac in $\mathcal{U}$ directly corresponds to the theoretical contrast between DPO's weak gradient near the $\mathcal{U}$ boundary (Proposition~\ref{prop:undesirable_space}) and CPO's margin-corrected gradients (Theorem~\ref{thm:pdpo_hinge}).

\subsection{Sensitivity Analysis of $\gamma$}
\label{app:gamma_sensitivity}
 
We conduct a sensitivity analysis of the hyperparameter $\gamma$ on AlpacaEval 2. Note that the original evaluator \texttt{gpt-4-1106-preview} has been deprecated; we re-evaluate using \texttt{gpt-4.1} as the annotator. Results are shown in Table~\ref{tab:gamma_sensitivity}.  
CPO performs robustly across $\gamma \in [0.2, 0.4]$ (WR 26--28\%, LC 31--34\%), with peak performance at $\gamma = 0.25$. Performance drops notably below $0.2$, where the margin correction becomes insufficient to address the assumption violation. In all experiments reported in the main paper, we use $\gamma = 0.25$.

\begin{table}[ht]
\centering
\caption{Sensitivity analysis of $\gamma$ on AlpacaEval 2 using Llama-3-8B-Instruct.}
\label{tab:gamma_sensitivity}
\begin{tabular}{cccc}
\toprule
$\gamma$ & WR(\%) & LC(\%) & Avg Length \\
\midrule
0.10 & 20.45 & 26.46 & 1605 \\
0.15 & 20.87 & 26.32 & 1635 \\
0.20 & 27.08 & 32.56 & 1705 \\
\textbf{0.25} & \textbf{28.36} & \textbf{33.97} & 1702 \\
0.30 & 26.31 & 31.40 & 1700 \\
0.35 & 26.10 & 30.92 & 1727 \\
0.40 & 27.53 & 32.87 & 1729 \\
\bottomrule
\end{tabular}
\end{table}
\subsection{Evaluation on IFEval}
\label{app:ifeval}
 
Following the reviewer's suggestion, we add IFEval~\citep{zhou2023instruction} as an additional benchmark to evaluate instruction-following capability. Results are shown in Table~\ref{tab:ifeval}.  
CPO achieves the highest performance on both strict and loose accuracy, confirming that the improvement from CPO extends beyond conversational benchmarks to instruction-following tasks.
 
\begin{table}[h]
\centering
\caption{IFEval results on Llama-3-8B-Instruct.}
\label{tab:ifeval}
\begin{tabular}{lcc}
\toprule
Method & Strict Acc & Loose Acc \\
\midrule
Llama-8B-Instruct & 32.35\% & 39.56\% \\
Llama-8B-Instruct-DPO & 34.01\% & 40.67\% \\
Llama-8B-Instruct-RDPO & 34.57\% & 43.62\% \\
Llama-8B-Instruct-SimPO & 33.83\% & 42.81\% \\
Llama-8B-Instruct-CPO & \textbf{35.12\%} & \textbf{43.99\%} \\
\bottomrule
\end{tabular}
\end{table}

\subsection{Comparison with Clipped-Reference Baseline}
\label{app:clipped_ref}
 
To isolate the effect of CPO's adaptive margin from generic margin regularization, we compare with a clipped-reference baseline that clips $\delta_{\pi_{\text{ref}}}$ to be non-negative before applying the standard DPO loss. Results on AlpacaEval 2 are shown in Table~\ref{tab:clipped_ref}.
CPO substantially outperforms the clipped-reference baseline, demonstrating that the adaptive margin $\tilde{\gamma}_{\text{ref}}$ provides benefits beyond simply preventing negative margins
 
\begin{table}[h]
\centering
\caption{Comparison with clipped-reference baseline on AlpacaEval 2.}
\label{tab:clipped_ref}
\begin{tabular}{lccc}
\toprule
Method & WR(\%) & LC(\%) & Avg Length \\
\midrule
Llama-8B-Instruct-clipped-ref & 17.91\% & 23.86\% & 1586 \\
Llama-8B-Instruct-CPO & \textbf{28.36\%} & \textbf{33.97\%} & 1702 \\
\bottomrule
\end{tabular}
\end{table}

\section{Related Work}

RLHF has become the standard approach for aligning LLMs with human preferences~\cite{christiano2017deep, stiennon2020learning, ouyang2022training}. The framework typically uses the Bradley-Terry model~\cite{bradley1952rank} to learn reward functions from pairwise preference data, followed by policy optimization with KL regularization to prevent reward over-optimization \cite{ziegler2019fine, gao2023scaling}. While effective, RLHF requires training a separate reward model and applying RL algorithms like PPO \cite{schulman2017proximal}, making it computationally expensive and potentially unstable.

DPO~\cite{rafailov2023direct} proposes to bypass explicit reward modeling by directly optimizing policies on preference data, claiming theoretical equivalence to RLHF. This simplicity has led to widespread adoption and numerous variants. IPO \cite{azar2023general} is proposed for better calibration and robustness to noise. KTO \cite{ethayarajh2024kto} extends to binary feedback rather than pairwise comparisons. ORPO \cite{hong2024orpo} combines preference optimization with supervised fine-tuning. However, all these variants retain structural similarities to DPO and do not address the conditional nature of the DPO-RLHF equivalence that we identify.

Recent works have begun examining theoretical properties of preference-based methods. \citet{azar2023general} analyze the Nash equilibrium properties of preference learning and propose regularization for improved sample efficiency. \citet{munos2023nash} study the game-theoretic foundations of RLHF. While these works provide valuable theoretical insights, none have systematically characterized the conditions under which DPO-RLHF equivalence holds or fails. Our work is the first to identify the implicit assumption, prove the equivalence is conditional, characterize precise failure conditions, and provide methods with provable alignment guarantees.

\paragraph{Reference-free methods.}
SimPO~\citep{meng2024simpo} replaces the reward reparameterization $r(x,y) = \beta\log\frac{\pi_\theta(y|x)}{\pi_{\text{ref}}(y|x)}$ with length-normalized log-probability as an implicit reward, a heuristic not derived from any RLHF objective or Bradley-Terry model. By removing the reference policy entirely, SimPO sidesteps the assumption violation identified in this work (Assumption~3.1), but at the cost of abandoning the RLHF-BT framework: it cannot claim equivalence to any reward-maximizing objective with KL regularization. In contrast, CPO and E-CPOC retain the full RLHF-BT chain with formal guarantees: absolute advantage (Theorem~4.9), avoidance of $\mathcal{U}$ (Theorem~4.10), and provable equivalence to constrained RLHF (Theorem~L.17). Our goal is to understand \emph{why} DPO's RLHF equivalence breaks and \emph{how} to fix it with guarantees, which is fundamentally different from designing reference-free heuristics. Nonetheless, investigating how reference-free methods relate to the margin ranking perspective (Section~5) is an interesting direction for future work.

\section{Explanation about Preference Learning as Reranking}

The margin ranking loss is a fundamental tool in learning-to-rank \cite{burges2005learning, cao2007learning}, where it ensures that relevant items are ranked above irrelevant ones with sufficient margin. Our analysis shows that preference learning can be viewed as a reranking problem in log-probability space, where the margin ensures that preferred responses are ranked above dispreferred ones. This unified view through margin ranking loss not only provides an intuitive understanding of DPO's failure modes and our corrections, but also connects preference learning to a rich body of existing theory and practice in ranking and margin-based learning.


\section{Proofs.}
\paragraph{Clarification on Assumption~\ref{assump:alignment}.}
The algebraic substitution in Eq.~\ref{eq:ss} holds regardless of the sign of $\delta_{\pi^*}$. Assumption~\ref{assump:alignment} is not needed for the substitution itself, but for the equivalence of objectives between DPO and RLHF. The DPO loss $-\log\sigma(\beta(\delta_{\pi_\theta} - \delta_{\pi_{\text{ref}}}))$ is monotonically decreasing in $\delta_{\pi_\theta}$, so it always pushes $\delta_{\pi_\theta}$ upward toward preferring $y_w$. Meanwhile, the RLHF optimal policy satisfies $\delta_{\pi^*} = \delta_{\pi_{\text{ref}}} + \Delta r^*/\beta$, which can be $\leq 0$. When this happens, DPO's optimum ($\delta_{\pi_\theta} \to \infty$) diverges from RLHF's optimum ($\delta_{\pi^*} < 0$), and the two methods optimize fundamentally different objectives (Theorem~\ref{thm:conditional_equivalence}).
\subsection{Proof of Necessary Condition for Assumption~\ref{assump:alignment}} \label{app:nc}
Proof of Proposition~\ref{thm:necessary_condition} (Necessary Condition for Assumption~\ref{assump:alignment}).
\begin{proof}
Human preference $\yw \succ \yl$ implies $\rstar(\yw) - \rstar(\yl) > 0$.

By the core relationship (Equation \eqref{eq:core_relation}):
\begin{equation}
\delta_{\pistar} = \delta_{\piref} + \frac{\rstar(\yw) - \rstar(\yl)}{\beta}
\end{equation}

For Assumption \ref{assump:alignment} to hold, we require $\delta_{\pistar} > 0$:
\begin{equation}
\delta_{\piref} + \frac{\rstar(\yw) - \rstar(\yl)}{\beta} > 0
\end{equation}

Rearranging yields the necessary condition.
\end{proof}

\subsection{Proof of Non-emptiness and Weak Gradient}
Proof of Proposition~\ref{prop:undesirable_space} (Non-emptiness and Weak gradient).
\begin{proof}
\textbf{Non-emptiness:} Since $\delta_{\piref} < -\Delta \rstar / \beta < 0$, any policy with $\delta_{\piref} < \delta_\pi < 0$ belongs to $\cU$.

\textbf{Weak gradient:} The gradient of $\cL_{\text{DPO}}$ with respect to policy parameters is:
\begin{equation}
\nabla_\theta \cL_{\text{DPO}} = -\E\left[\sigma(-\beta(\delta_{\pitheta} - \delta_{\piref})) \cdot \beta \cdot (\nabla_\theta \log \pitheta(\yw|x) - \nabla_\theta \log \pitheta(\yl|x))\right]
\end{equation}

For $\pi \in \cU$, we have $\delta_\pi > \delta_{\piref}$, so $\delta_\pi - \delta_{\piref} > 0$, making $-\beta(\delta_\pi - \delta_{\piref}) < 0$ and thus $\sigma(-\beta(\delta_\pi - \delta_{\piref})) < 0.5$. The gradient direction pushes $\delta_{\pitheta}$ to increase (i.e., $\nabla_\theta \log \pitheta(\yw|x) - \nabla_\theta \log \pitheta(\yl|x)$ increases).

However, as $\delta_\pi$ increases toward 0 (the boundary of preference violation), the difference $\delta_\pi - \delta_{\piref}$ becomes larger, causing $-\beta(\delta_\pi - \delta_{\piref})$ to become more negative and $\sigma(-\beta(\delta_\pi - \delta_{\piref})) \to 0$. This causes the gradient magnitude to become progressively weaker, constituting a weak-gradient problem conditioned on the quality of $\pi_{\text{ref}}$. Since $\delta_\pi < 0$ throughout $\cU$, the policy remains trapped in the preference-violating region while the loss continues to decrease.
\end{proof}

\subsection{Proof of Conditional Equivalence of DPO and RLHF} \label{app:ce}
Proof of Theorem~\ref{thm:conditional_equivalence} (Conditional Equivalence of DPO and RLHF)
\begin{proof}
We prove both directions of the if-and-only-if statement. Throughout, let $\pistar$ denote the RLHF-optimal policy and $\pi_{\text{DPO}}$ denote the DPO-optimal policy.

\textbf{($\Rightarrow$) Sufficiency:} Assume Condition \eqref{eq:equivalence_condition} holds for all $(x, \yw, \yl) \in \D$.

\emph{Step 1: RLHF-optimal policy respects human preferences.}
By Eq.~\ref{thm:rlhf_optimal}, the RLHF-optimal policy satisfies:
\begin{equation}
\delta_{\pistar}(x, \yw, \yl) = \delta_{\piref}(x, \yw, \yl) + \frac{\rstar(x,\yw) - \rstar(x,\yl)}{\beta}.
\end{equation}
Under Condition \eqref{eq:equivalence_condition}, we have $\delta_{\piref} > -\Delta \rstar / \beta$, where $\Delta \rstar := \rstar(x,\yw) - \rstar(x,\yl) > 0$. Therefore:
\begin{equation}
\delta_{\pistar} = \delta_{\piref} + \frac{\Delta \rstar}{\beta} > 0,
\end{equation}
implying $\pistar(\yw|x) > \pistar(\yl|x)$, which aligns with the human preference $\yw \succ \yl$.

\emph{Step 2: Bradley-Terry model is well-defined at $\pistar$.}
Since $\delta_{\pistar} > 0$, the Bradley-Terry preference probability at $\pistar$ is:
\begin{equation}
p^*(\yw \succ \yl | x) = \sigma(\rstar(x,\yw) - \rstar(x,\yl)) = \sigma(\beta(\delta_{\pistar} - \delta_{\piref})) > 0.5.
\end{equation}
This is consistent with the human preference structure, validating the use of the Bradley-Terry model in DPO's derivation.

\emph{Step 3: $\pistar$ is a stationary point of the DPO objective.}
The DPO objective is:
\begin{equation}
\cL_{\text{DPO}}(\pi) = -\E_{(x,\yw,\yl) \sim \D}\left[\log \sigma(\beta(\delta_\pi - \delta_{\piref}))\right].
\end{equation}
Taking the gradient with respect to policy parameters $\theta$:
\begin{equation}
\nabla_\theta \cL_{\text{DPO}} = -\E\left[\sigma(-\beta(\delta_{\pitheta} - \delta_{\piref})) \cdot \beta \cdot \nabla_\theta \delta_{\pitheta}\right],
\end{equation}
where $\nabla_\theta \delta_{\pitheta} = \nabla_\theta \log \pitheta(\yw|x) - \nabla_\theta \log \pitheta(\yl|x)$.

By the reward reparameterization (Lemma \ref{lem:reward_reparam}), at $\pi = \pistar$:
\begin{equation}
\rstar(x,y) = \beta \log \frac{\pistar(y|x)}{\piref(y|x)} + \beta \log Z(x),
\end{equation}
where $Z(x)$ is the partition function. Substituting into the Bradley-Terry model:
\begin{equation}
p^*(\yw \succ \yl | x) = \sigma(\beta(\delta_{\pistar} - \delta_{\piref})).
\end{equation}
This shows that $\pistar$ satisfies the maximum likelihood condition for the Bradley-Terry model parameterized by DPO, making it a stationary point of $\cL_{\text{DPO}}$.

\emph{Step 4: $\pistar$ is the global optimum of the DPO objective.}
The DPO loss $\cL_{\text{DPO}}(\pi) = -\E[\log \sigma(\beta(\delta_\pi - \delta_{\piref}))]$ 
is strictly convex in $\delta_\pi$. To see this, note that $f(z) = -\log\sigma(z) = \log(1 + e^{-z})$ 
has second derivative $f''(z) = \sigma(z)\sigma(-z) > 0$, establishing strict convexity.

Under Condition \eqref{eq:equivalence_condition}, $\pistar$ satisfies the first-order 
optimality condition (Step 3). Since $\mathcal{L}_{\text{DPO}}$ depends only on $\delta_\pi$ 
and is strictly convex in this quantity, the optimal $\delta_\pi$ values are uniquely 
determined. Specifically, any policy $\tilde{\pi}$ minimizing $\mathcal{L}_{\text{DPO}}$ 
must satisfy:
\begin{equation}
\delta_{\tilde{\pi}}(x, \yw, \yl) = \delta_{\pistar}(x, \yw, \yl) \quad \forall (x, \yw, \yl) \in \D.
\end{equation}

This equality of log-probability ratios implies that $\tilde{\pi}$ and $\pistar$ have 
identical preference structures: $\tilde{\pi}(\yw|x)/\tilde{\pi}(\yl|x) = \pistar(\yw|x)/\pistar(\yl|x)$ 
for all preference pairs. Therefore, $\pi_{\text{DPO}}$ and $\pistar$ are equivalent in 
terms of preference ordering, establishing that DPO and RLHF optimize the same objective 
under the given condition.

\textbf{($\Leftarrow$) Necessity:} We prove the contrapositive: if Condition \eqref{eq:equivalence_condition} is violated, then DPO and RLHF optimize different objectives.

Suppose there exists $(x_0, \yw^0, \yl^0) \in \D$ such that:
\begin{equation}
\delta_{\piref}(x_0, \yw^0, \yl^0) \leq -\frac{\rstar(x_0,\yw^0) - \rstar(x_0,\yl^0)}{\beta}.
\end{equation}

\emph{Step 1: RLHF-optimal policy violates human preference at $(x_0, \yw^0, \yl^0)$.}
By Eq.~\ref{thm:rlhf_optimal}:
\begin{equation}
\delta_{\pistar}(x_0, \yw^0, \yl^0) = \delta_{\piref}(x_0, \yw^0, \yl^0) + \frac{\rstar(x_0,\yw^0) - \rstar(x_0,\yl^0)}{\beta} \leq 0,
\end{equation}
implying $\pistar(\yl^0|x_0) \geq \pistar(\yw^0|x_0)$. Thus, $\pistar$ prefers the dispreferred response at this data point, yet RLHF accepts this as optimal due to the KL regularization constraint.

\emph{Step 2: $\pistar$ is not optimal for the DPO objective.}
We construct a policy $\pi_\epsilon$ that achieves lower DPO loss than $\pistar$. 
For the prompt $x_0$ where the condition is violated, define:
\begin{equation}
\pi_\epsilon(y|x_0) = \frac{\pistar(y|x_0) \exp(\epsilon[\mathbb{I}(y=\yw^0) - \mathbb{I}(y=\yl^0)])}{\sum_{y'} \pistar(y'|x_0) \exp(\epsilon[\mathbb{I}(y'=\yw^0) - \mathbb{I}(y'=\yl^0)])}
\end{equation}
for some $\epsilon > 0$, and $\pi_\epsilon(y|x) = \pistar(y|x)$ for all $x \neq x_0$. 
This construction ensures $\pi_\epsilon$ is a valid probability distribution with:
\begin{equation}
\delta_{\pi_\epsilon}(x_0, \yw^0, \yl^0) = \delta_{\pistar}(x_0, \yw^0, \yl^0) + 2\epsilon.
\end{equation}

The DPO loss can be decomposed as:
\begin{equation}
\mathcal{L}_{\text{DPO}}(\pi) = \mathbb{E}_{(x,\yw,\yl) \sim \D}\left[-\log \sigma(\beta(\delta_\pi(x,\yw,\yl) - \delta_{\piref}(x,\yw,\yl)))\right].
\end{equation}

Let $p_0$ denote the probability mass of $(x_0, \yw^0, \yl^0)$ in $\D$. Since 
$\pi_\epsilon = \pistar$ for all $x \neq x_0$, the difference in losses is:
\begin{align}
\mathcal{L}_{\text{DPO}}(\pi_\epsilon) - \mathcal{L}_{\text{DPO}}(\pistar) 
&= p_0 \left[-\log \sigma(\beta(\delta_{\pistar} + 2\epsilon - \delta_{\piref})) + \log \sigma(\beta(\delta_{\pistar} - \delta_{\piref}))\right] \\
&= p_0 \log \frac{\sigma(\Delta \rstar)}{\sigma(\Delta \rstar + 2\beta\epsilon)},
\end{align}
where we used $\delta_{\pistar} - \delta_{\piref} = \Delta \rstar / \beta$ from Step 1.

Since $\sigma$ is strictly increasing and $\epsilon > 0$, we have 
$\sigma(\Delta \rstar + 2\beta\epsilon) > \sigma(\Delta \rstar)$, which implies:
\begin{equation}
\mathcal{L}_{\text{DPO}}(\pi_\epsilon) < \mathcal{L}_{\text{DPO}}(\pistar).
\end{equation}

Therefore, $\pistar$ is not a global minimum of the DPO objective.

\emph{Step 3: DPO optimizes a different objective.}
Since $\pistar$ is not optimal for DPO, we have $\pi_{\text{DPO}} \neq \pistar$. RLHF optimizes $\E[r] - \beta \KL(\pi \| \piref)$ and converges to $\pistar$, while DPO optimizes $\E[\log \sigma(\beta(\delta_\pi - \delta_{\piref}))]$ and converges to $\pi_{\text{DPO}} \neq \pistar$. Therefore, the two methods optimize fundamentally different objectives when Condition \eqref{eq:equivalence_condition} is violated.
\end{proof}

\subsection{Proof of Optimal Policy for Constrained RLHF} \label{app:op}

\begin{remark}[Single Preference Pair per Prompt (Notational Convention)]
\label{assump:single_pair}
For notational clarity, we adopt the convention that each response $y$ appears at most once as a winner and once as a loser for each prompt $x$ in the preference dataset $\D$. This is purely a notational simplification and does not limit the generality of our results: when a response appears in multiple preference pairs, the margin contributions aggregate linearly, and all theoretical guarantees remain valid (see Proposition~\ref{prop:general_appearance}).
\end{remark}

\begin{definition}[Augmented Reward]
\label{def:augmented_reward}
Under the notation above, for a given preference dataset $\D$, define the augmented reward function:
\begin{equation}
\tilde{r}(x,y) = \begin{cases}
r(x,y) + \gamma & \text{if } \exists (x,\yw,\yl) \in \D \text{ with } y = \yw \\
r(x,y) - \gamma & \text{if } \exists (x,\yw,\yl) \in \D \text{ with } y = \yl \\
r(x,y) & \text{otherwise}
\end{cases}
\end{equation}

This corresponds to the case where each response appears at most once as a winner and once as a loser for each prompt, with uniform weighting over preference pairs.
\end{definition}

Proof of Theorem~\ref{thm:margin_rlhf_optimal}(Optimal Policy for Constrained RLHF).
\begin{proof}
We first reformulate the margin term in the objective function. Under the notation in Remark~\ref{assump:single_pair}, the margin term can be written as:
\begin{equation}
\gamma \E_{(x,\yw,\yl)\sim\D}[\delta_\pi] = \gamma \E_{(x,\yw,\yl)\sim\D}[\log\pi(\yw|x) - \log\pi(\yl|x)]
\end{equation}

For each prompt $x$, collecting all preference pairs and responses:
\begin{equation}
= \gamma \E_x \left[\sum_{(\yw,\yl)\in\mathcal{P}(x)} p(\yw,\yl|x) (\log\pi(\yw|x) - \log\pi(\yl|x))\right]
\end{equation}

This can be rewritten as an expectation over responses by defining:
\begin{equation}
c(x,y) = \gamma \sum_{(\yw,\yl)\in\mathcal{P}(x)} p(\yw,\yl|x) (\mathbb{I}(y=\yw) - \mathbb{I}(y=\yl))
\end{equation}

Then:
\begin{equation}
\gamma \E_{(x,\yw,\yl)\sim\D}[\delta_\pi] = \E_x \left[\sum_y c(x,y) \log\pi(y|x)\right]
\end{equation}

For a fixed prompt $x$, the Lagrangian is:
\begin{equation}
\mathcal{L}_x = \sum_y \pi(y|x) r(x,y) - \beta \sum_y \pi(y|x)\log\frac{\pi(y|x)}{\piref(y|x)} + \sum_y c(x,y) \log\pi(y|x) - \lambda(x)\left(\sum_y \pi(y|x) - 1\right)
\end{equation}

Taking the derivative with respect to $\pi(y|x)$ and setting to zero:
\begin{equation}
\frac{\partial \mathcal{L}_x}{\partial \pi(y|x)} = r(x,y) - \beta\log\frac{\pi(y|x)}{\piref(y|x)} - \beta + \frac{c(x,y)}{\pi(y|x)} - \lambda(x) = 0
\end{equation}

Rearranging gives Equation \eqref{eq:margin_optimal_foc}. For a preference pair, taking the difference between the conditions for $\yw$ and $\yl$ yields Equation \eqref{eq:margin_optimal}.
\end{proof}

\subsection{Proof of Absolute Advantage Guarantee}
Before providing detailed proof, we first introduce the Lemma.

\begin{lemma}[Absolute Advantage Guarantee: Sample Level]
\label{thm:pdpo_guarantee}
For a preference pair $(x,\yw,\yl)$, if the hyper-parameter $\gamma$ in CPO satisfies:
\begin{equation}
\label{eq:gamma_condition}
\gamma > \beta \cdot \frac{\max\left\{0, -\delta_{\piref}(x,\yw,\yl) - \frac{\Delta \rstar(x,\yw,\yl)}{\beta}\right\}}{\frac{1}{\piref(\yw|x)} + \frac{1}{\piref(\yl|x)}},
\end{equation}
then CPO guarantees absolute advantage: $\delta_{\pi^*_{\text{CPO}}}(x,\yw,\yl) > 0$ for this pair.
\end{lemma}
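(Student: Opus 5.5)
The plan is to read off the $\delta$-value of CPO's optimal policy pair by pair, and then check that the condition in Eq.~\eqref{eq:gamma_condition} is exactly what makes that value positive. I take $\pi^*_{\text{CPO}}$ to be the population minimizer of the practical loss $\cL_{\text{CPO}}$ in Eq.~\eqref{eq:cpo_loss_practical}, with the reference-based margin $\tilde{\gamma}_{\text{ref}}$ of Eq.~\eqref{eq:reference_adaptive_margin}, and with pairs labelled according to the Bradley--Terry model Eq.~\eqref{def:bt}. Two facts make the per-pair analysis clean:
\begin{itemize}
\item $\tilde{\gamma}_{\text{ref}}$ does not depend on $\theta$.
\item The loss depends on the policy only through $\delta_{\pitheta}$.
\end{itemize}
So the contribution of a fixed pair $(x,\yw,\yl)$ is a one-dimensional logistic problem in the scalar $\delta := \delta_{\pitheta}(x,\yw,\yl)$. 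Its logit is $z(\delta) = \beta(\delta - \delta_{\piref}) - \tilde{\gamma}_{\text{ref}}(x,\yw,\yl)$.

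\textbf{Step 1: identify the optimal logit.} I average the log-likelihood over the Bradley--Terry label, so the observed ordering is $\yw \succ \yl$ with probability $p^* = \sigma(\Delta\rstar)$ and reversed otherwise. The per-pair population objective is then the cross-entropy $-p^*\log\sigma(z) - (1-p^*)\log\sigma(-z)$. This is strictly convex in $z$, and its derivative $\sigma(z) - p^*$ vanishes exactly at $z = \Delta\rstar$. This matches the relation $p^*(\yw\succ\yl) = \sigma(\beta(\delta_{\pistar}-\delta_{\piref}) - \tilde{\gamma})$ derived just before Eq.~\eqref{eq:cpo_exact_relation}, with $\tilde{\gamma}^*$ replaced by $\tilde{\gamma}_{\text{ref}}$ through Eq.~\eqref{eq:reference_approximation_cpo}. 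Hence
\[
\delta_{\pi^*_{\text{CPO}}} = \delta_{\piref} + \frac{\Delta\rstar}{\beta} + \frac{\gamma}{\beta}\left(\frac{1}{\piref(\yw|x)} + \frac{1}{\piref(\yl|x)}\right).
\]
The first two terms are the vanilla RLHF value from Eq.~\eqref{eq:core_relation}. The last term is the margin shift.

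\textbf{Step 2: case split on the sign of $\delta_{\piref} + \Delta\rstar/\beta$.}
\begin{itemize}
\item If $\delta_{\piref} + \Delta\rstar/\beta \ge 0$, the maximum in Eq.~\eqref{eq:gamma_condition} is $0$, so the hypothesis only says $\gamma > 0$. The margin term is then strictly positive, because $1/\piref > 0$. Therefore $\delta_{\pi^*_{\text{CPO}}} > 0$. This case covers the pairs where Assumption~\ref{assump:alignment} already holds, as in Proposition~\ref{thm:necessary_condition}.
\item Otherwise, the hypothesis reads $\gamma\bigl(1/\piref(\yw|x) + 1/\piref(\yl|x)\bigr) > \beta\bigl(-\delta_{\piref} - \Delta\rstar/\beta\bigr)$. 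Dividing by $\beta$ and rearranging gives $\delta_{\piref} + \Delta\rstar/\beta + \tilde{\gamma}_{\text{ref}}/\beta > 0$, which is exactly positivity of the expression from Step 1.
\end{itemize}

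\textbf{Main obstacle.} The delicate point is Step 1, namely fixing what ``CPO's optimal policy'' means. There are two candidates:
\begin{itemize}
\item \emph{The exact Constrained RLHF optimum} of Theorem~\ref{thm:margin_rlhf_optimal}. Its margin involves $1/\pi^*$ rather than $1/\piref$, and no direct comparison between the two is available. The only easy bound is $1/\pi^* \ge 1$, which does not produce the stated threshold.
\item \emph{The minimizer of the stationary CPO loss.} This is the object that yields the threshold in the statement.
\end{itemize}
I will therefore state explicitly that $\pi^*_{\text{CPO}}$ is the minimizer of Eq.~\eqref{eq:cpo_loss_practical}, and justify this as the object CPO actually optimizes. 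I will also note that the per-pair logistic problem decouples across pairs only in $\delta$-space. This is the same reduction used in Step~4 of the proof of Theorem~\ref{thm:conditional_equivalence}, and it presumes that the policy class can realize the required $\delta$-values.
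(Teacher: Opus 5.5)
Your Step 2 is essentially the paper's algebra, and your closed form $\delta_{\pi^*_{\text{CPO}}} = \delta_{\piref} + \Delta\rstar/\beta + \tilde{\gamma}_{\text{ref}}/\beta$ is exactly the identity the paper starts from. The problem is how you obtain it.

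You claim that averaging $\cL_{\text{CPO}}$ over Bradley--Terry labels gives $-p^*\log\sigma(z) - (1-p^*)\log\sigma(-z)$. But the reference margin is symmetric in the two responses, $\tilde{\gamma}_{\text{ref}}(x,\yw,\yl) = \tilde{\gamma}_{\text{ref}}(x,\yl,\yw)$, and the loss subtracts it from whichever response is labelled the winner. Write $u := \beta(\delta - \delta_{\piref})$. A reversed label then contributes $-\log\sigma(-u - \tilde{\gamma}_{\text{ref}})$, not $-\log\sigma(-z) = -\log\sigma(-u + \tilde{\gamma}_{\text{ref}})$.

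The true population objective is $-p^*\log\sigma(u - \tilde{\gamma}_{\text{ref}}) - (1-p^*)\log\sigma(-u - \tilde{\gamma}_{\text{ref}})$. It is strictly convex in $u$. At your claimed optimum $u = \tilde{\gamma}_{\text{ref}} + \Delta\rstar$, its derivative equals $\sigma(-\Delta\rstar)\bigl[\sigma(2\tilde{\gamma}_{\text{ref}} + \Delta\rstar) - \sigma(\Delta\rstar)\bigr] > 0$, so the minimizer lies strictly below that point.

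This is not cosmetic. Take $\beta = 1$, $\delta_{\piref} = -4$, $\Delta\rstar = 0.1$, $\tilde{\gamma}_{\text{ref}} = 5$; this satisfies the hypothesis, since $5 > 3.9$. The minimizer is $u \approx 2.75$, i.e.\ $\delta \approx -1.25 < 0$. So with $\pi^*_{\text{CPO}}$ defined as you announce it (population minimizer of the practical loss under BT-random labels), the lemma is false, and Step 1 hides this.

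The repair is to state the modelling convention explicitly rather than derive it. Fix each pair in $\D$ in its designated orientation. Read $\sigma(z)$ as a binary model whose reversed outcome has likelihood $1-\sigma(z)$. Take $p^* = \sigma(\Delta\rstar)$ as a soft target. This is the convention the paper itself uses implicitly in Step 1 of the proof of Theorem~\ref{thm:ecpoc_equivalence}. It is also essentially forced: with hard labels only, the per-pair loss $-\log\sigma(z)$ has no finite minimizer. Under that convention, plus your realizability/decoupling caveat, Step 1 is correct and Step 2 completes the proof.

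The paper avoids the issue differently. It simply defines $\pi^*_{\text{CPO}}$ by the relation $\Delta\rstar = \beta(\delta_{\pi^*_{\text{CPO}}} - \delta_{\piref}) - \tilde{\gamma}_{\text{ref}}$, obtained from Theorem~\ref{thm:margin_rlhf_optimal} by treating the approximation \eqref{eq:reference_approximation_cpo} as an equality. After that the lemma is pure algebra. Your remark that the exact constrained-RLHF optimum, with $1/\pi^*$ in the margin, does not yield the stated threshold is correct, and it pinpoints the step the paper glosses over.
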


Proof of Lemma~\ref{thm:pdpo_guarantee} (Absolute Advantage Guarantee)
\begin{proof}
Let $\pi^*_{\text{CPO}}$ denote the optimal policy for CPO, satisfying:
\begin{equation}
\label{eq:pdpo_shift}
r(x,\yw) - r(x,\yl) = \beta(\delta_{\pi^*_{\text{CPO}}} - \delta_{\piref}) - \tilde{\gamma}_{\text{ref}}(x,\yw,\yl),
\end{equation}
where $\tilde{\gamma}_{\text{ref}}(x,\yw,\yl) = \gamma\left(\frac{1}{\piref(\yw|x)} + \frac{1}{\piref(\yl|x)}\right)$ is the reference-based adaptive margin. This results in a sample-adaptive larger margin than DPO:
\begin{equation}
\delta_{\pi^*_{\text{CPO}}} = \delta_{\pi^*_{\text{DPO}}} + \frac{\tilde{\gamma}_{\text{ref}}(x,\yw,\yl)}{\beta}.
\end{equation}

\textbf{Part 1 (Margin relationship):} From Eq.~\ref{eq:margin_core_relation} (derived from Theorem~\ref{thm:margin_rlhf_optimal} with $c(x,\yw)=\gamma$ and $c(x,\yl)=-\gamma$), we have:
\begin{equation}
r(\yw) - r(\yl) = \beta(\delta_{\pi^*} - \delta_{\piref}) - \gamma\left(\frac{1}{\pi^*(\yw|x)} + \frac{1}{\pi^*(\yl|x)}\right)
\end{equation}

In CPO, we approximate the optimal policy probabilities in the margin term with the reference policy probabilities (as justified in Eq.~\eqref{eq:reference_approximation_cpo} and Proposition~\ref{prop:stationary_cpo}):
\begin{equation}
\gamma\left(\frac{1}{\pi^*(\yw|x)} + \frac{1}{\pi^*(\yl|x)}\right) \approx \gamma\left(\frac{1}{\piref(\yw|x)} + \frac{1}{\piref(\yl|x)}\right) = \tilde{\gamma}_{\text{ref}}(x,\yw,\yl)
\end{equation}

Therefore:
\begin{equation}
r(\yw) - r(\yl) = \beta(\delta_{\pi^*_{\text{CPO}}} - \delta_{\piref}) - \tilde{\gamma}_{\text{ref}}(x,\yw,\yl)
\end{equation}

For standard DPO ($\gamma = 0$):
\begin{equation}
r(\yw) - r(\yl) = \beta(\delta_{\pi^*_{\text{DPO}}} - \delta_{\piref})
\end{equation}

Rearranging the CPO equation:
\begin{equation}
\delta_{\pi^*_{\text{CPO}}} = \delta_{\piref} + \frac{r(\yw) - r(\yl)}{\beta} + \frac{\tilde{\gamma}_{\text{ref}}(x,\yw,\yl)}{\beta}
\end{equation}

Comparing with DPO:
\begin{equation}
\delta_{\pi^*_{\text{DPO}}} = \delta_{\piref} + \frac{r(\yw) - r(\yl)}{\beta}
\end{equation}

Subtracting yields:
\begin{equation}
\delta_{\pi^*_{\text{CPO}}} = \delta_{\pi^*_{\text{DPO}}} + \frac{\tilde{\gamma}_{\text{ref}}(x,\yw,\yl)}{\beta}
\end{equation}

\textbf{Part 2 (Absolute advantage):} To ensure $\delta_{\pi^*_{\text{CPO}}}(x,\yw,\yl) > 0$, we require:
\begin{equation}
\delta_{\piref}(x,\yw,\yl) + \frac{\Delta \rstar(x,\yw,\yl)}{\beta} + \frac{\tilde{\gamma}_{\text{ref}}(x,\yw,\yl)}{\beta} > 0
\end{equation}

Rearranging:
\begin{equation}
\frac{\tilde{\gamma}_{\text{ref}}(x,\yw,\yl)}{\beta} > -\delta_{\piref}(x,\yw,\yl) - \frac{\Delta \rstar(x,\yw,\yl)}{\beta}
\end{equation}

\begin{equation}
\tilde{\gamma}_{\text{ref}}(x,\yw,\yl) > -\beta\delta_{\piref}(x,\yw,\yl) - \Delta \rstar(x,\yw,\yl)
\end{equation}

Substituting the definition of $\tilde{\gamma}_{\text{ref}}$:
\begin{equation}
\gamma\left(\frac{1}{\piref(\yw|x)} + \frac{1}{\piref(\yl|x)}\right) > -\beta\delta_{\piref}(x,\yw,\yl) - \Delta \rstar(x,\yw,\yl)
\end{equation}

Solving for $\gamma$:
\begin{equation}
\gamma > \frac{-\beta\delta_{\piref}(x,\yw,\yl) - \Delta \rstar(x,\yw,\yl)}{\frac{1}{\piref(\yw|x)} + \frac{1}{\piref(\yl|x)}}
\end{equation}

When $-\delta_{\piref} - \frac{\Delta \rstar}{\beta} \leq 0$ (i.e., standard DPO already satisfies absolute advantage for this pair), any $\gamma \geq 0$ suffices. Otherwise, we need $\gamma$ to compensate for the deficit. This is captured by:
\begin{equation}
\gamma > \beta \cdot \frac{\max\left\{0, -\delta_{\piref}(x,\yw,\yl) - \frac{\Delta \rstar(x,\yw,\yl)}{\beta}\right\}}{\frac{1}{\piref(\yw|x)} + \frac{1}{\piref(\yl|x)}}
\end{equation}
\end{proof}

\begin{remark}[Relationship to Constant Margin Approximation]
\label{rem:constant_vs_adaptive}
For simplified analysis, one can approximate the adaptive margin with a constant by setting:
\begin{equation}
\tilde{\gamma}_{\text{ref}}(x,\yw,\yl) \approx 2\gamma' \quad \text{where} \quad \gamma' = \gamma \cdot \E_{(x,\yw,\yl)\sim\D}\left[\frac{1}{2}\left(\frac{1}{\piref(\yw|x)} + \frac{1}{\piref(\yl|x)}\right)\right]
\end{equation}

This constant approximation simplifies hyperparameter interpretation but comes at the cost of:
\begin{itemize}
\item \textbf{Looser bounds}: The constant must be chosen to handle the worst-case preference pair, leading to $\gamma^* \approx \beta\max_{(x,\yw,\yl)\in\D} \max\{0, -\delta_{\piref} - \frac{\Delta\rstar}{\beta}\}$, which can be significantly larger than the adaptive $\gamma^*$ in Eq.~\eqref{eq:gamma_star}.
\item \textbf{Suboptimal regularization}: Over-regularizes high-confidence pairs (where $\piref$ assigns large probabilities) and under-regularizes low-confidence pairs.
\end{itemize}

In practice, we recommend using the full adaptive margin as implemented in Algorithm~\ref{alg:pdpo}, which provides tighter theoretical guarantees and better empirical performance while requiring no additional computational cost (the margin terms are precomputed once before training).
\end{remark}

Proof of Theorem~\ref{cor:gamma_star} (Absolute Advantage Guarantee), given the Lemma above.
\begin{proof}
For each $(x,\yw,\yl) \in \D$, Theorem \ref{thm:pdpo_guarantee} requires:
\begin{equation}
\gamma > \beta \cdot \frac{\max\left\{0, -\delta_{\piref}(x,\yw,\yl) - \frac{\Delta \rstar(x,\yw,\yl)}{\beta}\right\}}{\frac{1}{\piref(\yw|x)} + \frac{1}{\piref(\yl|x)}}
\end{equation}

To satisfy this condition for all preference pairs simultaneously, we take the maximum over the dataset:
\begin{equation}
\gamma^* = \max_{(x,\yw,\yl)\in\D} \beta \cdot \frac{\max\left\{0, -\delta_{\piref}(x,\yw,\yl) - \frac{\Delta \rstar(x,\yw,\yl)}{\beta}\right\}}{\frac{1}{\piref(\yw|x)} + \frac{1}{\piref(\yl|x)}}
\end{equation}

Any $\gamma \geq \gamma^*$ then satisfies the condition for all pairs, guaranteeing absolute advantage across the entire dataset.
\end{proof}

\subsection{Proof of Avoiding Pathological Convergence in CPO} \label{app:apc}
Proof of Theorem~\ref{thm:pdpo_no_pathology} (CPO Avoids Pathological Convergence).
\begin{proof}
Recall $\cU = \{\pi: \delta_\pi < 0 \text{ and } \delta_\pi > \delta_{\piref}\}$.

By Theorem \ref{thm:pdpo_guarantee}, when $\gamma \geq \gamma^*$:
\begin{equation}
\delta_{\pi^*_{\text{CPO}}} > 0
\end{equation}

Therefore $\pi^*_{\text{CPO}} \notin \cU$. Under Assumption~\ref{assump:smooth_bounded} (Lipschitz continuous gradients, bounded parameter domain), the CPO loss $\cL_{\text{CPO}}$ is convex in the log-probability space, and gradient descent converges to the global optimum $\pi^*_{\text{CPO}}$. Since $\pi^*_{\text{CPO}} \notin \cU$, the optimization avoids $\cU$ entirely.
\end{proof}

\subsection{Derivation of CPO gradients} \label{app:ga}
Derivation of CPO gradients, i.e., Eq.~\ref{eq:ga}.
\begin{proof}
Let $z = \beta(\delta_{\pitheta} - \delta_{\piref}) - \tilde{\gamma}_{\text{ref}}$. Then:
\begin{equation}
\cL_{\text{CPO}} = -\E[\log\sigma(z)]
\end{equation}

Taking the gradient with respect to $\theta$:
\begin{equation}
\nabla_\theta\cL_{\text{CPO}} = -\E\left[\frac{\sigma'(z)}{\sigma(z)} \cdot \nabla_\theta z\right]
\end{equation}

Since $\tilde{\gamma}_{\text{ref}}$ is constant with respect to $\theta$:
\begin{equation}
\nabla_\theta z = \beta \cdot \nabla_\theta(\log\pitheta(\yw|x) - \log\pitheta(\yl|x)) = \beta \cdot (\nabla_\theta\log\pitheta(\yw|x) - \nabla_\theta\log\pitheta(\yl|x))
\end{equation}

Using the identity $\sigma'(z)/\sigma(z) = 1 - \sigma(z) = \sigma(-z)$:
\begin{equation}
\nabla_\theta\cL_{\text{CPO}} = -\E\left[\sigma(-z) \cdot \beta \cdot (\nabla_\theta\log\pitheta(\yw|x) - \nabla_\theta\log\pitheta(\yl|x))\right]
\end{equation}

Substituting $-z = -\beta(\delta_{\pitheta} - \delta_{\piref}) + \tilde{\gamma}_{\text{ref}} = \beta(\delta_{\piref} - \delta_{\pitheta}) + \tilde{\gamma}_{\text{ref}}$, we obtain the stated form.
\end{proof}
\section{E-CPOC: Conservative Explicitly Constrained Preference Optimization}
\label{app:conservative_version}

This section provides a complete treatment of E-CPOC (Conservative Explicitly Constrained Preference Optimization), our primary method that requires no knowledge of reward differences. We present the theoretical foundation, algorithm, and guarantees.

\subsection{Motivation and Derivation}

The general adaptive margin loss derived in Section~\ref{app:cpoed} requires reward differences $\Delta r = r(y_w) - r(y_l)$, which are typically unknown. To derive a reward-model-free method with provable guarantees, we exploit a \textbf{conservative upper bound} based on worst-case analysis.

\begin{proposition}[Monotonicity of $\Phi$ in $\Delta r$]
\label{prop:phi_monotonicity}
The adaptive margin function $\Phi(\delta_{\piref}, \Delta r; \gamma) = \max\{0, \gamma - \delta_{\piref} - \Delta r/\beta\}$ is monotone non-increasing in $\Delta r$.
\end{proposition}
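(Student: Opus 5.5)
The plan is to argue directly from the explicit form $\Phi(\delta_{\piref}, \Delta r; \gamma) = \max\{0, h(\Delta r)\}$, where $h(\Delta r) := \gamma - \delta_{\piref} - \Delta r/\beta$, with $\delta_{\piref}$, $\gamma$ and $\beta>0$ held fixed. The key observation is that $\Phi$ is the composition of two monotone maps. The first is the affine map $h$, which is strictly decreasing in $\Delta r$ because its slope is $-1/\beta<0$. The second is $t \mapsto \max\{0,t\}$, which is non-decreasing on $\mathbb{R}$. A non-decreasing function of a decreasing argument is non-increasing, and that is exactly the claim.

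The steps are as follows. First, fix $\Delta r_1 \le \Delta r_2$. Then $h(\Delta r_1) - h(\Delta r_2) = (\Delta r_2 - \Delta r_1)/\beta \ge 0$, so $h(\Delta r_1) \ge h(\Delta r_2)$. Next, monotonicity of $\max\{0,\cdot\}$ gives $\max\{0,h(\Delta r_1)\} \ge \max\{0,h(\Delta r_2)\}$, which is $\Phi(\delta_{\piref},\Delta r_1) \ge \Phi(\delta_{\piref},\Delta r_2)$. If one prefers a case split, there are three cases: both values of $h$ nonpositive (both sides equal $0$), both positive (compare $h$ directly), or mixed (the left side is positive and the right side is $0$).

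I would also record the smooth version used in Theorem~\ref{thm:unified_core}, $\Phi_\tau = \tau^{-1}\log(1+e^{\tau h})$, because the paper applies the conservative bound to the softplus form. The same composition argument covers it, since softplus is strictly increasing. Alternatively, differentiating gives
\[
\partial_{\Delta r}\Phi_\tau = -\frac{1}{\beta}\,\sigma(\tau h) < 0 .
\]
Either way, the smooth form is in fact strictly decreasing in $\Delta r$.

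Finally, I would state the corollary the text relies on: for every $\Delta r^*>0$, $\Phi_{\text{cons}}(\delta_{\piref}) = \Phi(\delta_{\piref},0) \ge \Phi(\delta_{\piref},\Delta r^*)$. This follows by taking $\Delta r_1 = 0 \le \Delta r_2 = \Delta r^*$. There is no real obstacle here. The only care needed is to state that $\beta>0$, since that fixes the sign of the slope, and to treat the hinge and softplus forms consistently.
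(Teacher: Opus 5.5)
Your proof is correct, but it takes a different route from the paper. The paper differentiates: it computes $\partial\Phi/\partial(\Delta r)$ piecewise as $-1/\beta$ when $\gamma-\delta_{\piref}-\Delta r/\beta>0$ and $0$ otherwise, and reads off monotonicity from the sign.

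Your order-comparison argument is more elementary and slightly more rigorous. You fix $\Delta r_1\le\Delta r_2$, note that $h$ is decreasing, and use that $\max\{0,\cdot\}$ is non-decreasing. The hinge is not differentiable at $\Delta r=\beta(\gamma-\delta_{\piref})$, so the paper's assignment of slope $0$ there is really a one-sided derivative. Passing from a piecewise sign of the derivative to global monotonicity also tacitly uses continuity of $\Phi$. Your argument needs neither.

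What the derivative route makes explicit is the quantitative slope bound $|\partial_{\Delta r}\Phi|\le 1/\beta$. The paper later uses this as a Lipschitz constant, in the $\epsilon_{\mathrm{RM}}/\beta$ reward-model error of Remark~\ref{rem:ecpo_with_reward_model}. Your identity $h(\Delta r_1)-h(\Delta r_2)=(\Delta r_2-\Delta r_1)/\beta$, combined with the $1$-Lipschitzness of $\max\{0,\cdot\}$, gives the same bound at no extra cost.

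Your treatment of the softplus form, including $\partial_{\Delta r}\Phi_\tau=-\sigma(\tau h)/\beta$, matches the paper's separate Proposition~\ref{prop:phi_monotonicity_smooth}. Covering both forms is worthwhile, because the paper states this proposition for the hinge but applies the conservative bound to the softplus $\Phi_{\text{cons}}$.
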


\begin{proof}
For fixed $\delta_{\piref}$ and $\gamma$, compute the derivative:
\begin{equation}
\frac{\partial \Phi}{\partial (\Delta r)} = \begin{cases}
-1/\beta & \text{if } \gamma - \delta_{\piref} - \Delta r/\beta > 0 \\
0 & \text{if } \gamma - \delta_{\piref} - \Delta r/\beta \leq 0
\end{cases}
\end{equation}
Therefore, $\Phi$ is monotone non-increasing in $\Delta r$.
\end{proof}

\begin{corollary}[Conservative Bound]
\label{cor:conservative_bound_appendix}
Since human preference data satisfies $\Delta r^* > 0$ by the Bradley-Terry model, and $\Phi$ is monotone decreasing in $\Delta r$, the maximum value of $\Phi$ over all valid $\Delta r^* > 0$ is achieved in the limit $\Delta r \to 0^+$:
\begin{equation}
\Phi_{\text{cons}}(\delta_{\piref}) := \Phi(\delta_{\piref}, 0) = \max\{0, \gamma - \delta_{\piref}\} \geq \Phi(\delta_{\piref}, \Delta r^*) \quad \forall \Delta r^* > 0
\end{equation}
\end{corollary}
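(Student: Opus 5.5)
The statement to prove is Corollary~\ref{cor:conservative_bound_appendix}. It follows from Proposition~\ref{prop:phi_monotonicity} together with one evaluation, so I will prove it by direct comparison rather than by a limiting argument.

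\textbf{Step 1 (evaluation at $\Delta r = 0$).} Fix $\delta_{\piref}$ and $\gamma$. Substituting $\Delta r = 0$ into $\Phi(\delta_{\piref}, \Delta r; \gamma) = \max\{0, \gamma - \delta_{\piref} - \Delta r/\beta\}$ gives
\[
\Phi_{\text{cons}}(\delta_{\piref}) = \max\{0, \gamma - \delta_{\piref}\}.
\]

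\textbf{Step 2 (the inequality).} Take any $\Delta r^* > 0$. Since $\beta > 0$, we have $\gamma - \delta_{\piref} - \Delta r^*/\beta < \gamma - \delta_{\piref}$. The map $t \mapsto \max\{0, t\}$ is non-decreasing, so $\Phi(\delta_{\piref}, \Delta r^*) \le \Phi(\delta_{\piref}, 0)$. This is the same conclusion as Proposition~\ref{prop:phi_monotonicity} restricted to the interval $[0, \Delta r^*]$. I use the pointwise comparison instead of integrating the derivative because $\Phi$ has a kink and is not differentiable there.

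\textbf{Step 3 (the supremum).} To justify the phrase ``achieved in the limit $\Delta r \to 0^+$'', note that $\Delta r \mapsto \Phi(\delta_{\piref}, \Delta r)$ is continuous, being a maximum of affine functions. Combined with Step 2, this gives
\[
\sup_{\Delta r > 0} \Phi(\delta_{\piref}, \Delta r) = \lim_{\Delta r \to 0^+} \Phi(\delta_{\piref}, \Delta r) = \Phi_{\text{cons}}(\delta_{\piref}).
\]
The supremum is not attained on the open set $\Delta r > 0$ when $\gamma > \delta_{\piref}$, but it is attained as a limit. The Bradley--Terry assumption enters only to guarantee that every admissible $\Delta r^*$ is positive (Assumption~\ref{assump:bt_equiv}: $p^* > 1/2$ implies $\Delta r^* > 0$).

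\textbf{Main obstacle: the smooth margin.} The main text defines $\Phi$ through the softplus form \eqref{eq:phi_smooth}, while this appendix uses the hard-max form. The argument is unchanged for the smooth version: $u \mapsto \tau^{-1}\log(1 + e^{\tau u})$ is strictly increasing and continuous, and its argument $u = \gamma - \delta_{\piref} - \Delta r/\beta$ decreases in $\Delta r$. I will state this explicitly so that the corollary covers both definitions of $\Phi$.
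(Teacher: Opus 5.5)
Your proof is correct and follows the paper's route. The paper derives the corollary from the monotonicity of $\Phi$ in $\Delta r$ (Proposition~\ref{prop:phi_monotonicity}, proved via a piecewise derivative) together with $\Delta r^* > 0$; your pointwise comparison using monotonicity of $t \mapsto \max\{0,t\}$, plus continuity for the limit, is a cleaner version of the same argument that avoids the kink. One small correction to your closing remark: for the softplus form of $\Phi$, only the inequality and the limit statement carry over, because $\frac{1}{\tau}\log(1+e^{\tau(\gamma-\delta_{\piref})}) > \max\{0,\gamma-\delta_{\piref}\}$, so the displayed identity $\Phi(\delta_{\piref},0)=\max\{0,\gamma-\delta_{\piref}\}$ holds only for the hard-max definition.
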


This motivates the conservative adaptive margin function:
\begin{equation}
\Phi_{\text{cons}}(\delta_{\piref}; \gamma, \tau) = \frac{1}{\tau}\log\left(1 + \exp\left(\tau(\gamma - \delta_{\piref})\right)\right)
\end{equation}

\subsection{E-CPOC Algorithm}

\begin{algorithm}[H]
\caption{E-CPOC: Conservative Explicitly Constrained Preference Optimization (No Reward Model)}
\label{alg:cdpo_conservative}
\begin{algorithmic}[1]
\REQUIRE Preference dataset $\D = \{(x^{(i)}, y_w^{(i)}, y_l^{(i)})\}_{i=1}^N$
\REQUIRE Reference policy $\piref$
\REQUIRE Hyperparameters: $\beta > 0$ (temperature), $\gamma > 0$ (target margin), $\tau > 0$ (smoothness)
\REQUIRE Learning rate $\eta$
\STATE Initialize policy parameters $\theta$ (e.g., from $\piref$)
\STATE \textbf{Precompute:} For each $(x, \yw, \yl) \in \D$:
\STATE \quad $\delta_{\text{ref}}^{(i)} \leftarrow \log \piref(\yw|x) - \log \piref(\yl|x)$
\STATE \quad $\Phi^{(i)} \leftarrow \frac{1}{\tau}\log(1 + \exp(\tau(\gamma - \delta_{\text{ref}}^{(i)})))$ \quad \textcolor{blue}{$\triangleright$ Conservative margin}
\STATE \quad $\Psi^{(i)} \leftarrow \beta\Phi^{(i)}$ \quad \textcolor{blue}{$\triangleright$ Adaptive margin}
\FOR{each training iteration}
    \STATE Sample batch $\mathcal{B} \subset \D$
    \FOR{each $(x, \yw, \yl) \in \mathcal{B}$}
        \STATE $\delta_\theta \leftarrow \log \pitheta(\yw|x) - \log \pitheta(\yl|x)$
        \STATE $\text{logits} \leftarrow \beta(\delta_\theta - \delta_{\text{ref}}^{(i)}) - \Psi^{(i)}$
        \STATE $\ell \leftarrow -\log \sigma(\text{logits})$
    \ENDFOR
    \STATE Update: $\theta \leftarrow \theta - \eta \nabla_\theta \frac{1}{|\mathcal{B}|}\sum \ell$
\ENDFOR
\STATE \textbf{Return}: $\pi_\theta$
\end{algorithmic}
\end{algorithm}

\subsection{Theoretical Guarantees}

\begin{theorem}[E-CPOC Absolute Advantage Guarantee]
\label{thm:conservative_cdpo_guarantee}

For any preference pair $(x,\yw,\yl)$ with true reward difference $\Delta r^* > 0$, the E-CPOC optimal policy satisfies:
\begin{equation}
\delta_{\pi^*_{\text{cons}}} = \delta_{\piref} + \frac{\Delta r^*}{\beta} + \Phi_{\text{cons}}(\delta_{\piref}; \gamma, \tau)
\end{equation}

\textbf{(1) Upper bound property:}
\begin{equation}
\delta_{\pi^*_{\text{cons}}} \geq \delta_{\pi^*}(\Delta r^*)
\end{equation}

\textbf{(2) Absolute advantage:} Choosing 
\begin{equation}
\gamma \geq \gamma^*_{\text{cons}} := \max_{(x,\yw,\yl)\in\D}\{-\delta_{\piref}(x,\yw,\yl)\}
\end{equation}
guarantees $\delta_{\pi^*_{\text{cons}}} \geq \gamma > 0$ for all preference pairs in $\D$.

\textbf{(3) Sample-adaptive behavior:}
\begin{itemize}
\item Difficult samples ($\delta_{\piref} \ll 0$): $\Phi_{\text{cons}}(\delta_{\piref}) \approx \gamma - \delta_{\piref}$ (large correction)
\item Easy samples ($\delta_{\piref} \gg \gamma$): $\Phi_{\text{cons}}(\delta_{\piref}) \approx 0$ (minimal correction)
\item Neutral samples ($\delta_{\piref} \approx \gamma$): $\Phi_{\text{cons}}(\delta_{\piref}) \approx \frac{\log 2}{\tau}$ (smooth transition)
\end{itemize}
\end{theorem}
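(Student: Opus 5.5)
We take as the starting point the characterization of the E-CPOC optimum. The E-CPOC loss in Eq.~\eqref{eq:cdpo_loss_conservative} is a logistic loss in the margin $\beta(\delta_{\pitheta}-\delta_{\piref})-\beta\Phi_{\text{cons}}(\delta_{\piref})$. Matching it against the Bradley-Terry likelihood $\sigma(\Delta r^*)$, exactly as in the DPO derivation of Eq.~\eqref{eq:ss}, the population optimum satisfies $\beta(\delta_{\pi^*_{\text{cons}}}-\delta_{\piref})-\beta\Phi_{\text{cons}}=\Delta r^*$. This is the stated identity. We use strict convexity of $-\log\sigma$ in $\delta$, as in Step~4 of the proof of Theorem~\ref{thm:conditional_equivalence}, to get uniqueness of the optimal $\delta$-values. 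The identity is the structural analogue of Theorem~\ref{thm:unified_core} with $\Delta r$ inside $\Phi$ replaced by $0$.

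\textbf{Part (1).} Subtract the expression for $\delta_{\pi^*}(\Delta r^*)$ from Theorem~\ref{thm:unified_core}. The terms $\delta_{\piref}+\Delta r^*/\beta$ cancel, which leaves
\[
\delta_{\pi^*_{\text{cons}}}-\delta_{\pi^*}(\Delta r^*)=\Phi(\delta_{\piref},0)-\Phi(\delta_{\piref},\Delta r^*).
\]
This difference is nonnegative because the smooth $\Phi$ of Eq.~\eqref{eq:phi_smooth} is non-increasing in $\Delta r$. Its derivative in $\Delta r$ is $-\sigma(\tau(\gamma-\delta_{\piref}-\Delta r/\beta))/\beta\le 0$, which is the smooth version of Proposition~\ref{prop:phi_monotonicity} and Corollary~\ref{cor:conservative_bound_appendix}.

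\textbf{Part (2).} Use the softplus bound $\frac1\tau\log(1+e^{\tau u})\ge \max\{0,u\}\ge u$ with $u=\gamma-\delta_{\piref}$. This gives
\[
\delta_{\pi^*_{\text{cons}}}\ge \delta_{\piref}+\Delta r^*/\beta+\gamma-\delta_{\piref}=\gamma+\Delta r^*/\beta>\gamma>0,
\]
using $\Delta r^*>0$ and $\gamma>0$. The bound actually holds for every $\gamma>0$. The condition $\gamma\ge\gamma^*_{\text{cons}}$ only guarantees additionally that $\gamma-\delta_{\piref}\ge 0$ on all of $\D$, so the constraint is active and the hinge regime applies. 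We will remark on this rather than rely on it.

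\textbf{Part (3).} These are elementary asymptotics of softplus. For $\tau(\gamma-\delta_{\piref})\gg 1$ we have $\log(1+e^{a})=a+\log(1+e^{-a})\approx a$. For $\tau(\gamma-\delta_{\piref})\ll -1$ we have $\log(1+e^{a})\approx e^{a}\approx 0$. At $\delta_{\piref}=\gamma$ the value is exactly $\log 2/\tau$.

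The main obstacle is the first step: justifying that the E-CPOC minimizer truly satisfies the displayed identity with the true $\Delta r^*$. This requires that the policy class can realize the target $\delta$ values and that the data follow BT exactly. I would state this as holding in the population and realizable setting, noting that Theorem~\ref{thm:ecpoc_equivalence} handles the approximation and statistical errors.
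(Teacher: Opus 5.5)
Your Parts (1) and (3) follow the paper's proof essentially line by line: monotonicity of $\Phi$ in $\Delta r$, then softplus asymptotics. If anything, differentiating the smooth $\Phi$ is cleaner than the paper's appeal to the hard-max Proposition~\ref{prop:phi_monotonicity}.

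Part (2) is where you genuinely depart. The paper splits into two cases. In the active case ($\gamma-\delta_{\piref}>0$) it replaces $\Phi_{\text{cons}}$ by the hinge value $\gamma-\delta_{\piref}$ and only concludes $\delta_{\pi^*_{\text{cons}}}\approx\gamma$. In the inactive case it invokes $\gamma\ge\gamma^*_{\text{cons}}$, which is unnecessary there since $\delta_{\piref}\ge\gamma>0$ already. So its final step $\min\{\gamma,\delta_{\piref}+\Delta r^*/\beta\}\ge\gamma$ is argued at the level of the $\tau\to\infty$ hinge. You instead use the exact inequality $\frac1\tau\log(1+e^{\tau u})>u$. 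This yields $\delta_{\pi^*_{\text{cons}}}>\gamma+\Delta r^*/\beta$ for every $\tau>0$ and every $\gamma>0$, with no case analysis and no threshold. It is the argument the paper itself adopts later, in Step~4 of Theorem~\ref{thm:ecpoc_equivalence}. Your route is the rigorous one and shows that $\gamma^*_{\text{cons}}$ plays no role in the guarantee. The paper's case split adds only the interpretation that the bound is nearly attained on pairs where the constraint binds and $\Delta r^*$ is small.

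Two corrections. First, your side remark is wrong. The condition $\gamma\ge\gamma^*_{\text{cons}}=\max\{-\delta_{\piref}\}$ gives $\gamma+\delta_{\piref}\ge 0$, not $\gamma-\delta_{\piref}\ge 0$, so pairs with $\delta_{\piref}>\gamma$ stay in the inactive regime. Nothing depends on the remark, so drop it.

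Second, and more important, the opening identity does not follow ``exactly as in DPO''. In DPO the logit $\beta(\delta_\pi-\delta_{\piref})$ changes sign when $\yw$ and $\yl$ are swapped. The loss over BT-labelled pairs is then a genuine binary cross-entropy, and its minimizer matches $\sigma(\Delta r^*)$. The E-CPOC logit instead subtracts the positive margin $\beta\Phi_{\text{cons}}$ in both orientations, which breaks this. A concrete failure: take $\delta_{\piref}=0$, set $u:=\beta(\delta_\pi-\delta_{\piref})$ and $m:=\beta\Phi_{\text{cons}}(0)$, and let $\Delta r^*\to 0^+$. The population loss tends to $-\tfrac12\log\sigma(u-m)-\tfrac12\log\sigma(-u-m)$, which is minimized at $u=0$, not at the $u=m$ your identity predicts.

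Strict convexity of the one-sided $-\log\sigma$ does not rescue this step either: it gives uniqueness but no finite minimizer. The identity is therefore a modelling stipulation. One posits $p_\pi(\yw\succ\yl)=\sigma\bigl(\beta(\delta_\pi-\delta_{\piref})-\beta\Phi_{\text{cons}}\bigr)$ for the fixed orientation and matches it to $p^*$, which is what the paper does in Step~1 of Theorem~\ref{thm:ecpoc_equivalence}. In its proof of this theorem, the paper simply imports the identity by analogy with Theorem~\ref{thm:unified_core}. You should present it the same way, as the defining characterization of $\pi^*_{\text{cons}}$, rather than as a consequence of realizability plus exact BT data.
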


\subsection{Proof of Theorem \ref{thm:conservative_cdpo_guarantee}}
\label{app:conservative_cdpo_proof}

\begin{proof}

From Theorem \ref{thm:unified_core}, the optimal policy for constrained RLHF satisfies:
\begin{equation}
\delta_{\pi^*} = \delta_{\piref} + \frac{\Delta r}{\beta} + \Phi(\delta_{\piref}, \Delta r; \gamma, \tau)
\end{equation}

\textbf{Step 1: Upper bound property}

By Proposition \ref{prop:phi_monotonicity} and Corollary \ref{cor:conservative_bound_appendix}, using the conservative margin $\Phi_{\text{cons}}(\delta_{\piref}) := \Phi(\delta_{\piref}, 0)$:
\begin{align}
\delta_{\pi^*_{\text{cons}}} &= \delta_{\piref} + \frac{\Delta r^*}{\beta} + \Phi_{\text{cons}}(\delta_{\piref}) \\
&= \delta_{\piref} + \frac{\Delta r^*}{\beta} + \Phi(\delta_{\piref}, 0) \\
&\geq \delta_{\piref} + \frac{\Delta r^*}{\beta} + \Phi(\delta_{\piref}, \Delta r^*) \\
&= \delta_{\pi^*}(\Delta r^*)
\end{align}

This proves property (1).

\textbf{Step 2: Absolute advantage guarantee}

To ensure $\delta_{\pi^*_{\text{cons}}} > 0$, we need:
\begin{equation}
\delta_{\piref} + \frac{\Delta r^*}{\beta} + \Phi_{\text{cons}}(\delta_{\piref}) > 0
\end{equation}

In the worst case where $\Delta r^* \to 0^+$, this reduces to:
\begin{equation}
\delta_{\piref} + \Phi_{\text{cons}}(\delta_{\piref}) > 0
\end{equation}

\textbf{Case 1}: If $\gamma - \delta_{\piref} > 0$ (constraint is active), then:
\begin{equation}
\Phi_{\text{cons}}(\delta_{\piref}) = \gamma - \delta_{\piref}
\end{equation}

Therefore:
\begin{equation}
\delta_{\pi^*_{\text{cons}}} \approx \delta_{\piref} + (\gamma - \delta_{\piref}) = \gamma
\end{equation}

\textbf{Case 2}: If $\gamma - \delta_{\piref} \leq 0$ (constraint is not active), then:
\begin{equation}
\Phi_{\text{cons}}(\delta_{\piref}) = 0
\end{equation}

and 
\begin{equation}
\delta_{\pi^*_{\text{cons}}} = \delta_{\piref} + \frac{\Delta r^*}{\beta}
\end{equation}

For this case, we need $\delta_{\piref} \geq 0$ for absolute advantage. This is guaranteed when $\gamma \geq -\delta_{\piref}$ for all pairs.

Combining both cases, choosing $\gamma \geq \gamma^*_{\text{cons}} := \max_{(x,\yw,\yl)\in\D}\{-\delta_{\piref}(x,\yw,\yl)\}$ ensures:
\begin{equation}
\delta_{\pi^*_{\text{cons}}} \geq \min\{\gamma, \delta_{\piref} + \Delta r^*/\beta\} \geq \min\{\gamma, \gamma\} = \gamma > 0
\end{equation}

This proves property (2).

\textbf{Step 3: Sample-adaptive behavior}

From the softplus formulation $\Phi_{\text{cons}}(\delta_{\piref}) = \frac{1}{\tau}\log(1 + \exp(\tau(\gamma - \delta_{\piref})))$:

\begin{itemize}
\item When $\delta_{\piref} \ll 0$ (difficult samples): $\gamma - \delta_{\piref} \gg 0$, so:
\begin{equation}
\Phi_{\text{cons}}(\delta_{\piref}) \approx \frac{1}{\tau} \cdot \tau(\gamma - \delta_{\piref}) = \gamma - \delta_{\piref}
\end{equation}

\item When $\delta_{\piref} \gg \gamma$ (easy samples): $\gamma - \delta_{\piref} \ll 0$, so:
\begin{equation}
\Phi_{\text{cons}}(\delta_{\piref}) \approx \frac{1}{\tau}\log(1 + \exp(\tau(\gamma - \delta_{\piref}))) \approx \frac{1}{\tau}\exp(\tau(\gamma - \delta_{\piref})) \to 0
\end{equation}

\item When $\delta_{\piref} \approx \gamma$ (neutral samples): $\gamma - \delta_{\piref} \approx 0$, so:
\begin{equation}
\Phi_{\text{cons}}(\delta_{\piref}) \approx \frac{1}{\tau}\log(1 + 1) = \frac{\log 2}{\tau}
\end{equation}
\end{itemize}

This proves property (3).
\end{proof}

\subsection{Properties of E-CPOC}

\begin{proposition}[Properties of $\Phi_{\text{cons}}$]
\label{prop:phi_cons_properties}
The conservative adaptive margin function $\Phi_{\text{cons}}(\delta_{\piref}; \gamma, \tau)$ satisfies:
\begin{enumerate}
\item \textbf{Non-negativity}: $\Phi_{\text{cons}}(\delta_{\piref}) \geq 0$ for all $\delta_{\piref}$

\item \textbf{Monotonicity in $\delta_{\piref}$}: $\frac{\partial\Phi_{\text{cons}}}{\partial\delta_{\piref}} = -\sigma\left(\tau(\gamma - \delta_{\piref})\right) < 0$

\item \textbf{Boundary behavior}:
\begin{align}
\lim_{\delta_{\piref} \to -\infty} \Phi_{\text{cons}} &= \gamma - \delta_{\piref} \quad \text{(strong compensation)} \\
\lim_{\delta_{\piref} \to +\infty} \Phi_{\text{cons}} &= 0 \quad \text{(no compensation needed)}
\end{align}

\item \textbf{Interpretability}: $\Phi_{\text{cons}}$ measures the \emph{degree of constraint violation}, providing exactly the margin needed to satisfy the constraint in the worst-case scenario.
\end{enumerate}
\end{proposition}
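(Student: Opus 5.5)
The plan is to write $\Phi_{\text{cons}}$ as a shifted, rescaled softplus and read all four properties off elementary facts about softplus. Set $u := \tau(\gamma - \delta_{\piref})$ and $\mathrm{sp}(u) := \log(1+e^{u})$, so that
\begin{equation*}
\Phi_{\text{cons}}(\delta_{\piref};\gamma,\tau) = \tfrac{1}{\tau}\,\mathrm{sp}(u).
\end{equation*}
The argument uses only three properties of softplus on $\R$: $\mathrm{sp}(u) > 0$, $\mathrm{sp}'(u) = \sigma(u)$, and the sandwich $\max\{0,u\} \le \mathrm{sp}(u) \le \max\{0,u\} + \log 2$. All of these are routine.

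\textbf{Non-negativity.} Since $1 + e^{u} > 1$, we have $\mathrm{sp}(u) > 0$. Because $\tau > 0$, this gives $\Phi_{\text{cons}} > 0 \ge 0$.

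\textbf{Monotonicity in $\delta_{\piref}$.} Apply the chain rule with $\partial u/\partial \delta_{\piref} = -\tau$:
\begin{equation*}
\frac{\partial \Phi_{\text{cons}}}{\partial \delta_{\piref}} = \tfrac{1}{\tau}\,\sigma(u)\,(-\tau) = -\sigma\bigl(\tau(\gamma - \delta_{\piref})\bigr).
\end{equation*}
This is strictly negative because $\sigma$ takes values in $(0,1)$.

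\textbf{Boundary behaviour.} The limits must be read correctly, and this is the main obstacle. As $\delta_{\piref} \to -\infty$ the right-hand side $\gamma - \delta_{\piref}$ itself diverges, so the first limit cannot be a literal limit. I would prove it in the asymptotic sense $\Phi_{\text{cons}} - (\gamma - \delta_{\piref}) \to 0$. The identity $\mathrm{sp}(u) - u = \log(1 + e^{-u})$ gives
\begin{equation*}
\Phi_{\text{cons}} - (\gamma - \delta_{\piref}) = \tfrac{1}{\tau}\log\bigl(1 + e^{-\tau(\gamma - \delta_{\piref})}\bigr),
\end{equation*}
and this tends to $0$ as $\delta_{\piref} \to -\infty$. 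The second limit is genuine: as $\delta_{\piref} \to +\infty$ we have $u \to -\infty$, hence $\log(1+e^{u}) \le e^{u} \to 0$, so $\Phi_{\text{cons}} \to 0$.

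\textbf{Interpretability.} The interpretability clause is a consequence of the sandwich, not a further theorem. Dividing $\max\{0,u\} \le \mathrm{sp}(u) \le \max\{0,u\} + \log 2$ by $\tau$ yields
\begin{equation*}
\max\{0, \gamma - \delta_{\piref}\} \;\le\; \Phi_{\text{cons}} \;\le\; \max\{0, \gamma - \delta_{\piref}\} + \frac{\log 2}{\tau}.
\end{equation*}
Here $\max\{0, \gamma - \delta_{\piref}\}$ is exactly the violation of the constraint $\delta \ge \gamma$ at the reference policy, i.e.\ the hard margin $\Phi(\delta_{\piref}, 0)$ of Corollary~\ref{cor:conservative_bound_appendix}. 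So $\Phi_{\text{cons}}$ is a smooth upper bound on the degree of violation, exact up to an additive $(\log 2)/\tau$ that vanishes as $\tau \to \infty$. Combined with the lower inequality, $\delta_{\piref} + \Phi_{\text{cons}} \ge \gamma$, which is the worst-case ($\Delta r \to 0^+$) statement that the margin suffices to enforce the constraint.
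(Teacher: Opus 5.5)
Your proof is correct and follows the paper's route: the paper gets item 2 by the same chain-rule computation, cites ``properties of softplus'' for items 1 and 3 without further detail, and gives no argument for item 4. You add two things the paper omits. You read the first limit as $\Phi_{\text{cons}} - (\gamma - \delta_{\piref}) = \frac{1}{\tau}\log\bigl(1+e^{-\tau(\gamma-\delta_{\piref})}\bigr) \to 0$, which matches the paper's own later restatement with an $O(e^{-\tau(\gamma-\delta_{\piref})})$ remainder; and your sandwich $\max\{0,\gamma-\delta_{\piref}\} \le \Phi_{\text{cons}} \le \max\{0,\gamma-\delta_{\piref}\} + (\log 2)/\tau$ gives item 4 real content, showing that ``exactly the margin needed'' holds only up to $(\log 2)/\tau$.
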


\begin{proof}
Properties (1) and (3) follow from the properties of softplus. For (2), compute:
\begin{equation}
\frac{\partial\Phi_{\text{cons}}}{\partial\delta_{\piref}} = -\frac{1}{\tau} \cdot \frac{\tau\exp(\tau(\gamma - \delta_{\piref}))}{1 + \exp(\tau(\gamma - \delta_{\piref}))} = -\sigma(\tau(\gamma - \delta_{\piref}))
\end{equation}
Since $\sigma(z) \in (0,1)$ for all $z$, the derivative is strictly negative.
\end{proof}

\subsection{Gradient Analysis and Sample Weighting}

\begin{proposition}[E-CPOC Gradient]
\label{prop:cpoe_gradient}
The gradient of the E-CPOC loss is:
\begin{equation}
\nabla_\theta\cL_{\text{E-CPOC}} = -\E\left[\beta \cdot w(\delta_{\pitheta}, \delta_{\piref}) \cdot (\nabla_\theta\log\pitheta(\yw|x) - \nabla_\theta\log\pitheta(\yl|x))\right]
\end{equation}
where the weight function is:
\begin{equation}
\label{eq:cpoe_weight}
w(\delta_{\pitheta}, \delta_{\piref}) = \sigma\left(\beta(\delta_{\piref} - \delta_{\pitheta}) + \Psi_{\text{cons}}(\delta_{\piref})\right)
\end{equation}
\end{proposition}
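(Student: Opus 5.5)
The gradient formula follows from a direct chain-rule computation, mirroring the derivation of the CPO gradient in Appendix~\ref{app:ga}. First I define the scalar logit
\begin{equation*}
z := \beta(\delta_{\pitheta} - \delta_{\piref}) - \Psi_{\text{cons}}(\delta_{\piref}),
\qquad \Psi_{\text{cons}} := \beta\Phi_{\text{cons}},
\end{equation*}
so that $\cL_{\text{E-CPOC}} = -\E[\log\sigma(z)]$, as in Eq.~\eqref{eq:cdpo_loss_conservative} and Algorithm~\ref{alg:cdpo_conservative}.

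The only structural point is that $\Psi_{\text{cons}}(\delta_{\piref})$ depends solely on the fixed reference policy and on the hyperparameters $(\beta,\gamma,\tau)$. It is precomputed in lines 2--5 of Algorithm~\ref{alg:cdpo_conservative}. Likewise $\delta_{\piref}$ does not depend on $\theta$. Hence
\begin{equation*}
\nabla_\theta z = \beta\,\nabla_\theta\delta_{\pitheta} = \beta\bigl(\nabla_\theta\log\pitheta(\yw|x) - \nabla_\theta\log\pitheta(\yl|x)\bigr).
\end{equation*}

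Next, interchanging the gradient with the expectation over the finite dataset, I use the identity $\frac{d}{dz}\log\sigma(z) = \sigma'(z)/\sigma(z) = 1-\sigma(z) = \sigma(-z)$. This gives
\begin{equation*}
\nabla_\theta\cL_{\text{E-CPOC}} = -\E\bigl[\sigma(-z)\,\beta\,\nabla_\theta\delta_{\pitheta}\bigr].
\end{equation*}
Finally I rewrite
\begin{equation*}
-z = \beta(\delta_{\piref} - \delta_{\pitheta}) + \Psi_{\text{cons}}(\delta_{\piref}),
\end{equation*}
which identifies $\sigma(-z)$ with the weight $w(\delta_{\pitheta},\delta_{\piref})$ in Eq.~\eqref{eq:cpoe_weight}.

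There is no real obstacle here. The one step deserving explicit mention is justifying that the margin contributes no gradient term. This holds because $\Phi_{\text{cons}}$ is evaluated at $\delta_{\piref}$ rather than at $\delta_{\pitheta}$, which is the stationarity property emphasized for CPO. I would state it explicitly so the reader does not expect a term of the form $\partial\Phi_{\text{cons}}/\partial\delta$ (cf.\ Proposition~\ref{prop:phi_cons_properties}(2)).
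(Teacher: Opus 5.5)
Your proposal is correct and follows the same route as the paper's proof: define the logit $z$, apply the chain rule with $\sigma'(z)/\sigma(z)=\sigma(-z)$, and rewrite $-z$ to identify the weight $w$. Your explicit remark that $\Psi_{\text{cons}}(\delta_{\piref})$ and $\delta_{\piref}$ do not depend on $\theta$ (so the margin adds no gradient term) is a useful clarification that the paper's proof leaves implicit.
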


\begin{proof}
Define $z = \beta(\delta_{\pitheta} - \delta_{\piref}) - \Psi_{\text{cons}}(\delta_{\piref})$. Then:
\begin{equation}
\nabla_\theta\cL_{\text{E-CPOC}} = -\E\left[\frac{\sigma'(z)}{\sigma(z)} \cdot \beta \cdot (\nabla_\theta\log\pitheta(\yw|x) - \nabla_\theta\log\pitheta(\yl|x))\right]
\end{equation}

Using $\sigma'(z)/\sigma(z) = 1 - \sigma(z) = \sigma(-z)$:
\begin{equation}
= -\E\left[\beta \cdot \sigma(-z) \cdot (\nabla_\theta\log\pitheta(\yw|x) - \nabla_\theta\log\pitheta(\yl|x))\right]
\end{equation}

Since $-z = -\beta(\delta_{\pitheta} - \delta_{\piref}) + \Psi_{\text{cons}}(\delta_{\piref}) = \beta(\delta_{\piref} - \delta_{\pitheta}) + \Psi_{\text{cons}}(\delta_{\piref})$, we obtain the stated form.
\end{proof}

\begin{proposition}[Adaptive Gradient Weighting via $\Phi_{\text{cons}}$]
\label{prop:adaptive_weighting_cons}
The E-CPOC weight function $w(\delta_{\pitheta}, \delta_{\piref})$ implements automatic sample difficulty weighting through the adaptive margin function $\Phi_{\text{cons}}$:

\begin{enumerate}
\item \textbf{Difficult samples} ($\delta_{\piref} \ll \gamma$): 
\begin{equation}
\Phi_{\text{cons}}(\delta_{\piref}) \approx \gamma - \delta_{\piref} \gg 0
\end{equation}
The large positive $\Phi_{\text{cons}}$ significantly increases the weight, providing stronger gradient signal to push $\delta_{\pitheta}$ upward.

\item \textbf{Easy samples} ($\delta_{\piref} \gg \gamma$):
\begin{equation}
\Phi_{\text{cons}}(\delta_{\piref}) \approx 0
\end{equation}
The weight reduces to standard DPO, avoiding unnecessary emphasis on already well-aligned samples.

\item \textbf{Neutral samples} ($\delta_{\piref} \approx \gamma$):
\begin{equation}
\Phi_{\text{cons}}(\delta_{\piref}) \approx \frac{\log 2}{\tau}
\end{equation}
Provides smooth transition between difficult and easy regimes.
\end{enumerate}
\end{proposition}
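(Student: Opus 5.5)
We will prove Proposition~\ref{prop:adaptive_weighting_cons} by writing the weight of Proposition~\ref{prop:cpoe_gradient} explicitly in terms of $\Phi_{\text{cons}}$. We then read off its behaviour in the three regimes, using the asymptotics of the softplus established in Proposition~\ref{prop:phi_cons_properties} and in Step~3 of the proof of Theorem~\ref{thm:conservative_cdpo_guarantee}.

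\textbf{Step 1: rewrite the weight.} Since $\Psi_{\text{cons}} = \beta\Phi_{\text{cons}}$, the weight of Proposition~\ref{prop:cpoe_gradient} becomes
\begin{equation}
w(\delta_{\pitheta},\delta_{\piref}) = \sigma\bigl(\beta(\delta_{\piref}-\delta_{\pitheta}) + \beta\Phi_{\text{cons}}(\delta_{\piref})\bigr).
\end{equation}
We compare this with the DPO weight $w_{\text{DPO}} = \sigma(\beta(\delta_{\piref}-\delta_{\pitheta}))$. Because $\sigma$ is strictly increasing and $\Phi_{\text{cons}}\ge 0$ (Proposition~\ref{prop:phi_cons_properties}(1)), we get $w \ge w_{\text{DPO}}$ at every point. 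The amount of the increase is governed by the argument shift $\beta\Phi_{\text{cons}}$.

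\textbf{Step 2: the three regimes.}
\begin{itemize}
\item Difficult samples ($\tau(\gamma-\delta_{\piref})\gg 0$). The softplus bound $\max\{0,u\}\le \frac1\tau\log(1+e^{\tau u})\le \max\{0,u\}+\frac{\log 2}{\tau}$ gives $\Phi_{\text{cons}} = \gamma-\delta_{\piref}+O(e^{-\tau(\gamma-\delta_{\piref})}/\tau)$. Substituting, $w \approx \sigma(\beta(\gamma-\delta_{\pitheta}))$. The dependence on the negative $\delta_{\piref}$ cancels, so the weight stays near $1$ as long as $\delta_{\pitheta}<\gamma$. This is the ``stronger gradient'' claim, contrasted with DPO's weight, which decays once $\delta_{\pitheta}>\delta_{\piref}$.
\item Easy samples ($\delta_{\piref}\gg\gamma$). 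Here $\Phi_{\text{cons}}\le \frac1\tau e^{\tau(\gamma-\delta_{\piref})}\to 0$. The Lipschitz bound $|\sigma(a+b)-\sigma(a)|\le b/4$ then gives $|w-w_{\text{DPO}}|\le \beta\Phi_{\text{cons}}/4\to 0$, so the weight recovers standard DPO.
\item Neutral samples ($\delta_{\piref}=\gamma$). Direct evaluation gives $\Phi_{\text{cons}}=\log 2/\tau$. By continuity and the monotonicity in Proposition~\ref{prop:phi_cons_properties}(2), $\Phi_{\text{cons}}$ interpolates smoothly between the two regimes above.
\end{itemize}

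\textbf{Main obstacle.} The statement is informal, so the hardest part is fixing what ``$\approx$'' and ``significantly increases'' mean. We plan to state explicit error terms: exponentially small errors in $\tau|\gamma-\delta_{\piref}|$, and the monotone comparison $w\ge w_{\text{DPO}}$. Claiming exact equalities would not be justified.
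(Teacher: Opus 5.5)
Your proposal is correct and follows essentially the paper's route: the paper's proof also rewrites the weight as $\sigma\bigl(\beta(\delta_{\piref}-\delta_{\pitheta})+\beta\Phi_{\text{cons}}(\delta_{\piref})\bigr)$, treats $\beta\Phi_{\text{cons}}$ as an additive boost to the DPO logit, and reads the three regimes off the softplus asymptotics, so your explicit error terms and the comparison $w\ge w_{\text{DPO}}$ make quantitative what the paper only states qualitatively. Two small tightenings: in the difficult regime $\sigma(\beta(\gamma-\delta_{\pitheta}))$ is close to $1$ only when $\beta(\gamma-\delta_{\pitheta})\gg 1$ (for $\delta_{\pitheta}$ just below $\gamma$ it is merely above $1/2$); and the exponentially small error comes from $\log(1+x)\le x$ rather than from the $\log 2/\tau$ sandwich, with the induced error in $w$ picking up a factor $\beta$, so you need $\tau(\gamma-\delta_{\piref})\gg\log(\beta/\tau)$.
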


\begin{proof}
The weight function is:
\begin{equation}
w(\delta_{\pitheta}, \delta_{\piref}) = \sigma\left(\beta(\delta_{\piref} - \delta_{\pitheta}) + \beta\Phi_{\text{cons}}(\delta_{\piref})\right)
\end{equation}

The term $\Psi_{\text{cons}}(\delta_{\piref}) = \beta\Phi_{\text{cons}}(\delta_{\piref})$ acts as an additive boost to the logit. When $\delta_{\piref} \ll \gamma$, $\Phi_{\text{cons}}$ is large, increasing the sigmoid input and thus the weight. When $\delta_{\piref} \gg \gamma$, $\Phi_{\text{cons}} \approx 0$, and the weight behaves like standard DPO. The smooth transition follows from the softplus formulation of $\Phi_{\text{cons}}$.
\end{proof}

\begin{remark}[Interpretability of Weighting]
The E-CPOC weighting scheme has a clear interpretation: $\Phi_{\text{cons}}(\delta_{\piref})$ measures how much the constraint would be violated in the worst-case scenario (when $\Delta r \to 0^+$). When $\delta_{\piref}$ is far below $\gamma$ (difficult sample), $\Phi_{\text{cons}}$ is large, and the term $+\Psi_{\text{cons}}$ in the gradient weight $w = \sigma(\beta(\delta_{\piref} - \delta_{\pitheta}) + \Psi_{\text{cons}})$ automatically increases gradient strength, especially when $\delta_{\pitheta}$ is also small. When $\delta_{\piref}$ exceeds $\gamma$ (easy sample), $\Phi_{\text{cons}} \approx 0$, and E-CPOC behaves like standard DPO. This automatic adaptation emerges naturally from the constrained optimization framework with conservative bounds, rather than being heuristically designed.
\end{remark}

\subsection{Comparison with CPO}

\begin{remark}[E-CPOC vs CPO]
\label{rem:cpoe_vs_cpo}
E-CPOC and CPO (Section \ref{sec:pdpo}) are related but distinct:

\begin{itemize}
\item \textbf{Constraint type}: 
\begin{itemize}
\item CPO: Soft constraint (encouraged via margin term)
\item E-CPOC: Hard constraint (enforced via Lagrange multipliers)
\end{itemize}

\item \textbf{Margin function}:
\begin{itemize}
\item CPO: $\tilde{\gamma}_{\text{ref}} = \gamma\left(\frac{1}{\piref(\yw|x)} + \frac{1}{\piref(\yl|x)}\right)$ (constant $\gamma$)
\item E-CPOC: $\Psi_{\text{cons}} = \beta\Phi_{\text{cons}}(\delta_{\piref})$ (adaptive $\Phi_{\text{cons}}$)
\end{itemize}

\item \textbf{Adaptivity}:
\begin{itemize}
\item CPO: Constant margin $\gamma$ for all samples
\item E-CPOC: Sample-adaptive margin $\Phi_{\text{cons}}(\delta_{\piref})$ that increases for difficult samples
\end{itemize}

\item \textbf{Theoretical derivation}:
\begin{itemize}
\item CPO: Augmented reward formulation
\item E-CPOC: Constrained optimization + KKT conditions + conservative bound
\end{itemize}

\item \textbf{Guarantee strength}:
\begin{itemize}
\item CPO: Absolute advantage when $\gamma \geq \gamma^*$
\item E-CPOC: Absolute advantage when $\gamma \geq \gamma^*_{\text{cons}}$ (same form, but with adaptive correction)
\end{itemize}
\end{itemize}

\textbf{Key difference:} E-CPOC uses an \emph{adaptive} margin $\Phi_{\text{cons}}(\delta_{\piref})$ that automatically scales based on sample difficulty, while CPO uses a constant margin $\gamma$. This makes E-CPOC more sample-efficient, focusing optimization effort on difficult pairs while avoiding over-regularization on easy pairs.
\end{remark}
\section{RLHF Behavior Under Assumption Violation}
\label{app:comp}

It is instructive to contrast DPO's behavior with RLHF in the same scenario.

When $\delta_{\piref} < -\Delta \rstar / \beta$, RLHF optimization (Definition \ref{def:rlhf}) converges to $\pistar$ satisfying:
\begin{equation}
\delta_{\pistar} = \delta_{\piref} + \frac{\Delta \rstar}{\beta} < 0
\end{equation}
Thus RLHF also produces a policy preferring $\yl$ over $\yw$. However, this is the \emph{optimal} trade-off between reward maximization and KL regularization RLHF correctly optimizes its stated objective.

The fundamental difference between RLHF and DPO under assumption violation:
\begin{itemize}
\item \textbf{RLHF}: Transparently optimizes $\E[r] - \beta \KL$. If this leads to preference violations, it is because the KL penalty is too strong or the reward model is inaccurate problems that can be diagnosed and addressed by adjusting $\beta$ or improving the reward model.

\item \textbf{DPO}: Claims to optimize the same objective as RLHF but actually optimizes relative advantage $\delta_{\pitheta} - \delta_{\piref}$. When Assumption \ref{assump:alignment} fails, DPO silently optimizes a different objective, with no indication in the loss curve that anything is wrong. The pathology is invisible to standard monitoring.
\end{itemize}

\paragraph{The reference policy anchor.}
A natural concern is that KL regularization to $\pi_{\text{ref}}$ limits the learned policy's ability to discover novel solutions beyond the reference policy's support---a property sometimes referred to as the ``poisonous anchor'' effect. We note that this is a fundamental property of the KL-regularized RLHF framework itself (Eq.~\ref{eq:standard_RLHF}), not specific to CPO. DPO, RDPO, IPO, and all methods derived from the RLHF objective share this constraint by design. CPO's scope is to fix DPO's \emph{conditional equivalence} issue within this framework, not to redesign the RLHF paradigm. Addressing the reference-bound ceiling would require moving beyond KL-regularized RLHF entirely.
 
That said, CPO's constraint term actually provides \emph{more} flexibility than standard DPO to deviate from $\pi_{\text{ref}}$ on critical pairs. The adaptive margin $\tilde{\gamma}_{\text{ref}}$ (Eq.~\ref{eq:cpo_loss_practical} pushes $\delta_{\pi_\theta}$ beyond what the KL penalty alone would allow, especially for hard pairs where $\pi_{\text{ref}}$ is misaligned. In this sense, CPO partially relaxes the anchor effect precisely where it matters most.
\section{Generalization to Multiple Appearances} \label{app:ma}
\begin{proposition}[Generalization to Multiple Appearances]
\label{prop:general_appearance}
For a response $y$ appearing in multiple preference pairs for prompt $x$, define the aggregated margin coefficient:
\begin{equation}
\label{eq:aggregated_margin}
c(x,y) = \gamma \sum_{(\yw,\yl)\in\mathcal{P}(x)} p(\yw,\yl|x) (\mathbb{I}(y=\yw) - \mathbb{I}(y=\yl))
\end{equation}
where $\mathcal{P}(x)$ denotes all preference pairs for prompt $x$.

Then the optimality condition (e.g., Theorems \ref{thm:margin_rlhf_optimal}) remain valid with $c(x,y)$ replacing the simplified coefficients $\pm\gamma$. Furthermore, all approximation error bounds (Proposition \ref{prop:approximation_error}) and convergence guarantees (Proposition \ref{prop:cpo_convergence}) remain unchanged, as they depend only on the stationarity of the objective and the KL regularization strength $\beta$.
\end{proposition}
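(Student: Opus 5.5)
The plan is to redo the Lagrangian argument behind Theorem~\ref{thm:margin_rlhf_optimal} for a general dataset in which a response may appear in several pairs. Then we check that every later result uses the constraint term only through the per-response coefficient $c(x,y)$, so nothing else changes.

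\textbf{Step 1 (rewriting the constraint term).} For a fixed prompt $x$, expand
\[
\gamma\,\E_{(x,\yw,\yl)\sim\D}[\delta_\pi]=\gamma\sum_{(\yw,\yl)\in\mathcal{P}(x)}p(\yw,\yl|x)\bigl(\log\pi(\yw|x)-\log\pi(\yl|x)\bigr).
\]
Then regroup the sum by response: each $\log\pi(y|x)$ collects a coefficient $+\gamma p(\yw,\yl|x)$ from every pair in which $y$ wins, and $-\gamma p(\yw,\yl|x)$ from every pair in which it loses. This is exactly $\sum_y c(x,y)\log\pi(y|x)$ with $c$ as in Eq.~\eqref{eq:aggregated_margin}. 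The regrouping is a finite rearrangement, since $\mathcal{P}(x)$ is finite, so it holds without any uniqueness convention. This shows that the Remark~\ref{assump:single_pair} convention was purely cosmetic.

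\textbf{Step 2 (first-order condition).} Form the same per-prompt Lagrangian as in the proof of Theorem~\ref{thm:margin_rlhf_optimal}, with multiplier $\lambda(x)$ for normalization, and differentiate in $\pi(y|x)$. The derivative of $\sum_y c(x,y)\log\pi(y|x)$ is $c(x,y)/\pi(y|x)$, regardless of how $c$ was assembled. This recovers Eq.~\eqref{eq:margin_optimal_foc} verbatim. Subtracting the conditions at $\yw$ and $\yl$ then gives Eq.~\eqref{eq:margin_optimal} with the aggregated $c(x,\yw)$ and $c(x,\yl)$ in place of $\pm\gamma$.

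We should also note that the objective is strictly concave on the interior of the simplex wherever $c\ge 0$. Where $c(x,y)<0$ (net losers), the $c\log\pi$ term is convex, so we rely on the KL term and on interiority, as in the original statement. Our statement, like the original, claims only the first-order condition.

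\textbf{Step 3 (other guarantees).} We argue structurally. The CPO loss after the reference approximation (Eq.~\eqref{eq:reference_approximation_cpo}) becomes a fixed per-pair margin built from $c(x,\cdot)/\piref(\cdot|x)$, which is still independent of $\theta$. So stationarity, and hence the convergence argument of Proposition~\ref{prop:cpo_convergence}, is untouched. The approximation error bound of Proposition~\ref{prop:approximation_error} controls $|1/\pi^*-1/\piref|$ times a coefficient. Since $|c(x,y)|\le\gamma\sum_{\mathcal{P}(x)}p(\yw,\yl|x)\le\gamma$, the coefficient magnitude cannot exceed the single-appearance value $\gamma$. The constants, including the effective bound $\tilde R_{\max}=R_{\max}+\gamma/q_0$, therefore carry over unchanged.

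\textbf{Main obstacle.} Step 3 is the main difficulty, because the cited error and convergence bounds are proved later and their exact dependence on $c$ must be checked. I would verify each one by replacing $\pm\gamma$ with $c(x,y)$ and using $|c|\le\gamma$ as the only property needed.
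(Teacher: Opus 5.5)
Your proposal is correct and follows the paper's argument: regroup the margin term into $\sum_y c(x,y)\log\pi(y|x)$, rederive the same first-order condition from the per-prompt Lagrangian, and note that the precomputed margins stay $\theta$-independent while the per-response bound on $|1/\pi^*-1/\piref|$ is unaffected by how often a response appears. Your explicit check that $|c(x,y)|\le\gamma$ is a worthwhile addition. The fixed-point step in the proof of Proposition~\ref{prop:stationary_cpo} uses exactly this bound, but only under the single-pair convention, and the paper's proof of this proposition leaves it implicit.
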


\begin{proof}
The margin term in the Constrained RLHF objective is linear in preference pairs:
\begin{equation}
\gamma \E_{(x,\yw,\yl)\sim\D}[\delta_\pi] = \E_x\left[\sum_y c(x,y) \log\pi(y|x)\right]
\end{equation}

The first-order optimality condition remains:
\begin{equation}
\beta\log\frac{\pi^*(y|x)}{\piref(y|x)} = r(x,y) + \frac{c(x,y)}{\pi^*(y|x)} - \beta - \lambda(x)
\end{equation}

For CPO, E-CPOC, and A-CPO, the key insight is that the margin terms $\tilde{\gamma}_{\text{ref}}$, $\Psi_{\text{cons}}$, and $\tilde{w}_{\text{ref}}$ are precomputed based on $\piref$ and remain stationary during optimization. When a response appears in multiple pairs, the algorithm naturally aggregates the gradients through mini-batch computation:
\begin{equation}
\nabla_\theta \cL = \sum_{(x,\yw,\yl)\in\D} \nabla_\theta \ell(x,\yw,\yl)
\end{equation}

The approximation error bounds depend on $\|\pi^* - \piref\|$, which is controlled by $\beta$ regardless of the preference structure. Specifically, for any response $y$:
\begin{equation}
\left|\frac{1}{\pi^*(y|x)} - \frac{1}{\piref(y|x)}\right| \leq \frac{1}{\min\{\pi^*(y|x), \piref(y|x)\}^2} \cdot |\pi^*(y|x) - \piref(y|x)|
\end{equation}

This bound is independent of how many times $y$ appears in preference pairs. Therefore, all theoretical guarantees (approximation accuracy, convergence, absolute advantage) transfer to the general case with multiple appearances.
\end{proof}
\section{Approximation Accuracy} \label{app:nottheta}
\subsection{Approximation Accuracy}

The CPO loss (Equation~\eqref{eq:cpo_loss_practical}) is derived by approximating the optimal policy probabilities in the margin term with the reference policy probabilities (Equation~\eqref{eq:reference_approximation_cpo}). We now provide an explicit error bound for this approximation under the following mild regularity assumption.

\begin{assumption}[CPO Approximation Regularity]
\label{assump:cpo_regularity}
For the CPO approximation (Equation~\eqref{eq:reference_approximation_cpo}), let $q_0 := p_{\min}\,e^{-2R_{\max}/\beta}$:
\begin{enumerate}[label=(\roman*)]
\item \textbf{(Bounded reference support.)} There exists $p_{\min} > 0$ such that $\piref(y|x) \geq p_{\min}$ for all $(x, \yw, \yl) \in \D$ and $y \in \{\yw, \yl\}$.
\item \textbf{(Bounded rewards.)} The reward function satisfies $\sup_{(x,y)} |r(x,y)| \leq R_{\max}$ for some $R_{\max} > 0$.
\item \textbf{(Moderate constraint strength.)} When $\gamma > 0$, the constraint strength satisfies $\gamma \leq \beta \, q_0 / (2e) = \beta \, p_{\min} \, e^{-2R_{\max}/\beta} / (2e)$. This ensures the preference constraint does not dominate the KL regularization, so that the constrained optimal policy remains close to the unconstrained one.
\end{enumerate}
Condition~(i) ensures the reference policy has well-defined support on all preference data. This is naturally satisfied for autoregressive language models, where $\piref(y|x) > 0$ for all sequences $y$ in the vocabulary, and $p_{\min}$ can be taken as the minimum over the finite dataset $\D$. Condition~(ii) is standard in RLHF theory~\cite{ziegler2019fine,gao2023scaling}; in practice, reward models output bounded scores and clipping is commonly applied. Condition~(iii) is mild: for large $\beta$ (strong KL regularization), $q_0 \to p_{\min}$ and condition~(iii) reduces to $\gamma \leq \beta \, p_{\min} / (2e)$; in practice, $\gamma$ is a small tuning parameter (typically $\gamma \ll \beta$) that controls the strength of the preference alignment constraint relative to the KL penalty. The quantity $q_0 = p_{\min}\,e^{-2R_{\max}/\beta}$ is the policy probability lower bound in the unconstrained ($\gamma = 0$) case.
\end{assumption}

\begin{proposition}[CPO Approximation Error Bound]
\label{prop:stationary_cpo}
Under Assumption~\ref{assump:cpo_regularity}, let $\pistar$ denote the optimal policy of the Constrained RLHF objective (Definition~\ref{def:margin_rlhf}), let $q_0 = p_{\min}\,e^{-2R_{\max}/\beta}$, and define the effective reward bound
\begin{equation}
\label{eq:effective_rmax}
\tilde{R}_{\max} := R_{\max} + \frac{\gamma}{q_0} = R_{\max} + \frac{\gamma \, e^{2R_{\max}/\beta}}{p_{\min}}.
\end{equation}
Then the CPO margin approximation error (Equation~\eqref{eq:reference_approximation_cpo}) satisfies, for each preference pair $(x, \yw, \yl) \in \D$:
\begin{equation}
\label{eq:cpo_approx_explicit}
\left|\tilde{\gamma}^*(x,\yw,\yl) - \tilde{\gamma}_{\mathrm{ref}}(x,\yw,\yl)\right| = O\!\left(\frac{\gamma\sqrt{\tilde{R}_{\max}/\beta}}{q_0^2}\right).
\end{equation}
When $\gamma = 0$ (unconstrained RLHF/DPO baseline), this reduces to $\tilde{R}_{\max} = R_{\max}$, $q_0$ coincides with the optimal policy lower bound, and only conditions~(i)--(ii) are needed. When $\gamma > 0$, condition~(iii) is additionally required. In both cases, the approximation error vanishes as $\beta \to \infty$ (since $\tilde{R}_{\max}/\beta \to 0$ for fixed $\gamma$), confirming that strong KL regularization ensures accurate approximation.
\end{proposition}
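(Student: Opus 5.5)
The plan is to bound $|\tilde{\gamma}^* - \tilde{\gamma}_{\mathrm{ref}}| \le \gamma\sum_{y\in\{\yw,\yl\}} |1/\pistar(y|x) - 1/\piref(y|x)|$ and control each term by
\[
\left|\frac{1}{\pistar(y|x)} - \frac{1}{\piref(y|x)}\right| = \frac{|\pistar(y|x)-\piref(y|x)|}{\pistar(y|x)\,\piref(y|x)}.
\]
Bounding this requires two things: (a) a uniform lower bound on $\pistar(y|x)$ and $\piref(y|x)$ for $y\in\{\yw,\yl\}$, of order $q_0$; and (b) an upper bound on $|\pistar(y|x)-\piref(y|x)|$ of order $\sqrt{\tilde R_{\max}/\beta}$. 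Together these give the stated $O(\gamma\sqrt{\tilde R_{\max}/\beta}/q_0^2)$.

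For (a), $\piref(y|x)\ge p_{\min}\ge q_0$ by condition (i). For $\pistar$, I would use the first-order condition of Theorem~\ref{thm:margin_rlhf_optimal}, which rewrites as a Gibbs form $\pistar(y|x)\propto\piref(y|x)\exp(\tilde r(x,y)/\beta)$ with the effective reward $\tilde r = r + c(x,y)/\pistar(y|x)$. When $\gamma=0$ we have $|\tilde r|\le R_{\max}$, and the usual bound $\pistar \ge \piref e^{-2R_{\max}/\beta}\ge q_0$ follows by comparing the numerator with the partition function. When $\gamma>0$, the term $c/\pistar$ is self-referential, so I would use a bootstrap argument. Assume $\pistar(y|x)\ge q_0/e$ on the pair responses (or $\ge q_0$, adjusting constants). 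Then $|c/\pistar|\le \gamma e/q_0$, and condition (iii) gives $\gamma e/q_0\le \beta/2$. Since the winner's term is positive, only the loser's term can lower its probability. Its exponent is shifted by at most $\gamma/(\beta\pistar)\le 1/2$, so the lower bound is preserved up to a constant factor. Continuity in $\gamma$, starting from $\gamma=0$, closes the bootstrap. This also confirms $|\tilde r|\le \tilde R_{\max}$ up to constants, since $\gamma/\pistar \lesssim \gamma/q_0$.

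For (b), I would treat $\pistar$ as the exact solution of an unconstrained KL-regularized problem with reward $\tilde r$ bounded by $\tilde R_{\max}$. Comparing the objective value at $\pistar$ with that at $\piref$ gives $\beta\,\KL(\pistar\|\piref)\le \E_{\pistar}[\tilde r]-\E_{\piref}[\tilde r]\le 2\tilde R_{\max}$. Pinsker's inequality then yields $\|\pistar-\piref\|_{TV}\le\sqrt{\tilde R_{\max}/\beta}$, and this bounds each pointwise difference.

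The main obstacle is (a) in the constrained case. The self-consistent lower bound must be made rigorous, including the fact that the loser's probability is pushed down by $c<0$, without an accumulation of constants. Condition (iii) seems designed exactly for this, and I would first try the fixed-point/continuity argument described above. A further subtlety is that the variational comparison in (b) formally involves the $\log\pi$ term rather than a fixed reward. I would handle this by freezing $\tilde r$ at its optimal value, which is legitimate because $\pistar$ is the Gibbs solution for that frozen reward.
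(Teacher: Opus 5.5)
Your outline has the same three-step skeleton as the paper's proof. The steps are a self-consistent lower bound $\pistar(y|x)\ge q_0/e$ driven by condition (iii), a bound $\beta\KL(\pistar\|\piref)=O(\tilde R_{\max})$ followed by Pinsker, and $|1/a-1/b|\le|a-b|/\min\{a,b\}^2$. Your justifications of the first two steps differ from the paper's, and yours are the sturdier ones.

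\textbf{How the paper closes the first two steps.} In Step 1 it shows that the Gibbs-type map $T$ sends $S=\{\pi\ge q_0/e\}$ into itself. It then concludes $\pistar\in S$ because the constrained objective is ``strictly concave with a unique optimum.'' In Step 2 it compares the constrained objective at $\pistar$ and at $\piref$, and bounds the $\gamma\E[\delta]$ terms via $|\delta_{\pistar}|\le 2\log(1/q_{\min})$.

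\textbf{Why that is fragile.} Both steps rely on global optimality of $\pistar$, which is not literally available. Since $-\gamma\log\pi(\yl|x)\to+\infty$ as $\pi(\yl|x)\to 0$ while the KL term stays bounded, the objective of Definition~\ref{def:margin_rlhf} is unbounded above. It is strictly concave on the convex region $\{\pi(\yl|x)>\gamma/\beta\}$ but not globally. Moreover, $\beta\log\pi+\gamma/\pi$ is not monotone, so the first-order condition has further solutions with $\pi(\yl|x)<\gamma/\beta$. Thus $\pistar$ must be read as the stationary point in that region. The region contains $\piref$ and the pair-restricted $S$, because (iii) gives $\gamma/\beta\le q_0/(2e)$. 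The paper's Step 2 comparison is valid under this reading.

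\textbf{What your route buys.} Freezing $\tilde r=r+c/\pistar$ uses only the first-order condition and the Gibbs variational principle. So your step (b) needs no optimality for the constrained problem at all. Your continuation from $\gamma=0$ selects exactly the right branch. The implicit function theorem applies along it, because $\pistar(\yl|x)\ge q_0/e>\gamma/\beta$ keeps the Hessian negative definite. Restricting the lower bound to $\{\yw,\yl\}$ is also the correct reading of condition (i). The paper's $S$ demands $\pi(y|x)\ge q_0/e$ for every $y$, which is empty once $|\mathcal{Y}|>e/q_0$.

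\textbf{What the paper's route buys.} The invariant-set formulation gives the lower bound in a single estimate, with no continuation. Combined with Brouwer's theorem and uniqueness of the stationary point in the concave region, it produces a fixed point in $S$ directly.

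\textbf{One step of your bootstrap needs tightening.} The winner's positive term does lower the loser's probability. It inflates the normalizer by up to $e^{\gamma/(\beta\pistar(\yw|x))}\le e^{1/2}$, on top of the factor $e^{-1/2}$ in the loser's own exponent. The crude estimate therefore returns exactly $\pistar(\yl|x)\ge q_0/e$, which is the hypothesis itself when (iii) is tight. At level $q_0$ it does not even reproduce itself, so the level must be $q_0/e$.

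``Preserved up to a constant factor'' would not close an open--closed continuity argument; you need strict improvement. Strict improvement is available:
\begin{itemize}
\item The loser's own summand in the normalizer carries the factor $e^{-\gamma/(\beta\pistar(\yl|x))}<1$. So the normalizer is strictly below $e^{R_{\max}/\beta+1/2}$, which gives $\pistar(\yl|x)>q_0/e$.
\item The winner gets $\pistar(\yw|x)\ge q_0e^{-1/2}$.
\end{itemize}
With this, the set of $\gamma'\in[0,\gamma]$ on which the branch exists and satisfies the bound is nonempty, open and closed. The rest of your argument then goes through, using $\sup|\tilde r|\le R_{\max}+e\gamma/q_0\le e\tilde R_{\max}$ in (b).
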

\begin{proof}
The approximation error decomposes as:
\begin{equation}
\left|\tilde{\gamma}^* - \tilde{\gamma}_{\mathrm{ref}}\right| 
= \gamma \left|\left(\frac{1}{\pi^*(\yw|x)} - \frac{1}{\piref(\yw|x)}\right) + \left(\frac{1}{\pi^*(\yl|x)} - \frac{1}{\piref(\yl|x)}\right)\right|
\leq \gamma \sum_{y \in \{\yw, \yl\}} \left|\frac{1}{\pi^*(y|x)} - \frac{1}{\piref(y|x)}\right|.
\end{equation}

We bound each term in three steps. Throughout, $\pistar$ denotes the optimal policy of the \emph{Constrained} RLHF objective (Definition~\ref{def:margin_rlhf}), which satisfies the first-order condition (Equation~\eqref{eq:margin_optimal_foc}) rather than the closed-form solution of unconstrained RLHF. Let $q_0 = p_{\min}\,e^{-2R_{\max}/\beta}$ denote the unconstrained ($\gamma = 0$) policy lower bound.

\textbf{Step 1: Lower bound on $\pi^*(y|x)$.}

\emph{Case $\gamma = 0$ (unconstrained RLHF).} The optimal policy has the closed form $\pistar(y|x) = \piref(y|x) \exp(r(x,y)/\beta) / Z(x)$ (Equation~\eqref{thm:rlhf_optimal}), where $Z(x) \leq \exp(R_{\max}/\beta)$. Combined with Assumption~\ref{assump:cpo_regularity}(i), $\pistar(y|x) \geq q_0$.

\emph{Case $\gamma > 0$ (constrained RLHF).} We establish the lower bound via a self-consistency (fixed-point) argument. From the FOC (Equation~\eqref{eq:margin_optimal_foc}), the normalization constraint $\sum_y \pistar(y|x) = 1$ implies that $\pistar$ is a fixed point of the mapping
\begin{equation}
\label{eq:fixed_point_map}
T(\pi)(y) = \frac{\piref(y|x) \exp\!\bigl((r(x,y) + c(x,y)/\pi(y|x))/\beta\bigr)}{\sum_{y'} \piref(y'|x) \exp\!\bigl((r(x,y') + c(x,y')/\pi(y'|x))/\beta\bigr)},
\end{equation}
where $|c(x,y)| \leq \gamma$ (under the single-pair convention, Remark~\ref{assump:single_pair}). We show that the set $S = \{\pi : \pi(y|x) \geq q_0/e \;\;\forall y,\; \sum_y \pi(y|x) = 1\}$ is mapped into itself by $T$, i.e., $T(S) \subseteq S$.

Take any $\pi \in S$, so $\pi(y|x) \geq q_0/e$ for all $y$. For any response $y$:
\begin{itemize}
\item The numerator of $T(\pi)(y)$ satisfies $\piref(y|x) \exp\!\bigl((r(y) + c(y)/\pi(y))/\beta\bigr) \geq p_{\min} \cdot \exp\bigl((-R_{\max} - \gamma e/q_0)/\beta\bigr)$, since $r(y) \geq -R_{\max}$ and $c(y)/\pi(y) \geq -\gamma/\pi(y) \geq -\gamma e/q_0$.
\item The denominator satisfies $\sum_{y'} \piref(y'|x) \exp\!\bigl((r(y') + c(y')/\pi(y'))/\beta\bigr) \leq \exp\bigl((R_{\max} + \gamma e/q_0)/\beta\bigr)$, since $r(y') \leq R_{\max}$ and $c(y')/\pi(y') \leq \gamma/\pi(y') \leq \gamma e/q_0$.
\end{itemize}
Therefore:
\begin{equation}
T(\pi)(y) \geq p_{\min} \cdot \exp\!\left(\frac{-2(R_{\max} + \gamma e/q_0)}{\beta}\right) = p_{\min} \cdot \exp(-2\tilde{R}_{\max}'/\beta),
\end{equation}
where $\tilde{R}_{\max}' = R_{\max} + \gamma e/q_0$. Under Assumption~\ref{assump:cpo_regularity}(iii), $\gamma \leq \beta q_0/(2e)$, so $\gamma e/q_0 \leq \beta/2$ and:
\begin{equation}
T(\pi)(y) \geq p_{\min} \cdot e^{-2R_{\max}/\beta} \cdot e^{-2\gamma e/(\beta q_0)} \geq q_0 \cdot e^{-1} = q_0/e.
\end{equation}
Thus $T(\pi) \in S$. Since $\pistar = T(\pistar)$ and the Constrained RLHF objective is strictly concave (guaranteeing a unique optimum), the optimal policy must lie in $S$:
\begin{equation}
\pistar(y|x) \geq q_0/e = p_{\min}\,e^{-(2R_{\max}/\beta + 1)} =: q_{\min}.
\end{equation}

\textbf{Step 2: KL divergence bound.}

\emph{Case $\gamma = 0$.} Since $\pistar$ maximizes the RLHF objective, comparing with $\piref$ (which has $\KL = 0$) yields $\beta\KL(\pistar(\cdot|x) \| \piref(\cdot|x)) \leq 2R_{\max}$.

\emph{Case $\gamma > 0$.} Since $\pistar$ maximizes the Constrained RLHF objective (Definition~\ref{def:margin_rlhf}), evaluating at $\piref$ (for which $\KL = 0$):
\begin{equation}
\E_{\pistar}[r] - \beta\KL(\pistar\|\piref) + \gamma\E[\delta_{\pistar}] \geq \E_{\piref}[r] + \gamma\E[\delta_{\piref}].
\end{equation}
Rearranging: $\beta\KL(\pistar\|\piref) \leq (\E_{\pistar}[r] - \E_{\piref}[r]) + \gamma(\E[\delta_{\pistar}] - \E[\delta_{\piref}])$. The reward difference is at most $2R_{\max}$. For the $\delta$ term, using $\pistar(y|x) \geq q_{\min} = q_0/e$ from Step~1 and $\piref(y|x) \geq p_{\min}$:
\begin{equation}
|\delta_{\pistar}| \leq 2\log(1/q_{\min}), \quad |\delta_{\piref}| \leq 2\log(1/p_{\min}).
\end{equation}
Since $\log(1/q_{\min}) = 2R_{\max}/\beta + 1 + \log(1/p_{\min})$, and using $\log(1/p) \leq 1/p$ for all $p > 0$:
\begin{equation}
\gamma|\delta_{\pistar} - \delta_{\piref}| \leq \gamma(|\delta_{\pistar}| + |\delta_{\piref}|) \leq \gamma\!\left(\frac{4R_{\max}}{\beta} + 2 + \frac{4}{p_{\min}}\right).
\end{equation}
Under Assumption~\ref{assump:cpo_regularity}(iii), $\gamma \leq \beta q_0/(2e) = \beta p_{\min} e^{-2R_{\max}/\beta}/(2e)$. Each summand is $O(\gamma/p_{\min})$: specifically, $\gamma \cdot 4R_{\max}/\beta \leq 4R_{\max} q_0/(2e) \leq 2R_{\max} p_{\min}$; $2\gamma \leq \beta p_{\min}/(e)$; and $4\gamma/p_{\min} \leq 4\beta q_0/(2e \cdot p_{\min}) = 2\beta e^{-2R_{\max}/\beta}/e \leq 2\beta/e$. Thus:
\begin{equation}
\beta\KL(\pistar(\cdot|x)\|\piref(\cdot|x)) \leq 2R_{\max} + O(\gamma/p_{\min}) = O(\tilde{R}_{\max}).
\end{equation}

In both cases, by Pinsker's inequality:
\begin{equation}
|\pistar(y|x) - \piref(y|x)| \leq \|\pistar(\cdot|x) - \piref(\cdot|x)\|_1 \leq \sqrt{2\KL(\pistar\|\piref)} = O\!\left(\sqrt{\tilde{R}_{\max}/\beta}\right).
\end{equation}

\textbf{Step 3: Combining via mean value theorem.} Applying the mean value theorem to $f(p) = 1/p$:
\begin{equation}
\left|\frac{1}{\pistar(y|x)} - \frac{1}{\piref(y|x)}\right| \leq \frac{|\pistar(y|x) - \piref(y|x)|}{\min\{\pistar(y|x), \piref(y|x)\}^2} = O\!\left(\frac{\sqrt{\tilde{R}_{\max}/\beta}}{q_{\min}^2}\right) = O\!\left(\frac{\sqrt{\tilde{R}_{\max}/\beta}}{q_0^2}\right),
\end{equation}
where the last step uses $q_{\min} = q_0/e$, so $q_{\min}^2 = q_0^2/e^2 = \Theta(q_0^2)$. Summing over $y \in \{\yw, \yl\}$ and multiplying by $\gamma$ yields the stated bound. When $\gamma = 0$, we recover $\tilde{R}_{\max} = R_{\max}$ and $q_{\min} = q_0$. For fixed $\gamma$, $\tilde{R}_{\max}/\beta \to 0$ as $\beta \to \infty$, confirming the error vanishes under strong KL regularization.
\end{proof}

\begin{remark}[Breakdown Regime and Self-Correcting Margins]
\label{remark:breakdown}
The $1/\pi$ terms in Eq.~\ref{eq:cpo_exact_relation} amplify approximation error at low probabilities, precisely in the failure mode regime where $\pi_{\text{ref}}(y_w|x)$ is small. However, this affects the tightness of CPO's equivalence to constrained RLHF (Proposition~\ref{prop:stationary_cpo}), not CPO's own guarantees. CPO's loss (Eq.~\ref{eq:cpo_loss_practical}) uses $\tilde{\gamma}_{\text{ref}}$ computed entirely from $\pi_{\text{ref}}$, forming a well-defined stationary objective. The absolute advantage guarantee (Theorem~\ref{thm:conditional_equivalence}) and avoidance of $\mathcal{U}$ (Theorem~\ref{thm:pdpo_no_pathology}) are properties of CPO's own optimal policy---they do not depend on the approximation quality.
Moreover, the adaptive margin is self-correcting in this regime. When $\pi_{\text{ref}}(y_w|x)$ is small, $\tilde{\gamma}_{\text{ref}} = \gamma(1/\pi_{\text{ref}}(y_w|x) + 1/\pi_{\text{ref}}(y_l|x))$ becomes large, applying stronger correction precisely for these hard pairs. The approximation to constrained RLHF is looser, but CPO compensates by pushing harder where it matters most---trading \emph{equivalence tightness} for \emph{practical robustness}.
\end{remark}

The bound in Equation~\eqref{eq:cpo_approx_explicit} formalizes the intuition that the approximation error is small when: (1) the KL regularization strength $\beta$ is large (making $\sqrt{\tilde{R}_{\max}/\beta}$ small), (2) the reference policy does not assign extremely low probabilities to responses in $\D$ (making $1/q_0^2$ bounded), (3) the reward function is bounded (making $R_{\max}$ finite), and (4) the constraint strength $\gamma$ is moderate relative to $\beta q_0$ (ensuring $\tilde{R}_{\max}$ remains controlled). We note that E-CPOC (Section~\ref{sec:pdpoe}) does not require this approximation and hence does not need Assumption~\ref{assump:cpo_regularity}.

\paragraph{Pairwise ranking vs.\ probability magnitude.}
We clarify that the assumption violation (Assumption~\ref{assump:alignment}) and the margin approximation (Eq.~\ref{eq:reference_approximation_cpo}) concern different mathematical quantities. Assumption~\ref{assump:alignment} requires $\delta_{\pi^*} = \delta_{\pi_{\text{ref}}} + \Delta r^*/\beta > 0$, a pairwise ranking condition on log-ratios: violation occurs when $\delta_{\pi_{\text{ref}}} < -\Delta r^*/\beta$, flipping the ordering of $y_w$ and $y_l$. KL regularization controls aggregate distributional distance but does not prevent sign flips in pairwise log-ratios. In contrast, the approximation $1/\pi^* \approx 1/\pi_{\text{ref}}$ in Eq.~\ref{eq:reference_approximation_cpo} concerns probability magnitudes, which KL does control (Proposition~\ref{prop:stationary_cpo}). The two claims are therefore fully compatible: magnitude closeness does not preclude sign flips in log-ratios.

\subsection{Why Not Use $\pi_\theta$ in the Margin?}
One might ask: why not use the current policy $\pi_\theta$ in the margin, i.e., 
$\tilde{\gamma}_\theta = \gamma(1/\pi_\theta(\yw|x) + 1/\pi_\theta(\yl|x))$?

The answer is that this creates a \emph{non-stationary optimization problem}. The gradient would be:
\begin{equation}
\nabla_\theta \cL = \nabla_\theta \left[-\log\sigma\left(\beta\delta_{\pitheta} - \tilde{\gamma}_\theta\right)\right]
\end{equation}

This requires computing $\nabla_\theta \tilde{\gamma}_\theta$, which involves second-order derivatives of the policy (since $\tilde{\gamma}_\theta$ contains $1/\pi_\theta$). More importantly, the loss function itself changes as $\theta$ changes during training, violating the standard assumptions for gradient descent convergence.

Using $\piref$ instead makes the margin term \emph{constant with respect to $\theta$}, yielding a stationary objective where standard optimization theory applies. This is not merely a practical convenience but a theoretical necessity for rigorous convergence guarantees.

\subsection{Simplified Constant Margin}

If $\piref$ assigns roughly equal probabilities to winners and losers across the dataset, or if we desire the simplest implementation, we can use a constant margin approximation:
\begin{equation}
\tilde{\gamma}_{\text{ref}}(x,\yw,\yl) \approx 2\gamma'
\end{equation}
where $\gamma'$ can be estimated from the dataset:
\begin{equation}
\gamma' = \frac{\gamma}{2|\D|}\sum_{(x,\yw,\yl)\in\D} \left(\frac{1}{\piref(\yw|x)} + \frac{1}{\piref(\yl|x)}\right)
\end{equation}

This reduces CPO to its simplest form:
\begin{equation}
\label{eq:cpo_constant_margin}
\cL_{\text{CPO}}(\pitheta) = -\E_{(x,\yw,\yl)\sim\D}\left[\log\sigma\left(\beta(\delta_{\pitheta} - \delta_{\piref}) - 2\gamma'\right)\right]
\end{equation}

This is the most practical form of CPO, requiring only a single scalar hyperparameter $\gamma'$. 

The CPO solution elegantly addresses DPO's theoretical flaw through an adaptive margin mechanism. The margin $\tilde{\gamma}_{\text{ref}}(x,\yw,\yl)$ automatically adjusts based on the reference policy's confidence: hard pairs (low $\piref$ probabilities) receive larger margins, while easy pairs (high $\piref$ probabilities) receive smaller margins. This adaptive behavior emerges naturally from the first-order optimality conditions of Constrained RLHF, while the use of $\piref$ ensures a stationary optimization objective with rigorous convergence guarantees.
\section{Convergence Analysis of CPO} \label{app:converge}

\begin{assumption}[Smoothness and Boundedness]
\label{assump:smooth_bounded}
The policy parameterization $\theta \mapsto \pi_\theta$ satisfies:
\begin{enumerate}[label=(\roman*)]
\item \textbf{(Lipschitz continuous gradients.)} The gradient of the loss function is Lipschitz continuous: there exists a constant $L > 0$ such that $\|\nabla_\theta \cL(\theta_1) - \nabla_\theta \cL(\theta_2)\| \leq L\|\theta_1 - \theta_2\|$ for all $\theta_1, \theta_2 \in \Theta$.
\item \textbf{(Bounded parameter domain.)} The parameter space $\Theta \subset \R^d$ is bounded: $\sup_{\theta \in \Theta} \|\theta\| \leq B$ for some constant $B > 0$.
\end{enumerate}
These are standard regularity conditions in non-convex optimization~\cite{nocedal2006numerical}. Condition~(i) is satisfied when the policy network uses smooth activation functions (e.g., GELU, SiLU) and the loss is composed of smooth operations ($\log$, $\sigma$). Condition~(ii) can be enforced via weight clipping or projection, which is common practice in LLM fine-tuning.
\end{assumption}

\begin{proposition}[Convergence of Stationary CPO]
\label{prop:cpo_convergence}
The CPO loss with reference-based margin (Equation \eqref{eq:cpo_loss_practical}) defines a stationary optimization problem. Under Assumption~\ref{assump:smooth_bounded}, gradient descent on $\cL_{\text{CPO}}(\pitheta)$ converges to a stationary point.
\end{proposition}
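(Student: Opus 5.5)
The plan is to invoke the standard descent-lemma argument for $L$-smooth objectives. The key point is that the CPO loss in Eq.~\eqref{eq:cpo_loss_practical} is a \emph{fixed} function of $\theta$. The margin $\tilde{\gamma}_{\text{ref}}$ and $\delta_{\piref}$ are precomputed from $\piref$ and do not depend on $\theta$ (Appendix~\ref{app:ga}), so $\cL_{\text{CPO}}(\theta)$ is a single time-invariant objective. Gradient descent on it is therefore ordinary first-order optimization, not a moving-target scheme. This is the content of the word ``stationary''. I will state it first, by noting that $\nabla_\theta \tilde{\gamma}_{\text{ref}} = 0$ and that the gradient is exactly Eq.~\eqref{eq:ga}.

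Next I establish that the objective is bounded below. Each summand satisfies $-\log\sigma(z) = \log(1+e^{-z}) \geq 0$, so $\cL_{\text{CPO}} \geq 0$ for every $\theta$. The infimum $\cL^{\inf} \geq 0$ is therefore finite.

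The core step uses Assumption~\ref{assump:smooth_bounded}(i). Since $\nabla \cL_{\text{CPO}}$ is $L$-Lipschitz, the descent lemma gives
\[
\cL(\theta_{t+1}) \leq \cL(\theta_t) - \eta\left(1 - \tfrac{L\eta}{2}\right)\|\nabla\cL(\theta_t)\|^2 .
\]
Choose a step size $\eta \in (0, 2/L)$; for concreteness take $\eta = 1/L$, which makes the coefficient $\eta/2$. The loss sequence is then monotonically non-increasing. Summing over $t = 0, \dots, T-1$ and telescoping, with the lower bound from the previous step, gives
\[
\sum_{t<T} \|\nabla\cL(\theta_t)\|^2 \leq \frac{2L\left(\cL(\theta_0) - \cL^{\inf}\right)}{1}.
\]
Two consequences follow: $\|\nabla\cL(\theta_t)\| \to 0$, and $\min_{t<T}\|\nabla\cL(\theta_t)\|^2 = O(1/T)$.

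To upgrade this to convergence to a stationary point, I use Assumption~\ref{assump:smooth_bounded}(ii). Because $\Theta$ is bounded (closed, if necessary, under projection), the iterates lie in a compact set and admit convergent subsequences. By continuity of $\nabla\cL$, every limit point $\bar\theta$ satisfies $\nabla\cL(\bar\theta) = 0$.

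I expect the main obstacle to be the bounded-domain clause. If the iterates are kept in $\Theta$ by projection, the plain descent lemma must be replaced by its projected analogue. In that setting stationarity means a vanishing gradient mapping $G_\eta(\theta) = \eta^{-1}\left(\theta - \Pi_\Theta(\theta - \eta\nabla\cL)\right)$, and I would cite the standard projected-gradient bound. A second subtlety is the strength of the conclusion. Without a Kurdyka--{\L}ojasiewicz-type condition one obtains only subsequential convergence, or convergence of $\|\nabla\cL\|$ to zero, rather than convergence of the whole sequence. I would state the result in that standard sense, which is what ``converges to a stationary point'' means in the non-convex literature cited.
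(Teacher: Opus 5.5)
Your proposal is correct and follows the same route as the paper. Because $\tilde{\gamma}_{\text{ref}}$ and $\delta_{\piref}$ are precomputed, the CPO loss is a fixed smooth function of $\theta$, and convergence then follows from standard smooth nonconvex gradient-descent theory. The paper only cites that theory, whereas you carry out the descent-lemma, telescoping and compactness argument, and you rightly flag two caveats it glosses over: projected iterates need the gradient-mapping notion of stationarity, and without a Kurdyka--{\L}ojasiewicz-type condition one gets only subsequential convergence.
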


\begin{proof}
Since $\tilde{\gamma}_{\text{ref}}$ does not depend on $\theta$, the loss $\mathcal{L}_{\text{CPO}}(\pi_\theta)$ is a standard differentiable function of $\theta$. Under Assumption~\ref{assump:smooth_bounded}, standard results in optimization theory guarantee that gradient descent converges to a stationary point~\cite{nocedal2006numerical}. 

For the convex case: the negative log-likelihood $-\log\sigma(z)$ is convex in $z$. When $\delta_{\pi_\theta}$ is linear in $\theta$ (as in tabular or linear function approximation), the composition yields a convex loss. In this case, any stationary point is a global optimum, and gradient descent achieves a convergence rate of $O(1/T)$ for smooth convex functions.
\end{proof}

Furthermore, if the loss is convex in the policy parameters (as in the linear case or with appropriate regularization), then any stationary point is a global optimum, and gradient descent converges to the global optimum.

The convergence guarantee in Proposition \ref{prop:cpo_convergence} relies critically on the stationarity of the objective. If we had used $\tilde{\gamma}_\theta$ (depending on $\pi_\theta$) instead, the standard convergence theory would not apply, and we would need to analyze the dynamics of a time-varying optimization problem, which is significantly more complex and may not guarantee convergence to a meaningful solution.

This highlights the importance of our choice to use $\piref$ in the margin term: it is not merely a practical approximation, but a theoretical necessity for obtaining a well-behaved optimization problem with provable convergence guarantees.

\paragraph{Computational overhead of CPO.}
CPO's precomputation overhead is essentially identical to DPO's. Standard DPO already requires a forward pass over the entire dataset to compute and cache $\delta_{\text{ref}}^{(i)} = \log \pi_{\text{ref}}(y_w|x) - \log \pi_{\text{ref}}(y_l|x)$ before training. CPO simply reuses the same forward pass to additionally compute $\tilde{\gamma}_{\text{ref}}^{(i)} = \gamma(1/\pi_{\text{ref}}(y_w|x) + 1/\pi_{\text{ref}}(y_l|x))$---two scalar divisions and one addition per sample, with negligible cost. The storage overhead is one extra scalar per sample. During training, CPO's per-iteration cost is identical to DPO: the only difference is subtracting $\tilde{\gamma}_{\text{ref}}^{(i)}$ from the logits (Algorithm~1, line~11). There is no additional forward/backward pass, no reward model inference, and no architectural change. CPO scales to millions of pairs as easily as DPO.
\section{KKT conditions for Explicitly Constrained RLHF} \label{app:oerlhf}

\begin{theorem}[Optimal Policy for Explicitly Constrained RLHF]
\label{thm:constrained_optimal}
The optimal solution to the constrained RLHF problem satisfies the first-order optimality condition:
\begin{equation}
\label{eq:constrained_optimal_foc}
\beta\log\frac{\pi^*(y|x)}{\piref(y|x)} = r(x,y) + \frac{\tilde{\mu}^*(x,y)}{\pi^*(y|x)} - \beta - \lambda(x)
\end{equation}
where $\tilde{\mu}^*$ are the optimal Lagrange multipliers and $\lambda(x)$ is the Lagrange multiplier for the normalization constraint.

For a preference pair $(\yw, \yl)$, this implies:
\begin{equation}
\label{eq:constrained_optimal}
\beta(\delta_{\pi^*} - \delta_{\piref}) = r(\yw) - r(\yl) + \frac{\tilde{\mu}^*(\yw)}{\pi^*(\yw|x)} - \frac{\tilde{\mu}^*(\yl)}{\pi^*(\yl|x)}
\end{equation}
\end{theorem}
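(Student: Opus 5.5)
We mirror the proof of Theorem~\ref{thm:margin_rlhf_optimal}, replacing the fixed margin coefficient $c(x,y)$ by coefficients generated by Lagrange multipliers of the explicit constraints. First, rewrite the constraint set of Definition~\ref{def:constrained_rlhf} in terms of $\log\pi(\cdot|x)$. For each pair $(x,\yw,\yl)\in\D$ attach a multiplier $\mu(x,\yw,\yl)\ge 0$ to $\delta_\pi(x,\yw,\yl)-\gamma(x,\yw,\yl)\ge 0$. Collecting multipliers per response, as was done for $c(x,y)$ in the proof of Theorem~\ref{thm:margin_rlhf_optimal}, gives
\begin{equation}
\tilde{\mu}(x,y)=\sum_{(\yw,\yl)\in\mathcal{P}(x)}\mu(x,\yw,\yl)\bigl(\mathbb{I}(y=\yw)-\mathbb{I}(y=\yl)\bigr).
\end{equation}
Then $\sum_{\text{pairs}}\mu\,\delta_\pi=\sum_y\tilde{\mu}(x,y)\log\pi(y|x)$.

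For fixed $x$, we form the Lagrangian
\begin{equation}
\mathcal{L}_x=\sum_y\pi(y|x)r(x,y)-\beta\sum_y\pi(y|x)\log\frac{\pi(y|x)}{\piref(y|x)}+\sum_y\tilde{\mu}(x,y)\log\pi(y|x)-\sum_{\text{pairs}}\mu\gamma-\lambda(x)\Bigl(\sum_y\pi(y|x)-1\Bigr).
\end{equation}

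Before writing KKT, we justify that the conditions are necessary and attained. The objective is strictly concave in $\pi(\cdot|x)$, since it is linear plus $-\beta$ times a strictly convex KL. The constraint functions $\log\pi(\yw|x)-\log\pi(\yl|x)$ are not concave, but they are linear in the logit coordinates $u(y)=\log\pi(y|x)$. We plan to check a Slater-type or LICQ condition: the gradients of active pair constraints are linearly independent as long as the comparison pairs form no cycle, and a strictly feasible point exists (put large mass on winners). This is the step we expect to be the main obstacle. In particular, nonconvexity in $\pi$ means we should argue in logit space or assume feasibility plus a constraint qualification, and then state the result as a necessary first-order condition. That is all the theorem claims.

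Next we differentiate $\mathcal{L}_x$ with respect to $\pi(y|x)$ at an interior optimum. Positivity holds because the KL and the $\log\pi$ constraints force $\pi^*(y|x)>0$ on the relevant support. This gives
\begin{equation}
r(x,y)-\beta\log\frac{\pi^*(y|x)}{\piref(y|x)}-\beta+\frac{\tilde{\mu}^*(x,y)}{\pi^*(y|x)}-\lambda(x)=0,
\end{equation}
which rearranges to Eq.~\eqref{eq:constrained_optimal_foc}. Complementary slackness $\mu^*(\delta_{\pi^*}-\gamma)=0$ and dual feasibility $\mu^*\ge0$ accompany it but are not needed for the stated identity.

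Finally, we write Eq.~\eqref{eq:constrained_optimal_foc} for $y=\yw$ and for $y=\yl$ and subtract. The terms $-\beta-\lambda(x)$ cancel because they are prompt-level. Dividing the log-ratio differences as in Eq.~\eqref{eq:core_relation} yields Eq.~\eqref{eq:constrained_optimal}. Under the single-appearance convention of Remark~\ref{assump:single_pair}, $\tilde{\mu}^*(\yw)=\mu^*$ and $\tilde{\mu}^*(\yl)=-\mu^*$. Proposition~\ref{prop:general_appearance} then covers multiple appearances verbatim, since the multiplier aggregation is linear.
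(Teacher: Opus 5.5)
Your proposal is correct and follows the paper's proof essentially verbatim: you aggregate the pair multipliers into $\tilde{\mu}(x,y)$ (the paper's Lemma~\ref{lem:lagrangian_reform}), differentiate the per-prompt Lagrangian in $\pi(y|x)$, and subtract the conditions at $\yw$ and $\yl$; the paper does exactly this formally, without discussing constraint qualifications. The extra justification you flag as the main obstacle is easier than an LICQ/acyclicity argument in logit space: $\delta_\pi \geq \gamma$ is equivalent to the linear inequality $\pi(\yw|x) - e^{\gamma}\pi(\yl|x) \geq 0$, so the feasible set is polyhedral and KKT multipliers $\nu \geq 0$ exist at the optimum by linearity alone, and setting $\mu = \nu\,\pi^*(\yw|x)$ (using complementary slackness) reproduces exactly the $\tilde{\mu}^*/\pi^*$ terms in the statement.
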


\begin{proof}
For a fixed prompt $x$, the Lagrangian can be written as:
\begin{equation}
\mathcal{L}_x = \sum_y \pi(y|x) r(x,y) - \beta \sum_y \pi(y|x)\log\frac{\pi(y|x)}{\piref(y|x)} 
+ \sum_y \tilde{\mu}(x,y) \log\pi(y|x) - \lambda(x)\left(\sum_y \pi(y|x) - 1\right)
\end{equation}

where $\tilde{\mu}(x,y)$ is defined in Lemma \ref{lem:lagrangian_reform}.

Taking the derivative with respect to $\pi(y|x)$ and setting to zero:
\begin{equation}
\frac{\partial \mathcal{L}_x}{\partial \pi(y|x)} 
= r(x,y) - \beta\log\frac{\pi(y|x)}{\piref(y|x)} - \beta + \frac{\tilde{\mu}(x,y)}{\pi(y|x)} - \lambda(x) = 0
\end{equation}

Rearranging gives Eq.~\eqref{eq:constrained_optimal_foc}. For a preference pair, taking the difference between the conditions for $\yw$ and $\yl$ yields Eq.~\eqref{eq:constrained_optimal}.
\end{proof}

\subsection{Lagrange multipliers}
We solve the constrained optimization problem using the method of Lagrange multipliers.

\begin{theorem}[Lagrangian Formulation]
\label{thm:lagrangian}
The Lagrangian for the constrained RLHF problem (Definition \ref{def:constrained_rlhf}) is:
\begin{equation}
\label{eq:lagrangian}
\mathcal{L}(\pi, \{\mu\}) = \E_{x,y\sim\pi}[r(x,y)] - \beta\KL(\pi\|\piref) + \sum_{(x,\yw,\yl)\in\D} \mu_{x,\yw,\yl}\left(\delta_\pi(x,\yw,\yl) - \gamma(x,\yw,\yl)\right)
\end{equation}
where $\mu_{x,\yw,\yl} \geq 0$ are the Lagrange multipliers (dual variables) for each constraint.
\end{theorem}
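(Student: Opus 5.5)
The statement is essentially a definition: it records the standard Lagrangian for the constrained program of Definition~\ref{def:constrained_rlhf}. My plan is therefore to verify that it is the correct Lagrangian. Concretely, I need to check three things: the constraint signs, the sign of the multipliers, and that the multipliers enter the objective in a way that matches the earlier first-order condition.

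\textbf{Step 1: put the problem in standard form.} I would write the program as maximize $J(\pi)=\E_{x,y\sim\pi}[r]-\beta\KL(\pi\|\piref)$ subject to the inequality constraints
\[
g_{x,\yw,\yl}(\pi) := \delta_\pi(x,\yw,\yl)-\gamma(x,\yw,\yl)\ge 0, \qquad (x,\yw,\yl)\in\D .
\]
The simplex constraints are handled separately: nonnegativity is automatic because of the $\log\pi$ term, and normalization is imposed per prompt with multipliers $\lambda(x)$. For a maximization with constraints of the form $g\ge 0$, the Lagrangian is $J+\sum \mu g$ with $\mu\ge 0$. With this convention, for any feasible $\pi$ we have $\mathcal{L}(\pi,\mu)\ge J(\pi)$, and
\[
\inf_{\mu\ge0}\mathcal{L}(\pi,\mu)=
\begin{cases}
J(\pi) & \text{if } \pi \text{ is feasible},\\
-\infty & \text{otherwise}.
\end{cases}
\]
Hence $\max_\pi\min_{\mu\ge0}\mathcal{L}$ recovers the original problem. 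This gives exactly Eq.~\eqref{eq:lagrangian}.

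\textbf{Step 2: justify strong duality and KKT.} The feasible set in the variables $\pi(\cdot|x)$ is not obviously convex, because $\delta_\pi$ is a difference of logarithms. I would therefore change variables to $u(y)=\log\pi(y|x)$. In these coordinates each constraint $u(\yw)-u(\yl)\ge\gamma$ is linear, normalization becomes the convex condition $\sum_y e^{u(y)}=1$, and the objective $\sum_y e^{u}r-\beta\sum_y e^{u}(u-\log\piref)$ is concave in $\pi$.

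I expect this step to be the main obstacle. A clean route is to relax normalization to $\sum_y e^{u(y)}\le1$. The objective is concave in $\pi$, and the constraints $\log\pi(\yw)-\log\pi(\yl)\ge\gamma$ can be rewritten as $\pi(\yw)\ge e^{\gamma}\pi(\yl)$, which is linear in $\pi$. So in the original $\pi$ coordinates the whole problem is a concave maximization over a polytope. Slater's condition holds whenever some strictly positive distribution satisfies all the margins strictly, which I would assume for the dataset, since consistent preference data is acyclic. Strong duality and existence of multipliers $\mu\ge0$ then follow, and this legitimizes the Lagrangian.

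\textbf{Step 3: link to the earlier first-order condition.} Regrouping the sum by response gives $\tilde\mu(x,y)=\sum \mu_{x,\yw,\yl}(\mathbb{I}(y=\yw)-\mathbb{I}(y=\yl))$. This mirrors the construction of $c(x,y)$ in the proof of Theorem~\ref{thm:margin_rlhf_optimal}. It shows that the multiplier term equals $\sum_y\tilde\mu\log\pi$ minus a constant, which is consistent with Theorem~\ref{thm:constrained_optimal}.
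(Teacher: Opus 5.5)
Your Step 1 is correct and is all the statement needs. The paper gives no proof here: it takes \eqref{eq:lagrangian} as the definition of the Lagrangian and uses it only to derive the stationarity condition of Theorem~\ref{thm:constrained_optimal} and the KKT conditions of Theorem~\ref{thm:kkt}. Your sign conventions and the check that $\max_\pi\min_{\mu\ge0}\mathcal{L}$ recovers the primal are the right verification. Your Step 3 regrouping matches Lemma~\ref{lem:lagrangian_reform}, which regroups only the $\mu\,\delta_\pi$ part.

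Step 2, however, reaches a conclusion that is false for the Lagrangian in the statement.
\begin{itemize}
\item \emph{Strong duality does not transfer.} Your convex-duality argument is for the reformulation with linear constraints $\pi(\yw|x)-e^{\gamma}\pi(\yl|x)\ge 0$. Its multipliers $\nu$ are not the $\mu$ of \eqref{eq:lagrangian}. In \eqref{eq:lagrangian} the multiplier term equals $\sum_y\tilde\mu(x,y)\log\pi(y|x)$ minus a constant, with $\sum_y\tilde\mu(x,y)=0$. For acyclic data, any $\mu\neq 0$ therefore gives some $\tilde\mu(x,y)<0$; in the single-pair case $\tilde\mu(x,\yl)=-\mu$.
\item \emph{Consequence.} Letting $\pi(y|x)\to 0$ sends $\mathcal{L}\to+\infty$, while the reward and KL terms stay bounded. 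Hence $\sup_\pi\mathcal{L}(\pi,\mu)=+\infty$ for every $\mu\neq0$, and the dual optimum is at $\mu=0$ with the unconstrained RLHF value. There is a strict duality gap whenever a constraint binds, so \eqref{eq:lagrangian} is neither concave in $\pi$ nor a saddle function.
\item \emph{What survives is existence of KKT multipliers, which is all the paper uses.} The linear reformulation gives $\nu\ge 0$ at the strictly positive optimum. On an active constraint, $\nabla_\pi\delta_\pi=\pi(\yw|x)^{-1}\nabla_\pi\bigl(\pi(\yw|x)-e^{\gamma}\pi(\yl|x)\bigr)$. So set $\mu:=\nu\,\pi^*(\yw|x)$ on active constraints and $\mu:=0$ on inactive ones. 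This gives multipliers for \eqref{eq:lagrangian} satisfying stationarity, dual feasibility and complementary slackness.
\item \emph{Drop the $u$-coordinate detour.} The set $\{u:\sum_y e^{u(y)}=1\}$ is not convex, and the objective is not concave in $u$. In $\pi$-coordinates normalization is already affine, so no relaxation is needed. With affine constraints, a strictly positive feasible point suffices without strict slack.
\item \emph{Your acyclicity assumption is genuinely needed.} The paper never states it, and Bradley--Terry data can contain both orders of the same pair, which makes the program infeasible for $\gamma>0$.
\end{itemize}
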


\begin{lemma}[Constraint Term Aggregation]
\label{lem:lagrangian_reform}
The constraint term in Equation \eqref{eq:lagrangian} can be expressed as an aggregated sum over individual responses. Specifically, the constraint contribution to the Lagrangian is:
\begin{equation}
\sum_{(x,\yw,\yl)\in\mathcal{D}} \mu_{x,\yw,\yl}\delta_\pi(x,\yw,\yl) = \sum_x \sum_y \tilde{\mu}(x,y) \log\pi(y|x),
\end{equation}
where the aggregated multiplier function is:
\begin{equation}
\tilde{\mu}(x,y) = \sum_{\substack{(y_w,y_l)\in\mathcal{P}(x) \\ y_w=y}} \mu_{x,y_w,y_l} - \sum_{\substack{(y_w,y_l)\in\mathcal{P}(x) \\ y_l=y}} \mu_{x,y_w,y_l}
\end{equation}

Moreover, differentiating the Lagrangian with respect to $\pi(y|x)$ yields the term $\tilde{\mu}(x,y)/\pi(y|x)$ in the first-order optimality condition, which acts as an \emph{effective reward augmentation} at the level of the first-order conditions (but not at the level of the objective function itself).
\end{lemma}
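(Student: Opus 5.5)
The plan is a direct algebraic expansion of the Lagrangian's constraint term, followed by a regrouping of its summands by response. Fix the Lagrangian of Theorem~\ref{thm:lagrangian}. Since every $\gamma(x,\yw,\yl)$ is a constant, it contributes nothing that depends on $\pi$, so only the bilinear part $\sum_{(x,\yw,\yl)\in\D}\mu_{x,\yw,\yl}\delta_\pi(x,\yw,\yl)$ needs to be handled.

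First, I substitute the definition Eq.~\eqref{eq:bn}, $\delta_\pi = \log\pi(\yw|x) - \log\pi(\yl|x)$. This splits the sum into a winner part $\sum \mu_{x,\yw,\yl}\log\pi(\yw|x)$ and a loser part $-\sum \mu_{x,\yw,\yl}\log\pi(\yl|x)$.

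Second, I regroup each part by prompt and response. Partition $\D$ by prompt $x$, with pairs $\mathcal{P}(x)$. Within each $x$, write $\log\pi(\yw|x)=\sum_y \mathbb{I}(y=\yw)\log\pi(y|x)$, and do the same for $\yl$. Exchanging the finite sums over pairs and over $y$ gives
\[
\sum_x\sum_y\Bigl(\sum_{(\yw,\yl)\in\mathcal{P}(x)}\mu_{x,\yw,\yl}\bigl(\mathbb{I}(y=\yw)-\mathbb{I}(y=\yl)\bigr)\Bigr)\log\pi(y|x).
\]
The inner coefficient is exactly $\tilde\mu(x,y)$ as defined. This is the same bookkeeping as the $c(x,y)$ aggregation in the proof of Theorem~\ref{thm:margin_rlhf_optimal}, with the weights $\gamma p(\yw,\yl|x)$ replaced by the multipliers $\mu$. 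It also covers the multiple-appearance case of Proposition~\ref{prop:general_appearance} automatically, because contributions from different pairs simply add.

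Third, for the ``moreover'' clause, I differentiate $\sum_{x',y'}\tilde\mu(x',y')\log\pi(y'|x')$ with respect to $\pi(y|x)$, treating the $\mu$'s as fixed dual variables. Only the term with $(x',y')=(x,y)$ survives, which yields $\tilde\mu(x,y)/\pi(y|x)$. In the stationarity equation this term sits alongside $r(x,y)$, exactly as in the proof of Theorem~\ref{thm:constrained_optimal}, so it acts as an additive, $\pi$-dependent reward shift in the first-order condition.

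To justify the parenthetical claim that this is not a reward augmentation at the objective level, I will note that the term in the objective is $\tilde\mu\log\pi$, which is not of the form $\pi\cdot(\text{reward})$. A reward $\tilde r$ would enter the objective linearly as $\pi(y|x)\tilde r(x,y)$, whereas this term is logarithmic in $\pi$; the equivalence with a reward shift therefore holds only after differentiation.

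I expect the main obstacle to be bookkeeping rather than anything conceptual. The Lagrangian of Theorem~\ref{thm:lagrangian} is written as an unweighted sum over $\D$, while the objective uses expectations over $x\sim\D$. I will state the convention that any prompt weights are absorbed into the $\mu$'s, which leaves the lemma unchanged.
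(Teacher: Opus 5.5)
Your proposal is correct and follows essentially the same route as the paper: expand $\delta_\pi$ into winner and loser log-terms, regroup them by prompt and response into the coefficient $\tilde\mu(x,y)$, and differentiate to obtain $\tilde\mu(x,y)/\pi(y|x)$. Your indicator-function exchange of sums simply makes the paper's ``collecting all terms'' step explicit, and your logarithmic-versus-linear remark is the same point the paper makes with $\sum_y \tilde\mu(x,y)\log\pi(y|x) \neq \sum_y \pi(y|x)\,\tilde\mu(x,y)/\pi(y|x)$.
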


\begin{proof}
Expanding $\delta_\pi = \log\pi(\yw|x) - \log\pi(\yl|x)$:
\begin{align}
\sum_{(x,\yw,\yl)} \mu_{x,\yw,\yl}\delta_\pi &= \sum_{(x,\yw,\yl)} \mu_{x,\yw,\yl}(\log\pi(\yw|x) - \log\pi(\yl|x)) \\
&= \sum_x \sum_{(y_w,y_l)\in\mathcal{P}(x)} \mu_{x,y_w,y_l}\log\pi(y_w|x) - \sum_x \sum_{(y_w,y_l)\in\mathcal{P}(x)} \mu_{x,y_w,y_l}\log\pi(y_l|x)
\end{align}

For each $(x,y)$, collecting all terms where $y$ appears:
\begin{equation}
= \sum_x \sum_y \log\pi(y|x) \cdot \tilde{\mu}(x,y)
\end{equation}

This establishes the aggregation identity. The effective reward augmentation property follows from differentiation: taking $\frac{\partial}{\partial \pi(y|x)}$ of the constraint term $\sum_y \tilde{\mu}(x,y) \log\pi(y|x)$ yields $\tilde{\mu}(x,y)/\pi(y|x)$. Combined with the reward and KL derivatives, this produces the first-order condition (Theorem~\ref{thm:constrained_optimal}):
$$
r(x,y) + \frac{\tilde{\mu}(x,y)}{\pi(y|x)} - \beta\log\frac{\pi(y|x)}{\piref(y|x)} - \beta - \lambda(x) = 0,
$$
in which $\tilde{\mu}(x,y)/\pi(y|x)$ plays the role of an effective reward augmentation. Note that this augmentation interpretation holds at the level of the first-order conditions, not at the objective level, since the constraint term $\sum_y \tilde{\mu}(x,y)\log\pi(y|x)$ differs from $\sum_y \pi(y|x) \cdot \tilde{\mu}(x,y)/\pi(y|x) = \sum_y \tilde{\mu}(x,y)$.
\end{proof}

\begin{remark}[First-Order Condition Augmentation]
\label{rem:lagrangian_approximation}
The critical observation is that when we take the derivative of the Lagrangian with respect to $\pi(y|x)$, the term $\tilde{\mu}(x,y)/\pi(y|x)$ appears naturally in the first-order condition. While this term cannot be interpreted as an augmented reward at the objective level (since $\sum_y \tilde{\mu}(x,y)\log\pi(y|x) \neq \E_{y\sim\pi}[\tilde{\mu}(x,y)/\pi(y|x)]$), it provides an exact characterization of the optimal policy structure through the first-order conditions. For a single preference pair, the effective margin contribution $\tilde{\mu}(x,y_w)/\pi^*(y_w|x) - \tilde{\mu}(x,y_l)/\pi^*(y_l|x)$ in the first-order condition characterizes the log-probability ratio of the optimal policy.
\end{remark}


\subsection{KKT Conditions and Adaptive Margin Function}

The Karush-Kuhn-Tucker (KKT) conditions characterize the optimal Lagrange multipliers.

\begin{theorem}[KKT Conditions]
\label{thm:kkt}
At optimality, the Lagrange multipliers $\{\mu^*\}$ and policy $\pi^*$ satisfy:
\begin{align}
\mu^*_{x,\yw,\yl} &\geq 0 \quad \text{(dual feasibility)} \label{eq:kkt_dual} \\
\delta_{\pi^*}(x,\yw,\yl) &\geq \gamma(x,\yw,\yl) \quad \text{(primal feasibility)} \label{eq:kkt_primal} \\
\mu^*_{x,\yw,\yl} \cdot (\delta_{\pi^*}(x,\yw,\yl) - \gamma(x,\yw,\yl)) &= 0 \quad \text{(complementary slackness)} \label{eq:kkt_slack}
\end{align}
\end{theorem}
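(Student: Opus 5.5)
The plan is to derive these conditions from standard convex duality applied to the per-prompt problem in Definition~\ref{def:constrained_rlhf}, with the Lagrangian of Theorem~\ref{thm:lagrangian}. First, fix a prompt $x$ and view the problem as optimization over the simplex $\{\pi(\cdot|x)\}$. The objective $\sum_y \pi(y|x) r(x,y) - \beta\sum_y \pi(y|x)\log\frac{\pi(y|x)}{\piref(y|x)}$ is strictly concave in $\pi(\cdot|x)$, since negative entropy is strictly convex and the reward term is linear. Each constraint $\delta_\pi(x,\yw,\yl) - \gamma(x,\yw,\yl) = \log\pi(\yw|x) - \log\pi(\yl|x) - \gamma \geq 0$ is not jointly concave in $\pi$. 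I would therefore reparameterize by logits $\pi(y|x) \propto \exp(s_y)$. In these coordinates each constraint $s_{\yw} - s_{\yl} \geq \gamma$ is \emph{linear}, and the feasible set is a polyhedron. The objective is concave in the probabilities, and KKT stationarity in $s$ is equivalent to stationarity in $\pi$ on the interior of the simplex.

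Second, I would establish a constraint qualification. Because all constraints are affine in $s$, the linearity constraint qualification (Abadie) holds automatically, provided the feasible set is nonempty. Feasibility should follow by choosing logits $s$ that assign each response a score satisfying all pairwise margins. This requires the pairwise constraints to contain no directed cycle with positive total margin, so I would state this as a standing assumption, for example that the preference pairs for each $x$ form an acyclic comparison graph. Every optimum lies in the open simplex, because the KL term keeps $\pi$ away from the boundary wherever $\piref > 0$ and the log constraints force positivity. Given this, the KKT conditions are necessary at any local maximizer. Necessity then yields multipliers $\mu^*\geq 0$ (dual feasibility) and the Lagrangian stationarity of Theorem~\ref{thm:constrained_optimal}. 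Primal feasibility is immediate because $\pi^*$ is feasible. Complementary slackness follows from the KKT theorem, or directly: if a constraint were slack, a small perturbation of its multiplier would change the Lagrangian value unless $\mu^*=0$.

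Third, I would check consistency with the earlier first-order conditions. Aggregating the multipliers via Lemma~\ref{lem:lagrangian_reform} recovers Eq.~\eqref{eq:constrained_optimal_foc}. Uniqueness of $\pi^*$ would follow from concavity in a suitable parameterization together with convexity of the feasible set in $s$. I would note, however, that concavity in $s$ is not guaranteed, so the argument may need to be phrased only as necessity of KKT at the optimum.

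The main obstacle is exactly this non-convexity mismatch: the constraints are convex in logits while the objective is concave in probabilities. I would rely on necessity of KKT under the linear constraint qualification rather than on strong duality. If sufficiency is needed, I would try to show that the log-partition structure makes the objective pseudo-concave in $s$.
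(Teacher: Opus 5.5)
Your proposal is correct and takes the same route as the paper, which only appeals to standard KKT theory for inequality-constrained problems (see the one-line proof of Lemma~\ref{lem:kkt_ecrlhf}). Your logit reparameterization, linear constraint qualification, feasibility (acyclicity) and interiority arguments supply hypotheses the paper leaves implicit, and since $s_{\yw}-s_{\yl}-\gamma$ equals $\delta_\pi-\gamma$ under the softmax map, your multipliers are exactly the paper's $\mu^*$.

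The ``non-convexity mismatch'' you flag is not a real obstacle: for $\pi>0$ the constraint $\delta_\pi\ge\gamma$ is the linear inequality $\pi(\yw|x)\ge e^{\gamma}\pi(\yl|x)$, so the problem is a strictly concave maximization over a polyhedron in $\pi$-space. Hence the KKT conditions are both necessary and sufficient and $\pi^*$ is unique; multipliers for the linear form differ from the paper's only by the positive factor $\pi^*(\yw|x)$ at active constraints.
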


The complementary slackness condition Eq.~\eqref{eq:kkt_slack} reveals that:
\begin{itemize}
\item If constraint is \emph{inactive} ($\delta_{\pi^*} > \gamma$): then $\mu^* = 0$
\item If constraint is \emph{active} ($\delta_{\pi^*} = \gamma$): then $\mu^* > 0$ compensates for the constraint
\end{itemize}

Here, we also leverage the single preference pair for closed-form solution, which can be extended to general case as discussed in deriving CPO.

\begin{proposition}[Effective Margin Contribution]
\label{prop:optimal_multiplier}
Under the notational convention of Remark~\ref{assump:single_pair}, for a preference pair $(x,\yw,\yl)$, define the \emph{effective margin contribution}:
\begin{equation}
M^*_{x,\yw,\yl} := \frac{\tilde{\mu}^*(\yw)}{\pi^*(\yw|x)} - \frac{\tilde{\mu}^*(\yl)}{\pi^*(\yl|x)} = \mu^*_{x,\yw,\yl}\left(\frac{1}{\pi^*(\yw|x)} + \frac{1}{\pi^*(\yl|x)}\right).
\end{equation}
Then $M^*$ admits the closed-form expression:
\begin{equation}
\label{eq:effective_margin}
M^*_{x,\yw,\yl} = \beta\max\left\{0, \gamma - \delta_{\piref} - \frac{r(\yw) - r(\yl)}{\beta}\right\}.
\end{equation}
\end{proposition}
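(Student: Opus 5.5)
We prove Proposition~\ref{prop:optimal_multiplier} in two parts: first the identity rewriting $M^*$ through the single multiplier $\mu^*_{x,\yw,\yl}$, then the closed form, obtained by combining the pairwise first-order condition of Theorem~\ref{thm:constrained_optimal} with complementary slackness from Theorem~\ref{thm:kkt}.

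\textbf{Step 1 (identity).} Under the convention of Remark~\ref{assump:single_pair}, $\yw$ appears only as the winner of this pair, and $\yl$ only as its loser, for prompt $x$. Lemma~\ref{lem:lagrangian_reform} then gives $\tilde{\mu}^*(x,\yw)=\mu^*_{x,\yw,\yl}$ and $\tilde{\mu}^*(x,\yl)=-\mu^*_{x,\yw,\yl}$. Substituting these into the definition of $M^*$ yields $M^*=\mu^*_{x,\yw,\yl}\bigl(1/\pi^*(\yw|x)+1/\pi^*(\yl|x)\bigr)$. In particular $M^*\ge 0$, and $M^*=0$ if and only if $\mu^*=0$, since both probabilities are positive.

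\textbf{Step 2 (pairwise relation).} Equation~\eqref{eq:constrained_optimal} reads $\beta(\delta_{\pi^*}-\delta_{\piref})=\Delta r+M^*$, where $\Delta r=r(\yw)-r(\yl)$. Equivalently,
\[
\delta_{\pi^*}-\gamma=\frac{M^*}{\beta}-\Bigl(\gamma-\delta_{\piref}-\frac{\Delta r}{\beta}\Bigr).
\]
Write $A:=\gamma-\delta_{\piref}-\Delta r/\beta$, so that $\delta_{\pi^*}-\gamma=M^*/\beta-A$.

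\textbf{Step 3 (case split via KKT).} Complementary slackness requires that $\mu^*=0$ or $\delta_{\pi^*}=\gamma$.
\begin{itemize}
\item If $\mu^*=0$, then $M^*=0$. Primal feasibility $\delta_{\pi^*}\ge\gamma$ gives $-A\ge0$, i.e.\ $A\le0$, so $M^*=0=\beta\max\{0,A\}$.
\item If $\mu^*>0$, then $\delta_{\pi^*}=\gamma$, hence $M^*=\beta A$. Because $M^*>0$ in this case (Step 1), we have $A>0$, and again $M^*=\beta\max\{0,A\}$.
\end{itemize}
The two cases are exhaustive, which proves \eqref{eq:effective_margin}. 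Note that the right-hand side never involves $\pi^*$; the $1/\pi^*$ factors are absorbed into $M^*$ as a single unknown.

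\textbf{Main obstacle.} The delicate point is justifying that the KKT system of Theorem~\ref{thm:kkt} together with the first-order condition \eqref{eq:constrained_optimal_foc} actually holds at the optimum, so that multipliers $\mu^*$ exist. The plan is to treat the problem in the variables $\pi(\cdot|x)$ restricted to the positive open simplex. There the objective is strictly concave, because $-\beta\KL$ is strictly concave. The constraints $\log\pi(\yw|x)-\log\pi(\yl|x)\ge\gamma$ are not convex in $\pi$, but they become linear in the log-parameterization $u=\log\pi$. A Slater point exists: any interior $\pi$ that puts sufficient mass ratio on each winner. Since the constraints are linear in $u$, a linear-constraint qualification holds and KKT multipliers exist. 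The remaining assumption to state explicitly is that the constraint set is feasible, which holds for the single-pair structure by choosing large winner mass. With this, Steps 1--3 are routine.
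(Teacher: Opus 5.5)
Your proof is correct and follows essentially the same route as the paper's: the pairwise first-order condition $\beta(\delta_{\pi^*}-\delta_{\piref})=\Delta r+M^*$ is combined with complementary slackness and primal feasibility. The one difference is that you split on $\mu^*=0$ versus $\mu^*>0$ (using $M^*>0 \iff \mu^*>0$) rather than on the sign of $\gamma-\delta_{\piref}-\Delta r/\beta$. That choice makes rigorous the paper's informal step ``the constraint is satisfied without enforcement, so $\mu^*=0$.''

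One small correction to your constraint-qualification aside. The pair constraints are already linear in $\pi$, since they read $\pi(\yw|x)\ge e^{\gamma}\pi(\yl|x)$, so the clean argument is in $\pi$-coordinates, where the normalization is also linear and the problem is a concave program. In $u=\log\pi$ coordinates the normalization $\sum_y e^{u_y}=1$ is not affine, so a ``linear-constraint qualification'' does not apply as stated, though LICQ still holds there because the pairs are disjoint.
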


\begin{proof}
From Theorem \ref{thm:constrained_optimal} (Eq.~\eqref{eq:constrained_optimal}), for a preference pair:
\begin{equation}
\beta(\delta_{\pi^*} - \delta_{\piref}) = r(\yw) - r(\yl) + \frac{\tilde{\mu}^*(\yw)}{\pi^*(\yw|x)} - \frac{\tilde{\mu}^*(\yl)}{\pi^*(\yl|x)} = \Delta r + M^*.
\end{equation}
Therefore:
\begin{equation}
\delta_{\pi^*} = \delta_{\piref} + \frac{\Delta r}{\beta} + \frac{M^*}{\beta}.
\end{equation}

\textbf{Case 1}: If $\delta_{\piref} + \frac{\Delta r}{\beta} \geq \gamma$, the constraint $\delta_{\pi^*} \geq \gamma$ is satisfied without enforcement, so $\mu^* = 0$ by complementary slackness, implying $M^* = 0$.

\textbf{Case 2}: If $\delta_{\piref} + \frac{\Delta r}{\beta} < \gamma$, the constraint becomes active: $\delta_{\pi^*} = \gamma$. Solving:
\begin{equation}
\gamma = \delta_{\piref} + \frac{\Delta r}{\beta} + \frac{M^*}{\beta} \implies M^* = \beta\left(\gamma - \delta_{\piref} - \frac{\Delta r}{\beta}\right).
\end{equation}

Combining both cases yields the stated formula. Note that while the individual Lagrange multiplier $\mu^*$ depends implicitly on $\pi^*$ (through the relation $M^* = \mu^*(1/\pi^*(\yw|x) + 1/\pi^*(\yl|x))$), the effective margin contribution $M^*$ admits a closed form that depends only on the reference policy $\piref$, the reward difference $\Delta r$, and the constraint parameter $\gamma$.
\end{proof}

\subsection{The Unified Adaptive Margin Function}

We now introduce the key theoretical contribution: a unified function that elegantly encapsulates all dependence on the reference policy.

\begin{definition}[Adaptive Margin Function]
\label{def:adaptive_margin_function}
The adaptive margin function $\Phi: \R \times \R \to \R$ is defined as the smooth approximation:
\begin{equation}
\label{eq:phi_smooth}
\Phi(\delta_{\piref}, \Delta r; \gamma, \tau) = \frac{1}{\tau}\log\left(1 + \exp\left(\tau\left(\gamma - \delta_{\piref} - \frac{\Delta r}{\beta}\right)\right)\right),
\end{equation}
where $\tau > 0$ controls smoothness. This is the softplus function applied to the constraint violation, providing a differentiable relaxation of the hard constraint $\max\{0, \gamma - \delta_{\piref} - \Delta r/\beta\}$. As $\tau \to \infty$, $\Phi$ converges to the hard constraint.
\end{definition}

\begin{remark}[Conservative Version Without Reward Model]
\label{rem:conservative_version}
The above formulation requires estimating reward differences $\Delta r = r(y_w) - r(y_l)$ via a reward model. To avoid this requirement while maintaining theoretical guarantees, we can use a \textbf{conservative upper bound} by setting $\Delta r = 0$ (worst-case assumption):
\begin{equation}
\label{eq:phi_conservative}
\Phi_{\text{cons}}(\delta_{\piref}; \gamma, \tau) = \frac{1}{\tau}\log\left(1 + \exp\left(\tau(\gamma - \delta_{\piref})\right)\right)
\end{equation}

Since $\Phi$ is monotone non-increasing in $\Delta r$, we have $\Phi_{\text{cons}}(\delta_{\piref}) \geq \Phi(\delta_{\piref}, \Delta r)$ for any $\Delta r > 0$. This conservative version requires no reward model while providing provable alignment guarantees (see Appendix \ref{app:conservative_version} for details).
\end{remark}

\begin{proposition}[Properties of $\Phi$]
\label{prop:phi_properties}
The adaptive margin function $\Phi$ satisfies:
\begin{enumerate}
\item \textbf{Non-negativity}: $\Phi(\delta_{\piref}, \Delta r) \geq 0$ for all $\delta_{\piref}, \Delta r$

\item \textbf{Monotonicity in $\delta_{\piref}$}: $\frac{\partial\Phi}{\partial\delta_{\piref}} = -\sigma\left(\tau\left(\gamma - \delta_{\piref} - \frac{\Delta r}{\beta}\right)\right) < 0$

\item \textbf{Asymptotic behavior}: As $\delta_{\piref} \to -\infty$, $\Phi(\delta_{\piref}, \Delta r) = \gamma - \delta_{\piref} - \Delta r/\beta + o(1)$ (strong compensation). As $\delta_{\piref} \to +\infty$, $\Phi(\delta_{\piref}, \Delta r) \to 0$ (no compensation needed). That is, $\Phi$ asymptotically recovers the hard constraint $\max\{0, \gamma - \delta_{\piref} - \Delta r/\beta\}$.

\item \textbf{Interpretability}: $\Phi$ measures the \emph{degree of constraint violation}, providing exactly the margin needed to satisfy the constraint.
\end{enumerate}
\end{proposition}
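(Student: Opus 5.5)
The plan is to treat $\Phi$ as a one-dimensional function of the scalar argument $u := \gamma - \delta_{\piref} - \Delta r/\beta$. Write $\Phi = s(u)/\tau$ with $s(z) := \log(1+e^{\tau z})$, i.e. $\Phi(u) = \tfrac{1}{\tau}\log(1+e^{\tau u})$. Every property will then follow from elementary facts about the softplus function, combined with the chain rule through $u$. Note that $\partial u/\partial \delta_{\piref} = -1$.

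\textbf{Non-negativity.} Since $e^{\tau u} > 0$, we have $1 + e^{\tau u} > 1$, so the logarithm is positive. Dividing by $\tau > 0$ gives $\Phi > 0$, and in particular $\Phi \geq 0$.

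\textbf{Monotonicity.} Differentiate: $\frac{d}{du}\Phi = \frac{e^{\tau u}}{1+e^{\tau u}} = \sigma(\tau u)$. By the chain rule,
$$\frac{\partial\Phi}{\partial\delta_{\piref}} = -\sigma(\tau u).$$
Because $\sigma$ takes values in $(0,1)$, this derivative is strictly negative. This mirrors the computation already done for $\Phi_{\text{cons}}$ in Proposition~\ref{prop:phi_cons_properties}, which is the special case $\Delta r = 0$.

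\textbf{Asymptotics.} Use the identity
$$\tfrac{1}{\tau}\log(1+e^{\tau u}) = u + \tfrac{1}{\tau}\log(1+e^{-\tau u}).$$
As $\delta_{\piref} \to -\infty$, we have $u \to +\infty$ (with $\Delta r$, $\gamma$, $\beta$, $\tau$ fixed), so the correction term is at most $\tfrac{1}{\tau}e^{-\tau u} \to 0$. This gives $\Phi = \gamma - \delta_{\piref} - \Delta r/\beta + o(1)$.

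As $\delta_{\piref} \to +\infty$, we have $u \to -\infty$. The bound $\log(1+x) \le x$ gives $0 \le \Phi \le \tfrac{1}{\tau}e^{\tau u} \to 0$.

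Combining the two limits with the sandwich
$$\max\{0,u\} \le \Phi \le \max\{0,u\} + \tfrac{\log 2}{\tau},$$
which follows from the same identity, shows that $\Phi$ asymptotically recovers the hard constraint. It also shows that $\Phi \to \max\{0,u\}$ uniformly as $\tau \to \infty$.

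\textbf{Interpretability.} This item is not a mathematical claim in the strict sense. I will justify it by pointing to Proposition~\ref{prop:optimal_multiplier}: there the exact effective margin is $M^*/\beta = \max\{0,u\}$, which is precisely the amount by which $\delta_{\piref} + \Delta r/\beta$ falls short of $\gamma$. By the sandwich bound, $\Phi$ approximates this quantity within $\log 2/\tau$.

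The only point needing care is stating item~(3) honestly as an $o(1)$ asymptotic rather than as the equality written for $\Phi_{\text{cons}}$ earlier, where the limit statement was slightly abusive. No step is a real obstacle.
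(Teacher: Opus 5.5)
Your proposal is correct and follows the same route as the paper: each item reduces to elementary softplus facts plus the chain rule through $u=\gamma-\delta_{\piref}-\Delta r/\beta$, and your derivative $-\sigma(\tau u)$ is the paper's own computation for item~(2). Your identity $\tfrac{1}{\tau}\log(1+e^{\tau u}) = u + \tfrac{1}{\tau}\log(1+e^{-\tau u})$ and the sandwich $\max\{0,u\}\le\Phi\le\max\{0,u\}+\log 2/\tau$ fill in the details the paper leaves to ``properties of softplus'' for items~(1) and~(3), and reading item~(4) as holding only up to $\log 2/\tau$, not exactly, is the accurate interpretation.
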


\begin{proof}
Properties (1) and (3) follow from the properties of softplus. For (2), compute:
\begin{equation}
\frac{\partial\Phi}{\partial\delta_{\piref}} = -\frac{1}{\tau} \cdot \frac{\tau\exp(\tau(\gamma - \delta_{\piref} - \Delta r/\beta))}{1 + \exp(\tau(\gamma - \delta_{\piref} - \Delta r/\beta))} = -\sigma(\tau(\gamma - \delta_{\piref} - \Delta r/\beta))
\end{equation}
Since $\sigma(z) \in (0,1)$ for all $z$, the derivative is strictly negative.
\end{proof}

\subsection{Log-probability Ratio of Optimal Policy}
Proof of Theorem~\ref{thm:unified_core} (Log-probability Ratio of Optimal Policy)
\begin{proof}
From Proposition \ref{prop:optimal_multiplier}, the effective margin contribution under the hard constraint satisfies $M^*_{\text{hard}} = \beta\max\{0, \gamma - \delta_{\piref} - \Delta r/\beta\}$. For finite smoothness parameter $\tau > 0$, we replace the hard constraint with its smooth relaxation (Definition~\ref{def:adaptive_margin_function}):
$$
M^*_\tau := \beta\Phi(\delta_{\piref}, \Delta r; \gamma, \tau) = \frac{\beta}{\tau}\log\left(1 + \exp\left(\tau\left(\gamma - \delta_{\piref} - \frac{\Delta r}{\beta}\right)\right)\right).
$$
This smooth relaxation satisfies $M^*_\tau \geq M^*_{\text{hard}}$ for all $\tau > 0$ (since softplus dominates the ReLU function), and $M^*_\tau \to M^*_{\text{hard}}$ as $\tau \to \infty$.

We define the \emph{smoothed Explicitly Constrained RLHF} as the constrained optimization problem in which the hard constraint $\delta_\pi \geq \gamma$ is replaced by the smooth barrier penalty corresponding to $\Phi$. The optimal policy of this smoothed problem satisfies:
$$
\delta_{\pi^*} = \delta_{\piref} + \frac{\Delta r}{\beta} + \Phi(\delta_{\piref}, \Delta r; \gamma, \tau).
$$
As $\tau \to \infty$, this recovers the hard-constraint solution $\delta_{\pi^*} = \delta_{\piref} + \Delta r/\beta + \max\{0, \gamma - \delta_{\piref} - \Delta r/\beta\}$. Throughout this work, when we refer to the ``Explicitly Constrained RLHF optimal policy'' in the context of the smoothed formulation (parameterized by $\tau$), we mean this smoothed solution.
\end{proof}
\section{Explicitly Constrained Preference Optimization}
\subsection{E-CPOC Loss Derivation}\label{app:cpoed}
\begin{definition}[General Adaptive Margin Loss]
\label{def:cdpo_loss_exact}
From the first-order optimality condition (Theorem \ref{thm:constrained_optimal}), the optimal policy satisfies:
\begin{equation}
\label{eq:exact_foc_relation}
\beta(\delta_{\pi^*} - \delta_{\piref}) = r(\yw) - r(\yl) + \frac{\tilde{\mu}^*(\yw)}{\pi^*(\yw|x)} - \frac{\tilde{\mu}^*(\yl)}{\pi^*(\yl|x)}
\end{equation}

For a single preference pair, we have $\tilde{\mu}^*(\yw) = \mu^*$ and $\tilde{\mu}^*(\yl) = -\mu^*$, so the effective margin contribution (Proposition~\ref{prop:optimal_multiplier}) is:
\begin{equation}
\label{eq:exact_multiplier_term}
\frac{\tilde{\mu}^*(\yw)}{\pi^*(\yw|x)} - \frac{\tilde{\mu}^*(\yl)}{\pi^*(\yl|x)} 
= \mu^* \left(\frac{1}{\pi^*(\yw|x)} + \frac{1}{\pi^*(\yl|x)}\right)
= M^* = \beta\Phi(\delta_{\piref}, \Delta r).
\end{equation}

A crucial observation is that the effective margin contribution $M^* = \beta\Phi$ admits a closed form that does \emph{not} depend on $\pi^*$. This is because the individual Lagrange multiplier $\mu^*$ and the inverse probabilities $1/\pi^*$ are coupled through the KKT conditions in such a way that their product $M^*$ depends only on the constraint parameters, reference policy, and reward difference (Proposition~\ref{prop:optimal_multiplier}). Therefore, from Eq.~\eqref{eq:exact_foc_relation}:
\begin{equation}
\Delta r = \beta(\delta_{\pi^*} - \delta_{\piref}) - \beta\Phi(\delta_{\piref}, \Delta r).
\end{equation}

Applying the Bradley-Terry model and approximating $\pi^*$ with $\pi_\theta$, the general adaptive margin loss takes the form:
\begin{equation}
\cL(\pitheta) = -\E_{(x,\yw,\yl)\sim\D}\left[\log\sigma\left(\beta\left(\delta_{\pitheta} - \delta_{\piref}\right) - \Psi(\delta_{\piref}, \Delta r)\right)\right]
\end{equation}
where the adaptive margin function is $\Psi(\delta_{\piref}, \Delta r) = \beta\Phi(\delta_{\piref}, \Delta r)$, with 
\begin{equation}
\Phi(\delta_{\piref}, \Delta r; \gamma, \tau) = \frac{1}{\tau}\log\left(1 + \exp\left(\tau\left(\gamma - \delta_{\piref} - \frac{\Delta r}{\beta}\right)\right)\right).
\end{equation}

Since the true reward difference $\Delta r$ is generally unknown, we derive the \textbf{E-CPOC loss} by exploiting the monotonicity of $\Phi$ in $\Delta r$ (Proposition~\ref{prop:phi_monotonicity}). Since preference data satisfies $\Delta r > 0$ by the Bradley-Terry model, the conservative upper bound $\Phi_{\text{cons}}(\delta_{\piref}) := \Phi(\delta_{\piref}, 0) \geq \Phi(\delta_{\piref}, \Delta r^*)$ yields:
\begin{equation}
\label{eq:cdpo_loss_conservative}
\cL_{\text{E-CPOC}}(\pitheta) = -\E_{(x,\yw,\yl)\sim\D}\left[\log\sigma\left(\beta\left(\delta_{\pitheta} - \delta_{\piref}\right) - \Psi_{\text{cons}}(\delta_{\piref})\right)\right]
\end{equation}
where $\Psi_{\text{cons}}(\delta_{\piref}) = \beta\Phi_{\text{cons}}(\delta_{\piref})$ with $\Phi_{\text{cons}}$ from Eq.~\eqref{eq:phi_conservative}.
\end{definition}

This E-CPOC formulation:
\begin{itemize}
\item Requires no knowledge of reward differences, maintaining DPO's key advantage of not needing a reward model
\item Provides a provable upper bound: $\delta_{\pi^*_{\text{cons}}} \geq \delta_{\pi^*_{\text{EC-RLHF}}}$ (stronger alignment)
\item Offers sample-adaptive margins through $\Phi_{\text{cons}}(\delta_{\piref})$
\item Has provable alignment guarantees (see Appendix \ref{app:conservative_version})
\end{itemize}

\begin{proposition}[Approximation Error Bound]
\label{prop:approximation_error}
The approximation error in Eq.~\eqref{eq:reference_approximation_cpo} can be bounded. For any response $y$, using the mean value theorem on $f(p) = 1/p$:
\begin{equation}
\left|\frac{1}{\pi^*(y|x)} - \frac{1}{\piref(y|x)}\right| 
\leq \frac{1}{\min\{\pi^*(y|x), \piref(y|x)\}^2} \cdot |\pi^*(y|x) - \piref(y|x)|
\end{equation}

Under the KL constraint in the RLHF objective, when $\beta$ is sufficiently large, $\pi^*$ remains close to $\piref$. Specifically, using Pinsker's inequality:
\begin{equation}
\|\pi^*(\cdot|x) - \piref(\cdot|x)\|_1 \leq \sqrt{2\KL(\pi^*(\cdot|x)\|\piref(\cdot|x))}
\end{equation}

Under Assumption~\ref{assump:cpo_regularity} (bounded reference support $p_{\min}$, bounded rewards $R_{\max}$, and moderate constraint strength $\gamma$), the explicit bound from Proposition~\ref{prop:stationary_cpo} applies, giving an approximation error of $O(\gamma\sqrt{\tilde{R}_{\max}/\beta}/q_0^2)$ for each preference pair, where $q_0 = p_{\min}\,e^{-2R_{\max}/\beta}$ and $\tilde{R}_{\max} = R_{\max} + \gamma/q_0$. This bound is independent of how many times a response appears in preference pairs, so the approximation accuracy guarantees transfer to the general case with multiple appearances.
\end{proposition}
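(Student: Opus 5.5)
The statement immediately above is Proposition~\ref{prop:approximation_error}. It is a per-response bound on $|1/\pi^*(y|x) - 1/\piref(y|x)|$, together with the claim that the explicit rate of Proposition~\ref{prop:stationary_cpo} transfers. The plan is to chain three ingredients: a mean value estimate for $f(p)=1/p$, Pinsker's inequality, and the lower bound and KL bound from Proposition~\ref{prop:stationary_cpo}. The last step checks that nothing depends on the multiplicity of $y$ in $\D$.

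\emph{Step 1 (mean value estimate).} Fix $x$ and $y$, and write $p=\pi^*(y|x)$ and $q=\piref(y|x)$. Both are positive: $q\ge p_{\min}$ by Assumption~\ref{assump:cpo_regularity}(i), and $p>0$ by the first-order condition. By the mean value theorem there is $\xi$ between $p$ and $q$ with
\[
\Bigl|\frac1p-\frac1q\Bigr|=\frac{|p-q|}{\xi^2}\le \frac{|p-q|}{\min\{p,q\}^2}.
\]
Equivalently, one can use the identity $|1/p-1/q| = |p-q|/(pq)$ together with $pq\ge \min\{p,q\}^2$. This gives the first display of the statement directly.

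\emph{Step 2 (closeness of probabilities).} The coordinatewise difference is bounded by the $\ell_1$ distance, $|p-q|\le\|\pi^*(\cdot|x)-\piref(\cdot|x)\|_1$. Pinsker's inequality then gives the second display. To make it quantitative, I reuse Step~2 of the proof of Proposition~\ref{prop:stationary_cpo}. Comparing the objective at $\pi^*$ with its value at $\piref$ yields $\beta\,\KL\le 2R_{\max}+O(\gamma/p_{\min})=O(\tilde R_{\max})$, so $|p-q|=O(\sqrt{\tilde R_{\max}/\beta})$.

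\emph{Step 3 (denominator).} Step~1 of that proof supplies $\pi^*(y|x)\ge q_0/e$. This uses the fixed-point argument showing that $T$ maps $S$ into itself, and it requires condition (iii). Together with $q\ge p_{\min}\ge q_0$, this gives $\min\{p,q\}^2=\Theta(q_0^2)$.

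\emph{Step 4 (combination and multiplicity).} Multiplying the three estimates by $\gamma$ and summing over $y\in\{\yw,\yl\}$ gives the rate $O(\gamma\sqrt{\tilde R_{\max}/\beta}/q_0^2)$.

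The main obstacle is the claim that the bound is independent of how many times a response appears in $\D$. In the fixed-point step the aggregated coefficient $c(x,y)$ of Proposition~\ref{prop:general_appearance} replaces $\pm\gamma$. Since $c(x,y)=\gamma\sum p(\yw,\yl|x)(\mathbb I(y=\yw)-\mathbb I(y=\yl))$ and the weights $p(\cdot|x)$ sum to at most one, we still have $|c(x,y)|\le\gamma$. The numerator and denominator bounds on $T(\pi)(y)$ therefore go through verbatim, and so do the lower bound and the KL comparison.

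Finally, Steps~1 and~2 are pointwise in $y$ and never refer to the pair structure. Hence the rate is the same in the multiple-appearance case, which is the transfer claimed in the statement.
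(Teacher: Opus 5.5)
Your proposal follows the paper's proof (the mean value bound for $1/p$, Pinsker's inequality, and the lower bound $\pi^*(y|x)\ge q_0/e$ and KL bound imported from Proposition~\ref{prop:stationary_cpo}), and your check that the aggregated coefficient still satisfies $|c(x,y)|\le\gamma$ is a useful addition: the paper's fixed-point step invoked $|c|\le\gamma$ only under the single-pair convention, while this proof merely asserts independence from the pair structure. One caveat comes from the paper rather than from you: for $\gamma>0$ the term $-\gamma\log\pi(\yl|x)$ makes the constrained objective non-concave and unbounded above, so the comparison of $\pi^*$ with $\piref$, and the uniqueness used to place $\pi^*$ in $S$, are justified only if $\pi^*$ is taken to be the maximizer over $\{\pi:\pi(y|x)\ge q_0/e \text{ for responses in } \D\}$ (a set that contains $\piref$, and on which condition~(iii) restores strict concavity).
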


\begin{proof}
The first bound follows from the mean value theorem applied to $f(p) = 1/p$. The second statement follows from Pinsker's inequality, which relates the total variation distance to the KL divergence. The explicit bound follows from Proposition~\ref{prop:stationary_cpo}: Steps 1--3 of its proof bound each term $|1/\pi^*(y|x) - 1/\piref(y|x)|$ by $O(\sqrt{\tilde{R}_{\max}/\beta}/q_{\min}^2)$ where $q_{\min} = q_0/e$ and $q_0 = p_{\min}\,e^{-2R_{\max}/\beta}$, independently of the preference pair structure. Since this per-response bound depends only on the individual policy probabilities $\pi^*(y|x)$ and $\piref(y|x)$, it holds regardless of how many preference pairs the response participates in.
\end{proof}

\subsection{Algorithm} \label{app:cpoea}
The E-CPOC training procedure is presented in Algorithm~\ref{alg:cdpo_conservative} (Appendix~\ref{app:conservative_version}). The key differences from standard DPO are: (1) precomputation of sample-adaptive margins $\Phi_{\text{cons}}^{(i)}$ based on reference policy quality (lines 2-5), and (2) subtracting the scaled adaptive margin $\beta\Phi_{\text{cons}}^{(i)}$ from the logits during training. The precomputation step can be done once before training, making the per-iteration cost identical to CPO and DPO. No reward model is required.

\begin{remark}[Extension with Reward Model]
\label{rem:ecpo_algorithm_extension}
When estimated reward differences $\{\Delta \hat{r}^{(i)}\}_{i=1}^N$ are available from a trained reward model, the conservative margin $\Phi_{\text{cons}}(\delta_{\piref}) = \Phi(\delta_{\piref}, 0)$ can be replaced by the tighter margin $\Phi(\delta_{\piref}, \Delta \hat{r}/\beta)$ in the precomputation step. This reduces the conservatism of the margin while introducing a reward model error of at most $\epsilon_{\mathrm{RM}}/\beta$ (Remark~\ref{rem:ecpo_with_reward_model}).
\end{remark}

\section{Equivalence Between E-CPOC and Explicitly Constrained RLHF}
\label{app:ecpoc_equivalence}

In this section, we establish the theoretical equivalence between E-CPOC and Explicitly Constrained RLHF. The main result is the \emph{conservative upper bound} equivalence (Theorem~\ref{thm:ecpoc_equivalence}), which requires only four standard assumptions (Assumptions~\ref{assump:bt_equiv}--\ref{assump:global_equiv}; stated in Sec.~\ref{subsec:assumptions_main}), together with a structural condition for full policy equivalence. The $\ell^2$-$\delta$-proximity assumption admits a verifiable sufficient condition via a bridge from loss suboptimality under a mild non-degeneracy condition (Corollary~\ref{cor:verifiable_equiv}), with an \emph{$N$-independent} bound.

\textbf{Notation.} Throughout this section, we write $\Delta r^* := r^*(x,y_w) - r^*(x,y_l)$ to denote the true latent reward difference, corresponding to $\Delta r$ in the main text. Here $r^*$ is the latent reward function whose existence is implied by the Bradley-Terry model (Assumption~\ref{assump:bt_equiv}).

\subsection{Detailed Discussion of Assumptions}
\label{subsec:assumptions}

All assumptions for the E-CPOC equivalence result (Theorem~\ref{thm:ecpoc_equivalence}) are stated in Section~\ref{subsec:assumptions_main}. Here we provide extended discussion, the Loss-to-Delta bridge, and the dependency analysis.

The \emph{core assumptions}---Assumptions~\ref{assump:bt_equiv}--\ref{assump:data_equiv} (Bradley-Terry model, approximate realizability, finite-sample data) and Assumption~\ref{assump:global_equiv} ($\ell^2$-$\delta$-proximity)---are standard or mild conditions inherited from the preference learning and statistical estimation literature. The $\ell^2$-$\delta$-proximity condition (Assumption~\ref{assump:global_equiv}) uses the natural mean-square norm that the loss function directly controls, and is strictly weaker than the pointwise ($\ell^\infty$) condition $\max_i |\delta_{\hat{\pi},i} - \delta^*_i| \leq \epsilon_{\mathrm{opt}}$. While stated in terms of the unobservable class-optimal policy $\pi^*_{\mathrm{MLE}}$, it admits a \emph{verifiable sufficient condition}: under a mild non-degeneracy condition on preference probabilities (Assumption~\ref{assump:nondegen}), small training loss gap implies $\ell^2$-$\delta$-proximity with an \emph{$N$-independent} bound via the strict convexity of the E-CPOC loss (Proposition~\ref{prop:loss_to_delta}). The pointwise bound needed by the error propagation lemma is then recovered through a standard $\ell^2$-to-$\ell^\infty$ norm conversion (Lemma~\ref{lem:l2_to_linf}). This bridge reduces the core optimization requirement to a directly monitorable training criterion. Corollary~\ref{cor:verifiable_equiv} combines this bridge with the equivalence theorem to provide a self-contained verifiable equivalence guarantee. The \emph{structural condition} (Condition~\ref{assump:connected_graph}) is required only for extending pairwise $\delta$-equivalence to full policy equivalence---it is \emph{not} needed for the core pairwise results. Table~\ref{tab:assumption_dependency} provides a precise dependency map.

\paragraph{Extended discussion of Assumption~\ref{assump:realizable_equiv} (Approximate Realizability).} A sufficient condition for small $\epsilon_{\mathrm{approx}}$ is that the minimax approximation error $\inf_\theta \max_i |\delta_{\pi_\theta,i} - \delta^*_i|$ is small, since the properness of the cross-entropy scoring rule ensures the MLE is at least as good as any fixed $\theta$ in aggregate loss.

\paragraph{Extended discussion of Assumption~\ref{assump:global_equiv} ($\ell^2$-$\delta$-Proximity).}
This condition is \emph{strictly weaker} than the pointwise ($\ell^\infty$) condition $\max_i |\delta_{\hat{\pi},i} - \delta_{\pi^*_{\mathrm{MLE}},i}| \leq \epsilon_{\mathrm{opt}}$: any policy satisfying the latter also satisfies the former with $\epsilon_{\mathrm{opt,2}} \leq \epsilon_{\mathrm{opt}}$, but the converse does not hold, since $\ell^2$-$\delta$-proximity permits larger deviations on a small number of difficult data points as long as the average error remains controlled. The pointwise bound required by Lemma~\ref{lem:error_propagation} is recovered via Lemma~\ref{lem:l2_to_linf}.

Under the mild non-degeneracy condition (Assumption~\ref{assump:nondegen}), a small training loss gap $\epsilon_{\mathrm{loss}} := \cL(\hat{\pi}) - \inf_\theta \cL(\pi_\theta)$ implies $\ell^2$-$\delta$-proximity with an \emph{$N$-independent} bound $\epsilon_{\mathrm{opt,2}} = \sqrt{2\epsilon_{\mathrm{loss}}/(\beta^2\kappa_0)}$ (Proposition~\ref{prop:loss_to_delta}). This bridge provides a \emph{verifiable sufficient condition}: one monitors $\epsilon_{\mathrm{loss}}$ during training, and the $\ell^2$-$\delta$-proximity bound follows automatically. The derived bound is exact when $\delta_{\pi_\theta}$ is affine in $\theta$ (e.g., tabular or linear function approximation). When $\epsilon_{\mathrm{opt,2}} = 0$, it reduces to exact global optimality in $\delta$-space.

\begin{lemma}[From $\ell^2$ to Pointwise $\delta$-Proximity]
\label{lem:l2_to_linf}
Under $\ell^2$-$\delta$-Proximity (Assumption~\ref{assump:global_equiv}) with parameter $\epsilon_{\mathrm{opt,2}}$, the pointwise bound holds for all $1 \leq i \leq N$:
$$
|\delta_{\hat{\pi},i} - \delta_{\pi^*_{\mathrm{MLE}},i}| \leq \sqrt{N} \cdot \epsilon_{\mathrm{opt,2}} =: \epsilon_{\mathrm{opt}}.
$$
\end{lemma}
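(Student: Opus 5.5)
The argument is elementary: a single squared deviation is bounded by the sum of all squared deviations. Write $e_i := \delta_{\hat{\pi},i} - \delta_{\pi^*_{\mathrm{MLE}},i}$ for $1 \le i \le N$. Assumption~\ref{assump:global_equiv} states exactly that $\frac{1}{N}\sum_{j=1}^N e_j^2 \le \epsilon_{\mathrm{opt,2}}^2$, or equivalently $\sum_{j=1}^N e_j^2 \le N\,\epsilon_{\mathrm{opt,2}}^2$.

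Fix an index $i$. Every term $e_j^2$ is nonnegative, so dropping all terms with $j \neq i$ gives
\begin{equation*}
e_i^2 \le \sum_{j=1}^N e_j^2 \le N\,\epsilon_{\mathrm{opt,2}}^2 .
\end{equation*}
Taking nonnegative square roots yields $|e_i| \le \sqrt{N}\,\epsilon_{\mathrm{opt,2}}$. Since $i$ was arbitrary, this holds for every $i$, and hence also for $\max_i |e_i|$. This is the standard norm comparison $\|e\|_\infty \le \|e\|_2 = \sqrt{N}\cdot\bigl(\tfrac{1}{N}\|e\|_2^2\bigr)^{1/2}$.

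There is no real obstacle. The only subtle point is that the factor $\sqrt{N}$ cannot be improved without further structure: if the whole $\ell^2$ budget sits on a single pair, then $e_i^2 = N\epsilon_{\mathrm{opt,2}}^2$ for that pair, so the bound is attained. It is therefore worth recording explicitly that this conversion reintroduces $N$-dependence into the pointwise constant $\epsilon_{\mathrm{opt}}$. That dependence is the price paid when Lemma~\ref{lem:error_propagation} is later fed through this route.
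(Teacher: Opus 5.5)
Your proof is correct and is the same argument the paper gives: the $\ell^\infty$--$\ell^2$ norm inequality $\max_i |a_i| \le \sqrt{\sum_i a_i^2} = \sqrt{N\cdot\frac{1}{N}\sum_i a_i^2} \le \sqrt{N}\,\epsilon_{\mathrm{opt,2}}$. Your tightness remark, with all the error concentrated on a single pair, also matches the paper's own comment that this conversion is tight in the worst case.
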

\begin{proof}
By the $\ell^\infty$--$\ell^2$ norm inequality:
$\max_{1 \leq i \leq N} |a_i| \leq \sqrt{\sum_{i=1}^N a_i^2} = \sqrt{N \cdot \frac{1}{N}\sum_{i=1}^N a_i^2} \leq \sqrt{N} \cdot \epsilon_{\mathrm{opt,2}}$.
\end{proof}

\begin{remark}[Stationary-Point Convergence]
\label{rem:stationary_convergence}
A weaker optimization condition---that the returned policy $\hat{\pi}$ is an $\epsilon_{\mathrm{grad}}$-approximate stationary point, i.e., $\|\nabla_\theta \cL(\hat{\theta})\| \leq \epsilon_{\mathrm{grad}}$---suffices for the convergence guarantees of CPO and E-CPOC (Proposition~\ref{prop:cpo_convergence}). This condition is \emph{provably achievable} by standard gradient descent under Lipschitz-continuous gradients in $O(1/\epsilon_{\mathrm{grad}}^2)$ iterations. While stationary-point convergence does not directly imply $\ell^2$-$\delta$-proximity (Assumption~\ref{assump:global_equiv}), it provides the foundation upon which stronger optimization guarantees are built in practice.
\end{remark}

\paragraph{Extended discussion of Assumption~\ref{assump:nondegen} (Non-degenerate Preferences).}
Recall that $\kappa_0$ (Eq.~\eqref{eq:kappa_def}) is defined with the E-CPOC margin function $g_i(\delta_i) = \beta(\delta_i - \delta_{\mathrm{ref},i}) - \Psi_{\mathrm{cons},i}$. The condition $\kappa_0 > 0$ is automatically satisfied whenever the optimal log-odds $g_i(\delta^*_i)$ are finite, which holds for any smooth parameterization with bounded parameters (Assumption~\ref{assump:smooth_bounded}). The quantity $\kappa_0$ measures the curvature of the logistic loss at the optimum and serves as the \emph{bridge constant} that converts a verifiable loss suboptimality condition into the core $\ell^2$-$\delta$-proximity requirement (Assumption~\ref{assump:global_equiv}) via Proposition~\ref{prop:loss_to_delta}. It is closely related to the inverse sensitivity constant $L_\sigma^{-1}$ appearing in Lemma~\ref{lem:error_propagation}(iii); the condition $\kappa_0 > 0$ is consistent with the requirement that preference probabilities be bounded away from $0$ and $1$.

\begin{proposition}[Loss Suboptimality Implies $\ell^2$-$\delta$-Proximity]
\label{prop:loss_to_delta}
Consider the E-CPOC loss $\cL(\pi_\theta) = -\frac{1}{N}\sum_{i=1}^N \log\sigma(g_i(\delta_{\pi_\theta,i}))$, where $g_i(\delta_i) = \beta(\delta_i - \delta_{\mathrm{ref},i}) - \Psi_{\mathrm{cons},i}$ is affine in $\delta_i$. Suppose Assumption~\ref{assump:nondegen} holds ($\kappa_0 > 0$, cf.\ Eq.~\eqref{eq:kappa_def}) and the returned policy $\hat{\pi}$ achieves loss gap $\epsilon_{\mathrm{loss}} := \cL(\hat{\pi}) - \inf_\theta \cL(\pi_\theta)$. Then:

\begin{enumerate}[label=(\alph*)]
\item \textbf{(Convex case.)} When $\delta_{\pi_\theta}$ is affine in $\theta$, for $\epsilon_{\mathrm{loss}}$ below a data-dependent threshold (ensuring the curvature lower bound holds in a neighborhood of the optimum; see proof for the explicit self-consistency condition), Assumption~\ref{assump:global_equiv} ($\ell^2$-$\delta$-proximity) holds with:
\begin{equation}
\label{eq:loss_to_delta}
\epsilon_{\mathrm{opt,2}} = \sqrt{\frac{2\epsilon_{\mathrm{loss}}}{\beta^2\kappa_0}}.
\end{equation}
Note that this bound is \emph{independent of the dataset size $N$}. The corresponding pointwise bound via Lemma~\ref{lem:l2_to_linf} is $\epsilon_{\mathrm{opt}} = \sqrt{N}\,\epsilon_{\mathrm{opt,2}} = \sqrt{2N\epsilon_{\mathrm{loss}}/(\beta^2\kappa_0)}$.

\item \textbf{(General case.)} For general smooth parameterizations $\theta \mapsto \delta_{\pi_\theta}$, the bound~\eqref{eq:loss_to_delta} holds to leading order for sufficiently small $\epsilon_{\mathrm{loss}}$, with the correction being $O(\epsilon_{\mathrm{loss}})$.
\end{enumerate}
This proposition establishes that loss suboptimality, together with Assumption~\ref{assump:nondegen} (non-degenerate preferences), implies $\ell^2$-$\delta$-proximity (Assumption~\ref{assump:global_equiv}) with an $N$-independent bound. The $\sqrt{N}$ factor that appears in the pointwise bound arises solely from the standard $\ell^2$-to-$\ell^\infty$ norm conversion (Lemma~\ref{lem:l2_to_linf}), not from the bridge itself. In practice, one verifies the loss gap $\epsilon_{\mathrm{loss}}$ during training and uses this bridge to obtain the $\ell^2$-$\delta$-proximity guarantee required by the equivalence theorem (Theorem~\ref{thm:ecpoc_equivalence}).
\end{proposition}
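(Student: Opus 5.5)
The plan is a second-order (strong-convexity) expansion of the loss in $\delta$-space around the class-optimal point $\delta^* = \delta_{\pi^*_{\mathrm{MLE}}}$, using the curvature constant $\kappa_0$ from Assumption~\ref{assump:nondegen}. Write $\ell(z) = -\log\sigma(z)$ and $F(\delta) = \frac{1}{N}\sum_{i=1}^N \ell(g_i(\delta_i))$. The loss decouples across coordinates and each $g_i$ is affine with slope $\beta$, so the Hessian of $F$ is diagonal:
\[
\nabla^2 F(\delta) = \frac{\beta^2}{N}\,\mathrm{diag}\bigl(\sigma(g_i)(1-\sigma(g_i))\bigr).
\]

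\textbf{Convex case (a).} When $\delta_{\pi_\theta}$ is affine in $\theta$, the achievable set $\mathcal{A} = \{\delta_{\pi_\theta}\}$ is convex (an affine image of a convex parameter set), and $\delta^*$ minimizes $F$ over $\mathcal{A}$. First-order optimality over a convex set gives $\langle \nabla F(\delta^*), \hat\delta - \delta^* \rangle \ge 0$. Taylor's theorem with integral remainder along the segment from $\delta^*$ to $\hat\delta$ then gives
\[
\epsilon_{\mathrm{loss}} = F(\hat\delta) - F(\delta^*) \ge \frac{\beta^2}{2N}\sum_{i=1}^N \kappa_i^{\min}\,(\hat\delta_i - \delta^*_i)^2,
\]
where $\kappa_i^{\min}$ is the minimum of $\sigma(1-\sigma)$ along the segment. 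If $\kappa_i^{\min} \ge \kappa_0/c$ for some constant $c$ (the claimed constant $2$ corresponds to taking $\kappa_0$ itself, or to absorbing a factor-$2$ loss of curvature), then $\frac{1}{N}\sum_i(\hat\delta_i-\delta^*_i)^2 \le 2\epsilon_{\mathrm{loss}}/(\beta^2\kappa_0)$, which is \eqref{eq:loss_to_delta}. The $1/N$ normalization in the loss cancels the $1/N$ in the mean-square norm, and this is exactly why the bound is $N$-independent.

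\textbf{Main obstacle: local curvature.} The curvature of $\sigma(1-\sigma)$ is only controlled at $\delta^*$, not along the whole segment. My approach is a self-consistency or localization argument. The map $z \mapsto \log(\sigma(z)(1-\sigma(z)))$ is $1$-Lipschitz, since its derivative is $1 - 2\sigma(z) \in (-1,1)$. Hence if $|g_i(\hat\delta_i) - g_i(\delta^*_i)| \le \log 2$, i.e.\ $|\hat\delta_i - \delta^*_i| \le (\log 2)/\beta$, the curvature stays at least $\kappa_0/2$ along the segment.

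With curvature $\kappa_0/2$ the direct constant would be $4$, not $2$. So I would either define the threshold via a shrinking neighborhood whose radius tends to $0$, which gives the constant $2$ to leading order, or state the threshold so that the needed inequality holds. To guarantee the neighborhood condition I use Lemma~\ref{lem:l2_to_linf}: $\|\hat\delta - \delta^*\|_\infty \le \sqrt{N}\,\epsilon_{\mathrm{opt,2}}$. So the condition $\sqrt{2N\epsilon_{\mathrm{loss}}/(\beta^2\kappa_0)} \le (\log 2)/\beta$ is the explicit data-dependent threshold on $\epsilon_{\mathrm{loss}}$.

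To avoid circularity, I would use convexity of $F$ on $\mathcal{A}$. If $\hat\delta$ lay outside the neighborhood, the point where the segment exits the neighborhood would already have loss gap at least the threshold value. Convexity of $F$ along the segment then forces $\epsilon_{\mathrm{loss}}$ to exceed the threshold, which is a contradiction.

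\textbf{General case (b).} For a general smooth parameterization, I expand $\theta \mapsto F(\delta_{\pi_\theta})$ around the minimizer $\theta^*$. Locally, $\delta_{\pi_\theta}$ is affine up to an $O(\|\theta - \theta^*\|^2)$ error, so the same quadratic lower bound holds up to higher-order terms. This yields \eqref{eq:loss_to_delta} plus an $O(\epsilon_{\mathrm{loss}})$ correction. I would state this at leading order, assuming the minimizer is attained in the interior.
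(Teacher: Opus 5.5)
Your plan for (a) is the paper's plan. You drop the first-order term via the variational inequality $\langle\nabla_\delta\cL(\delta^*),\hat\delta-\delta^*\rangle\ge 0$ (the paper gets it from $\nabla_\theta\cL=A^\top\nabla_\delta\cL$), use the separable integral-remainder expansion, bound the curvature on a neighborhood of $\delta^*$, and close with a threshold derived from Lemma~\ref{lem:l2_to_linf}. The $1/N$ cancellation is also the paper's source of $N$-independence.

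At two points you are more careful than the paper.
\begin{enumerate}
\item The paper claims continuity yields $r_0>0$ with $f_i''\ge\beta^2\kappa_0$ whenever $|\delta_i-\delta_i^*|\le r_0$. This fails at the index attaining $\kappa_0$, because $\sigma(1-\sigma)$ has no local minimum. Your $1$-Lipschitz bound on $\log(\sigma(1-\sigma))$ is the right replacement.
\item The paper's threshold $\epsilon_{\mathrm{loss}}\le\beta^2\kappa_0r_0^2/(2N)$ is circular: it ``ensures'' $\max_i|\hat\delta_i-\delta_i^*|\le r_0$ by invoking the very quadratic bound being proved. Your exit-point argument removes this, since $\phi(t)=\cL(\delta^*+t(\hat\delta-\delta^*))-\cL(\delta^*)$ is convex with $\phi'(0)\ge0$, hence nondecreasing on $[0,1]$.
\end{enumerate}

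The gap is your fallback of choosing ``the threshold so that the needed inequality holds.'' No threshold gives the exact constant $2$ with $\kappa_0$, so (a) as written cannot be proved. Here is a counterexample:
\begin{itemize}
\item Take $N=2$, $\delta_{\pi_\theta}=\theta$, and $\Theta=\{\theta_1+\theta_2\le 2a\}\cap[-B,B]^2$ with $B>|a|$.
\item Use two pairs with a common $\delta_{\mathrm{ref}}$, and set $a:=\delta_{\mathrm{ref}}+\Phi_{\mathrm{cons}}(\delta_{\mathrm{ref}})$, so that $g_i(\delta_i)=\beta(\delta_i-a)$.
\item Then $\delta^*=(a,a)$ and $\kappa_0=1/4$.
\item The point $\hat\delta=(a+s,a-s)$ is feasible with $\epsilon_{\mathrm{loss}}=\log\cosh(\beta s/2)<\beta^2s^2/8$.
\item Hence $\frac1N\sum_i(\hat\delta_i-\delta_i^*)^2=s^2>2\epsilon_{\mathrm{loss}}/(\beta^2\kappa_0)$ for every $s\neq0$.
\end{itemize}
So commit to what your argument actually proves. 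One option is $\frac1N\sum_i(\hat\delta_i-\delta_i^*)^2\le 4\epsilon_{\mathrm{loss}}/(\beta^2\kappa_0)$ whenever $\epsilon_{\mathrm{loss}}<\kappa_0(\log 2)^2/(4N)$. With curvature $\kappa_0/2$ your threshold needs $4N$, not $2N$. The other option is the version with a factor $1+o(1)$ obtained from shrinking radii. Both keep the bound $N$-independent.

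For (b), your sketch has the same weakness as the paper's. Since $\partial_{\delta_i}\cL(\delta^*)=-\beta\sigma(-g_i(\delta_i^*))/N\neq0$, the quadratic remainder $q=\hat\delta-\delta^*-J(\hat\theta-\theta^*)$ enters through $\langle\nabla_\delta\cL(\delta^*),q\rangle$. That term is of the same order as the curvature term, not higher order. Interiority of $\theta^*$ does not help: it only gives $J^\top\nabla_\delta\cL(\delta^*)=0$.

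Concretely, a curve $\theta\mapsto\delta_{\pi_\theta}$ that hugs the level set of $\cL$ through $\delta^*$ can make $\epsilon_{\mathrm{loss}}$ quartic in $\hat\theta-\theta^*$ while $\|\hat\delta-\delta^*\|$ stays linear in it. So (b) needs an extra hypothesis, for instance $\nabla_\theta^2\cL(\theta^*)\succeq c\,J^\top J$ at an interior minimizer. The resulting constant is then governed by $c$ in place of $\beta^2\kappa_0/N$.
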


\begin{proof}
\textbf{Part (a): Convex case.} When $\delta_{\pi_\theta} = A\theta + b$ is affine in $\theta$, the chain rule gives $\nabla_\theta \cL = A^T \nabla_\delta \cL$. Since $\theta^*$ minimizes $\cL$ over $\Theta$, the variational first-order optimality condition yields, for every feasible $\hat{\theta} \in \Theta$:
\begin{equation}
\label{eq:gradient_orthogonality}
\langle \nabla_\delta \cL(\delta^*),\, \hat{\delta} - \delta^* \rangle = \langle A^T\nabla_\delta \cL(\delta^*),\, \hat{\theta} - \theta^* \rangle = \langle \nabla_\theta \cL(\theta^*),\, \hat{\theta} - \theta^* \rangle \geq 0,
\end{equation}
with equality when $\theta^*$ is an interior point of $\Theta$ (e.g., unconstrained tabular or linear parameterizations with $\Theta = \R^d$).

By Taylor's theorem with integral remainder, writing $f_i(\delta_i) = -\log\sigma(g_i(\delta_i))$:
\begin{align}
\cL(\hat{\delta}) - \cL(\delta^*) &= \langle \nabla_\delta \cL(\delta^*),\, \hat{\delta} - \delta^* \rangle + \frac{1}{N}\sum_{i=1}^N \int_0^1 (1-t)\, f_i''(\delta^*_i + t(\hat{\delta}_i - \delta^*_i))\, dt \cdot (\hat{\delta}_i - \delta^*_i)^2 \nonumber \\
&\geq \frac{1}{N}\sum_{i=1}^N \int_0^1 (1-t)\, f_i''(\delta^*_i + t(\hat{\delta}_i - \delta^*_i))\, dt \cdot (\hat{\delta}_i - \delta^*_i)^2, \label{eq:taylor_loss}
\end{align}
where we dropped the non-negative first-order term~\eqref{eq:gradient_orthogonality} and used the separability of $\cL$ in $\delta$-space. Since $f_i''(\delta_i) = \beta^2\sigma(g_i(\delta_i))(1-\sigma(g_i(\delta_i))) > 0$ for all $\delta_i$, by the weighted mean value theorem each integral satisfies:
$$
\int_0^1 (1-t)\, f_i''(\delta^*_i + t(\hat{\delta}_i - \delta^*_i))\, dt = \frac{1}{2}f_i''(\tilde{\delta}_i)
$$
for some $\tilde{\delta}_i$ on the line segment between $\delta^*_i$ and $\hat{\delta}_i$. Since $\kappa_0 > 0$ and $\sigma(z)(1-\sigma(z))$ is continuous, there exists $r_0 > 0$ (depending on $\beta$, $\kappa_0$, and the data) such that $f_i''(\delta_i) \geq \beta^2\kappa_0$ for all $|\delta_i - \delta^*_i| \leq r_0$ and all $1 \leq i \leq N$. The bound is self-consistent whenever $\epsilon_{\mathrm{loss}} \leq \beta^2\kappa_0 r_0^2/(2N)$, since this ensures $\max_i|\hat{\delta}_i - \delta^*_i| \leq r_0$, keeping $\tilde{\delta}_i$ within the curvature neighborhood. For such $\epsilon_{\mathrm{loss}}$:
$$
\epsilon_{\mathrm{loss}} \geq \cL(\hat{\delta}) - \cL(\delta^*) \geq \frac{\beta^2\kappa_0}{2N}\sum_{i=1}^N(\hat{\delta}_i - \delta^*_i)^2.
$$

Therefore:
$$
\frac{1}{N}\sum_{i=1}^N (\hat{\delta}_i - \delta^*_i)^2 \leq \frac{2\epsilon_{\mathrm{loss}}}{\beta^2\kappa_0},
$$
yielding $\epsilon_{\mathrm{opt,2}} = \sqrt{2\epsilon_{\mathrm{loss}}/(\beta^2\kappa_0)}$, which is independent of $N$. The pointwise bound follows via Lemma~\ref{lem:l2_to_linf}: $\max_i |\hat{\delta}_i - \delta^*_i| \leq \sqrt{N} \cdot \epsilon_{\mathrm{opt,2}} = \sqrt{2N\epsilon_{\mathrm{loss}}/(\beta^2\kappa_0)}$.

\textbf{Part (b): General case.} For a general smooth parameterization $\theta \mapsto \delta_{\pi_\theta}$, let $J = \partial\delta/\partial\theta|_{\theta^*}$ denote the Jacobian at the optimum. The displacement decomposes as $\hat{\delta} - \delta^* = J(\hat{\theta} - \theta^*) + O(\|\hat{\theta}-\theta^*\|^2)$. By the variational optimality condition, $\langle \nabla_\theta\cL(\theta^*),\, \hat{\theta} - \theta^* \rangle \geq 0$, which gives $\langle \nabla_\delta\cL(\delta^*),\, J(\hat{\theta} - \theta^*) \rangle \geq 0$. The first-order gradient term then satisfies $\langle \nabla_\delta\cL(\delta^*),\, \hat{\delta} - \delta^* \rangle \geq -C\|\hat{\theta}-\theta^*\|^2$ for a constant $C$ depending on $\|\nabla_\delta\cL(\delta^*)\|$ and the Lipschitz constant of $\theta \mapsto \delta_{\pi_\theta}$. For sufficiently small $\epsilon_{\mathrm{loss}}$, this higher-order correction is dominated by the quadratic curvature term in~\eqref{eq:taylor_loss}, and the bound~\eqref{eq:loss_to_delta} holds to leading order.
\end{proof}

\begin{remark}[Interpretation of Proposition~\ref{prop:loss_to_delta}]
\label{rem:loss_to_delta_interpretation}
Three aspects of the bridge merit discussion:
\begin{enumerate}[label=(\alph*)]
\item \textbf{Role of $\kappa_0$.} The logistic curvature $\kappa_0 = \min_i \sigma(g_i(\delta^*_i))(1-\sigma(g_i(\delta^*_i)))$ measures the \emph{least decisive} preference pair at the class-optimal policy. When all preferences are well-separated ($\kappa_0$ bounded away from $0$), the loss landscape is strongly curved and small loss gaps imply tight $\ell^2$-$\delta$-proximity. Conversely, when some preferences approach determinism ($\kappa_0 \to 0$), the loss surface flattens and the bound degrades---this is the correct behavior, as the loss becomes insensitive to $\delta$ in that regime. The condition $\kappa_0 > 0$ (Assumption~\ref{assump:nondegen}) is a mild regularity condition that ensures the bridge from loss suboptimality to $\ell^2$-$\delta$-proximity is well-defined.

\item \textbf{$N$-independence of the $\ell^2$ bridge and the role of $\sqrt{N}$ in the pointwise bound.} The $\ell^2$-$\delta$-proximity bridge $\epsilon_{\mathrm{opt,2}} = \sqrt{2\epsilon_{\mathrm{loss}}/(\beta^2\kappa_0)}$ is \emph{independent of $N$}, reflecting the fact that the loss function is an average over data points and naturally controls the mean-square $\delta$-error. The $\sqrt{N}$ factor appears \emph{only} when converting from $\ell^2$ to the pointwise ($\ell^\infty$) bound via Lemma~\ref{lem:l2_to_linf}: $\epsilon_{\mathrm{opt}} = \sqrt{N}\,\epsilon_{\mathrm{opt,2}}$. This conversion is tight in the worst case (all error concentrated on a single preference pair) but conservative in typical settings where errors are spread across data points. The formulation as $\ell^2$-$\delta$-proximity (Assumption~\ref{assump:global_equiv}) cleanly separates the $N$-independent bridge from the standard norm conversion, making the source of $\sqrt{N}$ transparent.

\item \textbf{Verifiability and the bridge role.} The $\ell^2$-$\delta$-proximity $\epsilon_{\mathrm{opt,2}}$ (Assumption~\ref{assump:global_equiv}) is stated in terms of the unobservable class-optimal policy $\pi^*_{\mathrm{MLE}}$, whereas the loss gap $\epsilon_{\mathrm{loss}}$ can be estimated during training by monitoring the training loss. This makes loss suboptimality the \emph{natural practical criterion}: one monitors $\epsilon_{\mathrm{loss}}$ during training, and Proposition~\ref{prop:loss_to_delta} bridges to the core $\ell^2$-$\delta$-proximity requirement (Assumption~\ref{assump:global_equiv}) under the mild non-degeneracy condition (Assumption~\ref{assump:nondegen}). Corollary~\ref{cor:verifiable_equiv} provides the combined verifiable equivalence guarantee.
\end{enumerate}
\end{remark}

\paragraph{Structural Condition for Full Policy Equivalence.} Condition~\ref{assump:connected_graph} (Connected Comparison Graph, stated in Section~\ref{subsec:assumptions_main}) is required \emph{only} for extending pairwise $\delta$-equivalence to full policy equivalence. It is not needed for the core pairwise equivalence bound (Theorem~\ref{thm:ecpoc_equivalence}). The diameter $d$ controls the error accumulation along graph paths when extending pairwise equivalence to individual log-probabilities. In practice, preference datasets with reasonable coverage over the response space naturally satisfy this condition with moderate diameter.

\begin{remark}[Interpretation of Assumption Dependencies]
\label{rem:assumption_interpretation}
Several observations follow from Table~\ref{tab:assumption_dependency}:
\begin{enumerate}[label=(\roman*)]
\item The \textbf{pairwise $\delta$-equivalence} result for E-CPOC---the core theoretical contribution---requires only four core assumptions: Assumptions~\ref{assump:bt_equiv}--\ref{assump:data_equiv} (Bradley-Terry model, approximate realizability, finite-sample data) and Assumption~\ref{assump:global_equiv} ($\ell^2$-$\delta$-proximity). These are standard or mild conditions in the statistical learning literature. Notably, the $\ell^2$-$\delta$-proximity condition is the most direct optimization requirement for the equivalence result: it quantifies the quality of the returned policy in the $\delta$-space that governs preference probabilities using the natural mean-square norm, without imposing any global loss landscape condition or pointwise (max-norm) constraints.
\item The \textbf{bridge pathway}: loss suboptimality and Assumption~\ref{assump:nondegen} (non-degenerate preferences) together provide a \emph{verifiable sufficient condition} for the core $\ell^2$-$\delta$-proximity requirement via Proposition~\ref{prop:loss_to_delta}, with an \emph{$N$-independent} bound $\epsilon_{\mathrm{opt,2}} = \sqrt{2\epsilon_{\mathrm{loss}}/(\beta^2\kappa_0)}$. Corollary~\ref{cor:verifiable_equiv} packages this bridge with the equivalence theorem into a self-contained verifiable guarantee. This is the recommended practical pathway: one monitors the training loss gap $\epsilon_{\mathrm{loss}}$ during training (directly verifiable), and under the mild non-degeneracy condition, the $\ell^2$-$\delta$-proximity bound follows automatically. The derived bound is exact in convex settings (tabular or linear function approximation). For deep neural network policies, the bridge provides a principled justification: if the training loss converges and preferences are non-degenerate, the $\delta$-equivalence bound follows.
\item Condition~\ref{assump:connected_graph} (connected comparison graph) is needed \emph{solely} for extending from pairwise log-probability ratios to individual policy probabilities. In applications where the pairwise preference structure is the primary object of interest (e.g., ranking quality, preference alignment verification), this condition can be dropped entirely.
\item Stationary-point convergence (Remark~\ref{rem:stationary_convergence}) is provably achievable and suffices for the convergence guarantees (Proposition~\ref{prop:cpo_convergence}). The overall structure cleanly separates concerns: stationary-point convergence for optimization convergence, $\ell^2$-$\delta$-proximity (Assumption~\ref{assump:global_equiv}) for equivalence, and the loss suboptimality bridge (Proposition~\ref{prop:loss_to_delta}) for practical verifiability.
\end{enumerate}
\end{remark}

\subsection{Error Propagation under Weakened Assumptions}
\label{app:error_propagation}

The following lemma characterizes how the approximation errors from the weakened assumptions propagate through the MLE-based equivalence argument for E-CPOC.

\begin{lemma}[Unified Error Propagation]
\label{lem:error_propagation}
Consider the E-CPOC preference model, which takes the form $p_\pi(y_w \succ y_l|x) = \sigma(g(\delta_\pi, x, y_w, y_l))$ where $g$ is a known function that is affine in $\delta_\pi$ with slope $\beta$. Let $\delta_{\mathrm{target}}$ denote the unique log-probability ratio achieving $p_\pi = p^*$ under the Bradley-Terry model (Assumption~\ref{assump:bt_equiv}). Under Assumptions~\ref{assump:realizable_equiv}--\ref{assump:data_equiv} and Assumption~\ref{assump:global_equiv} ($\ell^2$-$\delta$-proximity):
\begin{enumerate}[label=(\roman*)]
\item \textbf{(Approximate realizability error.)} The MLE optimal policy $\pi^*_{\mathrm{MLE}}$ over the policy class satisfies $|\delta_{\pi^*_{\mathrm{MLE}}}(x_i) - \delta_{\mathrm{target}}(x_i)| \leq \epsilon_{\mathrm{approx}}$ for all $i$.
\item \textbf{(Approximate optimality error.)} Under Assumption~\ref{assump:global_equiv} ($\ell^2$-$\delta$-proximity) and Lemma~\ref{lem:l2_to_linf}, the returned policy $\hat{\pi}$ satisfies $|\delta_{\hat{\pi}}(x_i) - \delta_{\pi^*_{\mathrm{MLE}}}(x_i)| \leq \epsilon_{\mathrm{opt}} := \sqrt{N}\,\epsilon_{\mathrm{opt,2}}$ for all $i$. In practice, $\epsilon_{\mathrm{opt}}$ is typically obtained via the bridge $\epsilon_{\mathrm{opt}} = \sqrt{N}\,\epsilon_{\mathrm{opt,2}} = \sqrt{2N\epsilon_{\mathrm{loss}}/(\beta^2\kappa_0)}$ from loss suboptimality and Assumption~\ref{assump:nondegen} (Proposition~\ref{prop:loss_to_delta} combined with Lemma~\ref{lem:l2_to_linf}; see also Corollary~\ref{cor:verifiable_equiv}).
\item \textbf{(Statistical estimation error.)} The population-level MLE target $\delta_{\mathrm{target}}$ is replaced by the finite-sample MLE target $\hat{\delta}_{\mathrm{target}}$, satisfying $|\hat{\delta}_{\mathrm{target}}(x_i) - \delta_{\mathrm{target}}(x_i)| \leq L_\sigma^{-1} \epsilon_{\mathrm{stat}}$, where $L_\sigma^{-1} := 1/\!\left(\beta \cdot \inf_i \sigma(\Delta r^*_i)(1 - \sigma(\Delta r^*_i))\right)$ is the inverse sensitivity constant incorporating both the sigmoid curvature and the model slope $\beta$.
\end{enumerate}
Combining by triangle inequality:
$$
|\delta_{\hat{\pi}}(x_i) - \delta_{\mathrm{target}}(x_i)| \leq \epsilon_{\mathrm{approx}} + \epsilon_{\mathrm{opt}} + L_\sigma^{-1}\epsilon_{\mathrm{stat}}, \quad \forall i.
$$
When $\epsilon_{\mathrm{approx}} = \epsilon_{\mathrm{opt}} = \epsilon_{\mathrm{stat}} = 0$, the MLE achieves $\delta_{\hat{\pi}} = \delta_{\mathrm{target}}$ exactly.
\end{lemma}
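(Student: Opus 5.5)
I will prove the three items separately and then combine them with the triangle inequality through the intermediate quantities $\delta_{\pi^*_{\mathrm{MLE}}}$ and $\delta_{\mathrm{target}}$. The final bound is obtained by writing $\delta_{\hat\pi}-\delta_{\mathrm{target}} = (\delta_{\hat\pi}-\delta_{\pi^*_{\mathrm{MLE}}}) + (\delta_{\pi^*_{\mathrm{MLE}}}-\hat\delta_{\mathrm{target}}) + (\hat\delta_{\mathrm{target}}-\delta_{\mathrm{target}})$. More carefully, I interpret $\pi^*_{\mathrm{MLE}}$ as fitted to the empirical target, so that (i) controls its distance to the relevant target and (iii) accounts for the empirical-versus-population shift. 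This gives the three additive error terms.

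Item (i) is immediate from Assumption~\ref{assump:realizable_equiv}, since the maximum over $\D$ dominates each individual $i$. Item (ii) is immediate from Assumption~\ref{assump:global_equiv} combined with Lemma~\ref{lem:l2_to_linf}, which turns the mean-square bound into the pointwise bound $\sqrt{N}\,\epsilon_{\mathrm{opt,2}}$. The remark about practice then follows by substituting $\epsilon_{\mathrm{opt,2}}=\sqrt{2\epsilon_{\mathrm{loss}}/(\beta^2\kappa_0)}$ from Proposition~\ref{prop:loss_to_delta}.

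The substantive step is (iii), which I will handle by inverting the link function pointwise.

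\textbf{Identifying the targets.} Since $g$ is affine in $\delta$ with slope $\beta$ and $\sigma$ is strictly increasing, the map $\delta\mapsto\sigma(g(\delta))$ is a strictly increasing bijection onto $(0,1)$. Hence $\delta_{\mathrm{target},i}$ is the unique solution of $\sigma(g(\delta))=p^*_i$. Likewise, the finite-sample target $\hat\delta_{\mathrm{target},i}$ is the solution of $\sigma(g(\delta))=\hat p_{N,i}$. The empirical pointwise MLE matches the empirical frequencies because the logistic log-likelihood is separable and strictly concave in each $\delta_i$.

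\textbf{Bounding the shift.} By the inverse function theorem and the mean value theorem,
\[
|\hat\delta_{\mathrm{target},i}-\delta_{\mathrm{target},i}| \le \frac{|\hat p_{N,i}-p^*_i|}{\beta\,\sigma(\xi)(1-\sigma(\xi))}
\]
for some $\xi$ between the two logits. At the population target, the logit equals $\Delta r^*_i$ by Assumption~\ref{assump:bt_equiv}. The main obstacle is that $\xi$ is an intermediate point rather than $\Delta r^*_i$ itself, so the curvature factor $\sigma(\xi)(1-\sigma(\xi))$ must be compared with $\sigma(\Delta r^*_i)(1-\sigma(\Delta r^*_i))$.

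I would try to close this gap first by assuming $\epsilon_{\mathrm{stat}}$ is small relative to $\min_i \min(p^*_i,1-p^*_i)$. This keeps $\hat p_{N,i}$ in a band where the curvature is within a constant factor of its value at $\Delta r^*_i$. The constant can be absorbed into $L_\sigma^{-1}$, or the bound can be stated to first order in $\epsilon_{\mathrm{stat}}$, since $\epsilon_{\mathrm{stat}}=O(1/\sqrt N)$ by Assumption~\ref{assump:data_equiv}. Taking the infimum over $i$ then gives $L_\sigma^{-1}\epsilon_{\mathrm{stat}}$ uniformly.

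\textbf{The exact case.} If all three errors vanish, each term in the triangle inequality is zero, so $\delta_{\hat\pi}=\delta_{\mathrm{target}}$.
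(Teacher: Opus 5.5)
Your proposal follows the paper's proof step for step: (i) and (ii) come directly from Assumption~\ref{assump:realizable_equiv} and from Assumption~\ref{assump:global_equiv} with Lemma~\ref{lem:l2_to_linf}; (iii) inverts the slope-$\beta$ affine logistic link by the mean value theorem on $\operatorname{logit}$; and the combination is the same three-term triangle inequality through $\delta_{\pi^*_{\mathrm{MLE}}}$ and $\hat\delta_{\mathrm{target}}$, including the empirical-target reading of $\pi^*_{\mathrm{MLE}}$ that the paper uses silently and you make explicit (under the literal Assumption~\ref{assump:realizable_equiv}, which compares the population MLE with the population $\delta_{\mathrm{target}}$, that reading is unnecessary, since (i) and (ii) alone give $|\delta_{\hat\pi}-\delta_{\mathrm{target}}|\le\epsilon_{\mathrm{approx}}+\epsilon_{\mathrm{opt}}$ and the $\epsilon_{\mathrm{stat}}$ term is pure slack). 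Your worry about the intermediate point is justified, and you are more careful than the paper, which just asserts $\tilde p_i(1-\tilde p_i)\ge\inf_i\sigma(\Delta r^*_i)(1-\sigma(\Delta r^*_i))$ ``for $N$ sufficiently large''; this fails at the minimizing index whenever $\hat p_{N,i}$ moves away from $1/2$, so the stated constant $L_\sigma^{-1}$ holds only up to a $1+O(\epsilon_{\mathrm{stat}})$ factor, which is exactly what your hedge allows.
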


\begin{proof}
\textbf{Part (i).} This is the direct content of Assumption~\ref{assump:realizable_equiv}: the population MLE satisfies $|\delta_{\pi^*_{\mathrm{MLE}}}(x_i) - \delta_{\mathrm{target}}(x_i)| \leq \epsilon_{\mathrm{approx}}$ for all $i$.

\textbf{Part (ii).} By Assumption~\ref{assump:global_equiv} ($\ell^2$-$\delta$-proximity) and Lemma~\ref{lem:l2_to_linf}, the returned policy $\hat{\pi}$ satisfies $|\delta_{\hat{\pi}}(x_i) - \delta_{\pi^*_{\mathrm{MLE}}}(x_i)| \leq \epsilon_{\mathrm{opt}} := \sqrt{N}\,\epsilon_{\mathrm{opt,2}}$ for all $i$. When $\epsilon_{\mathrm{opt,2}}$ is obtained via the bridge (Proposition~\ref{prop:loss_to_delta}), one uses loss suboptimality with gap $\epsilon_{\mathrm{loss}}$ and Assumption~\ref{assump:nondegen} ($\kappa_0 > 0$) to obtain $\epsilon_{\mathrm{opt,2}} = \sqrt{2\epsilon_{\mathrm{loss}}/(\beta^2\kappa_0)}$ and hence $\epsilon_{\mathrm{opt}} = \sqrt{2N\epsilon_{\mathrm{loss}}/(\beta^2\kappa_0)}$.

\textbf{Part (iii).} Under finite data (Assumption~\ref{assump:data_equiv}), the empirical preference frequency $\hat{p}_N$ satisfies $|\hat{p}_N(y_w \succ y_l|x) - p^*(y_w \succ y_l|x)| \leq \epsilon_{\mathrm{stat}}$. The finite-sample MLE target satisfies $g(\hat{\delta}_{\mathrm{target},i}) = \operatorname{logit}(\hat{p}_{N,i})$. By the mean value theorem applied to $\operatorname{logit}(p) = \log(p/(1-p))$, we have $|\operatorname{logit}(\hat{p}_{N,i}) - \operatorname{logit}(p^*_i)| \leq \epsilon_{\mathrm{stat}} / (\tilde{p}_i(1-\tilde{p}_i))$ for some $\tilde{p}_i$ between $\hat{p}_{N,i}$ and $p^*_i$. Since $g$ is affine in $\delta$ with slope $\beta$, $|\hat{\delta}_{\mathrm{target},i} - \delta_{\mathrm{target},i}| = \frac{1}{\beta}|\operatorname{logit}(\hat{p}_{N,i}) - \operatorname{logit}(p^*_i)| \leq \frac{\epsilon_{\mathrm{stat}}}{\beta \cdot \tilde{p}_i(1-\tilde{p}_i)}$. Bounding $\tilde{p}_i(1-\tilde{p}_i) \geq \inf_i \sigma(\Delta r^*_i)(1-\sigma(\Delta r^*_i))$ for $N$ sufficiently large, we obtain $|\hat{\delta}_{\mathrm{target},i} - \delta_{\mathrm{target},i}| \leq L_\sigma^{-1}\epsilon_{\mathrm{stat}}$ with $L_\sigma^{-1} := 1/\!\left(\beta \cdot \inf_i \sigma(\Delta r^*_i)(1-\sigma(\Delta r^*_i))\right)$.

The final bound follows by the triangle inequality: $|\delta_{\hat{\pi},i} - \delta_{\mathrm{target},i}| \leq |\delta_{\hat{\pi},i} - \delta_{\pi^*_{\mathrm{MLE}},i}| + |\delta_{\pi^*_{\mathrm{MLE}},i} - \hat{\delta}_{\mathrm{target},i}| + |\hat{\delta}_{\mathrm{target},i} - \delta_{\mathrm{target},i}| \leq \epsilon_{\mathrm{opt}} + \epsilon_{\mathrm{approx}} + L_\sigma^{-1}\epsilon_{\mathrm{stat}}$.
\end{proof}

\subsection{E-CPOC and Explicitly Constrained RLHF Equivalence} \label{app:aee}

We establish the equivalence between E-CPOC and Explicitly Constrained RLHF. E-CPOC uses hard constraints enforced via KKT conditions and does not require a reward model.

\begin{lemma}[KKT Conditions for EC-RLHF]
\label{lem:kkt_ecrlhf}
At optimality, the Lagrange multipliers $\{\mu^*\}$ and policy $\pi^*$ for Explicitly Constrained RLHF satisfy:
\begin{enumerate}
\item \textbf{Dual feasibility}: $\mu^*_{x,y_w,y_l} \geq 0$
\item \textbf{Primal feasibility}: $\delta_{\pi^*}(x,y_w,y_l) \geq \gamma$
\item \textbf{Complementary slackness}: $\mu^*_{x,y_w,y_l} \cdot (\delta_{\pi^*}(x,y_w,y_l) - \gamma) = 0$
\end{enumerate}

The complementary slackness condition reveals:
\begin{itemize}
\item If constraint is \emph{inactive} ($\delta_{\pi^*} > \gamma$): then $\mu^* = 0$
\item If constraint is \emph{active} ($\delta_{\pi^*} = \gamma$): then $\mu^* > 0$ compensates for the constraint
\end{itemize}
\end{lemma}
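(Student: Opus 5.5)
The plan is to derive the three KKT conditions of Lemma~\ref{lem:kkt_ecrlhf} from standard convex duality applied to the per-prompt problem in Definition~\ref{def:constrained_rlhf}, and then read off the two bullet cases from complementary slackness.

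First, fix a prompt $x$ and view the problem as a maximization over the finite-dimensional simplex of $\pi(\cdot|x)$ restricted to the open interior $\pi(y|x)>0$. The objective $\sum_y \pi(y|x)r(x,y) - \beta\sum_y \pi(y|x)\log(\pi(y|x)/\piref(y|x))$ is strictly concave, since negative entropy is strictly convex and the reward term is linear. Each constraint $\delta_\pi(x,\yw,\yl)\ge\gamma$ has the form $\log\pi(\yw|x)-\log\pi(\yl|x)-\gamma\ge 0$. This is not jointly concave in $\pi$. I would therefore reparameterize by logits $s(y)=\log\pi(y|x)$, normalized by $\log\sum_y e^{s(y)}=0$. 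In these variables each preference constraint is linear, $s(\yw)-s(\yl)\ge\gamma$. The objective $\sum_y e^{s}r - \beta\sum_y e^{s}(s-\log\piref)$ on the normalized manifold is equivalent to minimizing a strictly convex function: as a function of $\pi$ it is a strictly concave maximization over a convex feasible set, because the set $\{\pi:\log\pi(\yw)-\log\pi(\yl)\ge\gamma\}=\{\pi(\yw)\ge e^{\gamma}\pi(\yl)\}$ is a linear, hence convex, constraint in $\pi$.

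This observation is the key step. Rewriting the constraints as $\pi(\yw|x)-e^{\gamma}\pi(\yl|x)\ge 0$ makes the whole problem a concave program with affine constraints. Slater's condition is then unnecessary: linear constraints together with feasibility suffice for strong duality and for existence of KKT multipliers (Abadie/linearity constraint qualification). Feasibility must be assumed; for example, it holds whenever the constraints within each prompt are acyclic, i.e.\ there is no cycle $\yw\succ\yl\succ\cdots\succ\yw$ with positive total margin. I expect this to be the main obstacle, and I would state it explicitly as a hypothesis. Because $\piref>0$ and the entropy gradient blows up at the boundary, the optimum is interior, so only the preference constraints can bind.

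Applying the KKT theorem yields multipliers $\nu\ge0$ for the linear form. Translating back to the paper's multipliers $\mu$ on $\delta_\pi-\gamma$ is done by matching first-order conditions: at an interior point $\log\pi(\yw)-\log\pi(\yl)-\gamma$ and $\pi(\yw)-e^{\gamma}\pi(\yl)$ have the same sign and the same zero set, and their gradients are positively proportional there. Hence $\mu=\nu\,\pi(\yw)$, up to a positive factor, so dual feasibility is preserved and complementary slackness transfers. Primal feasibility is immediate from optimality. The stationarity condition coincides with Theorem~\ref{thm:constrained_optimal}.

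Finally, the bullets follow from complementary slackness. If $\delta_{\pi^*}>\gamma$, the product vanishing forces $\mu^*=0$. If the constraint is active, then $\mu^*\ge0$, and it is strictly positive exactly when the unconstrained stationary value $\delta_{\piref}+\Delta r/\beta$ lies below $\gamma$, consistent with Proposition~\ref{prop:optimal_multiplier}. In the degenerate case of equality, $\mu^*=0$, so the claim "$\mu^*>0$" should be read as applying to strict activity.
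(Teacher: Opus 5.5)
Your proposal is correct and follows the same route as the paper, whose proof is only an appeal to standard KKT theory; your rewriting of each constraint as the affine inequality $\pi(y_w|x)\ge e^{\gamma}\pi(y_l|x)$, together with the interiority argument from the entropy term, supplies the concavity and constraint qualification that this appeal needs but does not state (the detour through logits is unnecessary, since $\log\sum_y e^{s(y)}=0$ is not a convex constraint, but the $\pi$-space argument you actually use is sound). Your caveats are genuine corrections to the statement, not gaps in your argument: feasibility fails whenever a prompt's pairs contain a preference cycle (e.g.\ the same pair labelled in both orders, which finite-sample Bradley--Terry data permits), and an active constraint can carry $\mu^*=0$ in the degenerate case, so the second bullet holds only for strictly binding constraints---though your criterion that $\mu^*>0$ exactly when $\delta_{\piref}+\Delta r/\beta<\gamma$ holds only when the pairs at a prompt share no responses, which is the setting of Proposition~\ref{prop:optimal_multiplier}.
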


\begin{proof}
Standard KKT conditions for inequality-constrained optimization.
\end{proof}

\begin{proposition}[Effective Margin Contribution for EC-RLHF]
\label{prop:optimal_multiplier_ecrlhf}
For a preference pair $(x,y_w,y_l)$, define the effective margin contribution $M^* := \mu^*\left(\frac{1}{\pi^*(y_w|x)} + \frac{1}{\pi^*(y_l|x)}\right)$, where $\mu^*$ is the optimal Lagrange multiplier. Then:
$$
M^*_{x,y_w,y_l} = \beta\max\left\{0, \gamma - \delta_{\piref} - \frac{\Delta r^*}{\beta}\right\}.
$$
\end{proposition}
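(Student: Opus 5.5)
The plan is to follow the route of Proposition~\ref{prop:optimal_multiplier}, now written for EC-RLHF with the true reward difference $\Delta r^*$. Under the single-pair convention of Remark~\ref{assump:single_pair}, the aggregated multipliers of Lemma~\ref{lem:lagrangian_reform} reduce to $\tilde\mu^*(y_w)=\mu^*$ and $\tilde\mu^*(y_l)=-\mu^*$. Differencing the first-order condition of Theorem~\ref{thm:constrained_optimal} between $y_w$ and $y_l$ removes both $\beta$ and $\lambda(x)$, and gives the single scalar identity
\begin{equation*}
\beta(\delta_{\pi^*}-\delta_{\piref}) = \Delta r^* + M^*, \qquad M^* = \mu^*\Bigl(\tfrac{1}{\pi^*(y_w|x)}+\tfrac{1}{\pi^*(y_l|x)}\Bigr).
\end{equation*}
Because $\pi^*>0$ on the support and $\mu^*\ge 0$ by dual feasibility (Lemma~\ref{lem:kkt_ecrlhf}), we get $M^*\ge 0$. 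Moreover, $M^*=0$ holds if and only if $\mu^*=0$.

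Next I split on the sign of $s:=\gamma-\delta_{\piref}-\Delta r^*/\beta$.

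\textbf{Case $s\le 0$.} Suppose $\mu^*>0$. Complementary slackness then forces $\delta_{\pi^*}=\gamma$. Substituting into the identity gives $M^*=\beta s\le 0$, which contradicts $M^*>0$ unless $s=0$. So either $\mu^*=0$, giving $M^*=0$ directly, or $s=0$, in which case $M^*=\beta s=0$. Either way $M^*=0=\beta\max\{0,s\}$.

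\textbf{Case $s>0$.} Suppose $\mu^*=0$. Then $M^*=0$ and $\delta_{\pi^*}=\delta_{\piref}+\Delta r^*/\beta<\gamma$, which violates primal feasibility. Hence $\mu^*>0$, slackness gives $\delta_{\pi^*}=\gamma$, and solving the identity yields $M^*=\beta s$.

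Combining the two cases gives $M^*=\beta\max\{0,s\}$ as claimed.

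The main obstacle is justifying that the KKT system is actually attained and that differencing the first-order condition is legitimate. This needs two facts:
\begin{itemize}
\item a Slater point exists, namely a strictly feasible interior policy with $\delta_\pi>\gamma$, which can be obtained by tilting $\piref$ toward $y_w$ as in the $\pi_\epsilon$ construction of Appendix~\ref{app:ce};
\item the objective is strictly concave, so the optimum is unique, interior (since $\KL$ blows up at the boundary in gradient), and the multipliers exist.
\end{itemize}
I would state these as the standard conditions invoked in Lemma~\ref{lem:kkt_ecrlhf}. I would also remark that the resulting closed form for $M^*$ is independent of $\pi^*$, even though $\mu^*$ itself is not.
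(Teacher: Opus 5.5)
Your proof is correct and takes the same route as the paper: difference the first-order condition to obtain $\beta(\delta_{\pi^*}-\delta_{\piref})=\Delta r^*+M^*$, then split on the sign of $\gamma-\delta_{\piref}-\Delta r^*/\beta$ using complementary slackness. Your case analysis is tighter than the paper's, which simply asserts $\mu^*=0$ in the first case and an active constraint in the second; the ``unless $s=0$'' escape is unnecessary, since $\mu^*>0$ forces $M^*>0$ even then. For existence of multipliers, LICQ at the interior optimum is a cleaner justification than Slater, because the log-ratio constraint function is not concave (only its superlevel set $\pi(y_w|x)\ge e^{\gamma}\pi(y_l|x)$ is convex).
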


\begin{proof}
From Theorem \ref{thm:constrained_optimal}, the first-order condition for a single preference pair gives:
$$
\beta(\delta_{\pi^*} - \delta_{\piref}) = \Delta r^* + M^*.
$$

\textbf{Case 1}: If $\delta_{\piref} + \frac{\Delta r^*}{\beta} \geq \gamma$, the constraint $\delta_{\pi^*} \geq \gamma$ is satisfied without enforcement, so $\mu^* = 0$ by complementary slackness, implying $M^* = 0$.

\textbf{Case 2}: If $\delta_{\piref} + \frac{\Delta r^*}{\beta} < \gamma$, the constraint becomes active: $\delta_{\pi^*} = \gamma$. Solving:
$$
\beta(\gamma - \delta_{\piref}) = \Delta r^* + M^* \implies M^* = \beta\left(\gamma - \delta_{\piref} - \frac{\Delta r^*}{\beta}\right).
$$

Combining both cases yields the result. Note that while $\mu^*$ individually depends on $\pi^*$, the combined quantity $M^*$ admits a closed form independent of $\pi^*$ (cf.\ Proposition~\ref{prop:optimal_multiplier}).
\end{proof}

\begin{definition}[Adaptive Margin Function]
\label{def:phi_function}
Define the smooth approximation to the hard constraint:
$$
\Phi(\delta_{\piref}, \Delta r^*; \gamma, \tau) = \frac{1}{\tau}\log\left(1 + \exp\left(\tau\left(\gamma - \delta_{\piref} - \frac{\Delta r^*}{\beta}\right)\right)\right)
$$

This is the \emph{softplus function} applied to the constraint violation, providing a differentiable relaxation of:
$$
\max\left\{0, \gamma - \delta_{\piref} - \frac{\Delta r^*}{\beta}\right\}
$$

As $\tau \to \infty$, $\Phi$ converges to the hard constraint.
\end{definition}

\begin{proposition}[Properties of $\Phi$]
\label{prop:phi_properties_equiv}
The adaptive margin function $\Phi$ satisfies the following properties (cf.\ Proposition~\ref{prop:phi_properties} for the corresponding results in the main paper notation):
\begin{enumerate}
\item \textbf{Non-negativity}: $\Phi(\delta_{\piref}, \Delta r^*) \geq 0$ for all inputs

\item \textbf{Monotonicity in $\delta_{\piref}$}: 
   $$\frac{\partial\Phi}{\partial\delta_{\piref}} = -\sigma\left(\tau\left(\gamma - \delta_{\piref} - \frac{\Delta r^*}{\beta}\right)\right) < 0$$

\item \textbf{Monotonicity in $\Delta r^*$}: 
   $$\frac{\partial\Phi}{\partial(\Delta r^*)} = -\frac{1}{\beta}\sigma\left(\tau\left(\gamma - \delta_{\piref} - \frac{\Delta r^*}{\beta}\right)\right) < 0$$

\item \textbf{Asymptotic behavior}:
   \begin{align}
   \Phi(\delta_{\piref}, \Delta r^*) &= \gamma - \delta_{\piref} - \frac{\Delta r^*}{\beta} + O(e^{-\tau(\gamma - \delta_{\piref} - \Delta r^*/\beta)}) \quad \text{as } \delta_{\piref} \to -\infty \\
   \Phi(\delta_{\piref}, \Delta r^*) &= O(e^{\tau(\gamma - \delta_{\piref} - \Delta r^*/\beta)}) \to 0 \quad \text{as } \delta_{\piref} \to +\infty
   \end{align}
   That is, $\Phi$ asymptotically approaches the hard constraint $\max\{0, \gamma - \delta_{\piref} - \Delta r^*/\beta\}$ in both limits.

\item \textbf{Interpretability}: $\Phi$ measures the \emph{degree of constraint violation}
\end{enumerate}
\end{proposition}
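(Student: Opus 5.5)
All five properties follow from direct calculus on the softplus $s(u) := \frac{1}{\tau}\log(1+e^{\tau u})$, evaluated at $u := \gamma - \delta_{\piref} - \Delta r^*/\beta$. Since $\Phi(\delta_{\piref},\Delta r^*) = s(u)$, each claim reduces to a fact about $s$ together with the chain rule through the affine map $(\delta_{\piref},\Delta r^*)\mapsto u$, whose partial derivatives are $\partial u/\partial\delta_{\piref} = -1$ and $\partial u/\partial(\Delta r^*) = -1/\beta$.

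\textbf{Non-negativity.} Because $1+e^{\tau u} > 1$ and $\tau>0$, we have $s(u) > 0$, which gives property 1.

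\textbf{Monotonicity.} Differentiating, $s'(u) = e^{\tau u}/(1+e^{\tau u}) = \sigma(\tau u) \in (0,1)$. Multiplying by the two partials of $u$ gives exactly the stated derivatives in properties 2 and 3. Both are strictly negative because $\sigma>0$ and $\beta>0$. This mirrors the computation already done for Proposition~\ref{prop:phi_properties}, and I will cite it. Property 3 upgrades Proposition~\ref{prop:phi_monotonicity} from the hard-constraint version to the smoothed version.

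\textbf{Asymptotics.} I will use the identity $s(u) = u + \frac{1}{\tau}\log(1+e^{-\tau u})$ together with $\log(1+z) \le z$ for $z \ge 0$.
\begin{itemize}
\item As $\delta_{\piref}\to-\infty$, $u\to+\infty$, and the identity gives $0 \le s(u)-u \le \frac{1}{\tau}e^{-\tau u}$. This is the stated $O(e^{-\tau u})$ remainder.
\item As $\delta_{\piref}\to+\infty$, $u\to-\infty$, and directly $0 \le s(u) \le \frac{1}{\tau}e^{\tau u}\to 0$.
\end{itemize}
Since $\max\{0,u\}$ equals $u$ in the first regime and $0$ in the second, $\Phi$ approaches the hard constraint in both limits. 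I will also note the uniform bound $0\le s(u)-\max\{0,u\}\le \log 2/\tau$, which follows from the same identity; it is consistent with Definition~\ref{def:phi_function}.

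\textbf{Interpretability.} Property 5 is definitional rather than a mathematical claim. I will justify it by combining Proposition~\ref{prop:optimal_multiplier_ecrlhf}, which gives $M^*/\beta = \max\{0,u\}$, with the asymptotics just proved. Together these show that $\Phi$ is a smooth surrogate for the amount by which $\delta_{\piref}+\Delta r^*/\beta$ falls short of $\gamma$.

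The only step needing any care is stating the asymptotic remainders precisely with the correct sign of $u$, so that the $O(\cdot)$ terms match the statement. Everything else is routine.
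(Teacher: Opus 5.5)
Your proposal is correct and follows the same route as the paper. The paper invokes elementary softplus properties for (1) and (4) and the chain rule with $\frac{d}{dz}\mathrm{softplus}(z)=\sigma(z)$ for (2) and (3); your explicit remainder bounds via $s(u)=u+\frac{1}{\tau}\log(1+e^{-\tau u})$ and $\log(1+z)\le z$ simply supply details the paper leaves implicit, and your informal justification of (5) through Proposition~\ref{prop:optimal_multiplier_ecrlhf} adds something the paper's proof does not address.
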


\begin{proof}
Properties (1) and (4) follow from the properties of softplus. For (2) and (3), compute the derivatives using the chain rule and the identity $\frac{d}{dz}\text{softplus}(z) = \sigma(z)$.
\end{proof}

\begin{lemma}[Relationship Between $\Phi$ and Effective Margin]
\label{lem:phi_multiplier_relation}
The adaptive margin function $\Phi$ is related to the effective margin contribution by:
$$
M^*_{x,y_w,y_l} = \beta\Phi(\delta_{\piref}, \Delta r^*; \gamma, \tau),
$$
where $\Phi$ serves as the smooth relaxation of $M^*/\beta$. In the hard constraint limit ($\tau \to \infty$), this reduces to $M^* = \beta\max\{0, \gamma - \delta_{\piref} - \Delta r^*/\beta\}$.
\end{lemma}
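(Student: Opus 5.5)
The plan is to read Lemma~\ref{lem:phi_multiplier_relation} as two claims: an exact identity in the hard-constraint limit, and a definitional identification for finite $\tau$. I would handle them in that order.

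\textbf{Step 1: the hard limit.} Proposition~\ref{prop:optimal_multiplier_ecrlhf} already gives $M^* = \beta\max\{0,\gamma-\delta_{\piref}-\Delta r^*/\beta\}$ for the hard-constrained problem. So it suffices to show that $\beta\Phi \to \beta\max\{0,z\}$ as $\tau\to\infty$, where $z:=\gamma-\delta_{\piref}-\Delta r^*/\beta$. I would use the elementary softplus sandwich
\[
\max\{0,z\} \le \tfrac{1}{\tau}\log(1+e^{\tau z}) \le \max\{0,z\} + \tfrac{\log 2}{\tau}.
\]
It follows from $e^{\tau\max\{0,z\}} \le 1+e^{\tau z} \le 2e^{\tau\max\{0,z\}}$. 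This yields uniform convergence at rate $\log 2/\tau$, hence $M^* = \lim_{\tau\to\infty}\beta\Phi$.

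\textbf{Step 2: finite $\tau$.} Here I would invoke the smoothed Explicitly Constrained RLHF introduced in the proof of Theorem~\ref{thm:unified_core}. For that problem the optimal policy satisfies $\delta_{\pi^*} = \delta_{\piref} + \Delta r^*/\beta + \Phi$. Subtracting this from the first-order condition of Theorem~\ref{thm:constrained_optimal}, $\beta(\delta_{\pi^*}-\delta_{\piref}) = \Delta r^* + M^*$, gives $M^*_\tau = \beta\Phi$ directly. This is just the single-pair case, where $\tilde\mu^*(y_w)=\mu^*$ and $\tilde\mu^*(y_l)=-\mu^*$, so the multiplier term collapses to $\mu^*(1/\pi^*(y_w|x)+1/\pi^*(y_l|x))$.

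\textbf{Remaining observations.} From Step 1, $\beta\Phi \ge M^*_{\mathrm{hard}}$ with gap at most $\beta\log 2/\tau$. I would also note that the individual $\mu^*$ drops out entirely, so no control of $\pi^*$ is needed.

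\textbf{Main obstacle.} The difficulty is the finite-$\tau$ statement. The smoothed problem is defined through its solution rather than through an explicit penalty, so the identity is essentially definitional. If a genuine derivation were demanded, I would first try a log-barrier or softplus penalty $-\frac{\beta}{\tau}\,\mathrm{softplus}(\cdot)$ on the constraint and check its stationarity condition. I expect this to reproduce $\Phi$ only approximately, within $O(1/\tau)$. In that case I would state the lemma as an identification up to $O(\beta/\tau)$, which is exact in the limit.
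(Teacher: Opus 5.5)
Your proposal is correct and follows the paper's proof. The hard-limit formula comes from the KKT case analysis of Proposition~\ref{prop:optimal_multiplier_ecrlhf}, and for finite $\tau$ the identity $M^*_\tau=\beta\Phi$ is, as you note, purely definitional: the paper sets $M^*_\tau:=\beta\Phi$ in the proof of Theorem~\ref{thm:unified_core}, so your Step~2 just unwinds that definition, and softplus domination and convergence then give $M^*_\tau\ge M^*_{\mathrm{hard}}$ and $M^*_\tau\to M^*_{\mathrm{hard}}$. Your sandwich $\max\{0,z\}\le\frac{1}{\tau}\log(1+e^{\tau z})\le\max\{0,z\}+\frac{\log 2}{\tau}$ makes the paper's appeal to ``properties of softplus'' quantitative, with an explicit $\beta\log 2/\tau$ rate.
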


\begin{proof}
Direct comparison of Proposition \ref{prop:optimal_multiplier_ecrlhf} and Definition \ref{def:phi_function}: the effective margin under the hard constraint satisfies $M^*_{\text{hard}} = \beta\max\{0, \gamma - \delta_{\piref} - \Delta r^*/\beta\}$, which equals $\beta$ times the ReLU function of the constraint violation. The smooth relaxation $M^*_\tau = \beta\Phi(\delta_{\piref}, \Delta r^*; \gamma, \tau)$ provides a differentiable approximation satisfying $M^*_\tau \geq M^*_{\text{hard}}$ for all $\tau > 0$, with $M^*_\tau \to M^*_{\text{hard}}$ as $\tau \to \infty$.
\end{proof}

\begin{remark}[Extension with Reward Model]
\label{rem:ecpo_with_reward_model}
When an approximate reward model $\hat{r}$ is available, one can use the tighter margin function $\Phi(\delta_{\piref}, \Delta\hat{r})$ instead of the conservative $\Phi_{\text{cons}}(\delta_{\piref}) = \Phi(\delta_{\piref}, 0)$. By the Lipschitz continuity of $\Phi$ in $\Delta r^*$ (Proposition~\ref{prop:phi_properties_equiv}(3)), the resulting error in the pairwise log-probability ratio due to reward model inaccuracy is bounded by $\epsilon_{\mathrm{RM}}/\beta$, where $\epsilon_{\mathrm{RM}} := \max_{(x,y_w,y_l)} |\Delta\hat{r} - \Delta r^*|$. The $1/\beta$ attenuation implies that stronger KL regularization mitigates reward model inaccuracy. When $\epsilon_{\mathrm{RM}} = 0$, the equivalence with EC-RLHF becomes exact. However, since the conservative version (E-CPOC) already provides provable alignment guarantees without any reward model, we focus on E-CPOC as the primary method.
\end{remark}

\subsection{E-CPOC: Conservative Bound Without Reward Model}

When reward differences are unavailable, we derive a conservative bound based on worst-case analysis.

\begin{proposition}[Monotonicity of $\Phi$ in $\Delta r^*$]
\label{prop:phi_monotonicity_smooth}
The adaptive margin function $\Phi(\delta_{\piref}, \Delta r^*)$ is monotone non-increasing in $\Delta r^*$:
$$
\frac{\partial\Phi}{\partial(\Delta r^*)} = -\frac{1}{\beta}\sigma\left(\tau\left(\gamma - \delta_{\piref} - \frac{\Delta r^*}{\beta}\right)\right) \leq 0
$$
\end{proposition}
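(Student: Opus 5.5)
The statement is a one-variable calculus fact, so the plan is to compute the partial derivative directly and read off its sign. Fix $\delta_{\piref}$, $\gamma$, $\tau>0$ and $\beta>0$, and view $\Phi$ as the composition
\[
\Phi(\delta_{\piref},\Delta r^*) = \frac{1}{\tau}\,\mathrm{softplus}\bigl(u(\Delta r^*)\bigr),
\qquad
u(\Delta r^*) := \tau\Bigl(\gamma - \delta_{\piref} - \frac{\Delta r^*}{\beta}\Bigr),
\]
where $\mathrm{softplus}(z)=\log(1+e^{z})$. This is exactly Definition~\ref{def:phi_function}.

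First, I will recall that $\frac{d}{dz}\log(1+e^z) = e^z/(1+e^z) = \sigma(z)$. Second, I note that $u$ is affine in $\Delta r^*$ with slope $-\tau/\beta$. By the chain rule,
\[
\frac{\partial \Phi}{\partial(\Delta r^*)} = \frac{1}{\tau}\,\sigma\bigl(u(\Delta r^*)\bigr)\cdot\Bigl(-\frac{\tau}{\beta}\Bigr) = -\frac{1}{\beta}\,\sigma\Bigl(\tau\Bigl(\gamma-\delta_{\piref}-\frac{\Delta r^*}{\beta}\Bigr)\Bigr),
\]
which is the displayed formula. The same computation already appears in Proposition~\ref{prop:phi_properties_equiv}(3), so I may simply cite it.

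For the sign, $\sigma$ takes values in $(0,1)$ and $\beta>0$, so the derivative is strictly negative and in particular $\le 0$. Because $\Phi$ is $C^1$ in $\Delta r^*$ on all of $\R$, the mean value theorem gives monotone non-increase: for $a<b$, $\Phi(\cdot,b)-\Phi(\cdot,a) = (b-a)\,\partial\Phi(\xi) \le 0$ for some $\xi\in(a,b)$. The argument actually yields strict decrease.

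There is no real obstacle. The only care needed is the sign bookkeeping: the factor $1/\tau$ cancels the inner factor $\tau$, leaving $-1/\beta$. It is also worth remarking that the bound $|\partial\Phi/\partial\Delta r^*|\le 1/\beta$ gives $1/\beta$-Lipschitz continuity, which is what Remark~\ref{rem:ecpo_with_reward_model} uses. Monotonicity then justifies $\Phi_{\text{cons}} = \Phi(\cdot,0) \ge \Phi(\cdot,\Delta r^*)$ for all $\Delta r^*>0$.
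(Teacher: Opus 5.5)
Your proposal is correct and follows the paper's own proof, which is exactly this chain-rule computation using $\frac{d}{dz}\log(1+e^{z})=\sigma(z)$ together with the inner slope $-\tau/\beta$. Your extra mean-value-theorem step, which turns the sign of the derivative into monotonicity, is a detail the paper leaves implicit; it is harmless and welcome.
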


\begin{proof}
Direct computation of the derivative using the chain rule and the fact that $\frac{d}{dz}\text{softplus}(z) = \sigma(z)$.
\end{proof}

\begin{corollary}[Conservative Bound]
\label{cor:conservative_bound}
Since preference data satisfies $\Delta r^* > 0$ by the Bradley-Terry model, and $\Phi$ is monotone decreasing in $\Delta r^*$ (Proposition \ref{prop:phi_monotonicity_smooth}), the maximum value of $\Phi$ over all valid $\Delta r^* > 0$ is achieved in the limit $\Delta r^* \to 0^+$:
$$
\Phi_{\text{cons}}(\delta_{\piref}) := \Phi(\delta_{\piref}, 0) = \frac{1}{\tau}\log\left(1 + \exp\left(\tau(\gamma - \delta_{\piref})\right)\right)
$$

satisfies:
$$
\Phi_{\text{cons}}(\delta_{\piref}) \geq \Phi(\delta_{\piref}, \Delta r^*) \quad \forall \Delta r^* > 0
$$
\end{corollary}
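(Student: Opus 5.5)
The plan is to combine the monotonicity statement in Proposition~\ref{prop:phi_monotonicity_smooth} with a limiting argument at the endpoint $\Delta r^* = 0$. The proof has three steps.

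\textbf{Step 1: recast $\Phi$ as a function of one variable.} Fix $\delta_{\piref}$, $\gamma$, $\beta>0$ and $\tau>0$, and set
\begin{equation*}
h(s) := \Phi(\delta_{\piref}, s) = \frac{1}{\tau}\log\bigl(1+\exp(\tau(\gamma-\delta_{\piref}-s/\beta))\bigr), \qquad s\in\R.
\end{equation*}
This is the softplus composed with an affine map, so it is defined and continuously differentiable on all of $\R$, including $s=0$. Its value there is exactly the displayed closed form $\Phi_{\text{cons}}(\delta_{\piref})$. This gives the definitional identity $\Phi_{\text{cons}} = \Phi(\cdot,0)$.

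\textbf{Step 2: obtain the inequality.} By Proposition~\ref{prop:phi_monotonicity_smooth},
\begin{equation*}
h'(s) = -\tfrac{1}{\beta}\,\sigma\bigl(\tau(\gamma-\delta_{\piref}-s/\beta)\bigr).
\end{equation*}
Since $\sigma$ takes values in $(0,1)$, this derivative is strictly negative everywhere. Take any $\Delta r^* > 0$ and apply the mean value theorem on $[0,\Delta r^*]$:
\begin{equation*}
h(\Delta r^*) - h(0) = h'(\xi)\,\Delta r^* < 0 \quad \text{for some } \xi\in(0,\Delta r^*).
\end{equation*}
This yields $\Phi_{\text{cons}}(\delta_{\piref}) \geq \Phi(\delta_{\piref},\Delta r^*)$, in fact with strict inequality.

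\textbf{Step 3: the supremum claim.} The statement says the maximum over $\Delta r^*>0$ is attained in the limit $\Delta r^*\to 0^+$. Because $h$ is strictly decreasing on $(0,\infty)$,
\begin{equation*}
\sup_{s>0} h(s) = \lim_{s\to 0^+} h(s) = h(0),
\end{equation*}
where the last equality uses continuity of $h$ at $0$. The supremum is not attained on the open set $(0,\infty)$, which is why the statement says the maximum is achieved \emph{in the limit}.

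The only place that needs care is justifying evaluation at the boundary point $s=0$. The Bradley--Terry assumption (Assumption~\ref{assump:bt_equiv}) restricts the true reward gap to $\Delta r^*>0$, but $\Phi$ itself is a closed-form expression defined for every real second argument. So $\Phi(\delta_{\piref},0)$ is legitimate, and continuity from Step~1 closes the gap. Nothing else uses the preference structure; the result is a purely one-dimensional calculus fact about softplus.
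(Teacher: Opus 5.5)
Your proof is correct and follows the same route as the paper. The paper simply invokes the monotonicity of $\Phi$ in $\Delta r^*$ (Proposition~\ref{prop:phi_monotonicity_smooth}) and observes that the supremum over $\Delta r^*>0$ is approached as $\Delta r^*\to 0^+$; you spell out the two steps it leaves implicit (the mean value theorem on $[0,\Delta r^*]$ and continuity of the softplus at $0$), and in doing so also get the strict inequality, which the paper does not state.
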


\begin{proof}
By monotonicity (Proposition \ref{prop:phi_monotonicity_smooth}), $\Phi$ is maximized when $\Delta r^*$ is minimized. Since $\Delta r^* > 0$ for all preference data, the supremum is achieved at $\Delta r^* \to 0^+$.
\end{proof}

\begin{definition}[E-CPOC Loss]
\label{def:ecpoc}
The E-CPOC (Conservative Explicitly Constrained Preference Optimization) loss is:
$$
\mathcal{L}_{\text{E-CPOC}}(\pi; \piref) = -\mathbb{E}_{(x,y_w,y_l)\sim\mathcal{D}}\left[\log \sigma\left(\beta(\delta_{\pi} - \delta_{\piref}) - \Psi_{\text{cons}}(\delta_{\piref})\right)\right]
$$

where:
$$
\Psi_{\text{cons}}(\delta_{\piref}) = \beta\Phi_{\text{cons}}(\delta_{\piref}),
$$
with:
$$
\Phi_{\text{cons}}(\delta_{\piref}; \gamma, \tau) = \frac{1}{\tau}\log\left(1 + \exp\left(\tau(\gamma - \delta_{\piref})\right)\right).
$$
\end{definition}

\begin{proposition}[Properties of $\Phi_{\text{cons}}$]
\label{prop:phi_cons_properties_equiv}
The conservative adaptive margin function $\Phi_{\text{cons}}(\delta_{\piref}; \gamma, \tau)$ satisfies:
\begin{enumerate}
\item \textbf{Non-negativity}: $\Phi_{\text{cons}}(\delta_{\piref}) \geq 0$ for all $\delta_{\piref}$

\item \textbf{Monotonicity in $\delta_{\piref}$}: $\frac{\partial\Phi_{\text{cons}}}{\partial\delta_{\piref}} = -\sigma\left(\tau(\gamma - \delta_{\piref})\right) < 0$

\item \textbf{Asymptotic behavior}:
\begin{align}
\Phi_{\text{cons}}(\delta_{\piref}) &= \gamma - \delta_{\piref} + O(e^{-\tau(\gamma - \delta_{\piref})}) \quad \text{as } \delta_{\piref} \to -\infty \text{ (strong compensation)} \\
\Phi_{\text{cons}}(\delta_{\piref}) &= O(e^{\tau(\gamma - \delta_{\piref})}) \to 0 \quad \text{as } \delta_{\piref} \to +\infty \text{ (no compensation)}
\end{align}
That is, $\Phi_{\text{cons}}$ asymptotically recovers $\max\{0, \gamma - \delta_{\piref}\}$ in both limits.

\item \textbf{Interpretability}: $\Phi_{\text{cons}}$ measures the \emph{degree of constraint violation} in the worst-case scenario
\end{enumerate}
\end{proposition}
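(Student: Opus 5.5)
The plan is to set $z := \gamma - \delta_{\piref}$, write $\Phi_{\text{cons}}(\delta_{\piref}) = \frac{1}{\tau}\,\mathrm{sp}(\tau z)$ with $\mathrm{sp}(u) := \log(1+e^u)$, and derive all four properties from elementary facts about the softplus. This is the same route as Proposition~\ref{prop:phi_properties_equiv} specialized to $\Delta r^* = 0$; equivalently, one can invoke that proposition directly via $\Phi_{\text{cons}} = \Phi(\cdot, 0)$ (Corollary~\ref{cor:conservative_bound}). I will still give the short direct argument, since the asymptotic error terms must be stated with the exponent $\tau(\gamma - \delta_{\piref})$.

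For non-negativity, note that $1 + e^{\tau z} > 1$, so $\log(1+e^{\tau z}) > 0$. Since $\tau > 0$, this gives $\Phi_{\text{cons}} > 0 \ge 0$. For monotonicity, I will differentiate with the chain rule using $\mathrm{sp}'(u) = \sigma(u)$ and $\partial z / \partial \delta_{\piref} = -1$:
\[
\frac{\partial \Phi_{\text{cons}}}{\partial \delta_{\piref}} = \frac{1}{\tau}\cdot \sigma(\tau z)\cdot \tau \cdot (-1) = -\sigma(\tau(\gamma - \delta_{\piref})),
\]
which is strictly negative because $\sigma$ takes values in $(0,1)$.

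The asymptotic statements are the only step that needs care, and the point is to give the remainders explicitly rather than just the limits. When $\delta_{\piref} \to -\infty$ we have $z \to +\infty$. I will use the identity $\mathrm{sp}(u) = u + \log(1 + e^{-u})$ together with $0 \le \log(1 + e^{-u}) \le e^{-u}$, which gives
\[
\Phi_{\text{cons}} = \gamma - \delta_{\piref} + \tfrac{1}{\tau}\log\bigl(1 + e^{-\tau z}\bigr) = \gamma - \delta_{\piref} + O\bigl(e^{-\tau(\gamma - \delta_{\piref})}\bigr).
\]
When $\delta_{\piref} \to +\infty$ we have $z \to -\infty$, and $0 \le \log(1+e^{u}) \le e^{u}$ yields $\Phi_{\text{cons}} \le \frac{1}{\tau} e^{\tau z} = O\bigl(e^{\tau(\gamma - \delta_{\piref})}\bigr) \to 0$. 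Combining the two regimes, and using $0 \le \mathrm{sp}(u) - \max\{0,u\} \le \log 2$ (so $0 \le \Phi_{\text{cons}} - \max\{0, \gamma - \delta_{\piref}\} \le \frac{\log 2}{\tau}$), shows that $\Phi_{\text{cons}}$ recovers $\max\{0, \gamma - \delta_{\piref}\}$ in both limits.

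Property (4) is interpretive rather than a mathematical claim. I will justify it by pointing to Corollary~\ref{cor:conservative_bound} and Lemma~\ref{lem:phi_multiplier_relation}: $\Phi_{\text{cons}}$ is the smooth relaxation of $M^*/\beta$ evaluated at the worst case $\Delta r^* \to 0^+$, which is exactly the amount by which $\delta_{\piref}$ falls short of the required margin $\gamma$.
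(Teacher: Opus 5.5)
Your proposal is correct and follows the same route as the paper, which also derives (1) and (3) from elementary softplus facts and obtains (2) by the chain rule with $\frac{d}{du}\log(1+e^u)=\sigma(u)$. Your explicit remainder bounds $0\le \log(1+e^{\mp u})\le e^{\mp u}$ fill in the details the paper leaves to ``the properties of softplus,'' and your treatment of (4) as interpretive rather than provable matches the paper, which gives no separate argument for it.
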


\begin{proof}
Properties (1) and (3) follow from the properties of softplus. For (2), compute:
$$
\frac{\partial\Phi_{\text{cons}}}{\partial\delta_{\piref}} = -\frac{1}{\tau} \cdot \frac{\tau\exp(\tau(\gamma - \delta_{\piref}))}{1 + \exp(\tau(\gamma - \delta_{\piref}))} = -\sigma(\tau(\gamma - \delta_{\piref}))
$$

Since $\sigma(z) \in (0,1)$ for all $z$, the derivative is strictly negative.
\end{proof}

\begin{theorem}[E-CPOC Upper Bound Equivalence]
\label{thm:ecpoc_equivalence}
Under Assumptions~\ref{assump:bt_equiv}--\ref{assump:data_equiv} and Assumption~\ref{assump:global_equiv} ($\ell^2$-$\delta$-proximity with parameter $\epsilon_{\mathrm{opt,2}}$), the E-CPOC policy $\hat{\pi}_{\text{cons}}$ satisfies (up to the unified error $\epsilon_{\mathrm{approx}} + \epsilon_{\mathrm{opt}} + L_\sigma^{-1}\epsilon_{\mathrm{stat}}$ from Lemma~\ref{lem:error_propagation}, where $\epsilon_{\mathrm{opt}} = \sqrt{N}\,\epsilon_{\mathrm{opt,2}}$ via Lemma~\ref{lem:l2_to_linf}), \emph{without} requiring a reward model. In practice, $\epsilon_{\mathrm{opt}}$ is obtained from the verifiable loss gap via the bridge $\epsilon_{\mathrm{opt}} = \sqrt{N}\,\epsilon_{\mathrm{opt,2}} = \sqrt{2N\epsilon_{\mathrm{loss}}/(\beta^2\kappa_0)}$ (Proposition~\ref{prop:loss_to_delta} and Lemma~\ref{lem:l2_to_linf}, requiring additionally loss suboptimality and Assumption~\ref{assump:nondegen}; see Corollary~\ref{cor:verifiable_equiv}):

\textbf{(1) Upper Bound Property:}
$$
\delta_{\pi^*_{\text{E-CPOC}}} \geq \delta_{\pi^*_{\text{EC-RLHF}}}(\Delta r^*)
$$

for any true reward difference $\Delta r^* > 0$.

\textbf{(2) Worst-Case Margin Structure:} The E-CPOC optimal policy decomposes as:
$$
\delta_{\pi^*_{\text{E-CPOC}}} = \delta_{\pi^*_{\text{EC-RLHF}}}(\Delta r^* = 0) + \frac{\Delta r^*}{\beta}
$$
That is, E-CPOC adopts the margin correction of the worst-case EC-RLHF solution (with $\Delta r^* = 0$), supplemented by the true reward difference $\Delta r^*/\beta$. Since $\Delta r^* > 0$, this implies $\delta_{\pi^*_{\text{E-CPOC}}} \geq \delta_{\pi^*_{\text{EC-RLHF}}}(\Delta r^* = 0)$, making E-CPOC strictly more conservative than the worst-case EC-RLHF.

\textbf{(3) Absolute Advantage Guarantee:}
For any $\gamma > 0$, the exact (non-asymptotic) bound holds:
$$
\delta_{\pi^*_{\text{E-CPOC}}} > \gamma + \frac{\Delta r^*}{\beta} > \gamma > 0
$$
for all preference pairs in $\mathcal{D}$. Note that this guarantee holds for \emph{any} positive $\gamma$; no minimum threshold is theoretically required. In practice, however, we recommend choosing $\gamma \geq \gamma^*_{\text{cons}} := \max_{(x,y_w,y_l)\in\mathcal{D}}\{-\delta_{\piref}(x,y_w,y_l)\}$ to ensure the constraint correction is meaningful for the hardest samples (i.e., those where the reference policy most strongly disfavors the preferred response), resulting in the constraint being approximately binding at these samples.

\textbf{(4) Sample-Adaptive Behavior:}
\begin{itemize}
\item Difficult samples ($\delta_{\piref} \ll 0$): $\Phi_{\text{cons}}(\delta_{\piref}) \approx \gamma - \delta_{\piref} \gg 0$ (large correction)
\item Easy samples ($\delta_{\piref} \gg \gamma$): $\Phi_{\text{cons}}(\delta_{\piref}) \approx 0$ (minimal correction)
\item Neutral samples ($\delta_{\piref} \approx \gamma$): $\Phi_{\text{cons}}(\delta_{\piref}) \approx \frac{\log 2}{\tau}$ (smooth transition)
\end{itemize}

\emph{Furthermore, for full policy equivalence, under the additional connected comparison graph condition (Condition~\ref{assump:connected_graph}), the pairwise $\delta$-equivalence extends to individual policy probabilities: $\hat{\pi}_{\text{cons}}(y|x) \approx \pi^*_{\text{EC-RLHF,cons}}(y|x)$ for all responses $y$ appearing in preference pairs for $x$, with the individual log-probability error bounded by $O(d \cdot (\epsilon_{\mathrm{approx}} + \epsilon_{\mathrm{opt}} + L_\sigma^{-1}\epsilon_{\mathrm{stat}}))$, where $d$ is the maximum comparison graph diameter. }
\end{theorem}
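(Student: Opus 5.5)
The plan is to first identify the population-level target $\delta_{\mathrm{target}}$ of the E-CPOC loss in closed form. All four claims then become algebraic statements about that target. The finite-sample and optimization errors are transferred to the returned policy $\hat{\pi}_{\text{cons}}$ by Lemma~\ref{lem:error_propagation}.

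\textbf{Step 1: the E-CPOC target.} Under Assumption~\ref{assump:bt_equiv}, for each pair the E-CPOC preference model $\sigma(g_i(\delta_i))$, with $g_i(\delta_i)=\beta(\delta_i-\delta_{\mathrm{ref},i})-\beta\Phi_{\text{cons}}(\delta_{\mathrm{ref},i})$, is a logistic model with affine logit. The cross-entropy objective is proper and strictly convex in the logit ($f''=\sigma(1-\sigma)>0$). Hence the population maximizer is unique and is characterized by $\sigma(g_i(\delta_i))=p^*_i=\sigma(\Delta r^*_i)$. Since $\sigma$ is injective, this gives
\begin{equation*}
\delta_{\pi^*_{\text{E-CPOC}}}=\delta_{\mathrm{target}}=\delta_{\piref}+\frac{\Delta r^*}{\beta}+\Phi_{\text{cons}}(\delta_{\piref}).
\end{equation*}
Lemma~\ref{lem:error_propagation} then yields $|\delta_{\hat{\pi}_{\text{cons}}}-\delta_{\mathrm{target}}|\le \epsilon_{\mathrm{approx}}+\epsilon_{\mathrm{opt}}+L_\sigma^{-1}\epsilon_{\mathrm{stat}}$, where $\epsilon_{\mathrm{opt}}=\sqrt{N}\epsilon_{\mathrm{opt,2}}$ by Lemma~\ref{lem:l2_to_linf}. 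This is the sense in which (1)--(4) hold "up to the unified error". If instead one starts from the loss gap, Proposition~\ref{prop:loss_to_delta} supplies $\epsilon_{\mathrm{opt,2}}$.

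\textbf{Step 2: claims (1)--(4) from the formula.} I compare with the EC-RLHF characterization in Theorem~\ref{thm:unified_core}, namely $\delta_{\pi^*_{\text{EC-RLHF}}}(\Delta r)=\delta_{\piref}+\Delta r/\beta+\Phi(\delta_{\piref},\Delta r)$.
\begin{itemize}
\item For (1), Corollary~\ref{cor:conservative_bound} gives $\Phi_{\text{cons}}=\Phi(\cdot,0)\ge\Phi(\cdot,\Delta r^*)$, and the inequality follows termwise.
\item For (2), setting $\Delta r=0$ gives $\delta_{\pi^*_{\text{EC-RLHF}}}(0)=\delta_{\piref}+\Phi_{\text{cons}}$. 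Adding $\Delta r^*/\beta$ reproduces the Step 1 formula, and $\Delta r^*>0$ gives the stated comparison.
\item For (3), use the strict inequality $\frac1\tau\log(1+e^{\tau u})>\max\{0,u\}\ge u$ with $u=\gamma-\delta_{\piref}$. This gives $\delta_{\piref}+\Phi_{\text{cons}}>\gamma$, hence $\delta_{\pi^*_{\text{E-CPOC}}}>\gamma+\Delta r^*/\beta>\gamma>0$. No threshold on $\gamma$ is needed, which explains why $\gamma^*_{\text{cons}}$ is only a recommendation.
\item For (4), the three regimes are the softplus asymptotics already recorded in Proposition~\ref{prop:phi_cons_properties_equiv}(3), together with $\Phi_{\text{cons}}(\gamma)=\log 2/\tau$.
\end{itemize}

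\textbf{Step 3: full policy equivalence.} Fix $x$ and a response $y_0$ in its comparison graph. By Condition~\ref{assump:connected_graph}, any other $y$ is joined to $y_0$ by a path of at most $d$ pairs. Summing the signed pairwise $\delta$ errors along this path bounds the error in $\log\hat{\pi}(y|x)-\log\hat{\pi}(y_0|x)$ by $d$ times the unified error. The remaining per-prompt additive constant is fixed by normalization on the responses in the graph (or by anchoring at $y_0$). This costs at most another $O(d\cdot\text{error})$, giving the stated $O(d(\epsilon_{\mathrm{approx}}+\epsilon_{\mathrm{opt}}+L_\sigma^{-1}\epsilon_{\mathrm{stat}}))$ bound.

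\textbf{Main obstacle.} I expect the delicate points to be two.
\begin{itemize}
\item Step 1 must be stated carefully: the identification $\delta_{\mathrm{target}}$ relies on exact BT likelihood matching in the population limit, and the approximate-realizability and proximity errors only enter afterwards through Lemma~\ref{lem:error_propagation}.
\item The normalization step in Step 3 needs care. Pairwise ratios determine log-probabilities only up to a per-prompt shift, so I would state the result for log-probabilities relative to the anchor $y_0$, or on the restricted support of $x$'s comparison graph, rather than globally.
\end{itemize}
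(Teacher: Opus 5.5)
For claims (1)--(4), your plan is essentially the paper's proof. The paper also fixes the target by matching $\sigma(g_i)$ to $\sigma(\Delta r^*_i)$ through properness of the cross-entropy, and transfers it to $\hat{\pi}_{\text{cons}}$ through Lemma~\ref{lem:error_propagation} with $\epsilon_{\mathrm{opt}}=\sqrt{N}\,\epsilon_{\mathrm{opt,2}}$. It then reads off (1) from Corollary~\ref{cor:conservative_bound}, (2) from Theorem~\ref{thm:unified_core} at $\Delta r^*=0$, (3) from $\frac{1}{\tau}\log(1+e^{\tau u})>u$, and (4) from the softplus asymptotics. Your worry about Step~1 applies equally to the paper. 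Properness gives $\sigma(g_i)=p^*_i$ only for the fixed-orientation Bernoulli likelihood implicit in Lemma~\ref{lem:error_propagation}(iii), where the reverse outcome has probability $1-\sigma(g_i)$. If the loss were instead applied to observed winners, a swapped pair would carry the margin $\beta\Phi_{\text{cons}}(-\delta_{\piref})$, and the two model probabilities would sum to less than one.

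Your Step~3 goes beyond the paper, which asserts the connected-graph extension without any argument. The path-sum idea is the right one, but you have to specify the comparison policy $\pi^*_{\text{EC-RLHF,cons}}$. The reason is that the pairwise targets $\delta_{\mathrm{target}}=\delta_{\piref}+\Delta r^*/\beta+\Phi_{\text{cons}}(\delta_{\piref})$ are not additive around cycles. Take a transitive triangle $a\succ b$, $b\succ c$, $a\succ c$ with $\delta_{\piref}=0$ on every edge. Then $\delta_{\mathrm{target}}(a,b)+\delta_{\mathrm{target}}(b,c)-\delta_{\mathrm{target}}(a,c)=\Phi_{\text{cons}}(0)>\gamma>0$.

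So no policy realizes all the targets; the discrepancy is silently absorbed into $\epsilon_{\mathrm{approx}}$, and target path sums from $y_0$ depend on the path chosen. The fix is to define $\pi^*_{\text{EC-RLHF,cons}}$ on the graph's responses by summing targets along a fixed breadth-first tree rooted at $y_0$ (depth at most $d$), and then renormalizing. Your argument then goes through verbatim. The $1$-Lipschitz property of log-sum-exp in the sup norm gives the extra $d$ times the unified error that you claimed for normalization. Under the single-pair convention of Remark~\ref{assump:single_pair}, with $\Delta r^*>0$ on every pair, a connected comparison graph is a directed path, so the issue cannot arise there.

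Finally, restricting to anchored or graph-renormalized log-probabilities is necessary, not just cautious. The loss sees only pairwise ratios, so it cannot control how much mass $\hat{\pi}_{\text{cons}}(\cdot|x)$ places outside the comparison graph. The literal claim about individual log-probabilities is therefore not provable as stated.
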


\begin{proof}
\textbf{Step 1: Characterization of E-CPOC Optimal Policy}

The E-CPOC loss (Definition~\ref{def:ecpoc}) defines a preference model $p_\pi(y_w \succ y_l|x) = \sigma(\beta(\delta_\pi - \delta_{\piref}) - \Psi_{\text{cons}}(\delta_{\piref}))$. By the Bradley-Terry model (Assumption~\ref{assump:bt_equiv}), the true preference probability is $p^*(y_w \succ y_l|x) = \sigma(\Delta r^*)$. The cross-entropy loss $-\E_{p^*}[\log p_\pi]$ is minimized when $p_\pi = p^*$, which by the strict monotonicity of $\sigma$ uniquely determines a target $\delta_{\mathrm{target}}$ for each preference pair. Under Assumptions~\ref{assump:bt_equiv}--\ref{assump:data_equiv}, Assumption~\ref{assump:global_equiv} ($\ell^2$-$\delta$-proximity), Lemma~\ref{lem:l2_to_linf}, and Lemma~\ref{lem:error_propagation}, MLE yields the returned policy satisfying (up to the error $\epsilon_{\mathrm{approx}} + \epsilon_{\mathrm{opt}} + L_\sigma^{-1}\epsilon_{\mathrm{stat}}$):
\begin{equation}
\label{eq:cons_opt}
\beta(\delta_{\pi^*_{\text{cons}}} - \delta_{\piref}) - \Psi_{\text{cons}}(\delta_{\piref}) = \Delta r^*,
\end{equation}
which rearranges to:
\begin{equation}
\label{eq:cons_opt_rearranged}
\delta_{\pi^*_{\text{cons}}} = \delta_{\piref} + \frac{\Delta r^*}{\beta} + \Phi_{\text{cons}}(\delta_{\piref}),
\end{equation}
where we used $\Psi_{\text{cons}} = \beta\Phi_{\text{cons}}$ (Definition~\ref{def:ecpoc}).

\textbf{Step 2: Upper Bound Property}

By Corollary \ref{cor:conservative_bound}:
\begin{equation}
\Phi_{\text{cons}}(\delta_{\piref}) \geq \Phi(\delta_{\piref}, \Delta r^*)
\end{equation}

Therefore:
\begin{align}
\delta_{\pi^*_{\text{cons}}} &= \delta_{\piref} + \frac{\Delta r^*}{\beta} + \Phi_{\text{cons}}(\delta_{\piref}) \\
&\geq \delta_{\piref} + \frac{\Delta r^*}{\beta} + \Phi(\delta_{\piref}, \Delta r^*) \\
&= \delta_{\pi^*_{\text{EC-RLHF}}}(\Delta r^*)
\end{align}

This proves property (1).

\textbf{Step 3: Worst-Case Margin Structure}

By Definition~\ref{def:ecpoc}, $\Phi_{\text{cons}}(\delta_{\piref}) = \Phi(\delta_{\piref}, 0)$. The E-CPOC margin function is structurally identical to the general adaptive margin function (Definition~\ref{def:phi_function}) evaluated at $\Delta r^* = 0$.

From Eq.~\eqref{eq:cons_opt_rearranged}, the E-CPOC optimal policy satisfies:
\begin{equation}
\delta_{\pi^*_{\text{cons}}} = \delta_{\piref} + \frac{\Delta r^*}{\beta} + \Phi_{\text{cons}}(\delta_{\piref}).
\end{equation}

Meanwhile, by Theorem~\ref{thm:unified_core}, the EC-RLHF optimal policy evaluated at $\Delta r^* = 0$ satisfies:
\begin{equation}
\delta_{\pi^*_{\text{EC-RLHF}}}(\Delta r^* = 0) = \delta_{\piref} + \frac{0}{\beta} + \Phi(\delta_{\piref}, 0) = \delta_{\piref} + \Phi_{\text{cons}}(\delta_{\piref}).
\end{equation}

Combining these two equalities yields:
\begin{equation}
\delta_{\pi^*_{\text{cons}}} = \delta_{\pi^*_{\text{EC-RLHF}}}(\Delta r^* = 0) + \frac{\Delta r^*}{\beta}.
\end{equation}

Since $\Delta r^* > 0$, we have $\delta_{\pi^*_{\text{cons}}} > \delta_{\pi^*_{\text{EC-RLHF}}}(\Delta r^* = 0)$.

\textbf{Interpretation:} The E-CPOC optimal policy inherits the worst-case margin correction $\Phi_{\text{cons}}(\delta_{\piref})$ from EC-RLHF at $\Delta r^* = 0$, while additionally benefiting from the true positive reward difference $\Delta r^*/\beta$. This means E-CPOC is \emph{strictly more conservative} than the worst-case EC-RLHF policy, which is precisely the desired property.

This proves property (2).

\textbf{Step 4: Absolute Advantage Guarantee}

We wish to ensure $\delta_{\pi^*_{\text{cons}}} > 0$ for all preference pairs. From Eq.~\eqref{eq:cons_opt_rearranged}:
\begin{equation}
\delta_{\pi^*_{\text{cons}}} = \delta_{\piref} + \frac{\Delta r^*}{\beta} + \Phi_{\text{cons}}(\delta_{\piref}).
\end{equation}

We use an \emph{exact} lower bound on $\delta_{\piref} + \Phi_{\text{cons}}(\delta_{\piref})$. By the softplus definition $\Phi_{\text{cons}}(\delta_{\piref}) = \frac{1}{\tau}\log(1 + \exp(\tau(\gamma - \delta_{\piref})))$, we have the \emph{strict} inequality:
\begin{equation}
\label{eq:softplus_exact_lb}
\Phi_{\text{cons}}(\delta_{\piref}) > \gamma - \delta_{\piref}
\end{equation}
for all $\delta_{\piref} \in \R$, since $\frac{1}{\tau}\log(1+e^{\tau z}) > z$ for all $z \in \R$ and all $\tau > 0$ (the softplus function \emph{strictly} dominates the identity, as $\log(1 + e^{\tau z}) > \tau z$ follows from $1 + e^{\tau z} > e^{\tau z}$). Therefore:
\begin{equation}
\delta_{\piref} + \Phi_{\text{cons}}(\delta_{\piref}) > \delta_{\piref} + (\gamma - \delta_{\piref}) = \gamma,
\end{equation}
and consequently:
\begin{equation}
\delta_{\pi^*_{\text{cons}}} = \delta_{\piref} + \frac{\Delta r^*}{\beta} + \Phi_{\text{cons}}(\delta_{\piref}) > \gamma + \frac{\Delta r^*}{\beta} > \gamma > 0,
\end{equation}
where the first strict inequality uses the softplus bound~\eqref{eq:softplus_exact_lb} and the second uses $\Delta r^* > 0$ from the Bradley-Terry model. This bound is exact (non-asymptotic) and holds for all $\tau > 0$.

We choose $\gamma > 0$ directly as a hyperparameter. To ensure the constraint is meaningful (i.e., binding for the hardest samples), we set:
\begin{equation}
\gamma \geq \gamma^*_{\text{cons}} := \max_{(x,y_w,y_l)\in\mathcal{D}}\{-\delta_{\piref}(x,y_w,y_l)\},
\end{equation}
which guarantees $\delta_{\pi^*_{\text{cons}}} > \gamma + \Delta r^*/\beta > \gamma > 0$ for all preference pairs, proving property (3).

\textbf{Step 5: Sample-Adaptive Behavior}

From the softplus formulation $\Phi_{\text{cons}}(\delta_{\piref}) = \frac{1}{\tau}\log(1 + \exp(\tau(\gamma - \delta_{\piref})))$:

\begin{itemize}
\item When $\delta_{\piref} \ll 0$ (difficult samples): $\gamma - \delta_{\piref} \gg 0$, so:
\begin{equation}
\Phi_{\text{cons}}(\delta_{\piref}) \approx \frac{1}{\tau} \cdot \tau(\gamma - \delta_{\piref}) = \gamma - \delta_{\piref}
\end{equation}

\item When $\delta_{\piref} \gg \gamma$ (easy samples): $\gamma - \delta_{\piref} \ll 0$, so:
\begin{equation}
\Phi_{\text{cons}}(\delta_{\piref}) \approx \frac{1}{\tau}\log(1 + \exp(\tau(\gamma - \delta_{\piref}))) \approx \frac{1}{\tau}\exp(\tau(\gamma - \delta_{\piref})) \to 0
\end{equation}

\item When $\delta_{\piref} \approx \gamma$ (neutral samples): $\gamma - \delta_{\piref} \approx 0$, so:
\begin{equation}
\Phi_{\text{cons}}(\delta_{\piref}) \approx \frac{1}{\tau}\log(1 + 1) = \frac{\log 2}{\tau}
\end{equation}
\end{itemize}

This proves property (4).
\end{proof}

\begin{corollary}[Verifiable Equivalence Guarantee]
\label{cor:verifiable_equiv}
Under Assumptions~\ref{assump:bt_equiv}--\ref{assump:data_equiv} (Bradley-Terry model, approximate realizability, finite-sample data) and Assumption~\ref{assump:nondegen} (non-degenerate preferences with curvature $\kappa_0 > 0$), if the returned policy $\hat{\pi}$ achieves a sufficiently small loss gap $\epsilon_{\mathrm{loss}} := \cL(\hat{\pi}) - \inf_\theta \cL(\pi_\theta)$ (see Proposition~\ref{prop:loss_to_delta} for the precise threshold), then Assumption~\ref{assump:global_equiv} ($\ell^2$-$\delta$-proximity) holds with the \emph{$N$-independent} bound
$$
\epsilon_{\mathrm{opt,2}} = \sqrt{\frac{2\epsilon_{\mathrm{loss}}}{\beta^2\kappa_0}},
$$
and by Lemma~\ref{lem:l2_to_linf}, the pointwise bound $\epsilon_{\mathrm{opt}} = \sqrt{N}\,\epsilon_{\mathrm{opt,2}} = \sqrt{2N\epsilon_{\mathrm{loss}}/(\beta^2\kappa_0)}$ holds. All guarantees of Theorem~\ref{thm:ecpoc_equivalence} hold with this explicit $\epsilon_{\mathrm{opt}}$. In particular, the E-CPOC policy satisfies:
$$
|\delta_{\hat{\pi}}(x_i) - \delta_{\mathrm{target}}(x_i)| \leq \epsilon_{\mathrm{approx}} + \sqrt{\frac{2N\epsilon_{\mathrm{loss}}}{\beta^2\kappa_0}} + L_\sigma^{-1}\epsilon_{\mathrm{stat}}, \quad \forall\, i.
$$
The bound is exact in the convex case ($\delta_{\pi_\theta}$ affine in $\theta$) and holds to leading order in general (Proposition~\ref{prop:loss_to_delta}). This corollary provides a \emph{verifiable} equivalence guarantee: the loss gap $\epsilon_{\mathrm{loss}}$ is directly observable during training, $\epsilon_{\mathrm{stat}}$ is estimable from data, and $\epsilon_{\mathrm{approx}}$ and $\kappa_0$ are properties of the model class and the optimal solution. It follows from Proposition~\ref{prop:loss_to_delta} (loss suboptimality plus non-degeneracy implies $\ell^2$-$\delta$-proximity) combined with Lemma~\ref{lem:l2_to_linf} and Theorem~\ref{thm:ecpoc_equivalence}.
\end{corollary}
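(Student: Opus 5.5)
The corollary is a composition of three results already in hand: Proposition~\ref{prop:loss_to_delta}, Lemma~\ref{lem:l2_to_linf}, and Theorem~\ref{thm:ecpoc_equivalence} through Lemma~\ref{lem:error_propagation}. The plan is to verify that the hypotheses chain correctly and that the constants carry through unchanged.

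\textbf{Step 1: obtain $\ell^2$-$\delta$-proximity from the loss gap.} Start from the E-CPOC loss $\cL(\pi_\theta)=-\frac1N\sum_i\log\sigma(g_i(\delta_{\pi_\theta,i}))$ with margin $g_i(\delta_i)=\beta(\delta_i-\delta_{\mathrm{ref},i})-\Psi_{\mathrm{cons},i}$. The margin $\Psi_{\mathrm{cons},i}=\beta\Phi_{\mathrm{cons}}(\delta_{\mathrm{ref},i})$ is precomputed from $\piref$, so $g_i$ is affine in $\delta_i$ with slope $\beta$, as Proposition~\ref{prop:loss_to_delta} requires. Assume Assumption~\ref{assump:nondegen} ($\kappa_0>0$) and that $\epsilon_{\mathrm{loss}}$ lies below the self-consistency threshold $\beta^2\kappa_0 r_0^2/(2N)$. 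Proposition~\ref{prop:loss_to_delta} then gives
\[
\frac1N\sum_i(\delta_{\hat\pi,i}-\delta^*_i)^2\le \frac{2\epsilon_{\mathrm{loss}}}{\beta^2\kappa_0}.
\]
This bound is exact in the affine case (a) and holds to leading order in case (b). It is precisely Assumption~\ref{assump:global_equiv} with $\epsilon_{\mathrm{opt,2}}=\sqrt{2\epsilon_{\mathrm{loss}}/(\beta^2\kappa_0)}$, and the constant does not depend on $N$.

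\textbf{Step 2: convert to a pointwise bound.} Applying Lemma~\ref{lem:l2_to_linf} gives $\epsilon_{\mathrm{opt}}=\sqrt N\,\epsilon_{\mathrm{opt,2}}=\sqrt{2N\epsilon_{\mathrm{loss}}/(\beta^2\kappa_0)}$.

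\textbf{Step 3: invoke the equivalence theorem.} Assumptions~\ref{assump:bt_equiv}--\ref{assump:data_equiv} are hypotheses of the corollary, and Step~1 has now supplied Assumption~\ref{assump:global_equiv}. So every hypothesis of Theorem~\ref{thm:ecpoc_equivalence} and Lemma~\ref{lem:error_propagation} holds. The triangle-inequality bound in Lemma~\ref{lem:error_propagation} combines parts (i)--(iii):
\[
|\delta_{\hat\pi}(x_i)-\delta_{\mathrm{target}}(x_i)|\le\epsilon_{\mathrm{approx}}+\epsilon_{\mathrm{opt}}+L_\sigma^{-1}\epsilon_{\mathrm{stat}}.
\]
Substituting the explicit $\epsilon_{\mathrm{opt}}$ from Step~2 yields the displayed inequality. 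Properties (1)--(4) of Theorem~\ref{thm:ecpoc_equivalence} are stated up to this same unified error, so they transfer verbatim with this $\epsilon_{\mathrm{opt}}$.

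\textbf{Main obstacle.} The one delicate point is consistency of the reference point $\delta^*$. Proposition~\ref{prop:loss_to_delta} measures the loss gap against $\inf_\theta\cL(\pi_\theta)$ on the empirical loss, while Assumption~\ref{assump:global_equiv} compares against $\pi^*_{\mathrm{MLE}}$.

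I would handle this by identifying $\pi^*_{\mathrm{MLE}}$ with the class minimizer of the training objective used in Proposition~\ref{prop:loss_to_delta}, defining $\kappa_0$ at that same minimizer. Any residual discrepancy between the empirical and population minimizers is then absorbed into the $\epsilon_{\mathrm{stat}}$ term, which Lemma~\ref{lem:error_propagation}(iii) already accounts for. I would also state explicitly that the "sufficiently small" qualifier on $\epsilon_{\mathrm{loss}}$ is exactly the threshold from the proof of Proposition~\ref{prop:loss_to_delta}, which keeps the curvature bound valid along the segment joining $\delta^*$ and $\hat\delta$.
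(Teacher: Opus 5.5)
Your proposal is correct and follows the paper's own argument: the paper justifies the corollary simply as the chain Proposition~\ref{prop:loss_to_delta} $\Rightarrow$ Assumption~\ref{assump:global_equiv} $\Rightarrow$ Lemma~\ref{lem:l2_to_linf} $\Rightarrow$ Theorem~\ref{thm:ecpoc_equivalence} (via the triangle inequality of Lemma~\ref{lem:error_propagation}), with the constants carried through exactly as you do. Your extra remark about identifying $\pi^*_{\mathrm{MLE}}$ with the minimizer of the training loss, and defining $\kappa_0$ at that same point, addresses something the paper leaves implicit, and it agrees with how the paper actually uses the middle term $|\delta_{\pi^*_{\mathrm{MLE}},i}-\hat{\delta}_{\mathrm{target},i}|\le\epsilon_{\mathrm{approx}}$ in the proof of Lemma~\ref{lem:error_propagation}.
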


\begin{remark}[Expanded Error Bound]
\label{rem:expanded_bound}
The unified error bound of Lemma~\ref{lem:error_propagation} and Theorem~\ref{thm:ecpoc_equivalence} takes the general form:
$$
|\delta_{\hat{\pi}}(x_i) - \delta_{\mathrm{target}}(x_i)| \leq \epsilon_{\mathrm{approx}} + \epsilon_{\mathrm{opt}} + L_\sigma^{-1}\epsilon_{\mathrm{stat}}, \quad \forall\, i,
$$
where $\epsilon_{\mathrm{opt}} = \sqrt{N}\,\epsilon_{\mathrm{opt,2}}$ is the pointwise bound derived from the $\ell^2$-$\delta$-proximity parameter $\epsilon_{\mathrm{opt,2}}$ (Assumption~\ref{assump:global_equiv}) via Lemma~\ref{lem:l2_to_linf}. When $\epsilon_{\mathrm{opt,2}}$ is obtained via the bridge (Proposition~\ref{prop:loss_to_delta}, requiring loss suboptimality and Assumption~\ref{assump:nondegen}; cf.\ Corollary~\ref{cor:verifiable_equiv}), this specializes to:
$$
|\delta_{\hat{\pi}}(x_i) - \delta_{\mathrm{target}}(x_i)| \leq \epsilon_{\mathrm{approx}} + \sqrt{\frac{2N\epsilon_{\mathrm{loss}}}{\beta^2\kappa_0}} + L_\sigma^{-1}\epsilon_{\mathrm{stat}}, \quad \forall\, i.
$$
This expanded formulation has three practical advantages: (a)~the loss gap $\epsilon_{\mathrm{loss}}$ is \emph{directly observable} during training, (b)~the bound makes explicit the dependence on the dataset size $N$, regularization strength $\beta$, and preference decisiveness $\kappa_0$ (Assumption~\ref{assump:nondegen}), and (c)~the $\sqrt{N}$ factor is transparently attributable to the $\ell^2$-to-$\ell^\infty$ norm conversion (Lemma~\ref{lem:l2_to_linf}), while the underlying $\ell^2$-$\delta$-proximity bridge from loss suboptimality is $N$-independent.
\end{remark}

\begin{remark}[Properties of the E-CPOC Equivalence]
\label{rem:comparison_equiv}
The E-CPOC equivalence result (Theorem~\ref{thm:ecpoc_equivalence}) has the following notable properties:

\begin{table}[h]
\centering
\small
\begin{tabular}{ll}
\toprule
\textbf{Property} & \textbf{E-CPOC} \\
\midrule
Constraint type & Hard (KKT, worst-case) \\
Reward model & Not needed \\
Margin function & $\beta\Phi_{\text{cons}}(\delta_{\piref})$ \\
Margin adaptivity & Conservative $\Phi_{\text{cons}}$ \\
Equivalence type & Upper bound \\
Error bound & $\Phi_{\text{cons}} \geq \Phi$ (conservative) \\
Assumptions & 4 core ($\ell^2$-$\delta$-proximity) + 1 structural (all mild/standard); bridge via Cor.~\ref{cor:verifiable_equiv} \\
Guarantee & Conservative absolute advantage \\
\bottomrule
\end{tabular}
\end{table}

\textbf{Key Insights:}

\begin{itemize}
\item E-CPOC enforces preference alignment with a worst-case bound via hard constraint $\delta_\pi \geq \gamma$. The upper bound equivalence ensures $\delta_{\pi^*_{\text{E-CPOC}}} \geq \delta_{\pi^*_{\text{EC-RLHF}}}(\Delta r^*)$ for any $\Delta r^* > 0$, without requiring a reward model. The conservative adaptive margin $\beta\Phi_{\text{cons}}(\delta_{\piref})$ depends only on the reference policy.

\item The structural decomposition $\delta_{\pi^*} = \delta_{\pi^*_{\text{EC-RLHF}}}(\Delta r^* = 0) + \Delta r^*/\beta$ (worst-case margin plus true reward gap) reveals that E-CPOC is strictly more conservative than the worst-case EC-RLHF. When a reward model is available, the tighter margin $\Phi(\delta_{\piref}, \Delta\hat{r})$ can be used instead (Remark~\ref{rem:ecpo_with_reward_model}).

\item The core E-CPOC equivalence requires only four mild assumptions (Assumptions~\ref{assump:bt_equiv}--\ref{assump:global_equiv}; see Sec.~\ref{subsec:assumptions_main}), with the connected comparison graph condition (Condition~\ref{assump:connected_graph}) needed solely for the extension to full policy equivalence. The $\ell^2$-$\delta$-proximity condition (Assumption~\ref{assump:global_equiv})---formulated in the natural mean-square norm that the loss function directly controls---can be \emph{derived} from the verifiable loss suboptimality condition and the mild non-degeneracy condition via Proposition~\ref{prop:loss_to_delta} with an \emph{$N$-independent} bound, rather than assumed directly. The pointwise bound required by the error propagation is then recovered via the standard $\ell^2$-to-$\ell^\infty$ conversion (Lemma~\ref{lem:l2_to_linf}). Corollary~\ref{cor:verifiable_equiv} provides the combined verifiable guarantee, cleanly separating the core theoretical requirements from the practical verification pathway.
\end{itemize}

\textbf{Practical Implications:}

\begin{itemize}
\item Use \textbf{CPO} when simplicity is preferred (single hyperparameter $\gamma$) and soft constraint encouragement is sufficient.

\item Use \textbf{E-CPOC} when provable alignment guarantees are needed, sample-adaptive margins that automatically adjust for difficult preference pairs are desired, or when reference policy quality varies across the dataset.
\end{itemize}

Both methods address DPO's implicit assumption violation (Assumption~\ref{assump:alignment}) without requiring a reward model. E-CPOC provides stronger theoretical guarantees through its hard constraint formulation and provable equivalence to explicitly constrained RLHF.
\end{remark}
\section{Preference Learning as Reranking}

\subsection{DPO as Soft Margin Ranking} \label{app:dposmr}
Proof of Proposition~\ref{prop:dpo_hinge} (DPO as Soft Margin Ranking).
\begin{proof}
Define $z = \delta_{\pitheta} - \delta_{\piref}$. The DPO loss can be written as:
\begin{equation}
\mathcal{L}_{\text{DPO}} = \log(1 + \exp(-\beta z)) = \text{softplus}(-\beta z)
\end{equation}

We analyze the asymptotic behavior for three cases:

\textbf{Case 1 ($z > 0$, margin satisfied):} When $\delta_{\pitheta} > \delta_{\piref}$, as $\beta \to \infty$ with $z > 0$ fixed, we have $\exp(-\beta z) \to 0$. Using the Taylor expansion $\log(1 + \epsilon) \approx \epsilon$ for small $\epsilon$:
\begin{equation}
\mathcal{L}_{\text{DPO}} \approx \exp(-\beta z) \to 0
\end{equation}
Therefore:
\begin{equation}
\lim_{\beta \to \infty} \frac{1}{\beta} \mathcal{L}_{\text{DPO}} = \lim_{\beta \to \infty} \frac{\exp(-\beta z)}{\beta} = 0 = \max(0, -z)
\end{equation}

\textbf{Case 2 ($z < 0$, margin violated):} When $\delta_{\pitheta} < \delta_{\piref}$, we rewrite:
\begin{align}
\mathcal{L}_{\text{DPO}} &= \log(1 + \exp(-\beta z)) \\
&= \log(\exp(-\beta z)(1 + \exp(\beta z))) \\
&= -\beta z + \log(1 + \exp(\beta z))
\end{align}
As $\beta \to \infty$ with $z < 0$ fixed, we have $\exp(\beta z) \to 0$ (since $\beta z < 0$). Thus:
\begin{equation}
\log(1 + \exp(\beta z)) \approx \exp(\beta z) \to 0
\end{equation}
Therefore:
\begin{equation}
\mathcal{L}_{\text{DPO}} \approx -\beta z = \beta(\delta_{\piref} - \delta_{\pitheta})
\end{equation}
and:
\begin{equation}
\lim_{\beta \to \infty} \frac{1}{\beta} \mathcal{L}_{\text{DPO}} = -z = \delta_{\piref} - \delta_{\pitheta} = \max(0, -z)
\end{equation}

\textbf{Case 3 ($z = 0$, boundary):} When $\delta_{\pitheta} = \delta_{\piref}$:
\begin{equation}
\mathcal{L}_{\text{DPO}} = \log(1 + e^0) = \log 2
\end{equation}
Therefore:
\begin{equation}
\lim_{\beta \to \infty} \frac{1}{\beta} \mathcal{L}_{\text{DPO}} = \lim_{\beta \to \infty} \frac{\log 2}{\beta} = 0 = \max(0, 0)
\end{equation}

Combining all cases:
\begin{equation}
\lim_{\beta \to \infty} \frac{1}{\beta} \mathcal{L}_{\text{DPO}} = \begin{cases}
0 & \text{if } z > 0 \\
-z & \text{if } z < 0 \\
0 & \text{if } z = 0
\end{cases} = \max(0, -z) = \max(0, \delta_{\piref} - \delta_{\pitheta})
\end{equation}
\end{proof}

\subsection{Proof of CPO as Corrected Soft Margin Ranking} \label{app:cpoloss}
Proof of Theorem~\ref{thm:pdpo_hinge} (CPO as Corrected Soft Margin Ranking).
\begin{proof}
Define $z = \delta_{\pitheta} - \delta_{\piref} - 2\gamma/\beta$. The CPO loss is:
\begin{equation}
\mathcal{L}_{\text{CPO}} = \log(1 + \exp(-\beta z))
\end{equation}

Following the same analysis as Proposition \ref{prop:dpo_hinge}:

\textbf{Case 1 ($z > 0$):} As $\beta \to \infty$, $\exp(-\beta z) \to 0$, so:
\begin{equation}
\lim_{\beta \to \infty} \frac{1}{\beta} \mathcal{L}_{\text{CPO}} = 0
\end{equation}

\textbf{Case 2 ($z < 0$):} As $\beta \to \infty$:
\begin{equation}
\mathcal{L}_{\text{CPO}} \approx -\beta z = \beta\left(\delta_{\piref} + \frac{2\gamma}{\beta} - \delta_{\pitheta}\right)
\end{equation}
Therefore:
\begin{equation}
\lim_{\beta \to \infty} \frac{1}{\beta} \mathcal{L}_{\text{CPO}} = \delta_{\piref} + \frac{2\gamma}{\beta} - \delta_{\pitheta}
\end{equation}

\textbf{Case 3 ($z = 0$):} $\lim_{\beta \to \infty} \frac{1}{\beta} \mathcal{L}_{\text{CPO}} = 0$.

Combining: $\lim_{\beta \to \infty} \frac{1}{\beta} \mathcal{L}_{\text{CPO}} = \max(0, \delta_{\piref} + 2\gamma/\beta - \delta_{\pitheta})$.

By Corollary \ref{cor:gamma_star}, when $\gamma \geq \gamma^*$:
\begin{equation}
\gamma^* = \frac{\beta}{2}\max_{(x,\yw,\yl)\in\D} \max\left\{0, -\delta_{\piref}(x,\yw,\yl) - \frac{\rstar(\yw) - \rstar(\yl)}{\beta}\right\}
\end{equation}

This ensures that for all preference pairs:
\begin{equation}
m^*_{\text{eff}} = \delta_{\piref} + \frac{2\gamma^*}{\beta} \geq \max\{0, \delta_{\piref} + \Delta \rstar/\beta\} > 0
\end{equation}

Therefore, the optimization only stops when $\delta_{\pitheta} > m^*_{\text{eff}} > 0$, guaranteeing absolute preference alignment.
\end{proof}

\subsection{Proof of E-CPOC as Adaptive Margin Ranking} \label{app:ecpoloss}
Proof of Theorem~\ref{thm:cdpo_hinge} (E-CPOC as Adaptive Margin Ranking).
\begin{proof}
Define $z = \delta_{\pitheta} - \delta_{\piref} - \Phi_{\text{cons}}(\delta_{\piref})$, where $\Phi_{\text{cons}}(\delta_{\piref}) = \Phi(\delta_{\piref}, 0)$ is the conservative margin function. Following the same analysis as Proposition \ref{prop:dpo_hinge}, we obtain:
\begin{equation}
\lim_{\beta \to \infty} \frac{1}{\beta} \mathcal{L}_{\text{E-CPOC}} = \max(0, -z) = \max(0, \delta_{\piref} + \Phi_{\text{cons}}(\delta_{\piref}) - \delta_{\pitheta})
\end{equation}

The effective target margin is:
\begin{equation}
m^*(\delta_{\piref}) = \delta_{\piref} + \Phi_{\text{cons}}(\delta_{\piref})
\end{equation}

By Proposition \ref{prop:phi_properties}, $\Phi_{\text{cons}}(\delta_{\piref}) \geq 0$ for all $\delta_{\piref}$, and:
\begin{itemize}
\item When $\delta_{\piref} \ll 0$ (difficult): $\Phi_{\text{cons}}(\delta_{\piref}) \approx \gamma - \delta_{\piref}$ by the softplus approximation, so $m^* \approx \gamma > 0$
\item When $\delta_{\piref} \gg 0$ (easy): $\Phi_{\text{cons}}(\delta_{\piref}) \approx 0$, so $m^* \approx \delta_{\piref} > 0$
\end{itemize}

Therefore, $m^*(\delta_{\piref}) > 0$ for all $\delta_{\piref}$ when $\gamma > 0$, ensuring absolute preference alignment.
\end{proof}
\end{document}